\author{
\normalsize Gabriel Arpino\thanks{University of Cambridge, \texttt{ga442@cam.ac.uk}.} \and
\normalsize Ramji Venkataramanan\thanks{University of Cambridge, \texttt{rv285@cam.ac.uk}.}}
\begin{document}

\title{Statistical-Computational Tradeoffs in \\ Mixed Sparse Linear Regression}

\date{\today}
\maketitle
\begin{abstract}
We consider the  problem of  mixed sparse linear regression with two components, where two $k$-sparse signals $\bbeta_1, \bbeta_2 \in \reals^p$ are to be recovered from $n$ unlabelled noisy linear measurements. The sparsity is allowed to be sublinear in the dimension ($k = o(p)$), and the additive noise is assumed to be independent Gaussian with variance $\sigma^2$. Prior work has shown that the problem suffers from a $\frac{k}{\snr^2}$-to-$\frac{k^2}{\snr^2}$ statistical-to-computational gap, resembling other computationally challenging high-dimensional inference problems such as Sparse PCA and Robust Sparse Mean Estimation \citep{brennan_reducibility_2020};  here $\snr := \|\bbeta_1\|_2/{\sigma^2} = \|\bbeta_2\|_2/{\sigma^2}$ is the signal-to-noise ratio. We establish the existence of a more extensive $\frac{k}{\snr^2}$-to-$\frac{k^2 (\snr + 1)^2}{\snr^2}$ computational barrier for this problem through the method of low-degree polynomials, but show that the problem is computationally hard \emph{only} in a very narrow symmetric parameter regime. We identify a smooth information-computation tradeoff between the sample complexity $n$ and runtime $\exp(\tilde{\Theta}(k^2 (\snr + 1)^2/(n \snr^2))$ for any randomized algorithm in this hard regime. Via a simple reduction, this provides novel rigorous evidence for the existence of a computational barrier to solving exact support recovery in sparse phase retrieval with sample complexity $n = \tilde{o}(k^2)$. Our second contribution is to analyze a simple thresholding algorithm which, outside of the narrow regime where the problem is hard, solves the associated mixed  regression detection problem in $O(np)$ time and matches the sample complexity required for (non-mixed) sparse linear regression of $\frac{k (\snr + 1)}{\snr} \log{p}$; this allows the recovery problem to be subsequently solved by state-of-the-art techniques from the dense case.  As a special case of our results, we show that this simple algorithm is order-optimal among a large family of algorithms in solving exact signed support recovery in sparse linear regression. To the best of our knowledge, this is the first thorough study of the interplay between mixture symmetry, signal sparsity, and their joint impact on the computational hardness of mixed sparse linear regression.
\end{abstract}
\thispagestyle{empty}
\clearpage

\tableofcontents
\addtocontents{toc}{\protect\thispagestyle{empty}}
\thispagestyle{empty}
\clearpage
\setcounter{page}{1}

\section{Introduction}
This work considers the problem of two-component mixed sparse linear regression ($\mslr$), where the goal is to estimate two $k$-sparse signals $\bbeta_1 , \bbeta_2 \in \reals^p$ from $n$ \emph{unlabelled} noisy linear measurements. The model  is defined as follows. 
%
%
\begin{definition}[$\mathtt{MSLR}$] \label{def:MSLR}
For $\X \in \mathbb{R}^{n \times p}$, $\w \in \mathbb{R}^{n}$, and $\z \in \mathbb{R}^n$, consider the model:
\[ \y = \X \bbeta_1 \odot \z + \X \bbeta_2 \odot (1-\z) + \w,  \]
where $\odot$ denotes element-wise product between vectors, $X_{i, j} \distas{\text{i.i.d.}} \mathcal{N}(0, 1)$, $w_i \distas{\text{i.i.d.}} \mathcal{N}(0, \sigma^2)$, $z_i \distas{\text{i.i.d.}} \text{Bernoulli}(\phi)$, and $\bbeta_1, \bbeta_2 \in \reals^{p}$ each $k$-sparse. Given $(\X, \y)$ the objective is to estimate $\bbeta_1$, $\bbeta_2$.
\end{definition}
This model was introduced by \citet{quandt_estimating_1978}  and has since been widely studied in the machine learning and statistics communities; see, e.g., \cite{stadler_l1-penalization_2010, chen_convex_2014, yi_alternating_2014, fan_curse_2018, javanmard_prediction_2022} and the references therein.

If the latent variables $(z_i)_{i \in [n]}$ are observed, the problem reduces to solving two separate linear regressions. However, in many applications, the latent variables may be unknown as  the data may come from different unlabelled sub-populations. The $\mslr$ model captures this effect and has been applied to a variety of settings including market segmentation \citep{wedel_market_2000}, music perception \citep{viele_modeling_2002}, health care \citep{deb_estimates_2000, luo_regression-based_2022, im_bayesian_2022}, and various others \citep{li_pursuing_2022, kazor_mixture_2019}. 
Variants of mixed regression models called hierarchical mixtures-of-experts have long been studied in the machine learning community \citep{jordan_hierarchical_1994}, where they have been used for ensemble learning, and in  Gated Recurrent Units and Attention Networks \citep{makkuva_breaking_2019}.

The maximum-likelihood estimator is a natural choice for estimating the signals $\bbeta_1, \bbeta_2$. However, the resulting optimization problem is non-convex and NP-hard \citep{yi_alternating_2014}. The problem is therefore challenging  both statistically and computationally, and a variety of efficient estimators have been proposed. These include spectral methods \citep{chaganty_spectral_2013, yi_alternating_2014, zhang_precise_2022}, expectation-maximization (EM) \citep{Kha07,Far10, stadler_l1-penalization_2010}, alternating minimization \citep{yi_alternating_2014, She19, ghosh_alternating_2020}, convex relaxation \citep{chen_convex_2014},  moment descent methods \citep{Li18,Che20}, and the use of tractable non-convex objectives \citep{Zho16,Bar22}.  

Despite recent works addressing the statistical and computational feasibility of mixed linear regression (including but not limited to \cite{ azizyan_minimax_2013, pal_learning_2022b, pal2021support}), little is understood about the problem in the high-dimensional sparse regime where both the sample size $n$ and the sparsity $k$ can be sublinear in the dimension $p$, over the range of all $\snr$ scalings, where  $\snr := \|\bbeta_1\|^2_2 / \sigma^2 = \|\bbeta_2\|^2_2 / \sigma^2$ is the signal-to-noise ratio. This regime is motivated by a variety of recent statistical applications, ranging from biology to communications (we refer to the monographs \cite{hastie_statistical_2015, giraud_introduction_2021} which contain multiple references). The assumptions in Definition \ref{def:MSLR} of i.i.d. Gaussian  data rows $\x_i$ and additive Gaussian noise $w_i$ have been often considered broadly in the high-dimensional statistics literature as an idealized assumption (e.g., \cite{wainwright_information-theoretic_2009, wainwright_sharp_2009,arias-castro_global_2011,janson_eigenprism_2017}).

One aspect of this  formulation that is starting to become clear is that in a symmetric parameter regime,  the $\mslr$ problem is hard, i.e.,  it cannot be solved by polynomial-time algorithms  at the information-theoretically optimal sample complexity $n_{\IT} = \tilde{\Theta}(k/\snr^2)$ \citep{fan_curse_2018}. Exhaustive search typically yields statistically near-optimal estimators for the signal support sets, but the running time is  exponential in $k$.
The recent works of \cite{brennan_reducibility_2020, fan_curse_2018} provided different ways of quantifying this phenomenon, evidencing a fundamental algorithmic barrier for algorithms performing at all sample complexities $n = \tilde{o}(k^2 / \snr^2)$ and sparsities $k = o(\sqrt{p})$ in a very narrow and symmetric parameter regime which we call \textit{Symmetric Balanced Mixture of Sparse Linear Regressions} ($\sbmslr$), defined as
\begin{align}
\sbmslr : \; \phi = 1/2 \text{ and } \bbeta_1 = -\bbeta_2 \label{eq:sbmslr}.
\end{align}
(Here we recall that $\phi$ is the mixture parameter in Definition \ref{def:MSLR}, so $\phi= \frac{1}{2}$ implies that each $y_i$ is equally likely to come from $\bbeta_1$ or $\bbeta_2$.)
This phenomenon has been termed a $\frac{k}{\snr^2}$-to-$\frac{k^2}{\snr^2}$ \textit{statistical-to-computational gap}, where the problem is solvable with order $k/\snr^2$ samples, but efficient algorithms require at least  order $k^2/\snr^2$ samples. (Throughout this paper, by efficient algorithms we mean those with running time
$O(p^\eta)$ for some constant $\eta >0$.) This computational threshold is similar in order to those derived  for a multitude of statistical estimation problems, from variants of Planted Clique, e.g., sparse PCA and robust mean estimation \citep{brennan_reducibility_2020}.
Notably,  $\sbmslr$ is close to a prominent formulation of  sparse phase retrieval  where $\y = |\X \bbeta| + \w$ \citep{brennan_reducibility_2020, fan_curse_2018}, which has been widely studied and is believed to possess a  $k$-to-$k^2$ statistical-computational gap \citep{liu_towards_2021, wu_hadamard_2021}.
%

The special case of sparse linear regression ($\slr$), where there is only one signal (i.e., $\bbeta_1= \bbeta_2$ in Definition \ref{def:MSLR})
has been extensively studied in the last few decades \citep{candes_decoding_2005, donoho_compressed_2006, wainwright_sharp_2009}. For $\slr$, the statistical-computational gap is much smaller, but still exists. Indeed, in the regime where $k = o(p)$,  the information-theoretically optimal sample complexity for $\slr$ is of order $\frac{k \log(p/k)}{\log(1 + \snr)}$ \citep{wang_information-theoretic_2010, reeves_all-or-nothing_2019}; in contrast, recent works such as \cite{bandeira_franz-parisi_2022, gamarnik_sparse_2022} have established lower bounds in the regime $\snr \to \infty$ via the study of the Overlap Gap Property and Low Degree polynomials, and shown that a  sample complexity of  order at least $k\log{p}$ is required for efficiently solving $\slr$.  Moreover,  upper bounds of the same order can be obtained using a number of algorithms \citep{wainwright_sharp_2009, bandeira_franz-parisi_2022, gamarnik_sparse_2022}.  

In this paper, for both $\mslr$ and the special case of $\slr$, we present new algorithmic lower bounds as well as upper bounds obtained by analyzing a simple thresholding algorithm. The thresholding algorithm, which we call $\corr$, was used by \citet{bandeira_franz-parisi_2022} to obtain upper bounds for approximate support recovery (up to $o(k)$ errors) in $\slr$, in the setting of binary signal and $\snr \to \infty$ with growing $k$. In all our results,  we make the  dependence on $\snr$ explicit, so that they hold for all $\snr$ regimes, including  $\snr = \Theta(1)$  and for $\snr = o(1)$. Before summarizing our results, we define the class of prior distributions we consider for the signals $\bbeta_1, \bbeta_2$.
\paragraph{Signal Priors} We consider joint priors for $\bbeta_1, \bbeta_2$ that are marginally uniform over $k$-sparse vectors $\bbeta_1, \bbeta_2 \in \reals^p$ with equal norm $\|\bbeta\|_2$. The case where the two signals have equal norm is more challenging as each entry of the observation $\y$  will have the same variance regardless of which signal it corresponds to. We denote such a prior  by $\mathcal{P}_{\|\bbeta\|_2}(\mathcal{D})$, where the non-zero entries of each vector  take values in $\mathcal{D} \subseteq \reals$. We assume that $\bmin := \min\{|\beta| \; | \; \beta \in \mathcal{D} \}  > 0$. 
\paragraph{Notation}
We use boldface font for vectors and matrices and plain font to denote scalars (e.g. $\boldsymbol{a}$ and $a$, respectively). For $\X \in \reals^{n \times p}$, $\x_i$ denotes the $i$-th row of this matrix and $\X_j$  the $j$-th column of this matrix. Throughout the work, we adopt the standard asymptotic notation $O(\cdot), \Omega(\cdot), o(\cdot), \omega(\cdot), \text{ and } \Theta(\cdot)$. We let $\tilde{O}(\cdot)$ and analogous variants denote these relations up to $\text{polylog}$ factors.  By $\lesssim$, $\gtrsim$, $\simeq$ we denote inequalities and equality up to constants, respectively. We let $[n] := \{1, 2\cdots n\}$. For the $\mslr$ setting in Definition \ref{def:MSLR} and the   parameter regime $\sbmslr$ in \eqref{eq:sbmslr}, we let $\mslr \setminus \sbmslr$ refer to the $\mslr$ problem with associated parameters lying outside the $\sbmslr$ parameter regime. 
\subsection{Our Contributions}
In what follows, our computational lower bound results hold in full generality for signals with bounded amplitude 
in the scaling regime $p \to \infty$, $n \to \infty$ and $k = o(\sqrt{p})$. Our algorithmic achievability results hold for general signals with high probability in the sublinear sparsity regime $p \to \infty$, $n \to \infty$, $k = o(p)$, and $n = \omega(k)$.

\paragraph{Computational Lower Bounds for $\mslr$ }
We provide novel rigorous evidence through the study of low-degree polynomials \citep{kunisky_notes_2022, schramm_computational_2022, hopkins_statistical_2018}  that there exists a fundamental algorithmic barrier to solving a  detection (hypothesis testing) variant of $\sbmslr$ at all sample complexities $n = o(\frac{k^2 (\snr + 1)^2}{\snr^2} \cdot \frac{1}{\log{p}})$ and sparsities $k = o(\sqrt{p})$.  Moreover, we show that this computational barrier implies a smooth tradeoff between sample  and  time complexities, preventing algorithms with  running time less than $\exp(\tilde{\Theta}(\frac{k^2}{n} \cdot (\snr + 1)^2/\snr^2))$ from succeeding. These results extend those of \citet{brennan_reducibility_2020, fan_curse_2018} by showing that $\sbmslr$ has a significant statistical-to-computational gap  in all \snr regimes (including the noiseless and $\snr= \omega(1)$ regimes), and by identifying a smooth tradeoff between sample size and running time in the hard regime. 

We then provide polynomial-time reductions between the detection and recovery variants of $\sbmslr$, for signals taking nonzero values in $\{1, -1\}$, translating our hardness results to evidence that exact support recovery is just as hard for growing \snr values. We also show that any $\mslr$ regime containing $\sbmslr$ as a subproblem must be hard, by reducing the $\sbmslr$ exact recovery problem to exact recovery in the more general \textit{Partially Symmetric Balanced} $\mslr$ regime, or $\psbmslr$ , where
\begin{align}
\psbmslr:  \phi = \frac{1}{2},  \text{ and } \bbeta_{1, j} = -\bbeta_{2, j} \text{ for } j \in J \subseteq \supp(\bbeta_1) \cap \supp(\bbeta_2), \text{ with } |J| = \Theta(k). \label{eq:psbmslr}
\end{align}
%
Our computational lower bounds for the  noiseless   version of $\sbmslr$  yield equivalent lower bounds for   exact support recovery in   sparse phase retrieval, where $\y = |\X \bbeta| + \w$.
This provides novel rigorous evidence of a computational barrier and a smooth information-computation tradeoff for solving exact support recovery in sparse phase retrieval with $n = \tilde{o}(k^2)$ samples,  addressing a prominent open question on the hardness of this problem \citep{liu_towards_2021, brennan_reducibility_2020, wu_hadamard_2021}.

\paragraph{Algorithms for $\mslr$ }
Perhaps surprisingly, however, we prove that the above algorithmic barrier vanishes outside of $\sbmslr$. We show that a simple thresholding algorithm called $\corr$  solves the detection variant of $\mslr$ outside of $\sbmslr$ with $O(np)$ running time and sample complexity $n$ of order $\frac{k (\snr + 1)}{\snr} \,  \log{p}$, matching that required for efficiently solving sparse linear regression. We note that $\sbmslr$ is a very narrow parameter regime. Indeed, for signal priors (on the non-zero  values) that are absolutely continuous with respect to the Lebesgue measure, the constraint \eqref{eq:sbmslr}  almost surely does not hold,  and therefore, $\corr$  succeeds on a set of measure one. 

In terms of the original recovery problem, $\corr$  is proven to exactly recover the joint support of both signals outside of a regime slightly broader than $\psbmslr$  (see Theorem \ref{thm:CORR-recovery} for a precise statement). Recovery of the joint support then reduces the problem to the dense or proportionally-sparse case ($k/p = n/p = \Theta(1)$) where existing algorithms can infer $\bbeta_1$ and $\bbeta_2$ exactly.
This extends the recent work of \cite{pal_learning_2022} which provides an exact joint support recovery algorithm for the case of binary signals (drawn from   $\{ 0,1 \}^p$) with sample complexity of order  $\frac{k (\snr + 1)}{\snr} \log^3{p}$. We highlight that the assumption of binary signals with all the non-zero entries equal to 1 is  restrictive as it does not encompass the important  regimes  $\sbmslr$, $\psbmslr$  where the problem is hard. 
%
We can summarize the algorithmically hard parameter regimes in set notation as:
\[ {\stackrel{\text{\footnotesize{``Low-degree hard detection''}}}{\sbmslr}} \subset \stackrel{\substack{\text{\footnotesize{``Exact support recovery}} \\ \text{\footnotesize{is hard by reduction''}}}}{\psbmslr} \subset \; \; \; \mslr. \]

\paragraph{Lower Bounds and Algorithms for $\slr$}
Our results also provide clarity into the computational barriers that arise in the  special case of sparse linear regression ($\slr$), where $\bbeta_1 = \bbeta_2$.
As mentioned in the introduction,  previous authors  have established that a  sample complexity of at least order $k\log{p}$ is required for efficient algorithms \citep{bandeira_franz-parisi_2022, gamarnik_sparse_2022}, with matching algorithmic upper bound results available for the case $\snr \to \infty$ \citep{wainwright_sharp_2009, bandeira_franz-parisi_2022, gamarnik_sparse_2022}. 
We extend these findings and provide rigorous low-degree evidence that polynomial-time algorithms require sample complexity of order at least  $n^{\slr}_{\alg}  := \frac{k (\snr + 1)}{\snr} \log{p}$ for the detection variant of $\slr$ (in the regime where $\| \bbeta\|^2_2$ is of order $k$). Our proof technique consists of a vanilla low-degree calculation for $\slr$; this is  different from the approach of \cite{bandeira_franz-parisi_2022}, who established a connection between the low-degree method and the Franz-Parisi criterion to obtain computational lower bounds for $\slr$. Our direct proof technique allows us to explicitly quantify the role of $\snr$ in the problem. 

Furthermore, we prove that $\corr$  solves both detection and signed support recovery in $\slr$ with $9n^{\slr}_{\alg}$ samples, for all \snr scalings and general sparse signal priors. Moreover, it runs in $O(np)$  time which can be significantly more efficient than alternative solutions such as the Lasso depending on the convergence criterion used \citep{wainwright_sharp_2009}. This in turn certifies the order optimality of $\corr$  for exact signed support recovery in $\slr$ with respect to the class of algorithms that are analytic polynomials of the input of degree at most $O(\log{p})$ (including spectral methods running in $O(\log{p})$ iterations). We note that the statistical-computational gap in $\slr$ between $\frac{k \log(p/k)}{\log(1 + \snr)}$ and $n^{\slr}_{\alg}$ is only up to multiplicative constants unless $\snr = \omega(1)$.
 
Our contributions are summarized along with existing results in Table \ref{table} below, for signals taking values in $\{-1,0,1 \}$. In Table \ref{table}, $n_{\IT}$ denotes the information-theoretic threshold for detection (and by reduction, recovery) and $n_{\alg}$ denotes the sample threshold for efficient algorithms. Importantly, we show that $\mslr$ behaves like $\slr$ outside of the narrow $\sbmslr$ regime, and reconcile existing results in the literature proving achievable sample complexity of order $k$ in the binary case \citep{pal_learning_2022} but of order $k^2$ in the general case \citep{stadler_l1-penalization_2010}. These results lead us to believe that the $\frac{k}{\snr^2}$-to-$\frac{k^2 (\snr + 1)^2}{\snr^2}$ gap arises from brittle symmetries in the signals, and that $\sbmslr$ and $\slr$ are computationally very different problems, the former only inefficiently solvable in high-dimensional settings.
\begin{table}[ht] 
\centering
\begin{tabular}{ |p{3.5cm}||p{3.4cm}|p{3.4cm}|p{4.8cm}|  }
 \hline
  & Information-theoretic lower bound $n_{\IT} $ & Algorithmic lower bound $n_{\alg} $ & Algorithms\\
 \hline
 $\mslr$ (Previous)  &  $\tilde{\Theta}(k/\snr^2 )$ \citep{fan_curse_2018} & $\tilde{\Theta}(k^2/\snr^2 )$  \citep{fan_curse_2018, brennan_reducibility_2020} &  $\ell_1$-\text{penalization} \big($n = \Omega(k^2)$, $\snr \to \infty$); Polynomial Identities \big(for 0-1 valued signals, $n = \Omega(\frac{k(\snr + 1)}{\snr} \log^3{p})$\big) \citep{stadler_l1-penalization_2010, pal_learning_2022}\\
 \hline
 $\mslr$ (This Work) &    &
  &  \\
  \; $\sbmslr$, $\psbmslr$  &    & $\Theta\left(\frac{k^2 (\snr + 1)^2}{\snr^2}  \frac{1}{\log{p}} \right)$
  & \\
  \; $\mslr\setminus\sbmslr$ &    &
  & $\corr$ \big($n = \Omega\big(\frac{k(\snr + 1)}{\snr} \log{p}\big)$\big) \\
 \hline
 $\slr$ (Previous) & $\Theta\left(\frac{2k\log{(p/k)}}{\log_2{(1 + \snr)}}\right)$ \citep{wang_information-theoretic_2010, gamarnik_sparse_2022, reeves_all-or-nothing_2019} & $\Theta(k\log{p})$  \citep{wainwright_sharp_2009, gamarnik_sparse_2022, bandeira_franz-parisi_2022, arpino_computational_2021} & Lasso, $\corr$ , Search, OMP ($n = \Omega(k \log{p})$, $\snr \to \infty$) \citep{wainwright_sharp_2009, bandeira_franz-parisi_2022, gamarnik_sparse_2022, wainwright_information-theoretic_2009, cai_orthogonal_2011}\\ 
 \hline
 $\slr$ (This Work)    &   & $\Theta{\left(\frac{k(\snr + 1)}{\snr} \log{p}\right)}$ & $\corr$  ($n \geq \frac{8k(\snr + 1)}{\bmin^2 \snr} \log{2p}$)\\
 \hline
\end{tabular}
\caption{\label{table}Summary of contributions for signals taking values in $\{-1,0,1\}$.}
\end{table}
\vspace{-2pt}

\subsection{Connections to Previous Work}
Among the first works rigorously evidencing statistical-to-computational gaps was that of \cite{barak_nearly_2016} who proved a tight computational lower bound for the Planted Clique ($\PC$) problem using the sum-of-squares (SOS) hierarchy.  Based on the SOS method, \cite{hopkins_statistical_2018} then formulated a conjecture  (a version of Conjecture \ref{conj:low-degree-conjecture} described in the next subsection) on the optimality of low-degree polynomials for hypothesis testing. This approach has yielded evidence for computational barriers in high-dimensional inference problems such as sparse PCA \citep{hopkins_efficient_2017, bandeira_computational_2020}. Other approaches to evidencing computational barriers  include the failure of classes of algorithms such as statistical query \citep{diakonikolas_efficient_2019}, local \citep{linial_locality_1992, gamarnik_limits_2017} and message passing algorithms \citep{zdeborova_statistical_2016,krzakala_gibbs_2007}, and the reduction from variants of canonical ``hard'' problems such as Planted Clique \citep{berthet_complexity_2013, brennan_reducibility_2020}.

Notably, the problem of high-dimensional $\mslr$ has attracted attention as the special case of $\sbmslr$ has been shown to exhibit a $k$-to-$k^2$ statistical-to-computational gap, which we more precisely define as a $\frac{k}{\snr^2}$-to-$\frac{k^2}{\snr^2}$ gap. This was identified through the study of average-case reductions from Planted Clique \citep{brennan_reducibility_2020} and the statistical query model \citep{fan_curse_2018}. After noticing that no polynomial-time algorithms for $\sbmslr$ were known to succeed below sample complexity $\tilde{\Theta}(k^2/\snr^2)$, \citet{fan_curse_2018} derived lower bounds on the information-theoretic and computational limits of an associated detection problem. Specifically, they proved that the information-theoretic minimal sample complexity is $n = \tilde{\Theta}(k/\snr^2)$, while statistical query algorithms (and conjecturally polynomial-time algorithms) are proven to fail for all sample complexities below the larger threshold of $n = \tilde{o}\left(k^2 / \snr^2 \right)$. This matches in order the failure threshold of many existing algorithms in the literature, although it has not been rigorously shown that the computational lower bound is tight. 

Similarly,  \citet{brennan_reducibility_2020} proved that the associated detection problem we consider in this work ($\sbmslrd$) reduces to a variant of the $\PC$ detection problem termed ``Secret Leakage $\PC$'' in a regime contained within sample complexity $n = o\left(k^2 / \snr^2 \right)$. The detection version of Planted Clique can be formulated as that of identifying whether a clique of size $k$ has been artifically ``planted'' in an Erdös-Rényi graph of size $n$. The problem can be solved by exhaustive search for $k = \Omega(\log n)$. The Planted Clique conjecture is that there is no polynomial time algorithm solving $\PC$ if $k = o(\sqrt{n})$. There are a variety of sources of evidence for the $\PC$ conjecture, see \cite{feldman_statistical_2013, barak_nearly_2016, brennan_reducibility_2020} and the references therein. 

The results above provide evidence for a $\frac{k}{\snr^2}$-to-$\frac{k^2}{\snr^2}$ statistical-to-computational gap between the information-theoretic and the computational limits of $\sbmslr$.  
More broadly, the work in \cite{brennan_average-case_2020} makes a step towards understanding the pervasiveness of $k$-to-$k^2$ gaps in high-dimensional statistics by showing that efficient algorithms for learning mixtures with $k$-sparse means require at least $\tilde{\Omega}(k^2)$ sample complexity.  In Theorem \ref{thm:sbmslrd-hardness-cor-snr}, we sharpen the existing computational lower bounds for $\sbmslr$, evidencing a 
more extensive $\frac{k}{\snr^2}$-to-$\frac{k^2 (\snr + 1)^2}{\snr^2}$ gap,  which unlike earlier lower bounds, indicates a significant computational barrier even in the noiseless regime ($\snr = \infty$).

\subsection{The Low-Degree Method} \label{subsec:LD_method}
The low-degree method is a framework for obtaining lower bounds on the complexity of hypothesis testing problems, that emerged from the study of the sum-of-squares hierarchy \citep{barak_nearly_2016, hopkins_power_2017, hopkins_efficient_2017, hopkins_statistical_2018}. The low-degree method boils
down to rigorously ruling out the possibility of low-degree polynomial functions of the input for
solving a given hypothesis testing problem. 
Consider the setting of simple binary hypothesis testing, where one seeks to to distinguish between two  distributions $\P_N$ and $\Q_N$ over $\reals^N$, where $N$ is the (potentially growing) problem size. Given a sample $\x$ drawn from either $\P_N$ or $\Q_N$, the goal is to identify whether $\x$ originated from the former or the latter through a hypothesis test. In our setting of $\mslr$, we can view $N= np + n$ as the total dimension of our data  $(\X, \y)$, and notice that $\log{N} = O(\log{p})$. We consider two notions of success in testing:
\begin{itemize}
	\item \textbf{Strong Detection/Distinguishing}: the test succeeds with probability $1 - o(1)$ as $p \to \infty$.
	\item \textbf{Weak Detection/Distinguishing}: the test succeeds with probability $\frac{1}{2} + \epsilon$ for some constant $\epsilon > 0$.
\end{itemize}

A \textit{degree-D polynomial algorithm} denotes a sequence of (possibly random) multivariate polynomials $g_N: \reals^N \to \reals$ of degree $D$, and $f_{\leq D}$ we denotes the orthogonal projection of a function $f$ onto the space of degree-$D$ polynomials. Over the last decade, it has been established that for a large array of high-dimensional testing problems (including sparse PCA, planted clique, community detection, and many others), the class of degree-$O(\log{p})$ polynomial algorithms is strictly as powerful as the best known polynomial-time algorithms \citep{bandeira_computational_2020, ding_subexponential-time_2023, hopkins_statistical_2018, hopkins_efficient_2017, hopkins_power_2017, kunisky_notes_2022}. This is formalized in the following conjecture. 
%
\begin{conjecture} [The Low Degree Conjecture \cite{coja-oghlan_statistical_2022, hopkins_statistical_2018}] \label{conj:low-degree-conjecture}
Define the chi-square divergence between $\P_N$ and  $\Q_N$ as $\chi^2(\P_N \| \Q_N) := \E_{\x \distas{} \Q_N} \frac{d\P_N(\x)}{d\Q_N(\x)}^2 - 1$, and let
$\chi^2_{\leq D}(\P_N \| \Q_N)$ be its projection onto the space of degree-$D$ polynomials.
\begin{itemize}
	\item If $\chi^2_{\leq D}(\P_N \| \Q_N) = O(1)$ for some $D = \omega(\log{N})$, strong detection has no polynomial-time algorithm and furthermore requires runtime $\exp(\tilde\Omega(D))$.
	\item If $\chi^2_{\leq D}(\P_N \| \Q_N) = o(1)$ for some $D = \omega(\log{N})$, weak detection has no polynomial-time algorithm and furthermore requires runtime $\exp(\tilde\Omega(D))$.
\end{itemize}
\end{conjecture}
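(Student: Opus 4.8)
The statement to be established is a \emph{conjecture}, not a theorem: it is the Low-Degree Conjecture, a working hypothesis that is widely believed but remains open, and a genuine unconditional proof is well beyond current techniques (it would entail average-case hardness results that would resolve major open problems in complexity theory). A realistic ``proof proposal'' therefore separates into two parts of very different status: a rigorous backbone that one genuinely can prove, and a conjectural leap that one can only motivate.

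The rigorous backbone is the low-degree likelihood ratio calculation. Writing $L_N = d\P_N/d\Q_N$ for the likelihood ratio, the plan is to expand $L_N$ in an orthonormal polynomial basis for $L^2(\Q_N)$ and let $L_N^{\leq D}$ denote its orthogonal projection onto polynomials of degree at most $D$. A short Hilbert-space computation then gives $\| L_N^{\leq D} \|_{\Q_N}^2 = 1 + \chi^2_{\leq D}(\P_N \| \Q_N)$, and the optimal degree-$D$ polynomial distinguisher (in the $L^2$ sense) is exactly $L_N^{\leq D}$, whose separating power between $\P_N$ and $\Q_N$ is governed by this norm. Via Cauchy--Schwarz and Chebyshev's inequality, $\chi^2_{\leq D}(\P_N \| \Q_N) = O(1)$ forces every degree-$D$ polynomial to fail at strong detection, and $\chi^2_{\leq D}(\P_N \| \Q_N) = o(1)$ forces failure even at weak detection. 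This half is standard and can be made fully rigorous for any $D$.

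The conjectural leap is the identification of \emph{degree-$O(\log N)$ polynomials with polynomial-time algorithms}, together with the finer time--degree tradeoff asserting runtime $\exp(\tilde\Omega(D))$. Here I would not attempt a proof but instead assemble the usual evidence: the heuristic that each additional power in a distinguishing polynomial costs a multiplicative $N$ in evaluation time, so degree $D$ corresponds to runtime $N^{\Theta(D)} = \exp(\Theta(D \log N))$; the pseudo-calibration and sum-of-squares correspondence from which the conjecture originated; and the meta-observation that across a large family of planted problems (planted clique, sparse PCA, community detection) the degree-$O(\log N)$ threshold coincides with the best known polynomial-time threshold. Noise-robustness or smoothness side conditions are typically imposed to rule out brittle algebraic algorithms, and I would state the conjecture under such conditions.

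The main obstacle is precisely this second step: there is no known route from ``no low-degree polynomial distinguishes'' to ``no polynomial-time algorithm distinguishes,'' and supplying one would amount to proving unconditional average-case lower bounds. Consequently the honest conclusion is that the conjecture is used as a \emph{hypothesis}. The paper's own contribution is not to prove it but to \emph{verify its premises} for $\mslr$: namely to bound $\chi^2_{\leq D}(\P_N \| \Q_N)$ for the planted and null models arising in $\sbmslr$, and thereby invoke the conjecture to conclude computational hardness. I would make sure the verification uses $D = \omega(\log N)$ with $\log N = O(\log p)$, so that the resulting runtime lower bound $\exp(\tilde\Omega(D))$ is meaningful in the intended scaling.
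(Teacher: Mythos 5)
Your proposal is correct and matches the paper's treatment: the statement is a conjecture that the paper does not (and cannot) prove, but instead states with citations to \cite{coja-oghlan_statistical_2022, hopkins_statistical_2018} and invokes as a hypothesis, while its rigorous content (the $L^2$ characterization of $\chi^2_{\leq D}$ and its testing implications) is exactly the background material the paper collects in Appendix \ref{appendix:background-low-degree}. Your identification of the rigorous backbone versus the conjectural leap, and of the paper's actual contribution as verifying the premise $\chi^2_{\leq D}(\P_N \| \Q_N) = O(1)$ for $D = \omega(\log N)$ in the $\mslr$ setting, is precisely how the paper uses this statement.
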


A variety of state-of-the-art algorithms can be approximated by low-degree polynomials and therefore rigorously ruled out by low-degree lower bounds of the above form, including the important class of spectral methods (see Theorem 4.4 of \cite{kunisky_notes_2022}), and all statistical query algorithms \citep{brennan_statistical_2021}. Recent works have also proven the equivalence between low-degree polynomial algorithms and well-established algorithmic solutions derived from statistical physics in certain classes of problems \citep{bandeira_franz-parisi_2022, montanari_equivalence_2022}. Although degree $O(\log{p})$ polynomials are not proven to encompass all polynomial-time algorithms, the success of such a polynomial in hypothesis testing tends to indicate the success of general polynomial-time algorithms. In this light, we aim to provide concrete evidence for computational hardness in $\mslr$ and $\slr$ by proving a low-degree lower bound of the form $\chi^2_{\leq D}(\P_N \| \Q_N) = O(1)$ for an associated detection problem, which can then be reduced to recovery. For more background on the low-degree method, see Appendix \ref{appendix:background-low-degree}.

\section{Main Results}
\subsection{Lower bounds for $\mslr$}
We begin by defining a detection variant of $\mslr$, where given $(\X, \y)$ the goal is to distinguish between two hypotheses: one in which  the data correspond to the $\mslr$ model, and another in which $\X$ and $\y$ are independent.
\begin{definition}[Detection Variant $\mslrd$] \label{def:mslrd}
For $\X \in \mathbb{R}^{n \times p}$, $\sigma > 0$, and $\w \in \mathbb{R}^{n}$, consider the following hypothesis testing problem:
\begin{align*}
   &  {\P(\X) \otimes \P(\y)}: \begin{bmatrix}
        \X \\
        \y
    \end{bmatrix} = \begin{bmatrix}
           \X \\
           \sqrt{\frac{\|\bbeta\|^2_2}{\sigma^2} + 1} \cdot \w
         \end{bmatrix} \\
   &  {\P(\X, \y)}: \begin{bmatrix}
        \X \\
        \y
    \end{bmatrix} = \begin{bmatrix}
           \X \\
           \frac{1}{\sigma} \X\bbeta_1 \odot \z + \frac{1}{\sigma} \X \bbeta_2 \odot (1 - \z) + \w
         \end{bmatrix}
\end{align*}
where $(\bbeta_1, \bbeta_2) \sim \mathcal{P}_{\|\bbeta\|_2}(\mathcal{D})$, and $X_{i, j} \distas{\text{i.i.d.}} \mathcal{N}(0, 1)$, $w_i \distas{\text{i.i.d.}} \mathcal{N}(0, 1)$, $z_i \distas{\text{i.i.d.}} \text{Bernoulli}(\phi)$. The task is to construct a function $f$ which strongly distinguishes $\P(\X) \otimes \P(\y)$ from $\P(\X, \y)$.
\end{definition}
Notice that the marginal distributions of $\P(\X) \otimes \P(\y)$ and $\P(\X, \y)$ are equal, so as to rule out solutions that simply threshold the moments of $\y$ and ignore $\X$. The corresponding detection variant of $\sbmslr$, denoted by $\sbmslrd$, is defined similarly to $\mslrd$ in the parameter regime of $\sbmslr$ given in \eqref{eq:sbmslr}.  From this formulation we obtain the following hardness result for $\sbmslrd$. The proof is given in Appendix \ref{subsec:sbmslrd-hardness-proofs}.
\begin{theorem} [Low-degree lower bound for $\sbmslrd$] \label{thm:sbmslrd-hardness-cor-snr}
Consider the setting of $\sbmslrd$ with $\bbeta_1, \bbeta_2 \sim \mathcal{P}_{\|\bbeta\|_2}(\mathcal{D})$, and bounded amplitude signals ($\bmin = \Theta(\|\bbeta\|_{\infty})$). For sample sizes $n$ where $n = \omega(\max\{k, \log{p}\})$ and $n = o\left(\frac{k^2 (\snr + 1)^2}{\snr^2} \cdot \frac{1}{\log{p}}\right)$, Conjecture \ref{conj:low-degree-conjecture} implies that any randomized algorithm requires running time $\exp\left({\tilde{\Omega}\left(\min\left\{\frac{k^2 (\snr + 1)^2}{n \snr^2 }, n\right\} \right)}\right)$ to solve $\sbmslrd$ in the regime $k = o(\sqrt{p})$.
\end{theorem}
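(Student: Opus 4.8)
The plan is to invoke the Low Degree Conjecture (Conjecture \ref{conj:low-degree-conjecture}): it suffices to exhibit a degree $D = \omega(\log p)$ for which the low-degree chi-square $\chi^2_{\leq D}(\P(\X,\y)\,\|\,\P(\X)\otimes\P(\y)) = O(1)$, and then read off the runtime lower bound $\exp(\tilde\Omega(D))$. Write $\Q = \P(\X)\otimes\P(\y)$ for the null and $\P = \P(\X,\y) = \E_{\bbeta,\z}[\P_{\bbeta,\z}]$ for the planted law, where under $\P_{\bbeta,\z}$ the pair $(\X,\y)$ is conditionally jointly Gaussian. In the $\sbmslr$ regime the $i$-th response collapses to $y_i = \frac{1}{\sigma}(\x_i\cdot\bbeta)s_i + w_i$ with $s_i := 2z_i - 1$ uniform on $\{\pm1\}$, while under $\Q$ one has $y_i\sim\mathcal N(0,\snr+1)$; crucially the marginals of $\X$ and of $\y$ agree under $\P$ and $\Q$. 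First I would expand $\chi^2_{\leq D} = \sum_{0<|\alpha|\leq D}(\E_\P[h_\alpha])^2$ in the orthonormal Hermite basis $\{h_\alpha\}$ of $\Q$ (products of Hermite polynomials in the $X_{ij}$ and in $y_i/\sqrt{\snr+1}$), introduce a second independent copy $(\bbeta',\z')$ to linearise the square, and obtain
\[ \chi^2_{\leq D} = \E_{\bbeta,\z,\bbeta',\z'}\Big[\sum_{0<|\alpha|\leq D}\E_{\P_{\bbeta,\z}}[h_\alpha]\,\E_{\P_{\bbeta',\z'}}[h_\alpha]\Big]. \]

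Next I would exploit that both $\Q$ and each $\P_{\bbeta,\z}$ are products over the $n$ rows, so the inner sum factorises over rows once organised by the per-row degree; within row $i$ the only nontrivial correlation is $\mathrm{Cov}(X_{ij},y_i) = \beta_j s_i/\sigma$, and the per-row Hermite expectations follow from a Wick/pairing argument. The heart of the matter is the symmetry-induced vanishing: after averaging over $s_i,s_i'$, every per-row contribution of total degree $1,2,3$ either vanishes because it carries an odd power of $s_i$, or because it involves only $X$-variables or only $y$-variables (whose marginals match $\Q$). The first surviving order is degree four, where the two admissible pairings collapse to the clean per-row term $\bar c_4 = \langle\bbeta,\bbeta'\rangle^2/(\sigma^4(\snr+1)^2)$; this \emph{absence of a degree-two signal}, forcing a quadratic-in-$\bbeta$ statistic, is exactly the mechanism turning the $k$ threshold into a $k^2$ threshold. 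Keeping the leading configuration (choose $t$ rows to each carry a degree-four unit, the rest contributing $1$) reduces the bound to $\chi^2_{\leq D}\lesssim\sum_{t\leq D/4}\binom nt\,\E_{\bbeta,\bbeta'}[\bar c_4^{\,t}]$.

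The combinatorial core is then to evaluate $\E_{\bbeta,\bbeta'}[\langle\bbeta,\bbeta'\rangle^{2t}]$ for two independent random $k$-sparse signals. Expanding over index tuples, only tuples in which every coordinate appears with even multiplicity survive, and each distinct shared coordinate costs a factor $(k/p)^2$; since $k = o(\sqrt p)$ makes support overlaps rare, the sum is dominated by the most concentrated configuration — all $2t$ indices equal — giving $\E_{\bbeta,\bbeta'}[\langle\bbeta,\bbeta'\rangle^{2t}]\approx (k^2/p)\,(\|\bbeta\|_2^2/k)^{2t}$. Substituting yields $\binom nt\,\E_{\bbeta,\bbeta'}[\bar c_4^{\,t}]\approx\frac{k^2}{p}\frac{1}{t!}\lambda^t$ with $\lambda := n\,\snr^2/(k^2(\snr+1)^2)$, so that $\chi^2_{\leq D}\lesssim\frac{k^2}{p}\sum_{t\leq D/4}\lambda^t/t!$. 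In the regime $n = o\!\big(\tfrac{k^2(\snr+1)^2}{\snr^2\log p}\big)$ one has $\lambda = o(1/\log p) < 1$, so the truncated exponential is $O(1)$ and, using $k^2/p = o(1)$, we in fact get $\chi^2_{\leq D} = o(1)$ throughout the range of $D$ where the approximation is valid; taking $D = \tilde\Theta(\min\{1/\lambda,\,n\}) = \omega(\log p)$ and invoking the Conjecture delivers the runtime $\exp(\tilde\Omega(\min\{k^2(\snr+1)^2/(n\snr^2),\,n\}))$.

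I expect the \emph{main obstacle} to be making the truncation rigorous — that is, controlling everything dropped from the leading-order picture: the higher per-row orders $\bar c_6,\bar c_8,\dots$, the multi-unit-per-row terms, and the subleading overlap configurations in the moment computation (whose dominance over the concentrated term degrades as $t$ grows relative to $\log(p/k^2)$). These neglected contributions are what ultimately limit the admissible degree to $D\sim\min\{1/\lambda,\,n\}$ (the cap at $n$ reflecting that degree can be distributed over at most $n$ independent rows) and generate the $\log p$ factor in the sample threshold, so the bulk of the technical effort is a careful term-by-term domination showing that none of these pieces overturns the bound below degree $D$.
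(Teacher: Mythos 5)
Your skeleton coincides with the paper's proof: expand the likelihood ratio in the Hermite basis that is orthonormal for the null, introduce a second independent copy $(\bbeta',\z')$ to linearise the square, note that matching marginals force the per-row $y$-degree to equal the per-row $X$-degree, and observe that the $\phi=1/2$, $\bbeta_1=-\bbeta_2$ symmetry annihilates every odd per-row order, leaving $\langle\bbeta,\bbeta'\rangle^2/(\|\bbeta\|_2^2+\sigma^2)^2$ as the leading per-row contribution (this is the content of Lemmas \ref{lemma:mslr-inner-prod}, \ref{lemma:mslrd-norm} and \ref{lemma:mslr-hard-norm}). The genuine gap is in the quantitative core, which is exactly what you defer to ``term-by-term domination.'' Your bound rests on two replacements: (a) counting only one degree-four unit per row (the $\binom{n}{t}$ factor), and (b) replacing $\E\langle\bbeta,\bbeta'\rangle^{2t}$ by its $l=1$ overlap configuration $(k^2/p)(\|\bbeta\|_2^2/k)^{2t}$. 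Replacement (b) fails precisely where the theorem needs it. Writing the moment as $\sum_{l}\P\big[|\mathcal{S}^{(1)}\cap\mathcal{S}^{(2)}|=l\big]\,(l\|\bbeta\|_\infty^2)^{2t}$ with $\P[\,\cdot\,]\lesssim (k^2/p)^l/l!$, the ratio of the $(l+1)$-th to the $l$-th term is roughly $(k^2/p)(1+1/l)^{2t}/(l+1)$, which exceeds $1$ once $t\gtrsim l\log(p/k^2)$; so for $t\gg\log p$ the sum is dominated by overlaps $l\gg 1$, not by $l=1$. But invoking Conjecture \ref{conj:low-degree-conjecture} requires $D=\omega(\log p)$, i.e.\ $t$ up to $D/4=\omega(\log p)$, and the claimed runtime needs $D$ as large as $\tilde{\Theta}\big(\min\{k^2(\snr+1)^2/(n\snr^2),\,n\}\big)$. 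Thus the approximation you rely on breaks before the conclusion applies; and since your final truncated exponential converges for \emph{every} $D$, the degree cap cannot emerge from the displayed bound at all --- it must come from the discarded terms, for which no controlling argument is given.

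The paper closes this without any dominance claim. It counts all even per-row compositions exactly by stars and bars, $\binom{d/4+n-1}{n-1}\le (D/4+n)^{d/4}/(d/4)!$ (every composition carries the same power $\langle\bbeta,\bbeta'\rangle^{d/2}$, so your multi-unit-per-row and higher-order terms are included, not dominated), bounds the overlap moment uniformly on each overlap event by $(l\|\bbeta\|_\infty^2)^{d/2}$, and then \emph{swaps} the sums over $d$ and $l$: under the constraint $(D/4+n)D\le(\|\bbeta\|_2^2+\sigma^2)^2/\|\bbeta\|_\infty^4$ the inner $d$-sum is at most $\exp(l^2/D)$, and under $k\le D$ (this is where the hypothesis $n=\omega(k)$ enters) one has $\exp(l^2/D)\le e^{l}$, giving $\chi^2_{\leq D}+1\le 4\sum_{l\ge 0}(k^2e/p)^l=O(1)$ for $k\le\sqrt{p/e}$ (Theorem \ref{thm:sbmslrd-hardness}). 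The two constraints on $D$ in that computation are precisely the two branches of the $\min$ in the runtime exponent; this sum-swap step, which handles all overlap sizes and all per-row degrees simultaneously, is the missing ingredient your proposal needs to become a proof.
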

Theorem \ref{thm:sbmslrd-hardness-cor-snr} is our main low-degree hardness result. There are three regimes of interest, which we describe in terms of $n^\sbmslr_{\alg} := \frac{k^2 (\snr + 1)^2}{\snr^2}$. First, if  $n = \Omega\left(n^\sbmslr_{\alg} / {\log{p}}\right)$, the lower bound on the running time  in Theorem \ref{thm:sbmslrd-hardness-cor-snr} equals $e^{\tilde{O}(\log{p})}$, and hence does not rule out polynomial-time solutions. Otherwise, Theorem \ref{thm:sbmslrd-hardness-cor-snr} (via Conjecture \ref{conj:low-degree-conjecture}) implies a smooth tradeoff between sample size $n$ and super-polynomial (but sub-exponential) running time $\exp\left(\tilde{\Omega}\left(n^\sbmslr_{\alg} / n\right)\right)$, for $n = \omega(({n^\sbmslr_{\alg}})^\frac{1}{2})$; this is reminiscent of a similar  tradeoff in Sparse PCA \citep{ding_subexponential-time_2023}. In the third case, where $n = o(({n^\sbmslr_{\alg}})^\frac{1}{2})$, Theorem \ref{thm:sbmslrd-hardness-cor-snr} implies that $e^{\tilde{\Omega}(n)}$ running time is required. 
Thus there are  three distinct computational regimes depending on the sample complexity $n$:  the first permitting polynomial-time solutions, the second enforcing a smooth inversely related information-computation tradeoff, and the last implying an exponential increase in running time as the sample size increases. This extends the results of \citep{brennan_reducibility_2020, fan_curse_2018} which indicated that the $n = \tilde{o}(k^2/\snr^2)$ sample regime presents statistical-query and planted-clique related algorithmic barriers for $\sbmslrd$ with signals in $\{-1, 0, 1\}^p$; note that a lower bound of  order  $k^2/\snr^2$ is vacuous in the noiseless setting, as well as in the natural setting where $\snr = \frac{\| \bbeta \|_2^2}{\sigma^2} = \Theta(k)$. 

The work in \cite{fan_curse_2018} proved that the information-theoretic minimal sample complexity of $\sbmslrd$ is $n = \tilde{\Theta}(k / \snr^2)$, which is vacuous for $\snr = \omega(\sqrt{k})$. The information-theoretic minimal sample complexity of the related sparse phase retrieval ($\spr$) detection problem, however, is known to be of order $k \log{p}$ for a broad class of signal-to-noise ratios (see, for example, Theorem 3.2 in \cite{cai_optimal_2016} and Section 6.1 in \cite{lecue_minimax_2015}). By straightforward reductions from $\slr$ to $\sbmslr$ to $\spr$, one can show that the information-theoretic sample complexity of detection in $\sbmslr$ lies between $\frac{k \log{(p/k)}}{\log{(1 + \snr)}}$ and $k \log{p}$. In this light, Theorem \ref{thm:sbmslrd-hardness-cor-snr} certifies a statistical-computational gap in $\sbmslrd$ of order at least $k$ for broad $\snr$ regimes. 

We highlight that Theorem \ref{thm:sbmslrd-hardness-cor-snr} rigorously rules out the success of analytic polynomials of the input of degree at most $O(\log{p})$, including spectral methods. The $k = o(\sqrt{p})$ assumption is often standard for detection lower bounds where the signal is $k$-sparse (see \citep{brennan_reducibility_2020, fan_curse_2018, ding_subexponential-time_2023} and references therein), and can at times be lifted by conditioning away a certain bad event \citep{bandeira_franz-parisi_2022}. 
\begin{remark}
We have included the bounded amplitude assumption in Theorem \ref{thm:sbmslrd-hardness-cor-snr} for  interpretability. The dependence on $\| \bbeta \|_{\infty}$ can be made explicit by replacing  $k^2$ in Theorem \ref{thm:sbmslrd-hardness-cor-snr} with 
$\| \bbeta \|_2^4/ \| \bbeta \|_{\infty}^4$. We believe the dependence on $\| \bbeta \|_{\infty}$ is an artifact of the proof technique; see Appendix \ref{subsec:sbmslrd-hardness-proofs}.
\end{remark}
Through Theorem \ref{thm:psbmslrd-hardness} in Appendix \ref{sec:Proofs-red}, we provide a polynomial-time reduction from $\sbmslrd$ to exact support recovery in $\psbmslr$ , for signals in $\{-1, 0, 1\}^p$ and $\snr = \omega(1)$, transferring hardness from Theorem \ref{thm:sbmslrd-hardness-cor-snr} to this case. In Appendix \ref{sec:spr-hardness}, we provide a polynomial-time reduction from $\sbmslrd$ to both exact support recovery and detection in sparse phase retrieval ($\spr$) for signals with non-zero entries in $\{-1, 0, 1\}^p$, translating the hardness results of Theorem \ref{thm:sbmslrd-hardness-cor-snr} to $\spr$. This provides novel rigorous evidence for the conjecture that $\spr$ is computationally infeasible for sample sizes $n = \tilde{o}(k^2)$ \citep{wu_hadamard_2021, li_pursuing_2022, brennan_reducibility_2020}.
\subsection{Algorithms for $\mslr$}
We denote the support sets of $\bbeta_1, \bbeta_2$ by $\mc{S}_1, \mc{S}_2$, respectively. Note that $|\mc{S}_1| = |\mc{S}_2| =k$.
Let us define the following quantities:
\begin{align*}
& \langle \bbeta  \rangle^2_{\texttt{min}} := \min_{j \in \mathcal{S}_1 \cup \mathcal{S}_2} (\phi \bbeta_{1, j} + (1-\phi) \bbeta_{2, j})^2, \\
& \langle \bbeta \rangle^2_{>0} := \min\limits_{\substack{j \in \mathcal{S}_1 \cup \mathcal{S}_2 \\ (\phi \bbeta_{1, j} + (1-\phi) \bbeta_{2, j}) > 0}} (\phi \bbeta_{1, j} + (1-\phi) \bbeta_{2, j})^2.
\end{align*}
Note that $\langle \bbeta \rangle^2_{>0} > 0$ for $(\bbeta_1, \bbeta_2) \distas{} \mathcal{P}_{\|\bbeta\|_2}(\mathcal{D})$ outside of the $\sbmslr$ regime. 
Also recall that $\bmin = \min\{|\beta| \; | \; \beta \in \mathcal{D} \}  > 0$.

\begin{definition} [$\corr$ ]
Let $\mathtt{CORR}$ be the algorithm that outputs an estimate of the joint support set $\mathcal{S}_1 \cup \mathcal{S}_2$ of $\bbeta_1$, $\bbeta_2$ according to $\widehat{\mathcal{S}_1 \cup \mathcal{S}_2} = \left\{j \in [p] : \, \left|\frac{\langle \X_j, \y \rangle}{\|\y\|_2}\right| \geq \tau \right\}$, where $\tau = \sqrt{2(1 + \frac{\epsilon}{2}) \log{2p}}$ for some $\epsilon \in (0,1)$.
\end{definition}
\begin{theorem} [Success of $\mathtt{CORR}$ on $\mslrd$ outside $\sbmslr$] \label{thm:CORR-mslrd-general}
Consider the general setting of $\mslrd \setminus \sbmslr$ with $(\bbeta_1, \bbeta_2) \distas{} \mathcal{P}_{\|\bbeta\|_2}(\mathcal{D})$. Let $\epsilon \in (0,1)$ be the parameter used in  $\corr$. Then provided 
\[ n \geq \frac{32(1+\epsilon)}{\min\{\phi^2 \bbeta^2_{\texttt{min}}, \, (1-\phi)^2 \bbeta^2_{\texttt{min}}, \, \langle \bbeta \rangle^2_{>0} \}} \frac{\|\bbeta\|^2_2 \,  (\snr + 1)}{\snr} \log{2p}, \] the $\mathtt{CORR}$ algorithm solves strong detection in $\mslrd \setminus \sbmslr$. 
\end{theorem}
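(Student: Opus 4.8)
The plan is to analyze the two-sided test that declares the planted $\mslrd$ hypothesis whenever $\max_{j \in [p]} |\langle \X_j, \y\rangle|/\|\y\|_2 \geq \tau$, and to bound its two error probabilities separately. First I would handle the Type I error under $\P(\X) \otimes \P(\y)$. Here $\X$ is independent of $\y$, so conditioning on $\y$ each column satisfies $\langle \X_j, \y\rangle = \sum_i X_{ij} y_i \sim \mathcal{N}(0, \|\y\|_2^2)$, whence $\langle \X_j, \y\rangle/\|\y\|_2$ is standard Gaussian; because the columns $\X_j$ are independent, these $p$ statistics are mutually independent. A Gaussian tail bound gives $\Pr[|\mathcal{N}(0,1)| \geq \tau] \leq 2 e^{-\tau^2/2} = 2(2p)^{-(1+\epsilon/2)}$, and a union bound over the $p$ columns shows the maximum stays below $\tau$ with probability $1 - o(1)$. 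This both fixes the scale of $\tau = \sqrt{2(1+\epsilon/2)\log 2p}$ and controls false positives.

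The substance is the Type II error under $\P(\X, \y)$, for which it suffices to exhibit a single support coordinate $j^\star$ whose statistic exceeds $\tau$ with high probability. I would first compute the mean. Writing $y_i = \frac{1}{\sigma}(\x_i^\top \bbeta_1) z_i + \frac{1}{\sigma}(\x_i^\top \bbeta_2)(1-z_i) + w_i$ and isolating within $\langle \X_{j^\star}, \y\rangle = \sum_i X_{ij^\star} y_i$ the terms quadratic in $X_{ij^\star}$, the leading (diagonal) contribution is $\frac{1}{\sigma}\sum_i X_{ij^\star}^2(\beta_{1,j^\star} z_i + \beta_{2,j^\star}(1-z_i))$, whose expectation is $\frac{n}{\sigma}(\phi \beta_{1,j^\star} + (1-\phi)\beta_{2,j^\star})$; the off-diagonal and noise contributions are mean-zero. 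Since $y_i \mid (\bbeta_1, \bbeta_2, \z) \sim \mathcal{N}(0, \snr + 1)$, we also have $\E\|\y\|_2^2 = n(\snr + 1)$, so the statistic at $j^\star$ has mean of order $\frac{\sqrt{n}\sqrt{\snr}}{\|\bbeta\|_2 \sqrt{\snr + 1}}\,|\phi \beta_{1,j^\star} + (1-\phi)\beta_{2,j^\star}|$, using $\sigma = \|\bbeta\|_2/\sqrt{\snr}$. Requiring this mean to exceed $\tau$ with a constant margin for fluctuations reproduces exactly the sample-complexity condition of the theorem, once $(\phi \beta_{1,j^\star} + (1-\phi)\beta_{2,j^\star})^2$ is lower bounded by the floor.

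The coordinate $j^\star$ is chosen by a short case analysis that uses only the hypothesis $\mslrd \setminus \sbmslr$: if $\mathcal{S}_1 \setminus \mathcal{S}_2 \neq \emptyset$ the effective signal there is $\phi \beta_{1,j^\star}$ with square $\geq \phi^2 \bmin^2$; symmetrically on $\mathcal{S}_2 \setminus \mathcal{S}_1$ one gets $\geq (1-\phi)^2 \bmin^2$; and when the supports coincide, being outside $\sbmslr$ forces some coordinate with $\phi \beta_{1,j} + (1-\phi)\beta_{2,j} \neq 0$, whose square is at least $\langle \bbeta\rangle^2_{>0}$ (the sign being immaterial since the test thresholds $|\cdot|$, and the symmetry $\y \mapsto -\y$ reducing the negative case to the positive one). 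Taking the minimum over the three cases yields the stated floor.

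The main obstacle is the concentration step, complicated by two couplings: the numerator $\langle \X_{j^\star}, \y\rangle$ and the denominator $\|\y\|_2$ both depend on $\X$, and $\y$ itself depends on $X_{ij^\star}$ through the support coordinate. I would condition on $\z$ (controlling $\sum_i z_i$ around $n\phi$ by a Chernoff bound), then within the numerator concentrate the diagonal quadratic term $\sum_i z_i X_{ij^\star}^2$ by a $\chi^2$/Bernstein inequality and bound the mean-zero off-diagonal Gaussian-chaos term by Hanson--Wright, while $\|\y\|_2^2/(\snr + 1)$ concentrates as a $\chi^2_n$ variable. A high-probability lower bound on the numerator together with an upper bound on the denominator then controls the ratio; the delicate point is that these bounds must be tight enough that the surviving mean still clears $\tau$ under the stated $n$, and that the constants assemble into the factor $32(1+\epsilon)$. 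Combining the vanishing Type I and Type II errors establishes strong detection in $\mslrd \setminus \sbmslr$.
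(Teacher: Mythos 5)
Your proposal is correct in outline and reproduces both the test and the stated sample complexity, but the heart of your Type II analysis follows a genuinely different route from the paper's. The test itself and the Type I analysis coincide exactly with the paper (declare $\P(\X,\y)$ iff some coordinate statistic exceeds $\tau=\sqrt{2(1+\epsilon/2)\log 2p}$; under the null each $\langle \X_j,\y\rangle/\|\y\|_2$ is i.i.d.\ standard Gaussian, then a maximum bound), and your case analysis for choosing $j^\star$ matches the paper's observation that, under the equal-norm prior, being outside $\sbmslr$ is equivalent to $\phi\bbeta_1+(1-\phi)\bbeta_2\neq 0$ (with the sign indeed immaterial because of the absolute value). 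Where you diverge is the concentration for the support coordinate: you decompose $\langle \X_{j^\star},\y\rangle$ unconditionally into a diagonal quadratic signal term, an off-diagonal Gaussian chaos controlled by Hanson--Wright, and an independent noise term, with the denominator handled by $\chi^2_n$ concentration. The paper instead conditions on $(\y,\z,\bbeta_1,\bbeta_2)$: by exact Gaussian conditioning (Lemma \ref{lemma:cond-mslr-general}, via Lemma \ref{lemma:gsum-general}), $X_{j^\star,i}$ given $y_i$ is Gaussian with mean $y_i(z_i\bbeta_{1,j^\star}+(1-z_i)\bbeta_{2,j^\star})/(\|\bbeta\|_2^2+\sigma^2)$ and variance at most $1$, so the statistic is \emph{conditionally Gaussian with variance at most $1$}, and the entire problem reduces to concentrating its data-dependent conditional mean $\sum_i (z_i\bbeta_{1,j^\star}+(1-z_i)\bbeta_{2,j^\star})\,y_i^2/(\|\y\|_2(\|\bbeta\|_2^2+\sigma^2))$, which the paper does via Hoeffding over $\z$ (using independence of $\y$ and $\z$, Lemma \ref{lem:y-z-indep}), $\chi^2$ concentration of $\|\y\|_2^2$, and an $\ell_4$-norm Chebyshev bound (Lemmas \ref{lem:high-prob-events}--\ref{lemma:lln-bernoulli-general}). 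The paper's conditioning buys a one-dimensional, exactly Gaussian fluctuation with no chaos estimates and no numerator--denominator coupling to untangle; your route avoids the conditioning lemma and is more self-contained in its use of off-the-shelf tools, but it must invoke Hanson--Wright, track the dependence of $\y$ on $X_{ij^\star}$ through several terms simultaneously, and verify that the resulting constants still assemble into the $32(1+\epsilon)$ margin---a step you correctly flag as delicate but do not carry out. Both routes yield the same order of failure probability (polynomially small, since the mean clears $\tau$ by a $\Theta(\sqrt{\log p})$ gap against $O(1)$-scale fluctuations), so your plan is sound as stated.
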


The proof of Theorem \ref{thm:CORR-mslrd-general} is given in Appendix \ref{subsec:corr-for-mslrd}. In the natural setting where $\|\bbeta\|_2^2$ is of order $k$, the theorem implies that
 $\corr$  solves $\mslrd$ outside of the $\sbmslr$ regime with \emph{square-root} the number of samples implied by the low-degree lower bound in Theorem \ref{thm:sbmslrd-hardness-cor-snr}, up to $\log$ factors. Indeed, the sample complexity in Theorem \ref{thm:CORR-mslrd-general} matches the optimal sample complexity for the simpler $\slrd$ problem; see  Theorem \ref{thm:slrd-hardness-conj-snr} below. This theorem effectively quantifies the extent to which one can solve $\mslr$ with the sample complexity of $\slr$. The proof of Theorem \ref{thm:CORR-mslrd-general} also holds in the more general case where $\langle \bbeta \rangle^2_{>0} > 0$ and the signal norms $\|\bbeta_1\|_2, \|\bbeta_2\|_2$ are not constrained to be equal. For signal priors on the nonzero entries that are absolutely continuous with respect to the Lebesgue measure, the event $\{\langle \bbeta \rangle^2_{>0} > 0\}$ has measure one, as $\phi \bbeta_1 + (1-\phi) \bbeta_2 \neq 0$ is almost surely satisfied. 
\begin{theorem} [Sucess of $\corr$  for recovery in $\mslr$ for $\bavg > 0$] \label{thm:CORR-recovery}
Consider the general setting of $\mslr$ with either $\sigma = 0, \phi \neq 1/2$ (noiseless), or $\phi = 1/2, \snr = \Omega(k)$ (balanced). Let $(\bbeta_1, \bbeta_2) \distas{} \mathcal{P}_{\|\bbeta\|_2}(\mathcal{D})$.  Let $\epsilon \in (0,1)$ be the parameter used in  $\corr$ , and
\[ n \geq \frac{32 (1 + \epsilon)}{\min\{\phi^2 \bbeta^2_{\texttt{min}}, (1-\phi)^2 \bbeta^2_{\texttt{min}}, \langle \bbeta \rangle^2_{\texttt{min}} \}}  \frac{\|\bbeta\|^2_2 (\snr + 1)}{\snr} \log{2p}. \] 
 Then there exists an algorithm which, in combination with $\corr$ , exactly recovers $\bbeta_1$ and $\bbeta_2$ (up to relabeling) with probability at least $1 - c_1(\frac{k}{p} + ke^{-c_2 n} + \frac{k}{n} + \frac{1}{p^{c_2}})$ for constants $c_1, c_2 > 0$.
\end{theorem}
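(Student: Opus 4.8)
The plan is to split recovery into two stages: first use $\corr$ to exactly identify the joint support $\mathcal{S} := \mathcal{S}_1 \cup \mathcal{S}_2$, and then solve the resulting low-dimensional (dense) mixture regression on the $\le 2k$ identified coordinates by invoking a recovery guarantee appropriate to each of the two regimes. For the first stage the starting point is the conditional mean of the correlation statistic: conditioning on $(\bbeta_1,\bbeta_2,\z)$ one has $\E[\langle \X_j, \y\rangle \mid \bbeta_1,\bbeta_2,\z] = \bbeta_{1,j}\sum_i z_i + \bbeta_{2,j}\sum_i (1-z_i)$, which concentrates around $n(\phi \bbeta_{1,j} + (1-\phi)\bbeta_{2,j})$, while $\|\y\|_2^2$ concentrates (via the equal-norm assumption, so that each $y_i$ has the same variance regardless of $z_i$) around $n\|\bbeta\|_2^2 (\snr+1)/\snr$. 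Hence for $j \in \mathcal{S}$ the normalized statistic $|\langle \X_j, \y\rangle|/\|\y\|_2$ is of order $\sqrt{n}\,|\phi\bbeta_{1,j}+(1-\phi)\bbeta_{2,j}| / (\|\bbeta\|_2\sqrt{(\snr+1)/\snr})$, whereas for $j \notin \mathcal{S}$ it is a mean-zero sub-Gaussian variable. This is exactly the analysis underlying Theorem \ref{thm:CORR-mslrd-general}, which I would strengthen from detection to exact support identification.

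The essential difference from the detection result is that exact recovery requires the threshold to separate \emph{every} true coordinate from \emph{every} null coordinate simultaneously, rather than merely surfacing one detectable coordinate; this is precisely why $\langle\bbeta\rangle^2_{\texttt{min}}$ replaces the detection quantity $\langle\bbeta\rangle^2_{>0}$. I would partition $\mathcal{S}$ into coordinates in $\mathcal{S}_1 \setminus \mathcal{S}_2$ (weighted mean $\phi\bbeta_{1,j}$, so squared magnitude $\ge \phi^2\bmin^2$), coordinates in $\mathcal{S}_2 \setminus \mathcal{S}_1$ (squared magnitude $\ge (1-\phi)^2\bmin^2$), and coordinates in $\mathcal{S}_1 \cap \mathcal{S}_2$ (squared magnitude governed by $\langle\bbeta\rangle^2_{\texttt{min}}$); the worst of these three is the three-way minimum appearing in the sample-complexity denominator. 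A union bound over the $\le 2k$ true coordinates, controlled by Gaussian and Bernoulli concentration, yields the $k e^{-c_2 n}$ and $k/n$ contributions to the failure probability, while a union bound over the $p$ null coordinates via the Gaussian tail and the choice $\tau = \sqrt{2(1+\epsilon/2)\log 2p}$ yields the $1/p^{c_2}$ and $k/p$ contributions, establishing $\{\hat{\mathcal{S}} = \mathcal{S}\}$ with the stated probability.

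For the second stage, condition on $\{\hat{\mathcal{S}} = \mathcal{S}\}$ and restrict the design to the columns indexed by $\mathcal{S}$, obtaining a mixed linear regression in dimension $d = |\mathcal{S}| \le 2k$ with $n = \Omega(k\log p) \gg d$ samples, i.e.\ the over-determined, proportionally-dense regime. In the noiseless case ($\sigma = 0$, $\phi \neq 1/2$), each $y_i$ equals exactly $\x_i \cdot \bbeta_1$ or $\x_i \cdot \bbeta_2$, and $\phi \neq 1/2$ breaks the label symmetry, so an exact recovery procedure for noiseless dense $\mslr$ (a clustering/algebraic method, or a spectral/tensor method) succeeds once $n \gtrsim d$. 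In the balanced case ($\phi = 1/2$), the condition $\snr = \Omega(k)$ is what ensures the two components are well-separated relative to the noise on the reduced problem, so an existing spectral or spectrally-initialized EM algorithm for dense $\mslr$ recovers $\bbeta_1, \bbeta_2$ exactly up to relabeling. A final union bound over the two stages produces the claimed success probability $1 - c_1(\tfrac{k}{p} + k e^{-c_2 n} + \tfrac{k}{n} + \tfrac{1}{p^{c_2}})$.

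I expect the main obstacle to be the second stage and its interface with the first. Since the selection event $\{\hat{\mathcal{S}} = \mathcal{S}\}$ is a data-dependent function of $\X$, conditioning on it perturbs the law of the restricted design, so one cannot directly apply an off-the-shelf i.i.d.-Gaussian dense-$\mslr$ guarantee. The cleanest resolution is to run the two stages on independent halves of the samples, so that the restricted design in the recovery stage is genuinely i.i.d.\ Gaussian and independent of the selected support; it then remains only to verify that each regime's sample and $\snr$ requirements are met by $n = \Omega(k\log p)$ and $\snr = \Omega(k)$ respectively, and to absorb the split into the constants.
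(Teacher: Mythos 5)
Your proposal is correct and follows essentially the same two-stage route as the paper: the paper's Theorem \ref{thm:CORR-mslr-r-general} establishes exact joint support recovery for $\corr$ via the same three-way partition of $\mathcal{S}_1 \cup \mathcal{S}_2$ (with $\langle \bbeta \rangle^2_{\texttt{min}}$ replacing $\langle \bbeta \rangle^2_{>0}$ for exactly the reason you give), and the second stage invokes off-the-shelf dense $\mslr$ guarantees --- spectral initialization plus alternating minimization from \citet{yi_alternating_2014} in the noiseless unbalanced case, and the convex relaxation of \citet{chen_convex_2014} in the balanced case, where $\snr = \Omega(k)$ is used precisely to drive the estimation error of Theorem \ref{thm:chen_convex} to zero so that exactness follows. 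The only point where you diverge is the interface between the stages: you propose sample splitting to cope with the data-dependence of the selection event, whereas the paper does not split. The paper's (implicit) resolution is the standard intersection-of-events argument: let $E_1$ be the event that $\corr$ outputs exactly $\mathcal{S}_1 \cup \mathcal{S}_2$, and let $E_2$ be the event that the dense algorithm, run on the \emph{fixed} column set $\mathcal{S}_1 \cup \mathcal{S}_2$ under the unconditional law (where those columns are genuinely i.i.d.\ Gaussian), recovers $(\bbeta_1, \bbeta_2)$; on $E_1 \cap E_2$ the composed procedure succeeds, and $\P\left[(E_1 \cap E_2)^\complement\right] \leq \P\left[E_1^\complement\right] + \P\left[E_2^\complement\right]$, so no conditional law of the restricted design ever needs to be analyzed. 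Your splitting fix is valid but unnecessary, and costs a constant factor in sample complexity that the union-bound argument avoids.
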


The proof of Theorem \ref{thm:CORR-recovery},  given in Appendix \ref{sec:recovery-algorithms-mslr}, first uses $\corr$  for support recovery, followed by  existing recovery algorithms for the noiseless and balanced cases of dense ($k/p = \Theta(1)$)  mixed linear regression \citep{yi_alternating_2014, chen_convex_2014}. Under the condition $\langle \bbeta \rangle^2_{\texttt{min}} > 0$, which is slightly more restrictive than $\sbmslr$, Theorem \ref{thm:CORR-recovery}  yields a sample complexity of the same order as that for $\slr$. 
We note that for signal priors on the nonzero entries that are absolutely continuous with respect to the Lebesgue measure, the event $\{\bavg > 0\}$  has measure one. We highlight that the noiseless case can be formulated as a mixed variant of compressed sensing with independent Gaussian design \citep{yu_statistical_2011}. 
\begin{remark}
The restriction to the noiseless and balanced cases in Theorem \ref{thm:CORR-recovery} is due to the guarantees provided by existing algorithms in the dense case, for which experiments indicate success far beyond these regimes \citep{yi_alternating_2014, chen_convex_2014}.
\end{remark}

\subsection{Lower bounds for Sparse Linear Regression ($\slr$)}
We define the detection variant of $\slr$, called $\slrd$,  as per  Definition \ref{def:mslrd} with the  constraint $\bbeta_1 = \bbeta_2$. The following lower bound for $\slrd$ is proved in Appendix \ref{subsec:slrd-hardness-proofs}. 
\begin{theorem} [Low-degree lower bound for $\slrd$] \label{thm:slrd-hardness-conj-snr}
Consider the setting of $\slrd$ (Definition \ref{def:mslrd} under $\bbeta_1 = \bbeta_2$) with $\bbeta \distas{} \mathcal{P}_{\|\bbeta\|_2}\left(\mathcal{D} \right)$. 
For  $n = \omega(\log{p})$  and $n \le  (1-\epsilon) (1 - 2\theta) \frac{\| \bbeta \|_2^2}{\| \bbeta \|^2_\infty} \frac{ (\snr + 1)}{\snr} \log{p}$ for any $\epsilon \in (0,1)$,
Conjecture \ref{conj:low-degree-conjecture} implies that any randomized algorithm requires running time $e^{\tilde{\Omega}\left(n\right)}$ 
to solve $\slrd$ in the regime $k = O(p^{\theta}) \leq \sqrt{p}$ with $\theta \in (0, 1/2]$.
\end{theorem}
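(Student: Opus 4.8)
The plan is to invoke the Low-Degree Conjecture (Conjecture \ref{conj:low-degree-conjecture}): it suffices to find a degree $D = \tilde{\Theta}(n) = \omega(\log p)$ for which the truncated chi-square $\chi^2_{\leq D}(\P \,\|\, \Q) = O(1)$, where $\P = \P(\X,\y)$ and $\Q = \P(\X)\otimes\P(\y)$ under the $\slrd$ constraint $\bbeta_1 = \bbeta_2 = \bbeta$; this forces runtime $e^{\tilde{\Omega}(n)}$. Writing $\P = \E_\bbeta \P_\bbeta$ as a mixture over the prior and introducing two independent draws $\bbeta, \bbeta' \sim \mathcal{P}_{\|\bbeta\|_2}(\mathcal{D})$, I would start from the standard identity $1 + \chi^2_{\leq D}(\P\,\|\,\Q) = \E_{\bbeta,\bbeta'}[\langle L_\bbeta, L_{\bbeta'}\rangle_{\leq D}]$, where $L_\bbeta = d\P_\bbeta/d\Q$ and the inner product is the degree-$\leq D$ projected $L^2(\Q)$ inner product.

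First I would exploit the product structure. Under $\Q$ the $n$ rows $(\x_i, y_i)$ are i.i.d.\ and $L_\bbeta$ factorizes as $\prod_i \ell_\bbeta(\x_i, y_i)$ with $\ell_\bbeta(\x,y) = \mathcal{N}(y; \tfrac1\sigma\langle\x,\bbeta\rangle, 1)/\mathcal{N}(y; 0, \snr+1)$. A direct Gaussian computation (whitening $y_i$ by $\sqrt{\snr+1}$ and then integrating over $y$ and $\x$) yields the clean closed form for the single-sample overlap $\E_\Q[\ell_\bbeta \ell_{\bbeta'}] = (1 - \langle\bbeta,\bbeta'\rangle/(\sigma^2(\snr+1)))^{-1}$, hence the untruncated $1 + \chi^2 = \E_{\bbeta,\bbeta'}[(1 - \langle\bbeta,\bbeta'\rangle/(\sigma^2(\snr+1)))^{-n}]$. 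The degree truncation is essential: the untruncated divergence is dominated by the rare pairs with large support overlap (e.g.\ $\bbeta = \bbeta'$, contributing $(\snr+1)^n$) and already diverges near the information-theoretic threshold, so I cannot bound $\chi^2$ directly.

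To track degrees I would observe that $\ell_\bbeta$ depends on $\x$ only through the one-dimensional projection $\langle\x, \bbeta/\|\bbeta\|_2\rangle$, so in the joint Hermite basis its coefficients are supported on products of Hermite modes in $\tilde y$ and in this projection direction. Crucially, since $\E_\x[\ell_\bbeta] = 1$ identically in $y$, all pure-$y$ Hermite modes of degree $\geq 1$ vanish; consequently every degree-$d$ single-sample contribution with $d \geq 1$ carries at least one power of the directional overlap $\langle\bbeta,\bbeta'\rangle/\|\bbeta\|_2^2$, and a degree-$b$ projection mode contributes exactly its $b$-th power. Summing the $y$-degrees freely and relaxing the joint budget $\sum_i (a_i + b_i) \leq D$ to $\sum_i b_i \leq D$, I would obtain $|\langle L_\bbeta, L_{\bbeta'}\rangle_{\leq D}| \leq \sum_{B=0}^{D} \binom{n+B-1}{B} q^B$ with $q := |\langle\bbeta,\bbeta'\rangle|/(\sigma^2(\snr+1))$.

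Finally I would take the expectation over $\bbeta, \bbeta'$ by conditioning on the number $t$ of shared support coordinates. Since the signals are uniform $k$-sparse with $k = O(p^\theta)$ and $\theta \leq 1/2$, the probability of sharing $t$ coordinates is $\approx \tfrac{1}{t!}(k^2/p)^t = \tfrac{1}{t!}\, p^{-(1-2\theta)t}$, which supplies the $(1-2\theta)$ factor, while $q \leq t\|\bbeta\|_\infty^2/(\sigma^2(\snr+1)) = \tfrac{t}{R}\tfrac{\snr}{\snr+1}$ with $R := \|\bbeta\|_2^2/\|\bbeta\|_\infty^2$. For the light pairs the binomial series is $\approx e^{nq}$, and substituting the sample-complexity hypothesis $n \leq (1-\epsilon)(1-2\theta)R\tfrac{\snr+1}{\snr}\log p$ makes the per-$t$ contribution at most $\tfrac{1}{t!}(p^{-\epsilon(1-2\theta)})^t$, so summing over $t$ gives $1 + o(1)$. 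The main obstacle is the interplay in these last two steps: I must take $D = \tilde{\Theta}(n)$ large enough to certify runtime $e^{\tilde{\Omega}(n)}$, yet the same truncation must still suppress the heavy high-overlap pairs (those with $q$ near or above one, where $t$ exceeds $R$). This forces me to bound the truncated series $\sum_{B \leq D}\binom{n+B-1}{B}q^B$ honestly — equivalently the overlap moments $\E[|\langle\bbeta,\bbeta'\rangle|^B]$ for $B$ up to $D$ — rather than replacing it by the full geometric sum $(1-q)^{-n}$, which would reintroduce exactly the divergence that distinguishes the computational threshold from the information-theoretic one.
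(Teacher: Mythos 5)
Your proposal is essentially the paper's own proof: the paper likewise expands the likelihood ratio in the Hermite basis of the null distribution (where the equal-norm prior makes every pure-$\y$ mode of positive degree vanish, and in fact forces the $\y$-degree to match the $\X$-degree row by row), reduces the truncated chi-square to $\E\sum_{B \le \lfloor D/2\rfloor}\binom{B+n-1}{n-1}\,q^{B}$ with $q = \langle\bbeta^{(1)},\bbeta^{(2)}\rangle/(\|\bbeta\|_2^2+\sigma^2)$, and finishes with the same hypergeometric overlap tail and the same $(1-2\theta)$ and $\epsilon$-slack accounting you describe. The one step you flag but do not execute — suppressing the heavy high-overlap pairs under truncation — is closed in the paper by the one-line estimate $\binom{B+n-1}{n-1}q^{B} \le \bigl((D/2+n)q\bigr)^{B}/B!$, so the truncated series is at most $e^{(D/2+n)q}$ uniformly in $q \ge 0$ (no light/heavy case split is needed); choosing $D \le \tfrac{2\epsilon}{1-\epsilon}n$ gives $D/2+n \le n/(1-\epsilon)$, and the remaining slack turns the contribution of each overlap size $t$ into $\bigl(k^2 p^{-2\theta}\bigr)^{t}$, which sums to $O(1)$.
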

In the natural setting where $\| \bbeta \|^2_2$ is of  order $k$ and the entries have bounded amplitude, the low-degree lower bound on $n$ is of order $ \frac{k (\snr + 1)}{\snr}  \log p$. This matches the order of existing lower bounds for $\slr$ in \citep{bandeira_franz-parisi_2022,gamarnik_sparse_2022}, but has the advantage of being valid for all $\snr$ regimes and generic priors on the sparse signal $\bbeta$. We believe that the dependence of the bound on $\| \bbeta\|_{\infty}$ is an artifact of the proof technique; see Appendix \ref{subsec:slrd-hardness-proofs}. 

A reduction from $\slrd$ to $\slr$ follows similarly to the reduction from $\sbmslrd$ to $\sbmslr$, which is given in Appendix \ref{sec:Proofs-red}. 

\subsection{Algorithms for $\slr$}
\begin{theorem} \label{thm:corr-slrd}
Consider the setting of $\slr$ with $\bbeta \distas{} \mathcal{P}_{\|\bbeta\|_2}(\mathcal{D})$. Let $\epsilon\in (0,1)$ be the parameter used in $\corr$. Then for $n \geq \frac{8 (1 + \epsilon)}{\bbeta^2_{\min}} \|\bbeta\|^2_2 \frac{(\snr + 1)}{\snr} \log{2p}$, we have that $\mathtt{CORR}$ solves strong detection in $\slrd$.
\end{theorem}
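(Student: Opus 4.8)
The plan is to analyze the test statistic $T_j := \langle \X_j, \y \rangle / \|\y\|_2$ under the $\slr$ model (where $\bbeta_1 = \bbeta_2 = \bbeta$) and show that, with the stated sample complexity, $\corr$ correctly distinguishes the planted distribution $\P(\X, \y)$ from the null $\P(\X) \otimes \P(\y)$ of Definition \ref{def:mslrd} with probability $1 - o(1)$. Under the null, $\X$ is independent of $\y$, so each inner product $\langle \X_j, \y\rangle$ is a sum of $n$ independent $\mathcal{N}(0, \|\y\|_2^2)$-type contributions conditioned on $\y$; normalizing by $\|\y\|_2$ makes $T_j$ a standard Gaussian. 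Hence $\max_j |T_j|$ concentrates around $\sqrt{2 \log p}$, and since the threshold $\tau = \sqrt{2(1 + \epsilon/2)\log 2p}$ strictly exceeds this, a union bound over $j \in [p]$ shows that with high probability \emph{no} column crosses the threshold under the null. So $\corr$ outputs the empty set and correctly declares the null.

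\textbf{The planted case.}

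The main work is the planted case: I must exhibit at least one coordinate $j \in \supp(\bbeta)$ whose statistic $T_j$ exceeds $\tau$. Writing $\y = \frac{1}{\sigma}\X\bbeta + \w$ (using $\bbeta_1 = \bbeta_2$ so the latent label $\z$ is irrelevant), the inner product $\langle \X_j, \y \rangle$ for a signal coordinate $j$ has a signal component $\frac{1}{\sigma}\beta_j \|\X_j\|_2^2 \approx \frac{n}{\sigma}\beta_j$ plus mean-zero fluctuations from the cross terms $\langle \X_j, \X_{j'}\rangle$ with $j' \neq j$ and from $\langle \X_j, \w\rangle$. The idea is to lower-bound the signal term and upper-bound the noise terms via standard Gaussian and chi-square concentration. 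Meanwhile $\|\y\|_2^2$ concentrates around $n(\frac{\|\bbeta\|_2^2}{\sigma^2} + 1) = n \frac{\snr + 1}{\snr}\cdot\frac{\|\bbeta\|_2^2}{\sigma^2}\cdots$ — more precisely around $n(\snr + 1)$ after the normalization implicit in the model — so the normalized statistic for a signal coordinate scales like $\sqrt{n}\,\beta_j / \sqrt{(\snr+1)/\snr}$ up to constants. Requiring this to exceed $\tau \simeq \sqrt{2\log 2p}$ for the weakest signal coordinate (those attaining $\bmin$) yields precisely the threshold $n \gtrsim \frac{\|\bbeta\|_2^2 (\snr+1)}{\bmin^2 \snr}\log 2p$, matching the theorem's constant $\frac{8(1+\epsilon)}{\bmin^2}$.

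\textbf{Key steps and the main obstacle.}

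I would carry out the proof in the following order: (i) condition on $\|\y\|_2$ and establish the null-side union bound giving no false positives; (ii) show $\|\y\|_2^2$ concentrates sharply around its mean in the planted case; (iii) for a fixed signal coordinate, decompose $\langle \X_j, \y\rangle$ into signal plus noise, lower-bound the signal and control the noise fluctuations using that $\|\X_j\|_2^2 \approx n$ and the cross-correlations are $O(\sqrt{n})$; (iv) combine (ii) and (iii) to show $T_j \geq \tau$ for the chosen coordinate with high probability, and union-bound the failure events. The main obstacle is the coupling between numerator and denominator: because $\y$ appears in both $\langle \X_j, \y\rangle$ and $\|\y\|_2$, and because $\X_j$ is correlated with $\y$ through the signal term $\frac{1}{\sigma}\beta_j \X_j$, the statistic $T_j$ is not a clean Gaussian and one must carefully separate the part of $\y$ that depends on $\X_j$ from the part that does not. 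I expect to handle this by writing $\y = \frac{1}{\sigma}\beta_j \X_j + \y_{-j}$, where $\y_{-j}$ is independent of $\X_j$, and controlling the self-term $\frac{1}{\sigma}\beta_j\|\X_j\|_2^2$ separately from $\langle \X_j, \y_{-j}\rangle$; the normalization factor $\|\y\|_2$ then concentrates as a sum dominated by the independent bulk, allowing the self-term to emerge cleanly as the dominant signal contribution.
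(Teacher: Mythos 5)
Your proposal is correct and arrives at the theorem with the same constants, but your planted-case analysis takes a genuinely different route from the paper's. The paper conditions in the opposite direction: given $\y$, the entries of a signal column $\X_j$ are \emph{exactly} Gaussian, with $i$-th conditional mean $y_i\bbeta_j/(\|\bbeta\|_2^2+\sigma^2)$ and variance $1-\bbeta_j^2/(\|\bbeta\|_2^2+\sigma^2)$, by a multivariate-Gaussian conditioning lemma (Lemma \ref{lemma:gsum-general}, Corollary \ref{cor:cond-general}); hence $\langle \X_j,\y\rangle/\|\y\|_2$ is, conditionally on $\y$, exactly Gaussian with mean $\bbeta_j\|\y\|_2/(\|\bbeta\|_2^2+\sigma^2)$ and variance at most $1$. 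The only concentration input is then the chi-square event $\|\y\|_2^2\approx n(\|\bbeta\|_2^2+\sigma^2)$, after which a single Gaussian tail bound with the inequality $\sqrt{8(1+\epsilon)(1-\delta)}-\sqrt{2}>\sqrt{2(1+\epsilon/2)}$ closes the argument (Lemma \ref{lemma:A-bound-slr}); the detection theorem is then deduced by invoking the signed support recovery result (Theorem \ref{thm:corr-slr}). You instead condition on $\X_j$: writing $\y=\frac{1}{\sigma}\bbeta_j\X_j+\y_{-j}$ with $\y_{-j}$ independent of $\X_j$, you control three events — chi-square concentration of $\|\X_j\|_2^2$, a Gaussian tail for $\langle\X_j,\y_{-j}\rangle$ whose conditional variance is $\|\y_{-j}\|_2^2$, and chi-square concentration of $\|\y\|_2^2$, which holds directly since marginally the $y_i$ are i.i.d.\ Gaussian — combined by a union bound. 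Your route is more elementary (it never needs the conditioning lemma) and proves exactly what detection requires, namely that a single support coordinate crosses the threshold; the paper's route needs only one concentration event, replaces your signal-plus-noise bounds with an exact conditional law, and yields full signed support recovery as a byproduct. The null-side arguments are identical. One caution if you write this up: your intermediate claim that the cross-correlations are $O(\sqrt{n})$ is loose — the aggregated cross term $\langle\X_j,\y_{-j}\rangle$ has standard deviation of order $\sqrt{n(\snr+1)}$, and it is only after dividing by $\|\y\|_2\approx\sqrt{n(\snr+1)}$ that the noise becomes $O(\sqrt{\log p})$; your final normalization step handles this correctly, but the claim as stated does not.
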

We next consider a slight variant of $\corr$  
that recovers the \emph{signed} support of $\bbeta$. It produces $\hat{\bbeta}$ with entries given by
\begin{align}
    \hat{\bbeta}_j = \mathbbm{1}\left\{\left|\frac{\langle \X_j, \y \rangle}{\|\y\|_2}\right| \geq \sqrt{2 (1 + \epsilon/2) \log{2p}}\right\} \text{sign}\left(\frac{\langle \X_j, \y \rangle}{\|\y\|_2}\right),  \  \text{ for } j \in [p],
    \label{eq:signed_corr}
\end{align}
where $\text{sign}(x)$ equals $1$ for $x >0$, equals $-1$ for $x<0$, and $0$ for $x=0$.
\begin{theorem} \label{thm:corr-slr}
Consider the setting of $\slr$ with $\bbeta \distas{} \mathcal{P}_{\|\bbeta\|_2}(\mathcal{D})$. Let $\epsilon \in (0,1)$ be the parameter used in the above variant of $\corr$. Then for $n \geq \frac{8 (1 + \epsilon)}{\bbeta^2_{\min}} \|\bbeta\|^2_2 \frac{(\snr + 1)}{\snr} \log{2p}$,  the vector $\hat{\bbeta}$  in \eqref{eq:signed_corr} equals the signed support of $\bbeta$  with probability at least $1 - (\frac{k}{p} + 2ke^{-c_2 n} + \frac{1}{p^{c_2}})$ for some constant $c_2 > 0$.
\end{theorem}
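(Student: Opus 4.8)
The plan is to reduce the claim to a collection of per-coordinate events under the model $\y = \X\bbeta + \w$ and to control each through the statistic $T_j := \langle \X_j, \y\rangle/\|\y\|_2$ appearing in \eqref{eq:signed_corr}. Writing $s_j := \text{sign}(\beta_j)$ for the signed support, it suffices to exhibit one high-probability event on which $|T_j| < \tau$ for every $j \notin \supp(\bbeta)$ (so that $\hat{\bbeta}_j = 0 = s_j$) and $\text{sign}(T_j) = s_j$ with $|T_j| \ge \tau$ for every $j \in \supp(\bbeta)$ (so that $\hat{\bbeta}_j = s_j$), where $\tau = \sqrt{2(1+\epsilon/2)\log 2p}$. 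The off-support coordinates are immediate and mirror the argument behind Theorem \ref{thm:corr-slrd}: if $\beta_j = 0$ then $\X_j$ never enters $\y$, so $\X_j$ is independent of $\y$ and, conditioned on $\y$, $\langle \X_j, \y\rangle \sim \mathcal{N}(0, \|\y\|_2^2)$; hence $T_j$ is exactly a standard Gaussian. The Gaussian tail bound gives $\P(|T_j| \ge \tau) \le 2(2p)^{-(1+\epsilon/2)}$, and a union bound over the at most $p$ such coordinates contributes a failure probability of order $p^{-c_2}$.

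The on-support coordinates carry the real content, since there $\X_j$ and $\y$ are dependent. The device is to peel off column $j$: write $\y = \beta_j \X_j + \tilde{\y}_j$ with $\tilde{\y}_j := \sum_{l \neq j} \beta_l \X_l + \w$ independent of $\X_j$, so that
\[
\langle \X_j, \y\rangle \;=\; \beta_j \|\X_j\|_2^2 \;+\; \langle \X_j, \tilde{\y}_j\rangle .
\]
The first term is a signal: $\|\X_j\|_2^2$ follows a $\chi^2_n$ law and concentrates around $n$, so $s_j \beta_j \|\X_j\|_2^2 = |\beta_j|\|\X_j\|_2^2 \ge |\beta_j| n (1-\delta)$ off an event of probability $e^{-c_2 n}$. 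The second term is a fluctuation: conditioned on $\tilde{\y}_j$ it is $\mathcal{N}(0, \|\tilde{\y}_j\|_2^2)$, so $|\langle \X_j, \tilde{\y}_j\rangle| \le \tau \|\tilde{\y}_j\|_2$ off an event of probability $2(2p)^{-(1+\epsilon/2)}$; since both bounds concern the same $\X_j$, I would simply union bound them rather than attempt any decoupling.

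For the denominator, each $y_i \sim \mathcal{N}(0, \|\bbeta\|_2^2 + \sigma^2)$ independently, so $\|\y\|_2^2/(\|\bbeta\|_2^2 + \sigma^2)$ is $\chi^2_n$ and $\|\y\|_2^2 \le n(\|\bbeta\|_2^2 + \sigma^2)(1+\delta)$ with high probability, while $\|\tilde{\y}_j\|_2^2$ concentrates around $n(\|\bbeta\|_2^2 - \beta_j^2 + \sigma^2) \le n(\|\bbeta\|_2^2 + \sigma^2)$. On the intersection of these events, for $j \in \supp(\bbeta)$ the numerator is positive and
\[
s_j\, T_j \;\ge\; \frac{|\beta_j| n (1-\delta) - \tau \|\tilde{\y}_j\|_2}{\sqrt{n(\|\bbeta\|_2^2+\sigma^2)(1+\delta)}} .
\]
Substituting $\|\bbeta\|_2^2 + \sigma^2 = \|\bbeta\|_2^2 (\snr+1)/\snr$ (from $\snr = \|\bbeta\|_2^2/\sigma^2$), the stated sample size $n \ge \frac{8(1+\epsilon)}{\bmin^2} \|\bbeta\|_2^2 \frac{(\snr+1)}{\snr} \log 2p$ makes the leading signal term of order $\frac{|\beta_j|}{\bmin}\sqrt{8(1+\epsilon)\log 2p} \ge 2\tau$, which survives subtraction of the fluctuation (at most $\tau$, since $\|\tilde{\y}_j\|_2 \le \|\y\|_2$) and the $(1+\delta)$ inflation, so that $s_j T_j > \tau > 0$ and hence $\hat{\bbeta}_j = s_j$. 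Collecting the three error sources through a single union bound — off-support exceedances ($p^{-c_2}$), on-support fluctuations over $k$ coordinates ($k/p$), and $\chi^2_n$ concentration of $\|\X_j\|_2^2$ and $\|\tilde{\y}_j\|_2^2$ over the $k$ on-support coordinates together with the denominator ($2k e^{-c_2 n}$) — yields the advertised probability $1 - (\frac{k}{p} + 2ke^{-c_2 n} + p^{-c_2})$. I expect the only genuine work to be bookkeeping: choosing the slack $\delta$ and apportioning the constant $8(1+\epsilon)$ among the $\chi^2_n$ lower tail of $\|\X_j\|_2^2$, the upper tail of $\|\y\|_2^2$, and the Gaussian fluctuation so that the signal clears $\tau$ after both subtractions. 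The decomposition $\y = \beta_j \X_j + \tilde{\y}_j$ removes the sole conceptual difficulty, namely the $\X_j$--$\y$ dependence, after which no obstacle beyond routine concentration remains.
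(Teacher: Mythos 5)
Your proof is correct, but it takes a genuinely different route from the paper's. The paper never peels off the column $\X_j$: instead it conditions on $\y$ and uses exact joint-Gaussian conditioning (Lemma \ref{lemma:gsum-general} via Corollary \ref{cor:cond-general}) to show that, given $\y$, the statistic $\langle \X_{j^*},\y\rangle/\|\y\|_2$ at an on-support coordinate is Gaussian with mean $\bbeta_{j^*}\|\y\|_2/(\|\bbeta\|_2^2+\sigma^2)$ and variance at most $1$; the sample-size hypothesis pushes this conditional mean past $2\tau$ on the event $\Omega_1(\delta)$ of Lemma \ref{lem:high-prob-events}, and a single Gaussian tail bound (Lemma \ref{lemma:A-bound-slr}) finishes each coordinate. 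Your leave-one-out decomposition $\y = \beta_j\X_j + \tilde{\y}_j$ replaces that conditioning with independence of $\X_j$ and $\tilde{\y}_j$, a $\chi^2_n$ lower tail for $\|\X_j\|_2^2$, a Gaussian tail for the cross term, and $\chi^2$ control of $\|\tilde{\y}_j\|_2$ and $\|\y\|_2$. This is more elementary (no conditional-distribution computation is needed) and it treats the sign of the statistic more explicitly than the paper's error event, which is phrased purely in terms of magnitudes; what the paper's machinery buys in exchange is reusability, since the same conditioning lemmas drive the $\mslr$ analysis (Lemmas \ref{lemma:A_bound_general} and \ref{lemma:B_bound_general}), where a peeling argument would be obstructed by the latent labels $z_i$. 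One local slip: your parenthetical justification ``$\|\tilde{\y}_j\|_2 \le \|\y\|_2$'' is false as a deterministic claim, since $\tilde{\y}_j = \y - \beta_j\X_j$ can have larger norm than $\y$. It is also unnecessary: your displayed inequality already divides the whole numerator by the upper bound $\sqrt{n(\|\bbeta\|_2^2+\sigma^2)(1+\delta)}$ on $\|\y\|_2$, and the $\chi^2$ upper bound on $\|\tilde{\y}_j\|_2$ that you state in the preceding sentence then makes the fluctuation term at most $\tau$ after this division; the margin $\sqrt{8(1+\epsilon)} > 2\sqrt{2(1+\epsilon/2)}$ closes the argument once $\delta$ is chosen small relative to $\epsilon$, exactly the bookkeeping you anticipated.
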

The proofs of Theorem \ref{thm:corr-slrd} and Theorem \ref{thm:corr-slr} are given in Appendix \ref{subsec:corr-slr}. The sample complexity required for the success of $\corr$  matches the low-degree lower bound in Theorem \ref{thm:slrd-hardness-conj-snr} up to constants, which rigorously certifies the order optimality of $\corr$  among low-degree polynomial algorithms, including spectral methods running in  $O(\log{p})$ iterations, in all \snr regimes. These achievable sample complexities also match those of previous work \citep{wainwright_sharp_2009, bandeira_franz-parisi_2022, gamarnik_sparse_2022, cai_orthogonal_2011, donoho_counting_2010}, with the important extension that they hold for all \snr scalings and general sparse signal priors. 
\section{Proof Ideas}
\paragraph{Low-Degree Lower Bounds}
Theorem \ref{thm:sbmslrd-hardness-cor-snr} amounts to proving that $\chi^2_{\leq D}( \P(\X, \y) \| \P(\X) \otimes \P(\y) ) = O(1)$ in $\mslrd$ (Definition \ref{def:mslrd}) with $\bbeta_1 = -\bbeta_2$ and $\phi = 1/2$, for $n$ in the regime specified in the theorem. We rewrite the expression for $\chi^2_{\leq D}( \P(\X, \y) \| \P(\X) \otimes \P(\y) )$ in  Conjecture \ref{conj:low-degree-conjecture} in terms of multivariate Hermite polynomials in the data $(\X, \y)$ of degree up to $D$. For $\bm{\alpha} = [\alpha_1, \alpha_2, \hdots, \alpha_{np + n} ]$, with $\alpha_i \in \mathbb{N}$, the normalized Hermite polynomial of order $\bm{\alpha}$ is denoted by
$\frac{\tilde{H}_{\bm{\alpha}}(\X, \y)}{\sqrt{\bm{\alpha}}!}$. The precise definition of the polynomial is given in Appendix \ref{sec:Proofs-ld}, but the key fact we will use is that $\Big\{ \frac{\tilde{H}_{\bm{\alpha}}}{\sqrt{\bm{\alpha}}!} \Big\}$ form an orthonormal system with respect to the null distribution in Definition \ref{def:mslrd} (see Proposition \ref{prop:ortho} in Appendix \ref{sec:Proofs-ld}).  Using this, we have
%
%
%
%
\begin{align}
\chi^2_{\leq D}(\P(\X, \y) \| \P(\X) \otimes \P(\y)) + 1 \nonumber &=\E_{(\X, \y) \distas{} \P(\X) \otimes \P(\y)} \left(\frac{d\P(\X, \y)}{d(\P(\X) \otimes \P(\y))}\right)_{\leq D}^2 \nonumber \\
&= \sum_{0 \leq |\bm{\alpha}| \leq D} \frac{1}{\bm{\alpha}!} \, \mathop{\mathbb{E}}_{\P(\X) \otimes \P(\y)} \left[ \frac{d\P(\X, \y)}{d(\P(\X) \otimes \P(\y))} \tilde{H}_{\bm{\alpha}}(\X, \y) \right]^2 \nonumber \\
&= \sum_{0 \leq |\bm{\alpha}| \leq D} \frac{1}{\bm{\alpha}!} \, \mathop{\mathbb{E}}_{\P(\X, \y)} \left[ \tilde{H}_{\bm{\alpha}}(\X, \y) \right]^2, \label{eq:I_simp}
\end{align}
where $\bm{\alpha}! = \prod_{i} \alpha_i !$.
%
The key element of the proof involves subsequently upper bounding (\ref{eq:I_simp}) through Hermite polynomial identities and multinomial-theorem manipulations, yielding a weighted sum over $D$ moments of the overlap $\langle \bbeta_1^{(1)}, \bbeta_1^{(2)} \rangle$, where $\bbeta_1^{(1)}, \bbeta_1^{(2)}$ are two i.i.d copies of the signal $\bbeta_1$  (see Lemma \ref{lemma:mslr-hard-norm}). Each of these $D$ moments can be bounded for $k \leq \sqrt{p}$, allowing the entire sum over $D \simeq \min\left\{\frac{k^2 (\snr + 1)^2}{n \snr^2 }, n\right\}$ terms to converge, and yielding the result. The case of $\slr$ in Theorem \ref{thm:slrd-hardness-conj-snr} is similar, but with the simplification $\bbeta_1 = \bbeta_2$,  we can afford to set $D \simeq n$ and still have this sum converge, yielding the key difference in lower bounds between $\slr$ and $\sbmslr$.  
\paragraph{Reductions from detection to recovery}
%
We follow the procedure for average-case reductions outlined by \citet{brennan_reducibility_2020}.  We transfer computational hardness from $\sbmslrd$ to  recovery in $\sbmslr$ by forming an average-case reduction for $k$-sparse signals in $\{-1, 0, 1\}^p$. Denote the parameter regime of Theorem \ref{thm:sbmslrd-hardness-cor-snr} as the ``critical'' parameter regime. Given any sequence of parameters $\mathcal{P}$ in the critical  regime, we construct another sequence of parameters $\mathcal{P}'$ in the critical regime with the following property: if there exists a randomized polynomial-time algorithm $\mathcal{A}'$ solving exact recovery in $\psbmslr$  with parameter scaling $\mathcal{P}'$, then we can construct a randomized polynomial-time algorithm solving $\sbmslrd$ with parameter scaling $\mathcal{P}$. This would in turn contradict Theorem \ref{thm:sbmslrd-hardness-cor-snr},  implying computational hardness of exact recovery in $\psbmslr$ in the critical regime. We first provide an average case reduction from $\sbmslrd$ to exact recovery in $\sbmslr$ in Lemma \ref{lemma:mslrd-to-mslr}, and then reduce exact recovery in $\sbmslr$ to exact recovery in $\psbmslr$  in Theorem \ref{thm:psbmslrd-hardness}.
\paragraph{The $\corr$  algorithm}
For $\mslr$, the proofs of Theorems \ref{thm:CORR-mslrd-general} and \ref{thm:CORR-recovery} crucially rely on Theorem \ref{thm:CORR-mslr-r-general}, which shows that $\corr$  recovers the joint support of the signals ($\mathcal{S}_1 \cup \mathcal{S}_2$) if $n$ satisfies the condition in Theorem \ref{thm:CORR-recovery}. To prove Theorem \ref{thm:CORR-mslr-r-general}, we analyze the quantity  $u_j := \frac{\langle \X_j, \y \rangle}{\|\y\|_2}$ in three  cases. When $j \in (\mathcal{S}_1 \cup \mathcal{S}_2)^\complement$, we have that $u_j \distas{\text{i.i.d.}} \mathcal{N}(0, 1)$ for  $j \in [p]$ by the independence of $\X_j$ and $\y$. The typical value of $\max_{j \in [p]} u_j$ in this case is $\sqrt{2 \log{p}}$, and we can bound the probability of a false positive by standard concentration bounds, detailed in Lemma \ref{lem:high-prob-events}. 
When $j \in \mathcal{S}_1 \cap \mathcal{S}_2$, we show that conditioned on $\y, \z, \bbeta_1, \bbeta_2$,  $u_j$ is normally distributed with mean
\begin{align}
\E\left[ u_j \mid  \y, \z, \bbeta_1, \bbeta_2 \right] = \frac{\| \y_{\{\z = 1\}} \|_2^2 \bbeta_{1, j} + \|\y_{\{\z = 0\}}\|^2_2 \bbeta_{2, j}}{\|\y\|_2 (\|\bbeta\|^2_2 + \sigma^2)}, \label{eq:u_mean}
\end{align}
and variance less than $1$ (Lemma \ref{lemma:cond-mslr-general}).  Here, $\| \y_{\{\z = 1\}} \|_2$ denotes the norm of the vector with entries $( y_i 1_{\{z_i=1 \}})_{i \in [n]}$. For large $n,p$ and $j \in [p]$, the typical value of the conditional mean above is $\sqrt{\frac{n}{\|\bbeta\|^2_2 + \sigma^2}} \, (\phi \bbeta_{1, j} + (1 - \phi) \bbeta_{2, j})$, which is greater than $\sqrt{2 (1 + \epsilon/2) \log{2p}}$ for 
$$n \geq \frac{2 (1 + \epsilon)}{\langle \bbeta  \rangle^2_{\texttt{min}}} (\|\bbeta\|^2_2 + \sigma^2) \log{2p} \simeq (1 + \epsilon) \frac{k(\snr + 1)}{\langle \bbeta  \rangle^2_{\texttt{min}} \snr} \log{2p}.$$ The remaining case $j \in \mathcal{S}_1 \Delta \mathcal{S}_2$ is similar, with the conditional mean obtained by setting $\bbeta_{2, j} = 0$ in \eqref{eq:u_mean}. The results for $\slr$ in Theorems \ref{thm:corr-slrd}, \ref{thm:corr-slr} follow a similar reasoning, with $\E\left[ u_j \mid \y, \z, \bbeta_1, \bbeta_2 \right] = \frac{\bbeta_j \| \y \|_2}{\|\bbeta\|^2_2 + \sigma^2}$. 

\section{Discussion}
In this work we rigorously characterize the computational hardness of Mixed Sparse Linear Regression ($\mslr$) through the method of low-degree polynomials. We evidence that in the highly symmetric $\sbmslr$ regime, randomized polynomial-time algorithms cannot solve an associated detection problem with sample complexity $n = \tilde{o}\left( \frac{k^2 (\snr + 1)^2}{\snr^2}\right)$, revealing a statistical-computational gap of order at least $k$. Outside of the $\sbmslr$ regime, however, a simple polynomial-time algorithm $\corr$ succeeds in solving detection with minimal sample complexity.

We note that our low-degree statistical-computational gap for $\sbmslr$ persists even in the noiseless ($\snr = \infty$) regime. Recent discoveries have highlighted that evidence for statistical-to-computational gaps do not always hold in the noiseless setting. Examples include ``brittle'' algorithms such as Gaussian elimination ``breaking'' the statistical-to-computational gap in learning parities \citep{zadik_lattice-based_2022}. It was also recently found in \cite{zadik_lattice-based_2022} that the LLL family of algorithms, originating from cryptography, can  break the statistical-to-computational gaps predicted in certain noiseless clustering problems. Further, in a recent talk by \citet{zadik_talk_LLL}, a proof sketch was presented for a lattice-based algorithm that can recover $\bbeta \in \reals^p$ in \emph{dense} noiseless phase retrieval with $p + 1$ measurements --- this breaks a conjectured statistical-to-computational gap for  dense phase retrieval, but the algorithm does not capture the sparse problem structure present in $\mslr$. To the best of our knowledge, noiseless inference in $\sbmslr$ and sparse phase retrieval still cannot be achieved with fewer than order $k^2 \log{p}$ samples for the case of $k$-sparse $\bbeta \in \{-1, 0, 1\}^p$. Such an achievement, if possible, would constitute an interesting and novel contribution. 


\section*{Acknowledgements}
GA was supported by a Cambridge Trust scholarship and funding from Invenia Labs. GA thanks Alex Wein, Ilias Zadik, Afonso Bandeira, Nicol\`o Grometto, Daniil Dmitriev for helpful conversations regarding statistical-computational gaps in sparse linear regression and phase retrieval.

\newpage 

{\small{
\bibliographystyle{plainnat}
\bibliography{referencesarxiv}
}}

\newpage

\appendix

\section{Additional background on the Low Degree Method} \label{appendix:background-low-degree}

In this section, we give additional background on the low-degree method, the chi-squared divergence and its orthogonal projection onto the space of low-degree polynomials. Consider the setting in Section \ref{subsec:LD_method}, where the task is to  distinguish between 
two probability distributions $\P_N$ and $\Q_N$ over $\reals^N$ where $N$ is the (potentially growing) problem size. Given a sample $\x$ drawn from $\P_N$ or $\Q_N$, one seeks to identify whether $\x$ originated from the former or the latter through a hypothesis test.  Recall  the notions of strong and weak detection from Section \ref{subsec:LD_method}.

One powerful method of identifying whether strong or weak detection is possible is through the study of the \textit{chi-squared divergence} $\chi^2(\P_N \| \Q_N)$. Indeed, assume that $\P_N$ is absolutely continuous with respect to $\Q_N$, and let $\mathsf{L} = \frac{d\P_N}{d\Q_N}$ be the likelihood ratio. We have:
\begin{align*}
\chi^2(\P_N \|  \Q_N) &:= \E_{\x \distas{} \Q_N} \, \mathsf{L}(\x)^2 - 1\\
&= \sup_{f: \reals^p \to \reals} \frac{(\E_{\x \distas{} \P_N} f(\x))^2}{\E_{\x \distas{} \Q_N} f(\x)^2} - 1 \\
&= \sup_{\substack{f:\reals^p \to \reals \\ \E_{\x \distas{} \Q_N} f(\x) = 0}} \frac{\left(\E_{\x \distas{} \P_N} f(\x) \right)^2}{\E_{\x \distas{} \Q_N} f(\x)^2},
\end{align*}
where the equivalences follow from standard arguments (see \cite{kunisky_notes_2022}). Interpreting the above result, the chi-squared divergence represents optimality in the $L^2$ sense. It relates to the squared maximum expectation any function can have under $\P_N$, while still being bounded in the space $L^2(\Q_N)$. In fact, the chi-square divergence between two distributions can rigorously characterize their behaviour under testing:
\begin{lemma}[Adapted from Lemma 2 of \cite{montanari_limitation_2015} and Lemma 7.1 of \cite{coja-oghlan_statistical_2022}] \label{lemma:chi-square-test}
\begin{itemize}
	\item If $\chi^2(\P_N \| \Q_N) = O(1)$ as $N \to \infty$, then strong detection is impossible.
	\item If $\chi^2(\P_N \| \Q_N) = o(1)$ as $N \to \infty$, then weak detection is impossible.
\end{itemize}
\end{lemma}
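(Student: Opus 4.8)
The plan is to reduce both assertions to the behaviour of the total variation distance $d_{\mathrm{TV}}(\P_N, \Q_N)$, and then to control this distance through the likelihood ratio $\mathsf{L} = d\P_N/d\Q_N$, whose first two moments under $\Q_N$ are $\E_{\Q_N}[\mathsf{L}] = 1$ and $\E_{\Q_N}[\mathsf{L}^2] = 1 + \chi^2(\P_N \| \Q_N)$. The starting point is the Neyman--Pearson characterisation of the optimal test under a symmetric $\tfrac12$--$\tfrac12$ prior over the two hypotheses: the best achievable success probability equals $\tfrac12\big(1 + d_{\mathrm{TV}}(\P_N, \Q_N)\big)$, attained by the likelihood ratio test. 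Consequently, strong detection is possible if and only if $d_{\mathrm{TV}}(\P_N, \Q_N) \to 1$, and weak detection is possible if and only if $\limsup_N d_{\mathrm{TV}}(\P_N, \Q_N) > 0$. Both bullets then amount to ruling out these regimes from the hypothesised bounds on $\chi^2$.

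For the second bullet I would bound $d_{\mathrm{TV}}$ by $\chi^2$ directly. Writing $2\, d_{\mathrm{TV}}(\P_N, \Q_N) = \E_{\Q_N}|\mathsf{L} - 1|$ and applying Cauchy--Schwarz gives $2\, d_{\mathrm{TV}}(\P_N, \Q_N) \le \sqrt{\E_{\Q_N}(\mathsf{L} - 1)^2} = \sqrt{\chi^2(\P_N \| \Q_N)}$. Hence $\chi^2(\P_N \| \Q_N) = o(1)$ forces $d_{\mathrm{TV}}(\P_N, \Q_N) \to 0$, which by the equivalence above rules out even weak detection. This direction is routine.

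The first bullet is the main obstacle, precisely because the Cauchy--Schwarz bound only yields $d_{\mathrm{TV}} \le \tfrac12\sqrt{\chi^2}$, and when $\chi^2 = O(1)$ is merely bounded (possibly by a large constant) this does \emph{not} keep $d_{\mathrm{TV}}$ away from $1$. I would therefore use a second-moment (Paley--Zygmund) argument on $\mathsf{L}$. Writing $d_{\mathrm{TV}}(\P_N, \Q_N) = 1 - \E_{\Q_N}[\min\{\mathsf{L}, 1\}]$, it suffices to show that $\E_{\Q_N}[\min\{\mathsf{L}, 1\}]$ stays bounded below by a positive constant. Applying Paley--Zygmund to the nonnegative variable $\mathsf{L}$ with mean $1$ gives, for any fixed $\theta \in (0,1)$, $\Pr_{\Q_N}[\mathsf{L} > \theta] \ge (1-\theta)^2 / \E_{\Q_N}[\mathsf{L}^2] = (1-\theta)^2/(1 + \chi^2)$. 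Since $\min\{\mathsf{L}, 1\} \ge \theta$ on the event $\{\mathsf{L} > \theta\}$, this yields $\E_{\Q_N}[\min\{\mathsf{L}, 1\}] \ge \theta (1-\theta)^2 / (1 + \chi^2)$. If $\chi^2(\P_N \| \Q_N) \le C$ for some constant $C$, fixing e.g.\ $\theta = 1/3$ produces a strictly positive constant lower bound, so $d_{\mathrm{TV}}(\P_N, \Q_N)$ remains bounded away from $1$ and strong detection is impossible. The only care needed is to confirm that the equivalence "$d_{\mathrm{TV}} \not\to 1 \Rightarrow$ strong detection impossible" is exactly what the optimal-test identity delivers, after which the second-moment estimate closes the argument.
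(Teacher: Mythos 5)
Your proof is correct, but note that the paper itself never proves this lemma: it is imported by citation (Montanari et al.; Coja-Oghlan et al.), and the argument standardly used in those sources for the first bullet is a \emph{contiguity} argument via Cauchy--Schwarz: for any sequence of events $A_N$, $\P_N(A_N) = \E_{\Q_N}[\mathsf{L}\,\mathbbm{1}_{A_N}] \le \sqrt{(1+\chi^2)\,\Q_N(A_N)}$, so $\Q_N(A_N)\to 0$ forces $\P_N(A_N)\to 0$; applying this to the acceptance region of a putative strongly-successful test (whose Type I error vanishes while its power tends to $1$) gives a contradiction. Your route is genuinely different and equally valid: you reduce everything to the exact Bayes-risk identity (success probability $=\tfrac12(1+d_{\mathrm{TV}})$), handle the $o(1)$ bullet by the standard bound $2\,d_{\mathrm{TV}} \le \sqrt{\chi^2}$, and handle the $O(1)$ bullet by Paley--Zygmund applied to $\mathsf{L}$ together with the identity $d_{\mathrm{TV}} = 1 - \E_{\Q_N}[\min\{\mathsf{L},1\}]$, yielding the explicit bound $d_{\mathrm{TV}} \le 1 - \theta(1-\theta)^2/(1+\chi^2)$. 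What your approach buys is a self-contained, quantitative bound keeping $d_{\mathrm{TV}}$ away from $1$ (hence a concrete cap on any test's success probability), whereas the contiguity route is softer but generalizes more readily (e.g., it only needs uniform integrability of $\mathsf{L}$, not a second moment). The only hygiene points to make explicit are that $\mathsf{L}$ exists (absolute continuity is implicit in the definition of $\chi^2$, which is $+\infty$ otherwise, so the hypotheses already enforce it) and that the optimality of the likelihood-ratio test under the symmetric prior is what converts ``$d_{\mathrm{TV}}$ bounded away from $1$ (resp.\ $\to 0$)'' into impossibility of strong (resp.\ weak) detection; both are standard and you invoke them correctly.
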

This result is powerful, as it identifies the chi-square divergence as a sufficient quantity for finding identifying results in testing. Note however, that this quantity reveals nothing with regards to \textit{computation}.

The computational analogue of the chi-square divergence is the \textit{degree-$D$ chi-square divergence} $\chi^2_{\leq D}(\P_N \| \Q_N)$. This quantity measures whether $\P_N$ and $\Q_N$ can be distinguished by a degree-$D$ polynomial of the input $\x$. Consider the Hilbert Space $L^2(\Q_N)$, where for functions $f, g: \reals^p \to \reals$ we have the inner product $\<f, g\> := \E_{\x \distas{} Q_N}[f(\x) g(\x)]$ and the corresponding norm $\|f\|_{\Q_N} = \sqrt{\<f, f\>_{\Q_N}}$. Additionally, denote $\reals[\x]_{\leq D}$ as the space of multivariate polynomials from $\reals^p$ to $\reals$ of degree at most $D$, and let $f^{\leq D}$ denote the orthogonal projection of $f$ onto $R[\x]_{\leq D}$ in $L^2(\Q_N)$. We can then define $\chi^2_{\leq D}(\P_N \| \Q_N)$ as follows:
\begin{align}
\chi^2_{\leq D}(\P_N \| \Q_N) &:= \E_{\x \distas{} \Q_N} \mathsf{L}^{\leq D}(\x)^2 - 1 \label{def:chi-square}\\
&= \|\mathsf{L}^{\leq D}\|^2_{Q_N} - 1 \nonumber \\
&= \sup_{f \in \reals[\x]_{\leq D}} \frac{(\E_{\x \distas{} \P_N} f(\x))^2}{\E_{\x \distas{} \Q_N} f(\x)^2} - 1 \nonumber \\
&= \sup_{\substack{f \in \reals[\x]_{\leq D} \\ \E_{\x \distas{} \Q_N} f(\x) = 0}} \frac{\left(\E_{\x \distas{} \P_N} f(\x) \right)^2}{\E_{\x \distas{} \Q_N} f(\x)^2}. \nonumber
\end{align}
The proof of this result can be found in \cite{hopkins_statistical_2018, kunisky_notes_2022}. The low-degree chi-square divergence can therefore interpreted analogously to chi-square divergence: it quantifies the maximum expectation any low-degree function can have under $\P_N$ while still being in the degree-$D$ polynomial subspace of $L^2(\Q_N)$. We then have the analogue of Lemma \ref{lemma:chi-square-test} for low-degree polynomial functions of the input and, conjecturally, general polynomial-time algorithms, given by Conjecture \ref{conj:low-degree-conjecture}.

In this work, we consider testing between distributions that do not simply consist of \textit{signal plus noise}, but instead of linearly transformed signals plus noise. Along with the recent work in \cite{bandeira_franz-parisi_2022, arpino_computational_2021}, this is, to the best of our knowledge, among the first applications of the low-degree method to such problems, which were previously believed to be out of reach from current methods \citep{schramm_computational_2022}.

\section{Proofs of Low-Degree Lower Bounds} \label{sec:Proofs-ld}
\paragraph{Preliminaries and Notation.}
All results concerning the low-degree hardness of the associated problems are asymptotic in $p$, as we take $p \to \infty$ first.  We use the conventions from \cite{schramm_computational_2022}. Let $\mathbb{N} = \{0, 1, 2, \cdots\}$ and $\left[n\right] = \{1, 2\cdots n\}$. We define $0^0 := 1$. We denote by boldface a multiset or vector, so for $\bm{\alpha} \in \mathbb{N}^n$ we mean $\bm{\alpha} = [\alpha_1, \alpha_2, \hdots, \alpha_n ]$ for $\alpha_i \in \mathbb{N}, \forall i \in [n]$. For $\bm{\alpha} \in \mathbb{N}^n$, define $\vert \bm{\alpha} \vert = \sum_{i=1}^n \alpha_i$, $\bm{\alpha}! = \prod_{i=1}^n \alpha_i!$ and (for $\bm{X} \in \mathbb{R}^n$) $\bm{X}^{\bm{\alpha}} = \prod_{i=1}^n X_i^{\alpha_i}$. Let $\texttt{abs}(\bm{\alpha})$ denote the entry-wise absolute value operation on the vector $\bm{\alpha}$. We use $\bm{\alpha} \geq \bm{\bbeta}$ to mean $\alpha_i \geq \bbeta_i$ for all $i$. The operations $\bm{\alpha} + \bm{\bbeta}$ and $\bm{\alpha} - \bm{\bbeta}$ are performed entrywise. For $\bm{\alpha}, \bm{\bbeta} \in \mathbb{N}^n$ with $\bm{\alpha} \geq \bm{\bbeta}$, define ${\bm{\alpha} \choose \bm{\bbeta}} = \prod_{i=1}^n {\alpha_i \choose \bbeta_i}$. We use subindices to denote subsets of a vector or multiset, so for $\bm{\alpha} \in \mathbb{N}^{n \times (p+1)}$, we let $\bm{\alpha}_{p+1} := [\alpha_{1, p+1}, \hdots, \alpha_{n, p+1} ]$ denote the $p+1$th column of the matrix $\bm{\alpha}$. We let $\bm{\alpha}_{\cdot, :p}$ denote the entire $n \times p$ submatrix obtained by selecting only up to the $p$th column, and  $\bm{\alpha}_{i, :p}$ the vector consisting of elements from the $i$th row up to the $p$th column. We denote by $[\A \;\; \y]$ the matrix formed through the horizontal concatenation of $\y \in \mathbb{R}^{n}$ onto $\A \in \mathbb{R}^{n \times p}$, forming an $n \times (p+1)$ real matrix. Unless otherwise indicated, we let $\| \cdot \| := \| \cdot \|_{\mathbb{Q}_p}$ and $\langle \cdot, \cdot \rangle := \langle \cdot, \cdot \rangle_{\mathbb{Q}_p}$. We use $\mathbbm{1}$ to denote the indicator function.

The univariate Hermite polynomials $H_k(x)$ for $k \ge 0$ are defined by the recursion $H_0(x)=1$, and 
$H_{k+1}(x) = x H_k(x) - H_k'(x)$.
For $\bm{\alpha} \in  \mathbb{N}^N$, let $H_{\bm{\alpha}}$ denote the \emph{multivariate} Hermite polynomial of  order $\bm{\alpha}$, defined as  $H_{\bm{\alpha}}(\u) = \prod_{i=1}^N H_{\alpha_i}(u_i)$, for $\u \in \mathbb{R}^N$.
 For $N \in \mathbb{N}$ , the normalized $N$-variate Hermite polynomials $\frac{1}{\sqrt{\bm{\alpha}}!} {H}_{\bm{\alpha}}$ form a complete orthonormal system of (multivariate) polynomials for $L^2(\mathcal{N}(\bm{0}, \bm{I}_N))$ (see \cite{kunisky_notes_2022}). 

In what follows, we give further basic facts regarding Hermite polynomials (see \cite{kunisky_notes_2022} for more detailed descriptions), along with two auxiliary combinatorial lemmas that will be of use for the main proofs.

\begin{proposition}[\textup{Gaussian Integration by Parts}, Prop. 2.10 in \cite{kunisky_notes_2022}] \label{prop:GIP}
If $f: \mathbb{R} \to \mathbb{R}$ is $k$-times continuously differentiable and $f(y)$ and its first $k$ derivatives are bounded by $\mathcal{O}\left(\exp(|y|^{\alpha}) \right)$ for some $\alpha \in (0, 2)$, then
\[ \mathbb{E}_{y \distas{} \mathcal{N}(0, 1)} \left[ H_k(y) f(y) \right] = \mathbb{E}_{y \distas{} \mathcal{N}(0, 1)} \left[ \frac{d^k f}{dy^k}(y) \right].\]
\end{proposition}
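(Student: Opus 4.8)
The plan is to reduce the statement to repeated classical integration by parts on the real line, using the Rodrigues-type identity relating Hermite polynomials to derivatives of the standard Gaussian density. Write $\varphi(y) = \frac{1}{\sqrt{2\pi}} e^{-y^2/2}$. The first step is to establish, by induction on $k$, the key identity
\[ \frac{d^k}{dy^k}\varphi(y) = (-1)^k H_k(y)\,\varphi(y). \]
The base case $k=0$ is immediate from $H_0 \equiv 1$. For the inductive step I would differentiate once more, using both $\varphi'(y) = -y\,\varphi(y)$ and the defining recursion $H_{k+1}(y) = y H_k(y) - H_k'(y)$ stated above: a short computation gives $\frac{d^{k+1}}{dy^{k+1}}\varphi = (-1)^k\big[H_k'(y) - y H_k(y)\big]\varphi(y) = (-1)^{k+1} H_{k+1}(y)\,\varphi(y)$, closing the induction.

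With this identity in hand, I would rewrite the left-hand side as
\[ \mathbb{E}_{y \distas{} \mathcal{N}(0,1)}\big[H_k(y) f(y)\big] = \int_{\mathbb{R}} f(y)\, H_k(y)\,\varphi(y)\,dy = (-1)^k \int_{\mathbb{R}} f(y)\, \frac{d^k}{dy^k}\varphi(y)\,dy, \]
and then integrate by parts $k$ times, each step transferring one derivative from $\varphi$ onto $f$ and contributing a factor $(-1)$. Since the $k$ sign changes exactly cancel the prefactor $(-1)^k$, after $k$ steps the integral becomes $\int_{\mathbb{R}} \frac{d^k f}{dy^k}(y)\,\varphi(y)\,dy = \mathbb{E}_{y \distas{} \mathcal{N}(0,1)}\big[\frac{d^k f}{dy^k}(y)\big]$, which is the desired equality, provided all boundary terms vanish.

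The main technical point, and the only place the hypotheses beyond smoothness are used, is controlling these boundary terms. The $j$-th integration by parts generates a term of the form $(-1)^j\big[f^{(j)}(y)\,\frac{d^{k-1-j}}{dy^{k-1-j}}\varphi(y)\big]_{-\infty}^{\infty}$, which by the identity above equals $\pm\big[f^{(j)}(y)\,H_{k-1-j}(y)\,\varphi(y)\big]_{-\infty}^{\infty}$. Here I would invoke the growth hypothesis: $f^{(j)}(y) = \mathcal{O}\big(e^{|y|^\alpha}\big)$ with $\alpha \in (0,2)$, while $H_{k-1-j}(y)\varphi(y)$ is a polynomial times $e^{-y^2/2}$. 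Since $e^{|y|^\alpha - y^2/2} \to 0$ as $|y| \to \infty$ for every $\alpha < 2$, the product decays to zero, so each boundary term vanishes; the same domination also guarantees absolute integrability at every stage, which legitimizes the repeated integration by parts. I expect the bookkeeping of signs together with checking that this domination holds at each intermediate step to be the fussiest part of the argument, but no genuinely delicate estimate is needed.
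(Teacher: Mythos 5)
Your proposal is correct. Note that the paper itself gives no proof of this proposition: it is quoted as a background fact with a citation to Prop.~2.10 of the low-degree notes of Kunisky, Wein, and Bandeira, so there is no internal argument to compare against. Your argument --- the Rodrigues-type identity $\frac{d^k}{dy^k}\varphi(y) = (-1)^k H_k(y)\varphi(y)$ (proved by induction from the recursion $H_{k+1}(y) = yH_k(y) - H_k'(y)$ that the paper states), followed by $k$ integrations by parts with boundary terms of the form $\pm f^{(j)}(y)H_{k-1-j}(y)\varphi(y)$ killed by the growth hypothesis $e^{|y|^\alpha - y^2/2} \to 0$ for $\alpha < 2$ --- is exactly the standard proof of this fact, and the sign bookkeeping, the identification of the boundary terms, and the use of the domination hypothesis to justify both the vanishing of those terms and the absolute convergence of each intermediate integral are all handled correctly.
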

\begin{proposition}[\textup{Hermite derivative} \citep{jakimovski_hermite_2006}] \label{prop:herm-deriv}
For $n \in \mathbb{N}$, $m \in \mathbb{N}$:
\[
  H_n^{(m)}(x) = \frac{n!}{(n-m)!} H_{n-m}(x).
\]
\end{proposition}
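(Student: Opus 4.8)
The plan is to reduce the $m$-fold derivative formula to its $m=1$ special case, namely the standard identity $H_n'(x) = n\,H_{n-1}(x)$ for the probabilist's Hermite polynomials defined by the recursion in the preamble, and then iterate. Establishing this single-derivative identity is the only substantive step; everything else is bookkeeping.

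First I would prove $H_n'(x) = n\,H_{n-1}(x)$ by (strong) induction on $n$ using the defining recursion $H_{k+1}(x) = x H_k(x) - H_k'(x)$. The base cases $H_0' = 0$ and $H_1' = H_0 = 1$ are immediate. For the inductive step, differentiating the recursion gives $H_{n+1}'(x) = H_n(x) + x H_n'(x) - H_n''(x)$. Applying the inductive hypothesis to rewrite $H_n'(x) = n H_{n-1}(x)$ and $H_n''(x) = n H_{n-1}'(x) = n(n-1) H_{n-2}(x)$, and then using the recursion-derived expression $H_n(x) = x H_{n-1}(x) - (n-1) H_{n-2}(x)$ to collect the middle terms as $x\,n H_{n-1}(x) - n(n-1) H_{n-2}(x) = n H_n(x)$, yields $H_{n+1}'(x) = H_n(x) + n H_n(x) = (n+1) H_n(x)$, completing the induction.

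With $H_n'(x) = n H_{n-1}(x)$ in hand, I would obtain the proposition by a second short induction on $m$ (the case $m=0$ being the trivial identity $H_n = \frac{n!}{n!} H_n$). Iterating the single-derivative rule gives $H_n^{(m)}(x) = n(n-1)\cdots(n-m+1)\,H_{n-m}(x)$, and recognizing the falling factorial $n(n-1)\cdots(n-m+1) = \frac{n!}{(n-m)!}$ delivers the stated formula.

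The main (and essentially only) subtlety to flag is the edge case $m > n$: here $H_n$ has degree $n$, so $H_n^{(m)} \equiv 0$, while on the right-hand side the falling factorial $n(n-1)\cdots(n-m+1)$ contains the zero factor $(n-n)$ and hence also vanishes, so the identity remains valid provided $\frac{n!}{(n-m)!}$ is read as this falling factorial rather than a literal ratio of factorials of a negative argument. I expect no genuine obstacle beyond stating this convention; the result is classical and the two inductions are routine.
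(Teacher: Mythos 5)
Your proof is correct. Note that the paper itself offers no proof of this proposition: it is imported directly from the cited reference \citep{jakimovski_hermite_2006}, so there is no in-paper argument to compare against, and your elementary derivation serves as a valid self-contained substitute. What your route buys is consistency with the paper's own setup: you work directly from the recursion $H_{k+1}(x) = x H_k(x) - H_k'(x)$ that the paper uses to define the (probabilist's) Hermite polynomials, rather than from a Rodrigues formula or generating function as classical references typically do, so no auxiliary characterization needs to be matched up. The inductions check out: differentiating the recursion gives $H_{n+1}'(x) = H_n(x) + x H_n'(x) - H_n''(x)$, and combining the inductive hypotheses $H_n' = n H_{n-1}$, $H_n'' = n(n-1)H_{n-2}$ with the recursion-derived identity $H_n = x H_{n-1} - (n-1) H_{n-2}$ indeed yields $H_{n+1}' = (n+1) H_n$; iterating to order $m$ is then immediate. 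Your flag on the $m > n$ edge case is also well taken and not merely pedantic: the proposition is invoked in the proof of Lemma \ref{lemma:mslr-inner-prod} after Gaussian integration by parts, where the derivative order $|\bm{\alpha}_{i,:p}|$ can exceed the degree $\alpha_{i,p+1}$, and those terms must vanish; reading $\frac{n!}{(n-m)!}$ as the falling factorial $n(n-1)\cdots(n-m+1)$, which is zero for $m > n$, is exactly the convention under which the paper's computation goes through.
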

\begin{proposition}[\textup{Hermite sum formula}, Prop 3.1 in \cite{schramm_computational_2022}] \label{prop:hsum}
For any $k \in \mathbb{N}$ and $z, \mu \in \mathbb{R}$,
\begin{align*}
    H_k(z + \mu) = \sum_{l=0}^k {k \choose l} \mu^{k-l} H_l(z).
\end{align*}
\end{proposition}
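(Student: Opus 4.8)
The plan is to prove the identity by expanding $H_k(z+\mu)$ as a polynomial in the shift $\mu$ and identifying the coefficients via the Hermite derivative formula already recorded in Proposition \ref{prop:herm-deriv}. First I would fix $z \in \reals$ and regard $\mu \mapsto H_k(z+\mu)$ as a univariate polynomial in $\mu$. Since $H_k$ has degree $k$, this polynomial also has degree $k$, so Taylor's theorem applies with no remainder and the expansion terminates at order $k$:
\[
H_k(z+\mu) = \sum_{m=0}^{k} \frac{1}{m!}\left.\frac{d^m}{d\mu^m}H_k(z+\mu)\right|_{\mu=0}\mu^m = \sum_{m=0}^{k} \frac{H_k^{(m)}(z)}{m!}\,\mu^m,
\]
where in the last step I used that differentiating in $\mu$ and then setting $\mu=0$ coincides with differentiating $H_k$ and evaluating at $z$.

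Next I would substitute the Hermite derivative identity $H_k^{(m)}(z) = \frac{k!}{(k-m)!}H_{k-m}(z)$ from Proposition \ref{prop:herm-deriv}. This rewrites the coefficient $\frac{H_k^{(m)}(z)}{m!}$ as $\frac{k!}{m!\,(k-m)!}H_{k-m}(z) = {k \choose m}H_{k-m}(z)$, giving
\[
H_k(z+\mu) = \sum_{m=0}^{k} {k \choose m}\,\mu^m\,H_{k-m}(z).
\]
Finally, re-indexing with $l = k-m$ and using the symmetry ${k \choose k-l} = {k \choose l}$ yields $H_k(z+\mu) = \sum_{l=0}^{k} {k \choose l}\,\mu^{k-l}\,H_l(z)$, which is the claimed formula.

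The only point requiring care is the justification that the Taylor expansion is exact and finite, which is immediate since $H_k$ is a polynomial of degree $k$; there is thus no substantive obstacle. Should one prefer not to invoke Proposition \ref{prop:herm-deriv}, an equivalent route is to establish the exponential generating function $\sum_{k\geq 0} H_k(x)\,t^k/k! = e^{xt - t^2/2}$ directly from the defining recursion $H_{k+1}(x) = xH_k(x) - H_k'(x)$, then factor $e^{(z+\mu)t - t^2/2} = e^{\mu t}\,e^{zt - t^2/2}$ and match coefficients of $t^k/k!$ on both sides; but the Taylor argument is shorter given what the paper has already assembled.
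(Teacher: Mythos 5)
Your proof is correct. One caveat on the comparison itself: the paper does not prove Proposition \ref{prop:hsum} at all — it is imported by citation from \cite{schramm_computational_2022} — so there is no in-paper argument to measure against. Your Taylor-expansion route is the standard (and essentially the cited source's) derivation: since $H_k$ is a degree-$k$ polynomial, the Taylor expansion in the shift $\mu$ is exact and terminates, and Proposition \ref{prop:herm-deriv} converts the coefficients $H_k^{(m)}(z)/m!$ into ${k \choose m} H_{k-m}(z)$, after which the re-indexing $l = k-m$ is immediate. One point worth making explicit, which you implicitly got right: the derivative identity $H_n^{(m)} = \frac{n!}{(n-m)!} H_{n-m}$ holds for the \emph{probabilists'} Hermite polynomials defined by the paper's recursion $H_{k+1}(x) = xH_k(x) - H_k'(x)$ (equivalently $H_k' = kH_{k-1}$); for the physicists' normalization an extra factor $2^m$ would appear and the claimed formula would fail, so the argument depends on the paper's convention in Proposition \ref{prop:herm-deriv} being the probabilists' one, as it is. Your alternative route via the generating function $\sum_{k \geq 0} H_k(x) t^k/k! = e^{xt - t^2/2}$ is also sound — the recursion yields the PDE $\partial_t F = xF - \partial_x F$ with $F(x,0)=1$, whose solution factors as $e^{(z+\mu)t - t^2/2} = e^{\mu t} e^{zt - t^2/2}$, and matching coefficients of $t^k/k!$ reproduces the identity — though, as you say, the Taylor argument is shorter given what the paper has already assembled.
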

\begin{proposition}[\textup{Hermite multiplication formula} \cite{oldham_atlas_2009}] \label{prop:hprod}
For $\gamma \in \mathbb{R}$,

\[ H_n(\gamma x) = \sum_{i=0}^{\floor{\frac{n}{2}}} \gamma^{n - 2i} \left(\gamma^2 - 1 \right)^i \binom{n}{2i} \frac{(2i)!}{i!} 2^{-i} H_{n - 2i}(x). \]
\end{proposition}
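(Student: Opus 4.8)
The plan is to prove the identity through the exponential generating function of the Hermite polynomials, which converts the somewhat opaque coefficients on the right-hand side into a transparent product of two elementary power series. Write $G(x,t) := \sum_{n \ge 0} H_n(x)\, t^n/n!$. Starting only from the defining recursion $H_0 = 1$, $H_{k+1}(x) = x H_k(x) - H_k'(x)$, one first checks the auxiliary differentiation rule $H_n'(x) = n H_{n-1}(x)$ by an easy induction on $n$, and then reads off from the recursion that $\partial_t G = (x-t)\,G$ with $G(x,0) = 1$. Solving this first-order linear ODE in $t$ gives the standard closed form
\[ G(x,t) = \exp\!\Big(xt - \tfrac{t^2}{2}\Big). \]
(Alternatively, one may simply cite this generating function from \cite{kunisky_notes_2022} and skip the ODE derivation.)

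Next I would evaluate $G$ at the point $\gamma x$, so that $\sum_{n} H_n(\gamma x)\, t^n/n! = \exp(\gamma x t - t^2/2)$, and split the exponent as
\[ \gamma x t - \frac{t^2}{2} = \Big(x(\gamma t) - \frac{(\gamma t)^2}{2}\Big) + \frac{(\gamma^2 - 1)\,t^2}{2}. \]
The exponential therefore factors as $G(x, \gamma t)\cdot \exp\!\big((\gamma^2-1)t^2/2\big)$. Expanding the first factor as $\sum_m H_m(x)\,(\gamma t)^m/m!$ and the second as $\sum_i \big((\gamma^2-1)/2\big)^i\, t^{2i}/i!$ and forming the Cauchy product yields
\[ \sum_n H_n(\gamma x)\,\frac{t^n}{n!} = \sum_{m \ge 0}\sum_{i \ge 0} \gamma^m (\gamma^2-1)^i\, 2^{-i}\, \frac{H_m(x)}{m!\, i!}\, t^{m+2i}. \]

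Finally I would equate coefficients of $t^n$ on both sides. Setting $m = n-2i$ (so the inner sum runs over $0 \le i \le \lfloor n/2 \rfloor$) and multiplying through by $n!$, the coefficient of $H_{n-2i}(x)$ becomes $\gamma^{n-2i}(\gamma^2-1)^i\, 2^{-i} \cdot n!/\big((n-2i)!\, i!\big)$, and the elementary identity $n!/\big((n-2i)!\,i!\big) = \binom{n}{2i}\,(2i)!/i!$ recovers exactly the claimed prefactor, completing the proof. There is no genuine obstacle here: the only place requiring a little care is this last bookkeeping step, where one must verify that the binomial-and-factorial factor $\binom{n}{2i}(2i)!/i!$ appearing in the statement coincides with the multinomial factor $n!/((n-2i)!\,i!)$ produced by the Cauchy product. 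Everything else is a direct and routine generating-function manipulation.
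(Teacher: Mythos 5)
Your proof is correct, but it is worth noting that the paper itself does not prove Proposition~\ref{prop:hprod} at all: it is imported verbatim from the literature (the citation to \cite{oldham_atlas_2009}), so your generating-function derivation is not a variant of the paper's argument but a replacement for a citation. Your route is sound and self-contained: the inductive check of $H_n'(x) = n H_{n-1}(x)$ (which is the $m=1$ case of Proposition~\ref{prop:herm-deriv}, also only cited in the paper) yields $\partial_t G = (x-t)G$ and hence $G(x,t) = \exp(xt - t^2/2)$, which is the correct generating function for the probabilist's normalization used throughout the paper --- an important point of care, since the physicists' Hermite polynomials obey a different multiplication formula. The exponent splitting $\gamma x t - t^2/2 = \bigl(x(\gamma t) - (\gamma t)^2/2\bigr) + (\gamma^2-1)t^2/2$ is exact, and the final coefficient identification $n!/\bigl((n-2i)!\,i!\bigr) = \binom{n}{2i}(2i)!/i!$ is the right bookkeeping, so equating coefficients of $t^n$ recovers the stated formula (one can sanity-check at $n=2$: $\gamma^2(x^2-1) + (\gamma^2-1) = \gamma^2x^2 - 1 = H_2(\gamma x)$). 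What your approach buys is that the paper's low-degree computations in Appendix~\ref{sec:Proofs-ld}, which lean on this identity inside Lemma~\ref{lemma:mslr-inner-prod}, become self-contained modulo only the recursive definition of $H_n$; what the citation buys the authors is brevity. No gap.
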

\begin{proposition} \label{prop:ortho}
Consider the null distribution $\P(\X) \otimes \P(\y)$ whose law given by $$\mathcal{N}(0, 1)^{\otimes n \times p} \otimes \mathcal{N}\left(0, \frac{\|\bbeta\|^2_2}{\sigma^2} + 1\right)^{\otimes (p+1)}.$$ Let $\u = [\X \ \y] \in \mathbb{R}^{N \times (p+1)}$. Then, an orthonormal system with respect to this null distribution, indexed by $\bm{\alpha} \in \mathbb{N}^{n \times (p+1)}$, is given  by $$\frac{1}{\sqrt{\bm{\alpha}}!} \tilde{H}_{\bm{\alpha}}(\u) := \frac{1}{\sqrt{\bm{\alpha}}!} \prod_{i = 1}^n \prod_{j = 1}^{p} H_{\alpha_{i, j}}(u_{i, j}) H_{\alpha_{i, p+1}}\left(\frac{u_{i, p+1}}{\sqrt{\frac{\|\bbeta\|^2_2}{\sigma^2} + 1}}\right).$$
\end{proposition}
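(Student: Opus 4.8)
The plan is to reduce the claim to the univariate Hermite orthogonality relation $\E_{z \sim \mathcal{N}(0,1)}[H_a(z) H_b(z)] = a!\,\delta_{a,b}$, which is the defining property of the normalized system $\{H_{\bm{\alpha}}/\sqrt{\bm{\alpha}!}\}$ recalled in the Preliminaries, combined with the product structure of the null law. Write $\tau^2 := \frac{\|\bbeta\|_2^2}{\sigma^2} + 1$ for the common variance of each $\y$-coordinate under $\P(\X) \otimes \P(\y)$. First I would fix two multi-indices $\bm{\alpha}, \bm{\beta} \in \mathbb{N}^{n \times (p+1)}$ and expand the inner product $\langle \tilde{H}_{\bm{\alpha}}, \tilde{H}_{\bm{\beta}} \rangle = \E_{\u \sim \P(\X) \otimes \P(\y)}[\tilde{H}_{\bm{\alpha}}(\u)\, \tilde{H}_{\bm{\beta}}(\u)]$. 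Under the null, the $n(p+1)$ coordinates of $\u = [\X \ \y]$ are mutually independent (the $\X_{i,j}$ being i.i.d.\ $\mathcal{N}(0,1)$ and the $y_i$ being i.i.d.\ $\mathcal{N}(0,\tau^2)$), so this expectation factorizes into a product over $(i,j)$ of one-dimensional Gaussian expectations.

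Next I would evaluate the two kinds of factors separately. For columns $j \le p$, the coordinate $u_{i,j} = X_{i,j} \sim \mathcal{N}(0,1)$, so each factor equals $\E_{z \sim \mathcal{N}(0,1)}[H_{\alpha_{i,j}}(z)\, H_{\beta_{i,j}}(z)] = \alpha_{i,j}!\,\delta_{\alpha_{i,j},\beta_{i,j}}$ by univariate orthogonality applied directly. For the last column $j = p+1$, the coordinate $u_{i,p+1} = y_i \sim \mathcal{N}(0,\tau^2)$; here the change of variables $v = u_{i,p+1}/\tau$ pushes the marginal forward to $\mathcal{N}(0,1)$, and since $\tilde{H}$ evaluates the Hermite polynomial at precisely $u_{i,p+1}/\tau$, the corresponding factor becomes $\E_{v \sim \mathcal{N}(0,1)}[H_{\alpha_{i,p+1}}(v)\, H_{\beta_{i,p+1}}(v)] = \alpha_{i,p+1}!\,\delta_{\alpha_{i,p+1},\beta_{i,p+1}}$.

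Multiplying the factors, every coordinate $(i,j)$ contributes $\alpha_{i,j}!\,\delta_{\alpha_{i,j},\beta_{i,j}}$, so the full product vanishes unless $\bm{\alpha} = \bm{\beta}$ and otherwise equals $\prod_{i,j} \alpha_{i,j}! = \bm{\alpha}!$. Dividing by the normalizer $\sqrt{\bm{\alpha}!}\,\sqrt{\bm{\beta}!}$ then gives $\big\langle \tilde{H}_{\bm{\alpha}}/\sqrt{\bm{\alpha}!},\ \tilde{H}_{\bm{\beta}}/\sqrt{\bm{\beta}!} \big\rangle = \delta_{\bm{\alpha},\bm{\beta}}$, which is exactly orthonormality.

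There is no serious obstacle here; the only point requiring care is the rescaling of the $\y$-column by $\tau$. The definition of $\tilde{H}$ deliberately inserts the argument $u_{i,p+1}/\tau$ so that the variance-$\tau^2$ Gaussian marginal of $\y$ is absorbed and the standard orthogonality relation applies unchanged — without this rescaling the last-column factors would instead produce the (non-diagonal) moments of Hermite polynomials against a non-standard Gaussian, and the system would fail to be orthonormal. I would state the univariate change-of-variables justification explicitly, and remark that, should completeness be required for the $\chi^2$ expansion in \eqref{eq:I_simp}, it follows from the same factorization together with the completeness of the univariate normalized Hermite basis for $L^2(\mathcal{N}(0,1))$.
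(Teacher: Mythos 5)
Your proposal is correct and follows essentially the same route as the paper's proof: factorize the expectation under the null using independence of the coordinates, absorb the variance of the $\y$-column via the built-in rescaling by $\sqrt{\|\bbeta\|_2^2/\sigma^2+1}$, and apply the univariate orthogonality relation $\E_{z\sim\mathcal{N}(0,1)}[H_a(z)H_b(z)]=a!\,\delta_{a,b}$ coordinate-wise. The only cosmetic difference is that you factorize fully coordinate-by-coordinate while the paper groups the $\X$-factors and $\y$-factors into two blocks before invoking the same identity.
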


\begin{proof} 
Let $\bm{\alpha}^{(1)}, \bm{\alpha}^{(2)} \in \mathbb{N}^{n \times (p+1)}$. Then,
\begin{align*}
&\E_{\P(\X) \otimes \P(\y)} \frac{1}{\sqrt{\bm{\alpha}^{(1)}}!} \tilde{H}_{\bm{\alpha}^{(1)}}(\u) \frac{1}{\sqrt{\bm{\alpha}^{(2)}}!} \tilde{H}_{\bm{\alpha}^{(2)}}(\u) \\
&= \frac{1}{\sqrt{\bm{\alpha}^{(1)}! \aalpha^{(2)}!}} \E_{\P(\X) \otimes \P(\y)}  \prod_{i = 1}^n \prod_{j = 1}^{p} H_{\alpha^{(1)}_{i, j}}(u_{i, j}) H_{\alpha^{(1)}_{i, p+1}}\left(\frac{u_{i, p+1}}{\sqrt{\frac{\|\bbeta\|^2_2}{\sigma^2} + 1}}\right) H_{\alpha^{(2)}_{i, j}}(u_{i, j}) H_{\alpha^{(2)}_{i, p+1}}\left(\frac{u_{i, p+1}}{\sqrt{\frac{\|\bbeta\|^2_2}{\sigma^2} + 1}}\right) \\
&= \frac{1}{\sqrt{\bm{\alpha}^{(1)}! \aalpha^{(2)}!}} \E  \prod_{i = 1}^n \prod_{j = 1}^{p} H_{\alpha^{(1)}_{i, j}}(u_{i, j}) H_{\alpha^{(2)}_{i, j}}\left(u_{i, j}\right) \E  \prod_{i = 1}^n H_{\alpha^{(1)}_{i, p+1}}\left(\frac{u_{i, p+1}}{\sqrt{\frac{\|\bbeta\|^2_2}{\sigma^2} + 1}}\right)  H_{\alpha^{(2)}_{i, p+1}}\left(\frac{u_{i, p+1}}{\sqrt{\frac{\|\bbeta\|^2_2}{\sigma^2} + 1}}\right) \\
&= \frac{1}{\sqrt{\bm{\alpha}^{(1)}! \aalpha^{(2)}!}} \E  \prod_{i = 1}^n \prod_{j = 1}^{p} H_{\alpha^{(1)}_{i, j}}(u_{i, j}) H_{\alpha^{(2)}_{i, j}}\left(u_{i, j}\right) \E  \prod_{i = 1}^n H_{\alpha^{(1)}_{i, p+1}}\left(w_i\right)  H_{\alpha^{(2)}_{i, p+1}}(w_i) \\
&= \frac{1}{\sqrt{\bm{\alpha}^{(1)}! \aalpha^{(2)}!}} \sqrt{\aalpha^{(1)}! \aalpha^{(2)}!} \prod_{i = 1}^n \prod_{j = 1}^{p} \mathbbm{1}_{\{\alpha^{(1)}_{i, j} = \alpha^{(2)}_{i, j}\}} \prod_{i = 1}^n \mathbbm{1}_{\{\alpha^{(1)}_{i, p+1} = \alpha^{(2)}_{i, p+1}\}} \\
&= \mathbbm{1}_{\aalpha^{(1)} = \aalpha^{(2)}},
\end{align*}
where $w_i \distas{\text{i.i.d.}} \mathcal{N}(0, 1)$ are independent of all other variables for $i \in [n]$.
\end{proof}

\begin{lemma} \label{bin_sum_scaling}
For $\bbeta \in \mathbb{N}$ even:
\[
\sum_{\xi = 0}^{\frac{\bbeta}{2}} \binom{\bbeta}{2\xi} \frac{(2\xi)!}{\xi!} \left( -\frac{1}{2}\right)^{\xi} (\bbeta - 2\xi - 1)!! = \mathbbm{1}_{\bbeta = 0}. \]
\end{lemma}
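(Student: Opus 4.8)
The plan is to collapse the entire expression into a single application of the binomial theorem, after converting the double factorial into an ordinary factorial. Denote the left-hand side by $S(\bbeta)$, and write $\bbeta = 2m$ with $m \in \mathbb{N}$, which is permitted since $\bbeta$ is assumed even. The first step is to absorb the binomial coefficient into the factorial ratio via
$$\binom{\bbeta}{2\xi}\, \frac{(2\xi)!}{\xi!} = \frac{\bbeta!}{(2\xi)!\,(\bbeta - 2\xi)!}\cdot\frac{(2\xi)!}{\xi!} = \frac{\bbeta!}{(\bbeta - 2\xi)!\,\xi!},$$
which removes all dependence on $(2\xi)!$ and leaves a cleaner summand.

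Next I would rewrite the double factorial. Since $\bbeta - 2\xi$ is even, set $s := m - \xi$ so that $\bbeta - 2\xi = 2s$ and $(\bbeta - 2\xi - 1)!! = (2s-1)!!$. Using the standard identity $(2s-1)!! = (2s)!/(2^s\, s!)$, valid for all $s \ge 0$ under the convention $(-1)!! = 1$, the summand becomes a product of factorials and powers of $\frac{1}{2}$. After the change of variable $\xi = m - s$ (so that $s$ runs from $0$ to $m$), I would collect terms: the factor $(\bbeta-2\xi)! = (2s)!$ cancels against the $(2s)!$ produced by the double-factorial identity, and the two factors $\left(-\frac{1}{2}\right)^{\xi} = \left(-\frac{1}{2}\right)^{m-s}$ and $2^{-s}$ combine into an overall $\left(\frac{1}{2}\right)^m(-1)^{m-s}$. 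This yields
$$S(\bbeta) = (2m)!\left(\frac{1}{2}\right)^m \sum_{s=0}^{m} \frac{(-1)^{m-s}}{(m-s)!\,s!} = \frac{(2m)!}{m!}\left(\frac{1}{2}\right)^m \sum_{s=0}^{m} \binom{m}{s}(-1)^{m-s}.$$

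Finally, the inner sum is exactly the binomial expansion of $(1-1)^m = 0^m$, which equals $\mathbbm{1}_{m=0}$ under the convention $0^0 := 1$ adopted in the preliminaries. Since the prefactor $\frac{(2m)!}{m!}\left(\frac{1}{2}\right)^m$ equals $1$ when $m = 0$, we conclude $S(\bbeta) = \mathbbm{1}_{m=0} = \mathbbm{1}_{\bbeta = 0}$, as claimed. I do not expect a genuine obstacle here; the only point requiring care is the boundary term $\xi = \bbeta/2$, where the double factorial is $(-1)!!$ and must be assigned the value $1$ for the identity $(2s-1)!! = (2s)!/(2^s\, s!)$ to hold at $s = 0$. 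Pinning down this convention is precisely what makes a single surviving term appear exactly when $\bbeta = 0$.
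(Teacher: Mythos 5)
Your proof is correct and follows essentially the same route as the paper's: both reduce the summand to $\frac{\bbeta!}{2^{\bbeta/2}(\bbeta/2)!}\binom{\bbeta/2}{\xi}(-1)^{\xi}$ via factorial/double-factorial identities and then finish with the alternating binomial sum $(1-1)^{\bbeta/2}=\mathbbm{1}_{\bbeta=0}$. Your change of variable $s = m-\xi$ and your use of $(2s-1)!! = (2s)!/(2^s s!)$ are only cosmetic variants of the paper's manipulation $(\bbeta-2\xi)! = (\bbeta-2\xi)!!\,(\bbeta-2\xi-1)!!$ followed by $(\text{even})!! = 2^{s}s!$.
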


\begin{proof}
We have:
\begin{align*}
\sum_{\xi = 0}^{\frac{\bbeta}{2}} \binom{\bbeta}{2\xi} \frac{(2\xi)!}{\xi!} \left( -\frac{1}{2}\right)^{\xi} (\bbeta - 2\xi - 1)!! 
&= \sum_{\xi = 0}^{{\frac{\bbeta}{2}}} \frac{\bbeta!}{\xi! (\bbeta - 2\xi)!!} \left( \frac{-1}{2}\right)^{\xi}\\
&= \sum_{\xi = 0}^{{\frac{\bbeta}{2}}} \frac{\bbeta!}{\xi! \cdot (\frac{\bbeta}{2} - \xi)! \cdot 2^{\frac{\bbeta}{2} - \xi}} \left( \frac{-1}{2}\right)^{\xi}\\
&= \frac{\bbeta!}{(\frac{\bbeta}{2})! \cdot 2^{\frac{\bbeta}{2}}} \sum_{\xi = 0}^{{\frac{\bbeta}{2}}} \binom{\frac{\bbeta}{2}}{\xi} \left( -1 \right)^{\xi} \\
&= \frac{\bbeta!}{(\frac{\bbeta}{2})! \cdot 2^{\frac{\bbeta}{2}}} (1 + (-1))^{\frac{\bbeta}{2}} \\
&= \mathbbm{1}_{\bbeta = 0}.
\end{align*}
\end{proof}

\begin{lemma} \label{lemma:comb-bound}
For $p \geq 4$ and $k \leq \sqrt{p}$,  it holds that $\frac{p^k}{4 k!} \leq \binom{p}{k}$.
\end{lemma}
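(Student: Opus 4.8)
The final statement to prove is Lemma~\ref{lemma:comb-bound}: for $p \geq 4$ and $k \leq \sqrt{p}$, we have $\frac{p^k}{4\,k!} \leq \binom{p}{k}$.

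\textbf{Overall approach.} The plan is to expand the binomial coefficient as $\binom{p}{k} = \frac{p(p-1)\cdots(p-k+1)}{k!}$ and compare it directly against $\frac{p^k}{k!}$. Since both sides carry the same $\frac{1}{k!}$ factor, the claim reduces to showing that the falling factorial $p^{\underline{k}} := p(p-1)\cdots(p-k+1)$ is at least $\frac{1}{4}p^k$. In other words, it suffices to prove
\[
\prod_{j=0}^{k-1}\left(1 - \frac{j}{p}\right) \geq \frac{1}{4}.
\]
Everything then comes down to lower-bounding this product of terms that are each close to $1$ when $k$ is small relative to $p$.

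\textbf{Key steps.} First I would take logarithms and use the elementary inequality $\ln(1-x) \geq -\frac{x}{1-x}$ (or more crudely $\ln(1-x)\ge -2x$ valid for $x\in[0,1/2]$) to bound $\sum_{j=0}^{k-1}\ln(1-j/p)$ from below. Under the hypothesis $k \le \sqrt{p}$, each index satisfies $j/p \le (k-1)/p < k/p \le 1/\sqrt{p} \le 1/2$ for $p\ge 4$, so these linearizations are valid. The sum of the arguments telescopes to $\sum_{j=0}^{k-1} j = \frac{k(k-1)}{2} \le \frac{k^2}{2} \le \frac{p}{2}$, whence $\sum_{j=0}^{k-1}\frac{j}{p} \le \frac{1}{2}$. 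Plugging into the logarithmic bound gives $\sum_j \ln(1-j/p) \ge -2\cdot\frac{1}{2} = -1$, hence $\prod_{j=0}^{k-1}(1-j/p) \ge e^{-1} > \frac14$. Exponentiating and multiplying back by $p^k/k!$ yields $\binom{p}{k} \ge e^{-1}\frac{p^k}{k!} \ge \frac{p^k}{4\,k!}$, which is even slightly stronger than claimed.

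\textbf{Main obstacle.} There is no deep obstacle here; this is a routine estimate. The only care needed is in choosing a linearization of $\ln(1-x)$ that is simultaneously valid over the whole range $x\in[0,1/2]$ and loose enough to keep the constant manageable. Using $\ln(1-x)\ge -2x$ on $[0,1/2]$ is clean, but one should double-check the endpoint constant so that the final bound lands comfortably above $\frac14$ (indeed $e^{-1}\approx 0.368$ leaves a wide margin). An entirely elementary alternative that avoids logarithms is to pair the factors and bound $\prod_{j}(1-j/p) \ge 1 - \sum_j \frac{j}{p} \ge 1 - \frac12 = \frac12 > \frac14$, using the standard fact that $\prod(1-x_j) \ge 1 - \sum x_j$ for $x_j \in [0,1]$; this is arguably the shortest route and I would likely present it as the primary argument, noting the $k\le\sqrt p$ hypothesis enters precisely to force $\sum_j \frac{j}{p} \le \frac{k^2}{2p} \le \frac12$.
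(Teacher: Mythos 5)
Your proposal is correct, and both of your routes (the logarithmic one and the Weierstrass one) are valid. The reduction is identical to the paper's: both arguments observe that the claim is equivalent to $\prod_{j=1}^{k-1}\left(1 - \frac{j}{p}\right) \geq \frac{1}{4}$. Where you diverge is in how this product is bounded. The paper bounds every factor uniformly from below, $1 - \frac{j}{p} \geq 1 - \frac{1}{\sqrt{p}}$ (valid since $j \leq k-1 < \sqrt{p}$), so the product of at most $\sqrt{p}$ such factors is at least $\left(1 - \frac{1}{\sqrt{p}}\right)^{\sqrt{p}}$, and then uses that this quantity is at least $\frac{1}{4}$ for $\sqrt{p} \geq 2$ (it equals exactly $\frac14$ at $p=4$ and is increasing in $p$, which is precisely where the constant $4$ in the statement comes from). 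You instead control the \emph{sum} $\sum_{j=0}^{k-1} \frac{j}{p} = \frac{k(k-1)}{2p} \leq \frac{1}{2}$ and convert it into a product bound, either via $\ln(1-x) \geq -2x$ on $[0,\tfrac12]$ (giving $e^{-1}$) or via $\prod_j(1-x_j) \geq 1 - \sum_j x_j$ (giving $\tfrac12$). Your approach buys strictly better constants and, in the Weierstrass form, is arguably the most elementary argument available; the paper's approach is more wasteful in aggregating the factors but has the virtue of producing the constant $\frac14$ sharply at the boundary case $p=4$, $k=\sqrt{p}$, making clear why that particular constant appears in the lemma. Either proof suffices for the way the lemma is used downstream (Theorems \ref{thm:slrd} and \ref{thm:sbmslrd-hardness}), since only the order of the bound matters there.
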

\begin{proof}
Note that $\binom{p}{k} \geq \frac{p^k}{4k!}$ if and only if:
\[
\prod_{j=1}^{k-1} \left(1 - \frac{j}{p}\right) \geq \frac{1}{4}
\]
Then applying the $k \leq \sqrt{p}$ assumption:
\begin{align*}
\prod_{j=1}^{k-1} \left(1 - \frac{j}{p}\right) &\geq \prod_{i=1}^{\floor{\sqrt{p}}} \left(1 - \frac{j}{p} \right)
\geq \left(1 - \frac{1}{\sqrt{p}} \right)^{\sqrt{p}}
\end{align*}
Now notice that for $\sqrt{p} \geq 2$, we have that $(1 - \frac{1}{\sqrt{p}})^{\sqrt{p}} \geq \frac{1}{4}$, leading to the desired result.
\end{proof}

\subsection{Low-degree analysis for $\mslr$: general mixtures} \label{appendix:low-deg-proofs}
In subsection, we prove two technical lemmas. The first (Lemma \ref{lemma:mslr-inner-prod}) derives an expression for the projection of the likelihood ratio onto the multivariate Hermite polynomial $\tilde{H}_{\aalpha}$  defined in Proposition \ref{prop:ortho}. The second lemma (Lemma \ref{lemma:mslrd-norm}) derives an explicit expression for the low-degree chi-squared divergence. 
\begin{lemma} \label{lemma:mslr-inner-prod}
Let $\bbeta_1, \bbeta_2 \distas{} \mathcal{P}_{\|\bbeta\|_2}(\mathcal{D})$. Let  $ \mathsf{L} = \frac{d\P(\X, \y)}{d\P(\X) \otimes \P(\y)}$ be the likelihood ratio, 
and $\tilde{H}_{\aalpha}$ the Hermite polynomial  defined in Proposition \ref{prop:ortho}. Then, for  $\aalpha \in \mathbb{N}^{n \times (p+1)}$, we have
\begin{align*}
\langle \mathsf{L}, \tilde{H}_{\aalpha} \rangle &  = \left( \frac{\frac{1}{\sigma}}{\sqrt{\frac{\|\bbeta\|^2_2}{\sigma^2} + 1}} \right)^{|\bm{\alpha}_{p+1}|}  \|\bbeta\|_2^{|\bm{\alpha}_{p+1}| - |\bm{\alpha}_{\cdot, :p}|} \bm{\alpha}_{p+1}! \\ 
& \quad  \cdot  \prod_{i=1}^n \mathbbm{1}_{ \{\alpha_{i, p+1} - |\bm{\alpha}_{\cdot, :p}| = 0 \}} \mathop{\mathbb{E}}_{\bbeta_1, \bbeta_2} \left[ \prod_{i=1}^n \prod_{j=1}^p \left(\bbeta_{1, j} z_i + \bbeta_{2, j} (1-z_i) \right)^{\alpha_{i, j}} \right].
\end{align*}
\end{lemma}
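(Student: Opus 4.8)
The plan is to exploit the change-of-measure identity
$\langle \mathsf{L}, \tilde{H}_{\bm{\alpha}} \rangle = \E_{\P(\X)\otimes\P(\y)}[\mathsf{L}\,\tilde{H}_{\bm{\alpha}}] = \E_{\P(\X,\y)}[\tilde{H}_{\bm{\alpha}}]$,
which reduces the projection coefficient to a single expectation of the normalized Hermite polynomial under the \emph{planted} law. First I would condition on the planted latent variables $(\bbeta_1,\bbeta_2,\z)$: under this conditioning the rows $(\x_i,y_i)_{i\in[n]}$ are independent, and since $\tilde{H}_{\bm{\alpha}}$ factorizes across rows, the expectation splits as $\E_{\bbeta_1,\bbeta_2,\z}\big[\prod_{i=1}^n T_i\big]$, where $T_i := \E\big[\prod_{j=1}^p H_{\alpha_{i,j}}(X_{i,j})\,H_{\alpha_{i,p+1}}(y_i/c)\big]$ and $c := \sqrt{\|\bbeta\|_2^2/\sigma^2 + 1}$.

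The heart of the argument is the per-row term $T_i$. Writing $\bm{v}_i := \bbeta_1 z_i + \bbeta_2(1-z_i)$, under $\P(\X,\y)$ we have $y_i = \frac{1}{\sigma}\x_i^\top \bm{v}_i + w_i$, so the $p+1$ variables $(X_{i,1},\dots,X_{i,p},w_i)$ are i.i.d.\ $\mathcal{N}(0,1)$ and $y_i/c = \sum_{l=1}^{p+1} a_l g_l$ is a linear combination of them with $a_j = v_{i,j}/(c\sigma)$ for $j\le p$ and $a_{p+1}=1/c$. The crucial structural fact --- and the only place the equal-norm assumption $\|\bbeta_1\|_2=\|\bbeta_2\|_2=\|\bbeta\|_2$ is used --- is that $\sum_l a_l^2 = \frac{1}{c^2}\big(\|\bm{v}_i\|_2^2/\sigma^2 + 1\big) = 1$, so the coefficient vector has unit norm and $y_i/c$ is a standard Gaussian. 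I would then expand $H_{\alpha_{i,p+1}}(y_i/c)$ in the product basis via the multinomial Hermite identity $H_m\big(\sum_l a_l g_l\big) = \sum_{|\bm{\beta}|=m}\binom{m}{\bm{\beta}}\prod_l a_l^{\beta_l}H_{\beta_l}(g_l)$, which follows directly from the Hermite generating function (or, if one prefers the tools already stated, by combining Propositions \ref{prop:hsum} and \ref{prop:hprod} and collapsing the resulting sum with Lemma \ref{bin_sum_scaling}). Multiplying by $\prod_{j\le p}H_{\alpha_{i,j}}(X_{i,j})$ and integrating, the univariate orthogonality relations $\E[H_a H_b]=a!\,\mathbbm{1}_{a=b}$ and $\E[H_b]=\mathbbm{1}_{b=0}$ force $\beta_j=\alpha_{i,j}$ for $j\le p$ and $\beta_{p+1}=0$; the constraint $|\bm{\beta}|=m$ then produces the degree-matching indicator and leaves $T_i = \mathbbm{1}_{\{\alpha_{i,p+1}=|\bm{\alpha}_{i,:p}|\}}\,\alpha_{i,p+1}!\,\prod_{j=1}^p \big(v_{i,j}/(c\sigma)\big)^{\alpha_{i,j}}$.

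It then remains to reassemble the rows. Taking the product over $i$, the scalar $1/(c\sigma)$ pulled from each factor contributes $\left(\frac{1/\sigma}{c}\right)^{|\bm{\alpha}_{\cdot,:p}|}$, the row factorials combine into $\prod_i \alpha_{i,p+1}! = \bm{\alpha}_{p+1}!$, and the remaining signal-dependent product $\prod_{i,j} v_{i,j}^{\alpha_{i,j}}$ is exactly the quantity under $\E_{\bbeta_1,\bbeta_2}[\cdot]$ in the statement (with $\z$ also integrated). On the surviving event the per-row indicators give $|\bm{\alpha}_{\cdot,:p}| = |\bm{\alpha}_{p+1}|$, so the exponent may be rewritten as $|\bm{\alpha}_{p+1}|$ and the zero-valued discrepancy absorbed into the factor $\|\bbeta\|_2^{|\bm{\alpha}_{p+1}|-|\bm{\alpha}_{\cdot,:p}|}$, yielding precisely the claimed form. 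I expect the main obstacle to be the clean execution of the per-row Hermite computation --- tracking the multinomial coefficient and verifying that orthogonality collapses the sum to the single surviving multi-index $(\bm{\alpha}_{i,:p},0)$ --- rather than the surrounding bookkeeping; a secondary point requiring care is confirming the unit-norm normalization, since it is exactly this that makes the clean orthogonality, and hence the emergence of the indicator, go through.
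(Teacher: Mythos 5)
Your proof is correct, but the per-row computation at its heart is organized genuinely differently from the paper's. Both arguments share the same skeleton --- rewrite $\langle \mathsf{L}, \tilde{H}_{\bm{\alpha}} \rangle$ as $\E_{\P(\X,\y)}[\tilde{H}_{\bm{\alpha}}]$, condition on $(\bbeta_1,\bbeta_2,\z)$ so the expectation factorizes over rows, then reassemble --- but the paper evaluates each row by first applying Gaussian integration by parts (Propositions \ref{prop:GIP} and \ref{prop:herm-deriv}) in the variables $X_{i,j}$, which produces the factors $(\bbeta_{1,j}z_i+\bbeta_{2,j}(1-z_i))^{\alpha_{i,j}}$ and lowers the $y$-Hermite polynomial to order $\alpha_{i,p+1}-|\bm{\alpha}_{i,:p}|$; it then expands that polynomial via the multiplication and addition formulas (Propositions \ref{prop:hprod} and \ref{prop:hsum}), kills the noise terms using $\E[H_{\eta}(w_i)]=\mathbbm{1}_{\{\eta=0\}}$, computes explicit Gaussian moments, and collapses the resulting sum with the combinatorial Lemma \ref{bin_sum_scaling} to extract the indicator. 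You avoid integration by parts entirely: you observe that, conditionally, $y_i/c$ with $c=\sqrt{\|\bbeta\|_2^2/\sigma^2+1}$ is a \emph{unit-norm} linear combination of the $p+1$ i.i.d.\ standard Gaussians in row $i$ (this is exactly where the equal-norm prior enters, the same fact the paper uses to make the conditional variance of $\langle \x_i, \bbeta_1 z_i + \bbeta_2(1-z_i)\rangle$ signal-independent), expand $H_{\alpha_{i,p+1}}(y_i/c)$ once in the product Hermite basis via the multinomial identity, and let univariate orthogonality select the single surviving multi-index $(\bm{\alpha}_{i,:p},0)$, with the degree constraint producing the indicator $\mathbbm{1}_{\{\alpha_{i,p+1}=|\bm{\alpha}_{i,:p}|\}}$. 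Your route is shorter and makes the mechanism more transparent --- the indicator appears as a degree-matching condition rather than as the outcome of a moment cancellation --- at the price of invoking the multinomial Hermite expansion, which is not among the paper's stated tools; as you correctly note, it follows from the generating function (the unit-norm condition is precisely what lets $e^{-t^2/2}$ factor across coordinates) or can be reconstructed from Propositions \ref{prop:hsum}, \ref{prop:hprod} and Lemma \ref{bin_sum_scaling}, so nothing essentially new is required. Two minor points: your $T_i$ pairs against the unnormalized $\tilde{H}_{\bm{\alpha}}$ of Proposition \ref{prop:ortho} (as the lemma requires), not a ``normalized'' one as your wording suggests; and your row-wise indicator is the correct reading of the statement --- the $|\bm{\alpha}_{\cdot,:p}|$ inside the indicator of the displayed lemma is a typo for $|\bm{\alpha}_{i,:p}|$, as the paper's own intermediate steps and the subsequent use in Lemma \ref{lemma:mslrd-norm} confirm.
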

\begin{proof}
We begin by expanding the inner product:
\begin{align*}
&\langle \mathsf{L}, \tilde{H}_{\bm{\alpha}} \rangle \\
&= \mathop{\mathbb{E}}_{\P(\X) \otimes \P(\y)} \left[ \frac{d\P(\X, \y)}{d\P(\X) \otimes \P(\y)} \tilde{H}_{\bm{\alpha}}(\X, \y) \right] \\
&= \mathop{\mathbb{E}}_{\P(\X, \y)} \left[ \prod_{i=1}^{n} \left( \prod_{j = 1}^{p} H_{\alpha_{i, j}}(X_{i, j}) \right) H_{\alpha_{i, p+1}}\left( \frac{\left(\frac{1}{\sigma} \X\bbeta_1 \odot \z + \frac{1}{\sigma} \X\bbeta_2 \odot (1 - \z) + \w \right)_{i} }{\sqrt{\frac{\|\bbeta\|^2_2}{\sigma^2} + 1}} \right) \right],
\end{align*}
and applying Gaussian Integration by Parts (Proposition \ref{prop:GIP}) we obtain
\begin{align*}
&\langle \mathsf{L}, \tilde{H}_{\bm{\alpha}} \rangle \\
&= \mathop{\mathbb{E}}_{\X, \bbeta_1, \bbeta_2, \z, \w} \prod_{i=1}^{n} \left( \frac{\frac{1}{\sigma}}{\sqrt{\frac{\|\bbeta\|^2_2}{\sigma^2} + 1}} \right)^{|\bm{\alpha}_{i, :p}|} \frac{\alpha_{i, p+1}!}{(\alpha_{i, p+1} - |\bm{\alpha}_{i, :p}|)!} \\
& \hspace{3em} \cdot \prod_{j=1}^p \left(\bbeta_{1, j} z_i + \bbeta_{2, j} (1-z_i) \right)^{\alpha_{i, j}} H_{\alpha_{i, p+1} - |\bm{\alpha_{i, :p}}|}\left( \frac{\left(\frac{1}{\sigma} \X\bbeta_1 \odot \z + \frac{1}{\sigma} \X\bbeta_2 \odot (1 - \z) + \w\right)_{i} }{\sqrt{\frac{\|\bbeta\|^2_2}{\sigma^2} + 1}}  \right) \\
&= \left( \frac{\frac{1}{\sigma}}{\sqrt{\frac{\|\bbeta\|^2_2}{\sigma^2} + 1}} \right)^{|\bm{\alpha}_{\cdot, :p}|} \mathop{\mathbb{E}}_{\X, \bbeta_1, \bbeta_2, \z, \w} \prod_{i=1}^{n} \frac{\alpha_{i, p+1}!}{(\alpha_{i, p+1} - |\bm{\alpha}_{i, :p}|)!} \\
& \hspace{3em} \cdot \prod_{j=1}^p \left(\bbeta_{1, j} z_i + \bbeta_{2, j} (1-z_i) \right)^{\alpha_{i, j}}  H_{\alpha_{i, p+1} - |\bm{\alpha_{i, :p}}|}\left( \frac{\left(\frac{1}{\sigma} \X\bbeta_1 \odot \z + \frac{1}{\sigma} \X\bbeta_2 \odot (1 - \z) + \w\right)_{i} }{\sqrt{\frac{\|\bbeta\|^2_2}{\sigma^2} + 1}} \right).
\end{align*}
We then apply the Hermite multiplication and addition formulas outlined in Propositions \ref{prop:hsum} and \ref{prop:hprod}:
\begin{align*}
&\langle \mathsf{L}, \tilde{H}_{\bm{\alpha}} \rangle \\
&= \left( \frac{\frac{1}{\sigma}}{\sqrt{\frac{\|\bbeta\|^2_2}{\sigma^2} + 1}} \right)^{|\bm{\alpha}_{\cdot, :p}|}  \mathop{\mathbb{E}}_{\X, \bbeta_1, \bbeta_2, \z, \w} \prod_{i=1}^n \frac{\alpha_{i, p+1}!}{(\alpha_{i, p+1} - |\alpha_{i, :p}|)!} \prod_{j=1}^p \left(\bbeta_{1, j} z_i + \bbeta_{2, j} (1-z_i) \right)^{\alpha_{i, j}} \\
& \cdot \sum_{\xi = 0}^{\floor{\frac{\alpha_{i, p+1} - |\bm{\alpha}_{i, :p}|}{2}}} \left(\frac{1}{\sqrt{\frac{\|\bbeta\|^2_2}{\sigma^2} + 1}} \right)^{\alpha_{i, p+1} - |\bm{\alpha}_{i, :p}| - 2\xi} \left(\frac{1}{\frac{\|\bbeta\|^2_2}{\sigma^2} + 1} - 1 \right)^\xi \binom{\alpha_{i, p+1}- |\bm{\alpha}_{i, :p}| }{2 \xi} \frac{(2 \xi)!}{\xi!} 2^{-\xi}  \\
& \cdot H_{\alpha_{i, p+1} - |\bm{\alpha}_{i, :p}| - 2\xi} \left( \left(\frac{1}{\sigma} \X\bbeta_1 \odot \z + \frac{1}{\sigma} \X\bbeta_2 \odot (1 - \z) + \w\right)_{i} \right) \\
&= \left( \frac{\frac{1}{\sigma}}{\sqrt{\frac{\|\bbeta\|^2_2}{\sigma^2} + 1}} \right)^{|\bm{\alpha}_{\cdot, :p}|} \mathop{\mathbb{E}}_{\X, \bbeta_1, \bbeta_2, \z, \w} \prod_{i=1}^n\frac{\alpha_{i, p+1}!}{(\alpha_{i, p+1} - |\alpha_{i, :p}|)!} \prod_{j=1}^p \left(\bbeta_{1, j} z_i + \bbeta_{2, j} (1-z_i) \right)^{\alpha_{i, j}}\\
& \cdot  \sum_{\xi = 0}^{\floor{\frac{\alpha_{i, p+1}- |\bm{\alpha}_{i, :p}| }{2}}} \left(\frac{1}{\sqrt{\frac{\|\bbeta\|^2_2}{\sigma^2} + 1}} \right)^{\alpha_{i, p+1} - |\bm{\alpha}_{i, :p}| - 2\xi} \left(\frac{1}{\frac{\|\bbeta\|^2_2}{\sigma^2} + 1} - 1 \right)^\xi \binom{\alpha_{i, p+1}- |\bm{\alpha}_{i, :p}| }{2 \xi} \frac{(2 \xi)!}{\xi!} 2^{-\xi} \\
& \cdot \hspace{-2em} \sum_{\eta = 0}^{\alpha_{i, p+1} - |\bm{\alpha}_{i, :p}| - 2\xi} \binom{\alpha_{i, p+1} - |\bm{\alpha}_{i, :p}| - 2\xi}{\eta} \left( \left(\frac{1}{\sigma} \X\bbeta_1 \odot \z + \frac{1}{\sigma} \X\bbeta_2 \odot (1 - \z)\right)_{i} \right)^{\alpha_{i, p+1} - |\bm{\alpha}_{i, :p}| - 2\xi - \eta} \hspace{-2em} \underbrace{H_{\eta} \left( w_i \right)}_{\text{$\neq 0$ only if $\eta = 0$}},
\end{align*}
which we simplify by noting that $H_{\eta} \left( w_i \right) \neq 0$ only if $\eta = 0$ to obtain:
\begin{align*}
&\langle \mathsf{L}, \tilde{H}_{\bm{\alpha}} \rangle \\
&= \left( \frac{\frac{1}{\sigma}}{\sqrt{\frac{\|\bbeta\|^2_2}{\sigma^2} + 1}} \right)^{|\bm{\alpha}_{\cdot, :p}|}  \mathop{\mathbb{E}}_{\X, \bbeta_1, \bbeta_2, \z} \prod_{i=1}^n\frac{\alpha_{i, p+1}!}{(\alpha_{i, p+1} - |\alpha_{i, :p}|)!} \prod_{j=1}^p \left(\bbeta_{1, j} z_i + \bbeta_{2, j} (1-z_i) \right)^{\alpha_{i, j}} \\
& \cdot \sum_{\xi = 0}^{\floor{\frac{\alpha_{i, p+1}- |\bm{\alpha}_{i, :p}| }{2}}} \left(\frac{1}{\sqrt{\frac{\|\bbeta\|^2_2}{\sigma^2} + 1}} \right)^{\alpha_{i, p+1} - |\bm{\alpha}_{i, :p}| - 2\xi} \left(\frac{1}{\frac{\|\bbeta\|^2_2}{\sigma^2} + 1} - 1 \right)^\xi \binom{\alpha_{i, p+1}- |\bm{\alpha}_{i, :p}| }{2 \xi} \frac{(2 \xi)!}{\xi!} 2^{-\xi} \\
& \cdot \left(\left(\frac{1}{\sigma} \X\bbeta_1 \odot \z + \frac{1}{\sigma} \X\bbeta_2 \odot (1 - \z)\right)_{i} \right)^{\alpha_{i, p+1} - |\bm{\alpha}_{i, :p}| - 2\xi}.
\end{align*}
Now switching the sum with the product and grouping terms we obtain:
\begin{align*}
&\langle \mathsf{L}, \tilde{H}_{\bm{\alpha}} \rangle \\
&= \left( \frac{\frac{1}{\sigma}}{\sqrt{\frac{\|\bbeta\|^2_2}{\sigma^2} + 1}} \right)^{|\bm{\alpha}_{\cdot, :p}|} \prod_{i=1}^n \frac{\alpha_{i, p+1}!}{(\alpha_{i, p+1} - |\alpha_{i, :p}|)!} \mathop{\mathbb{E}}_{[\X, \bbeta_1, \bbeta_2, \z] \distas{} \P(\X, y)} \prod_{j=1}^p \left(\bbeta_{1, j} z_i + \bbeta_{2, j} (1-z_i) \right)^{\alpha_{i, j}}  \\
& \cdot \sum_{\xi = 0}^{\floor{\frac{\alpha_{i, p+1}- |\bm{\alpha}_{i, :p}| }{2}}} \binom{\alpha_{i, p+1}- |\bm{\alpha}_{i, :p}| }{2 \xi} \frac{(2 \xi)!}{\xi!} \left(\frac{\left(\frac{1}{\sigma} \X\bbeta_1 \odot \z + \frac{1}{\sigma} \X\bbeta_2 \odot (1 - \z) \right)_i }{\sqrt{\frac{\|\bbeta\|^2_2}{\sigma^2} + 1}} \right)^{\alpha_{i, p+1} - |\bm{\alpha}_{i, :p}| - 2\xi} \left(\frac{-\frac{\|\bbeta\|^2_2}{\sigma^2}}{2 \big(\frac{\|\bbeta\|^2_2}{\sigma^2} + 1 \big)} \right)^{\xi} \\
&= \left( \frac{\frac{1}{\sigma}}{\sqrt{\frac{\|\bbeta\|^2_2}{\sigma^2} + 1}} \right)^{|\bm{\alpha}_{\cdot, :p}|} \frac{\bm{\alpha}_{p+1}!}{(\bm{\alpha}_{p+1} - |\bm{\alpha}_{\cdot, :p}|)!} \mathop{\mathbb{E}}_{[\X, \bbeta_1, \bbeta_2, \z] \distas{} \P(\X, y)} \sum_{0 \leq \bm{\xi} \leq \bm{\floor{\frac{\alpha_{i, p+1}- |\bm{\alpha}_{i, :p}|}{2}}}} \prod_{i=1}^n \\
& \cdot  \binom{\alpha_{i, p+1}- |\bm{\alpha}_{i, :p}| }{2 \xi_i} \frac{(2 \xi_i)!}{\xi_i!} \left({\frac{-\frac{\frac{\|\bbeta\|^2_2}{\sigma^2}}{\frac{\|\bbeta\|^2_2}{\sigma^2} + 1}}{2}} \right)^{\xi_i} \left(\frac{\frac{1}{\sigma}}{\sqrt{\frac{\|\bbeta\|^2_2}{\sigma^2} + 1}} \right)^{\alpha_{i, p+1}- |\bm{\alpha}_{i, :p}|  - 2\xi_i} \\
& \cdot \left[ \prod_{j=1}^p \left(\bbeta_{1, j} z_i + \bbeta_{2, j} (1-z_i) \right)^{\alpha_{i, j}} \left( \X\bbeta_1 \odot \z + \X\bbeta_2 \odot (1 - \z) \right)_i^{\alpha_{i, p+1}- |\bm{\alpha}_{i, :p}|  - 2\xi_i}  \right],
\end{align*}
which by simplification and expansion then leads us to
\begin{align*}
&\langle \mathsf{L}, \tilde{H}_{\bm{\alpha}} \rangle \\
&= \left( \frac{\frac{1}{\sigma}}{\sqrt{\frac{\|\bbeta\|^2_2}{\sigma^2} + 1}} \right)^{|\bm{\alpha}_{\cdot, p+1}|} \frac{\bm{\alpha}_{p+1}!}{(\bm{\alpha}_{p+1} - |\bm{\alpha}_{\cdot, :p}|)!} \sum_{0 \leq \bm{\xi} \leq \bm{\floor{\frac{\alpha_{i, p+1}- |\bm{\alpha}_{i, :p}|}{2}}}}\\
& \cdot \left( \prod_{i=1}^n \binom{\alpha_{i, p+1}- |\bm{\alpha}_{i, :p}| }{2 \xi_i} \frac{(2 \xi_i)!}{\xi_i!} \left(\frac{-\|\bbeta\|^2_2}{2} \right)^{\xi_i} \right) \\
& \cdot \mathop{\mathbb{E}}_{[\X, \bbeta_1, \bbeta_2, \z] \distas{} \P(\X, y)} \prod_{i=1}^n \left[\prod_{j=1}^p \left(\bbeta_{1, j} z_i + \bbeta_{2, j} (1-z_i) \right)^{\alpha_{i, j}} \left( \sum_{j=1}^p X_{i, j} \bbeta_{1, j} z_i + \sum_{j=1}^p X_{i, j} \bbeta_{2, j} (1-z_i)\right)^{\alpha_{i, p+1}- |\bm{\alpha}_{i, :p}|  - 2\xi_i}  \right].
\end{align*}
Bringing out the expectation with respect to $\bbeta_1, \bbeta_2$ we then obtain:
\begin{align*}
&\langle \mathsf{L}, \tilde{H}_{\bm{\alpha}} \rangle \\
&= \left( \frac{\frac{1}{\sigma}}{\sqrt{\frac{\|\bbeta\|^2_2}{\sigma^2} + 1}} \right)^{|\bm{\alpha}_{\cdot, p+1}|} \frac{\bm{\alpha}_{p+1}!}{(\bm{\alpha}_{p+1} - |\bm{\alpha}_{\cdot, :p}|)!} \sum_{0 \leq \bm{\xi} \leq \bm{\floor{\frac{\alpha_{i, p+1}- |\bm{\alpha}_{i, :p}|}{2}}}} \mathop{\mathbb{E}}_{\bbeta_1, \bbeta_2}\left( \prod_{i=1}^n \prod_{j=1}^p \left(\bbeta_{1, j} z_i + \bbeta_{2, j} (1-z_i) \right)^{\alpha_{i, j}} \right) \\
& \cdot \prod_{i=1}^n  \binom{\alpha_{i, p+1}- |\bm{\alpha}_{i, :p}| }{2 \xi_i} \frac{(2 \xi_i)!}{\xi_i!} \left(\frac{-\|\bbeta\|^2_2}{2} \right)^{\xi_i} \mathop{\mathbb{E}}_{\X, \z} \left( \sum_{j=1}^p X_{i, j} \bbeta_{1, j} z_i + \sum_{j=1}^p X_{i, j} \bbeta_{2, j} (1-z_i) \right)^{\alpha_{i, p+1}- |\bm{\alpha}_{i, :p}|  - 2\xi_i} \\
&= \left( \frac{\frac{1}{\sigma}}{\sqrt{\frac{\|\bbeta\|^2_2}{\sigma^2} + 1}} \right)^{|\bm{\alpha}_{\cdot, p+1}|} \frac{\bm{\alpha}_{p+1}!}{(\bm{\alpha}_{p+1} - |\bm{\alpha}_{\cdot, :p}|)!} \sum_{0 \leq \bm{\xi} \leq \bm{\floor{\frac{\alpha_{i, p+1}- |\bm{\alpha}_{i, :p}|}{2}}}} \mathop{\mathbb{E}}_{\bbeta_1, \bbeta_2}\left( \prod_{i=1}^n \prod_{j=1}^p \left(\bbeta_{1, j} z_i + \bbeta_{2, j} (1-z_i) \right)^{\alpha_{i, j}} \right) \\
& \cdot \prod_{i=1}^n  \binom{\alpha_{i, p+1}- |\bm{\alpha}_{i, :p}| }{2 \xi_i} \frac{(2 \xi_i)!}{\xi_i!} \left(\frac{-\|\bbeta\|^2_2}{2} \right)^{\xi_i} \mathop{\mathbb{E}}_{w \distas{} \mathcal{N}(0, \|\bbeta\|^2_2)} w^{\alpha_{i, p+1}- |\bm{\alpha}_{i, :p}|  - 2\xi_i},
\end{align*}
where $\sum_{j=1}^p X_{i, j} \bbeta_{1, j} z_i + \sum_{j=1}^p X_{i, j} \bbeta_{2, j} (1-z_i) \distas{} \mathcal{N}(0, \|\bbeta\|^2_2)$, both marginally and conditionally on $\bbeta_1, \bbeta_2, \z$, and hence is independent of $\prod_{i=1}^n \prod_{j=1}^p \left(\bbeta_{1, j} z_i + \bbeta_{2, j} (1-z_i) \right)^{\alpha_{i, j}}$ (since $\bbeta_1, \bbeta_2$ are constrained to have norm $\|\bbeta\|_2$ according to our prior). After switching the sum with the product, combining the known equation for Gaussian moments $\mathop{\mathbb{E}}_{w \distas{} \mathcal{N}(0, \|\bbeta\|^2_2)} w^{b} = (b - 1)!! \|\bbeta\|^{b}_2 \mathbbm{1}_{\{b \text{ even}\}}$ with additional factorial simplifications, and applying Lemma \ref{bin_sum_scaling}, we obtain
\begin{align*}
&\langle \mathsf{L}, \tilde{H}_{\bm{\alpha}} \rangle \\
&= \left( \frac{\frac{1}{\sigma}}{\sqrt{\frac{\|\bbeta\|^2_2}{\sigma^2} + 1}} \right)^{|\bm{\alpha}_{p+1}|} \frac{\bm{\alpha}_{p+1}!}{(\bm{\alpha}_{p+1} - |\bm{\alpha}_{\cdot, :p}|)!} \sum_{0 \leq \bm{\xi} \leq \bm{\floor{\frac{\alpha_{i, p+1}- |\bm{\alpha}_{i, :p}|}{2}}}} \\
& \cdot  \left( \prod_{i=1}^n \binom{\alpha_{i, p+1}- |\bm{\alpha}_{i, :p}| }{2 \xi_i} \frac{(2 \xi_i)!}{\xi_i!} \left(\frac{-\|\bbeta\|^2_2}{2} \right)^{\xi_i} \right) \mathop{\mathbb{E}}_{\bbeta_1, \bbeta_2}\left( \prod_{i=1}^n \prod_{j=1}^p \left(\bbeta_{1, j} z_i + \bbeta_{2, j} (1-z_i) \right)^{\alpha_{i, j}} \right) \\
& \cdot \left( \prod_{i=1}^n (\alpha_{i, p+1}- |\bm{\alpha}_{i, :p}| - 2\xi_i - 1)!! \cdot \|\bbeta\|^{\alpha_{i, p+1}- |\bm{\alpha}_{i, :p}| - 2\xi_i}_2 \mathbbm{1}_{ \{ \alpha_{i, p+1}- |\bm{\alpha}_{i, :p}| - 2\xi_i \text{ even} \}} \right) \\
&= \left( \frac{\frac{1}{\sigma}}{\sqrt{\frac{\|\bbeta\|^2_2}{\sigma^2} + 1}} \right)^{|\bm{\alpha}_{p+1}|} \frac{\bm{\alpha}_{p+1}!}{(\bm{\alpha}_{p+1} - |\bm{\alpha}_{\cdot, :p}|)!} \|\bbeta\|_2^{|\bm{\alpha}_{p+1}| - |\bm{\alpha}_{\cdot, :p}|} \\
& \cdot \prod_{i=1}^n \sum_{\xi}^{\floor{\frac{\alpha_{i, p+1}- |\bm{\alpha}_{i, :p}|}{2}}} \binom{\alpha_{i, p+1}- |\bm{\alpha}_{i, :p}| }{2 \xi} \frac{(2 \xi)!}{\xi!} \left(\frac{-1}{2} \right)^{\xi} \\
& \cdot (\alpha_{i, p+1}- |\bm{\alpha}_{i, :p}| - 2\xi - 1)!! \cdot \mathbbm{1}_{ \{ \alpha_{i, p+1}- |\bm{\alpha}_{i, :p}| - 2\xi \text{ even} \}} \mathop{\mathbb{E}}_{\bbeta_1, \bbeta_2}\left( \prod_{i=1}^n \prod_{j=1}^p \left(\bbeta_{1, j} z_i + \bbeta_{2, j} (1-z_i) \right)^{\alpha_{i, j}} \right)  
\end{align*}
\begin{align*}
&= \left( \frac{\frac{1}{\sigma}}{\sqrt{\frac{\|\bbeta\|^2_2}{\sigma^2} + 1}} \right)^{|\bm{\alpha}_{p+1}|} \frac{\bm{\alpha}_{p+1}!}{(\bm{\alpha}_{p+1} - |\bm{\alpha}_{\cdot, :p}|)!} \|\bbeta\|_2^{|\bm{\alpha}_{p+1}| - |\bm{\alpha}_{\cdot, :p}|} \\
& \cdot \prod_{i=1}^n \mathbbm{1}_{\{\alpha_{i, p+1} - |\bm{\alpha}_{i, :p}| = 0 \}} \mathop{\mathbb{E}}_{\bbeta_1, \bbeta_2}
\left[ \prod_{i=1}^n \prod_{j=1}^p \left(\bbeta_{1, j} z_i + \bbeta_{2, j} (1-z_i) \right)^{\alpha_{i, j}} \right] \\
&= \left( \frac{\frac{1}{\sigma}}{\sqrt{\frac{\|\bbeta\|^2_2}{\sigma^2} + 1}} \right)^{|\bm{\alpha}_{p+1}|} \|\bbeta\|_2^{|\bm{\alpha}_{p+1}| - |\bm{\alpha}_{\cdot, :p}|} \bm{\alpha}_{p+1}! \prod_{i=1}^n \mathbbm{1}_{ \{\alpha_{i, p+1} - |\bm{\alpha}_{\cdot, :p}| = 0 \}} \mathop{\mathbb{E}}_{\bbeta_1, \bbeta_2}
\left[ \prod_{i=1}^n \prod_{j=1}^p \left(\bbeta_{1, j} z_i + \bbeta_{2, j} (1-z_i) \right)^{\alpha_{i, j}} \right],
\end{align*}
which leads to the desired result.
\end{proof}

\begin{lemma} \label{lemma:mslrd-norm}
Let $(\bbeta^{(1)}_1, \bbeta^{(1)}_2)$ and $(\bbeta^{(2)}_1, \bbeta^{(2)}_2)$ be two independent copies of signals sampled from $\mathcal{P}_{\|\bbeta\|_2}(\mathcal{D})$, and likewise for $\z^{(1)}$ and $\z^{(2)}$ sampled entrywise from $\text{Bernoulli}(\phi)$. We then have
\begin{align*}
&\chi^2_{\leq D}( \P(\X, \y) \| \P(\X) \otimes \P(\y) ) + 1 
= \mathop{\mathbb{E}}_{\substack{(\bbeta_1^{(1)}, \bbeta_2^{(1)}), (\bbeta_1^{(2)}, \bbeta_2^{(2)}) \distas{\text{i.i.d.}} \mathcal{P} \\ z^{(1)}, z^{(2)} \distas{\text{i.i.d.}} \text{Ber}(\phi)}} \, \sum_{\frac{d}{2} = 0}^{\floor{\frac{D}{2}}} \left( \frac{1}{\|\bbeta\|^2_2 + \sigma^2} \right)^{\frac{d}{2}} \\  
& \hspace{3.5em}\cdot\sum_{|\bm{\alpha}_{p+1}| = \frac{d}{2}} \prod_{i=1}^n \langle \bbeta^{(1)}_{1} z^{(1)}_i + \bbeta^{(1)}_{2} (1-z^{(1)}_i), \bbeta^{(2)}_{1} z^{(2)}_i + \bbeta^{(2)}_{2} (1-z^{(2)}_i) \rangle^{\alpha_{i, p+1}}.
\end{align*}
\end{lemma}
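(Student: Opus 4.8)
The plan is to begin from the Parseval-type identity recorded in the last line of \eqref{eq:I_simp}, namely $\chi^2_{\leq D}(\P(\X,\y)\,\|\,\P(\X)\otimes\P(\y)) + 1 = \sum_{0\le|\bm{\alpha}|\le D}\frac{1}{\bm{\alpha}!}\langle \mathsf{L},\tilde{H}_{\bm{\alpha}}\rangle^2$, where I use that $\langle \mathsf{L},\tilde{H}_{\bm{\alpha}}\rangle=\E_{\P(\X)\otimes\P(\y)}[\mathsf{L}\,\tilde{H}_{\bm{\alpha}}]=\E_{\P(\X,\y)}[\tilde{H}_{\bm{\alpha}}]$ and that the normalized Hermite polynomials of Proposition~\ref{prop:ortho} are orthonormal under the null. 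I would then substitute the closed form for $\langle \mathsf{L},\tilde{H}_{\bm{\alpha}}\rangle$ from Lemma~\ref{lemma:mslr-inner-prod} and simplify term by term.

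The first observation is that the indicator $\prod_i \mathbbm{1}_{\{\alpha_{i,p+1}=|\bm{\alpha}_{i,:p}|\}}$ restricts the sum to multi-indices whose last-column entry in each row equals the row sum of the first $p$ columns. On this support two simplifications occur: summing the row constraints gives $|\bm{\alpha}_{p+1}|=|\bm{\alpha}_{\cdot,:p}|$, so the factor $\|\bbeta\|_2^{|\bm{\alpha}_{p+1}|-|\bm{\alpha}_{\cdot,:p}|}$ collapses to $1$; and the prefactor simplifies via $\frac{1/\sigma}{\sqrt{\|\bbeta\|_2^2/\sigma^2+1}}=\frac{1}{\sqrt{\|\bbeta\|_2^2+\sigma^2}}$, so that after squaring it becomes $(\|\bbeta\|_2^2+\sigma^2)^{-|\bm{\alpha}_{p+1}|}$. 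Writing $d:=|\bm{\alpha}|=2|\bm{\alpha}_{p+1}|$ (necessarily even on this support) matches the claimed weight $(\|\bbeta\|_2^2+\sigma^2)^{-d/2}$ and the outer range $d/2\in\{0,\dots,\lfloor D/2\rfloor\}$.

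Next I would handle the combinatorial prefactors and the squared expectation. The factorization $\bm{\alpha}!=\bm{\alpha}_{\cdot,:p}!\,\bm{\alpha}_{p+1}!$ turns $\frac{(\bm{\alpha}_{p+1}!)^2}{\bm{\alpha}!}$ into $\frac{\bm{\alpha}_{p+1}!}{\bm{\alpha}_{\cdot,:p}!}=\prod_i \binom{\alpha_{i,p+1}}{\alpha_{i,1},\dots,\alpha_{i,p}}$, a product of row multinomial coefficients. The square of the signal-and-label expectation from Lemma~\ref{lemma:mslr-inner-prod} is linearized in the standard two-replica fashion, $\big(\E[F]\big)^2=\E[F^{(1)}F^{(2)}]$, introducing two i.i.d. copies $(\bbeta_1^{(c)},\bbeta_2^{(c)},\z^{(c)})$, $c\in\{1,2\}$, drawn from $\mathcal{P}_{\|\bbeta\|_2}(\mathcal{D})$ and $\text{Ber}(\phi)$. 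Setting $\bm{v}_i^{(c)}:=\bbeta_1^{(c)}z_i^{(c)}+\bbeta_2^{(c)}(1-z_i^{(c)})$ with scalar coordinates $v_{i,j}^{(c)}$, the integrand becomes $\prod_{i,j}(v_{i,j}^{(1)}v_{i,j}^{(2)})^{\alpha_{i,j}}$, after which I would exchange the finite sum over $\bm{\alpha}$ with the replica expectation.

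The final step is the multinomial collapse. For each row $i$, fixing $\alpha_{i,p+1}=m_i$ and summing the multinomial coefficient against $\prod_j (v_{i,j}^{(1)}v_{i,j}^{(2)})^{\alpha_{i,j}}$ over all $(\alpha_{i,1},\dots,\alpha_{i,p})$ with $\sum_j\alpha_{i,j}=m_i$ yields, by the multinomial theorem, $\big(\sum_j v_{i,j}^{(1)}v_{i,j}^{(2)}\big)^{m_i}=\langle \bm{v}_i^{(1)},\bm{v}_i^{(2)}\rangle^{m_i}$. Taking the product over $i$ and reorganizing the outer sum by $d/2=\sum_i m_i=|\bm{\alpha}_{p+1}|$ produces exactly the stated expression. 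I expect the only genuinely delicate point to be the bookkeeping: one must carry the row-wise constraint $\alpha_{i,p+1}=|\bm{\alpha}_{i,:p}|$ through the substitution so that the inner sum over the first $p$ columns is free subject only to the fixed row total $m_i$, ensuring the multinomial theorem applies cleanly; once this is arranged, the prefactor powers and factorials recombine mechanically.
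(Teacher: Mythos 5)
Your proposal is correct and follows essentially the same route as the paper's proof: starting from the Parseval identity in \eqref{eq:I_simp}, substituting the closed form from Lemma~\ref{lemma:mslr-inner-prod}, using the row-wise indicator to force $|\bm{\alpha}_{p+1}| = |\bm{\alpha}_{\cdot, :p}|$ (hence even total degree, the collapse of the $\|\bbeta\|_2$ factor, and the weight $(\|\bbeta\|_2^2 + \sigma^2)^{-d/2}$), reducing the factorials to row multinomial coefficients via $\frac{(\bm{\alpha}_{p+1}!)^2}{\bm{\alpha}!} = \frac{\bm{\alpha}_{p+1}!}{\bm{\alpha}_{\cdot,:p}!}$, linearizing the squared expectation with two independent replicas, and collapsing the inner sums row by row with the multinomial theorem. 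No gaps; this is the paper's argument.
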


\begin{proof}
We begin the proof by applying Lemma \ref{lemma:mslr-inner-prod} to obtain:
\begin{align*}
&\chi^2_{\leq D}( \P(\X, \y) \| \P(\X) \otimes \P(\y) ) + 1  \\
&= \sum_{0 \leq |\bm{\alpha}| \leq D} \frac{1}{\bm{\alpha}!} {\langle \mathsf{L}, \tilde{H}_{\bm{\alpha}} \rangle}^2 \\
&= \sum_{0 \leq |\bm{\alpha}| \leq D} \frac{1}{\bm{\alpha}!}  \left( \frac{\frac{1}{\sigma}}{\sqrt{\frac{\|\bbeta\|^2_2}{\sigma^2} + 1}} \right)^{2 |\bm{\alpha}_{p+1}|} \|\bbeta\|^{2|\bm{\alpha}_{p+1}|- 2|\bm{\alpha}_{\cdot, :p}|}_2 (\bm{\alpha}_{p+1}!)^2 \\
& \cdot \prod_{i=1}^n \mathbbm{1}_{ \{\alpha_{i, p+1} - |\bm{\alpha}_{\cdot, :p}| = 0 \}} \left(  \mathop{\mathbb{E}} \prod_{i=1}^n \prod_{j=1}^p \left(\bbeta_{1, j} z_i + \bbeta_{2, j} (1-z_i) \right)^{\alpha_{i, j}}  \right)^2 \\
&= \sum_{d = 0}^D \sum_{h = 0}^d \sum_{|\bm{\alpha}_{p+1}| = h} \sum_{|\bm{\alpha}_{\cdot, :p}| = d - h} \frac{1}{\bm{\alpha}!}  \left( \frac{\frac{1}{\sigma}}{\sqrt{\frac{\|\bbeta\|^2_2}{\sigma^2} + 1}} \right)^{2 |\bm{\alpha}_{p+1}|} \|\bbeta\|^{2|\bm{\alpha}_{p+1}|- 2|\bm{\alpha}_{\cdot, :p}|}_2 (\bm{\alpha}_{p+1}!)^2 \\
& \cdot \prod_{i=1}^n \mathbbm{1}_{ \{\alpha_{i, p+1} - |\bm{\alpha}_{\cdot, :p}| = 0 \}} \left(  \mathop{\mathbb{E}} \prod_{i=1}^n \prod_{j=1}^p \left(\bbeta_{1, j} z_i + \bbeta_{2, j} (1-z_i) \right)^{\alpha_{i, j}}  \right)^2.
\end{align*}

We next split a squared expectation into the expectation of the multiplication of two independent random variables: $\left( \mathbb{E}_w \left[w \right] \right)^2 = \mathbb{E}_{w^{(1)}} \left[{w^{(1)}} \right] \mathbb{E}_{w^{(2)}} \left[{w^{(2)}} \right] = \mathbb{E}_{w^{(1)}, w^{(2)}} \left[ w^{(1)} w^{(2)} \right]$, where we have chosen $w^{(1)}$ and $w^{(2)}$ to be two independent and identically distributed random variables. Continuing in this way, we obtain:

\begin{align*}
&\chi^2_{\leq D}( \P(\X, \y) \| \P(\X) \otimes \P(\y) )  + 1 \\
&= \sum_{d = 0}^D \sum_{h = 0}^d \sum_{|\bm{\alpha}_{p+1}| = h} \sum_{|\bm{\alpha}_{\cdot, :p}| = d - h} \frac{1}{\bm{\alpha}!}  \left( \frac{\frac{1}{\sigma}}{\sqrt{\frac{\|\bbeta\|^2_2}{\sigma^2} + 1}} \right)^{2 |\bm{\alpha}_{p+1}|} \|\bbeta\|^{2|\bm{\alpha}_{p+1}|- 2|\bm{\alpha}_{\cdot, :p}|}_2 (\bm{\alpha}_{p+1}!)^2 \\
& \hspace{0em} \cdot \underbrace{\prod_{i=1}^n \mathbbm{1}_{ \{\alpha_{i, p+1} - |\bm{\alpha}_{i, :p}| = 0 \}}}_{\implies \sum_{i=1}^n \alpha_{i, p+1} - |\bm{\alpha_{i, :p}}| = 0 \implies 2h - d = 0 \implies d \text{ even}} \\
& \cdot \mathop{\mathbb{E}}_{\substack{(\bbeta_1^{(1)}, \bbeta_2^{(1)}), (\bbeta_1^{(2)}, \bbeta_2^{(2)}) \distas{\text{i.i.d.}} \mathcal{P} \\ z^{(1)}, z^{(2)} \distas{\text{i.i.d.}} \text{Ber}(\phi)}} \prod_{i=1}^n \prod_{j=1}^p \left(\bbeta^{(1)}_{1, j} z^{(1)}_i + \bbeta^{(1)}_{2, j} (1-z^{(1)}_i) \right)^{\alpha_{i, j}} \left(\bbeta^{(2)}_{1, j} z^{(2)}_i + \bbeta^{(2)}_{2, j} (1-z^{(2)}_i) \right)^{\alpha_{i, j}},
\end{align*}
which we simplify after noticing $\sum_{i=1}^n \alpha_{i, p+1} - |\bm{\alpha_{i, :p}}| = 0$ implies $d$ must be even, 
\begin{align*}
&\chi^2_{\leq D}( \P(\X, \y) \| \P(\X) \otimes \P(\y) )  + 1 \\
&= \mathop{\mathbb{E}} \sum_{\frac{d}{2} = 0}^{\floor{\frac{D}{2}}} \sum_{|\bm{\alpha}_{p+1}| = \frac{d}{2}} \sum_{|\bm{\alpha}_{\cdot, :p}| = \frac{d}{2}} \frac{1}{\bm{\alpha}_{p+1}! \cdot \bm{\alpha}_{\cdot, :p}!} \left( \frac{1}{\|\bbeta\|^2_2 + \sigma^2} \right)^{\frac{d}{2}} \|\bbeta\|^{0}_2 (\bm{\alpha}_{p+1}!)^2 \\
& \hspace{2em} \cdot \prod_{i=1}^n \mathbbm{1}_{ \{\alpha_{i, p+1} - |\bm{\alpha}_{i, :p}| = 0 \}} \\
& \hspace{2em} \cdot \prod_{i=1}^n \prod_{j=1}^p \left(\bbeta^{(1)}_{1, j} z^{(1)}_i + \bbeta^{(1)}_{2, j} (1-z^{(1)}_i) \right)^{\alpha_{i, j}} \left(\bbeta^{(2)}_{1, j} z^{(2)}_i + \bbeta^{(2)}_{2, j} (1-z^{(2)}_i) \right)^{\alpha_{i, j}}   \\
&= \mathop{\mathbb{E}} \sum_{\frac{d}{2} = 0}^{\floor{\frac{D}{2}}} \left( \frac{1}{\|\bbeta\|^2_2 + \sigma^2} \right)^{\frac{d}{2}} \\
& \hspace{2em} \cdot \sum_{|\bm{\alpha}_{p+1}| = \frac{d}{2}} \frac{(\bm{\alpha}_{p+1}!)^2}{\bm{\alpha}_{p+1}!}\sum_{|\bm{\alpha}_{\cdot, :p}| = \frac{d}{2}} \frac{1}{\bm{\alpha}_{\cdot, :p}!} \prod_{i=1}^n \mathbbm{1}_{ \{\alpha_{i, p+1} - |\bm{\alpha}_{i, :p}| = 0 \}} \\
& \hspace{2em} \cdot \prod_{i=1}^n \prod_{j=1}^p \left(\bbeta^{(1)}_{1, j} z^{(1)}_i + \bbeta^{(1)}_{2, j} (1-z^{(1)}_i) \right)^{\alpha_{i, j}} \left(\bbeta^{(2)}_{1, j} z^{(2)}_i + \bbeta^{(2)}_{2, j} (1-z^{(2)}_i) \right)^{\alpha_{i, j}},
\end{align*}
and can be re-ordered in order to more clearly apply the multinomial theorem: 
\begin{align*}
&\chi^2_{\leq D}( \P(\X, \y) \| \P(\X) \otimes \P(\y) )  + 1 \\
&= \mathop{\mathbb{E}} \sum_{\frac{d}{2} = 0}^{\floor{\frac{D}{2}}} \left( \frac{1}{\|\bbeta\|^2_2 + \sigma^2} \right)^{\frac{d}{2}} \sum_{|\bm{\alpha}_{p+1}| = \frac{d}{2}} \bm{\alpha}_{p+1}! \\
& \hspace{2em} \cdot \sum_{|\bm{\alpha}_{1, :p}| = \alpha_{1, p+1}} \frac{1}{\bm{\alpha}_{1, :p}!} \hdots \sum_{|\bm{\alpha}_{n, :p}| = \alpha_{n, p+1}} \frac{1}{\bm{\alpha}_{n, :p}!} \\
& \hspace{2em} \prod_{i=1}^n \prod_{j=1}^p \left(\bbeta^{(1)}_{1, j} z^{(1)}_i + \bbeta^{(1)}_{2, j} (1-z^{(1)}_i) \right)^{\alpha_{i, j}} \left(\bbeta^{(2)}_{1, j} z^{(2)}_i + \bbeta^{(2)}_{2, j} (1-z^{(2)}_i) \right)^{\alpha_{i, j}} \\
&= \mathop{\mathbb{E}} \sum_{\frac{d}{2} = 0}^{\floor{\frac{D}{2}}} \left( \frac{1}{\|\bbeta\|^2_2 + \sigma^2} \right)^{\frac{d}{2}} \sum_{|\bm{\alpha}_{p+1}| = \frac{d}{2}} \bm{\alpha}_{p+1}! \\
& \hspace{2em} \cdot \sum_{|\bm{\alpha}_{1, :p}| = \alpha_{1, p+1}} \frac{\prod_{j=1}^p \left(\bbeta^{(1)}_{1, j} z^{(1)}_i + \bbeta^{(1)}_{2, j} (1-z^{(1)}_i) \right)^{\alpha_{1, j}} \left(\bbeta^{(2)}_{1, j} z^{(2)}_i + \bbeta^{(2)}_{2, j} (1-z^{(2)}_i) \right)^{\alpha_{1, j}}}{\bm{\alpha}_{1, :p}!} \\
& \hspace{2em} \hdots \sum_{|\bm{\alpha}_{n, :p}| = \alpha_{n, p+1}} \frac{\prod_{j=1}^p \left(\bbeta^{(1)}_{1, j} z^{(1)}_i + \bbeta^{(1)}_{2, j} (1-z^{(1)}_i) \right)^{\alpha_{n, j}} \left(\bbeta^{(2)}_{1, j} z^{(2)}_i + \bbeta^{(2)}_{2, j} (1-z^{(2)}_i) \right)^{\alpha_{n, j}}}{\bm{\alpha}_{n, :p}!}.
\end{align*}
We then apply the multinomial theorem to obtain the result:
\begin{align*}
&\chi^2_{\leq D}( \P(\X, \y) \| \P(\X) \otimes \P(\y) ) + 1 \\
&= \mathop{\mathbb{E}} \sum_{\frac{d}{2} = 0}^{\floor{\frac{D}{2}}} \left( \frac{1}{\|\bbeta\|^2_2 + \sigma^2} \right)^{\frac{d}{2}} \sum_{|\bm{\alpha}_{p+1}| = \frac{d}{2}} \prod_{i=1}^n \langle \bbeta^{(1)}_{1} z^{(1)}_i + \bbeta^{(1)}_{2} (1-z^{(1)}_i), \bbeta^{(2)}_{1} z^{(2)}_i + \bbeta^{(2)}_{2} (1-z^{(2)}_i) \rangle^{\alpha_{i, p+1}}.
\end{align*}
\end{proof}
In the next two subsections, we prove the computational lower bounds for $\slrd$ (Theorem \ref{thm:slrd-hardness-conj-snr} and $\sbmslrd$ (Theorem \ref{thm:sbmslrd-hardness-cor-snr})  by specializing Lemma \ref{lemma:mslrd-norm}.
\subsection{Special Case: $\slrd$} \label{subsec:slrd-hardness-proofs}
\begin{proof} [Proof of Theorem \ref{thm:slrd-hardness-conj-snr}]
In Theorem \ref{thm:slrd} below, recalling that $\snr = \|\bbeta\|^2_2/\sigma^2$, we let $n = (1-\epsilon)(1-2\theta) \frac{\| \bbeta \|_2^2}{ \bbeta \|_{\infty}^2} \frac{\snr +1}{ \snr} \log p$. Then Theorem \ref{thm:slrd} implies that for all $D \le \frac{2\epsilon}{1 - \epsilon} n$, we have 
$\chi^2_{\leq D}( \P(\X, \y) \| \P(\X) \otimes \P(\y) )= O(1)$. Applying Conjecture \ref{conj:low-degree-conjecture} with $D=\frac{2\epsilon}{1 - \epsilon} n$ and recalling $n = \omega(\log p)$ (by the assumptions of the theorem), we have that running time  $\exp\left(\tilde{\Omega}(n) \right)$ is required.
\end{proof}
\begin{theorem}[General $\slrd$ lower bound] \label{thm:slrd}
Consider the setting of $\slrd$ (Definition \ref{def:mslrd} with $\bbeta_1 = \bbeta_2$). Let $\bbeta \distas{} \mathcal{P}_{\|\bbeta\|_2}(\mathcal{D})$. If $k = O(p^{\theta}) \leq \sqrt{p}$  for some $\theta \in (0, 1/2]$, then for any $\epsilon \in (0, 1)$, $n \leq (1 - \epsilon) (1 - 2\theta) \left(\frac{\|\bbeta\|^2_2 + \sigma^2}{\|\bbeta\|_{\infty}^2}\right) \log{p}$ and $D \leq  \frac{2\epsilon}{1 - \epsilon} n$, we have $\chi^2_{\leq D}( \P(\X, \y) \| \P(\X) \otimes \P(\y) )= O(1)$.
\end{theorem}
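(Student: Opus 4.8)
The plan is to specialize the general expression for the low-degree chi-squared divergence in Lemma \ref{lemma:mslrd-norm} to the $\slrd$ setting and then bound the resulting series. First I would observe that when $\bbeta_1 = \bbeta_2 = \bbeta$, each factor $\bbeta^{(c)}_1 z^{(c)}_i + \bbeta^{(c)}_2(1-z^{(c)}_i)$ collapses to $\bbeta^{(c)}$, so the inner products in Lemma \ref{lemma:mslrd-norm} no longer depend on $i$ or on $\z$, and $\prod_{i=1}^n \langle \cdots \rangle^{\alpha_{i,p+1}} = \langle \bbeta^{(1)}, \bbeta^{(2)} \rangle^{|\bm{\alpha}_{p+1}|}$. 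Since the summand then depends on $\bm{\alpha}_{p+1}$ only through $|\bm{\alpha}_{p+1}| = d/2$, the inner sum contributes a factor $\binom{d/2 + n - 1}{n-1}$, giving
\[ \chi^2_{\leq D}(\P(\X,\y) \| \P(\X)\otimes\P(\y)) + 1 = \sum_{m=0}^{\floor{D/2}} \binom{m + n - 1}{n-1}\, \frac{\E\,\langle \bbeta^{(1)}, \bbeta^{(2)} \rangle^m}{(\|\bbeta\|_2^2 + \sigma^2)^m}, \]
where $\bbeta^{(1)}, \bbeta^{(2)}$ are two independent copies from $\mathcal{P}_{\|\bbeta\|_2}(\mathcal{D})$ and $m = d/2$. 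The $m=0$ term equals $1$, so it remains to show the $m \ge 1$ tail is $O(1)$.

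The second step is to bound the overlap moments $\E\langle \bbeta^{(1)},\bbeta^{(2)}\rangle^m$. Expanding $\langle \bbeta^{(1)},\bbeta^{(2)}\rangle^m = \sum_{j_1,\dots,j_m} \prod_l \bbeta^{(1)}_{j_l}\bbeta^{(2)}_{j_l}$ and using independence of the two copies, each term factors as $(\E\prod_l \bbeta_{j_l})^2 \ge 0$. I would then group the tuples by the number $s$ of distinct coordinates they use and by the multiplicity pattern, bounding $|\E \prod_j \bbeta_j^{c_j}| \le \|\bbeta\|_\infty^m\,\P(\text{$s$ fixed coordinates all lie in the support}) \le \|\bbeta\|_\infty^m (k/(p-s))^s$. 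The $\binom{p}{s} \le p^s/s!$ choices of coordinate set combine with this squared probability to produce the factor $(k^2/p)^s$, and bounding the number of surjective multiplicity patterns by $s^m$ yields
\[ \E\,\langle \bbeta^{(1)},\bbeta^{(2)}\rangle^m \le (1 + o(1))\,\|\bbeta\|_\infty^{2m} \sum_{s=1}^m \frac{s^m}{s!}\left(\frac{k^2}{p}\right)^s, \]
where the $(1+o(1))$ absorbs the difference between $(p-s)^{-s}$ and $p^{-s}$, which is valid since $s \le D/2 = O(\log p) \ll p$ under the hypotheses.

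The final step is to substitute this bound, write $\gamma := \|\bbeta\|_\infty^2/(\|\bbeta\|_2^2 + \sigma^2)$, and exchange the order of summation to sum over $m$ first. Using $m \le D/2 \le \tfrac{\epsilon}{1-\epsilon} n$ I would bound $\binom{m+n-1}{n-1} \le (n/(1-\epsilon))^m/m!$, so that $\sum_{m \ge s} \binom{m+n-1}{n-1}(\gamma s)^m \le \exp(\gamma s n/(1-\epsilon))$. The hypothesis $n \le (1-\epsilon)(1-2\theta)\gamma^{-1} \log p$ makes this at most $p^{(1-2\theta)s}$, and the residual sum collapses to
\[ \chi^2_{\leq D} \lesssim \sum_{s \ge 1} \frac{1}{s!}\left(\frac{k^2}{p}\, p^{1-2\theta}\right)^s = \sum_{s\ge1}\frac{1}{s!}\big(k^2 p^{-2\theta}\big)^s \le \exp\!\big(O(1)\big) = O(1), \]
since $k = O(p^\theta)$ forces $k^2 p^{-2\theta} = O(1)$.

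The main obstacle is the second and third steps taken together: controlling the overlap moments tightly enough that, after the two summations, the $(1-2\theta)$ and $(1-\epsilon)$ factors in the hypothesis are exactly what is needed for the geometric/exponential series to converge. Concretely one must (i) track the combinatorial count of multiplicity patterns and the support-overlap probabilities without leaking polynomial-in-$p$ factors that would spoil the $p^{(1-2\theta)s}$ bound on the inner sum, and (ii) verify that the per-coordinate estimate $|\bbeta_j| \le \|\bbeta\|_\infty$ is the \emph{only} place the signal amplitude enters — this is the origin of the $\|\bbeta\|_\infty$ dependence that the paper flags as a proof artifact — so that the outer sum over $s$ converges uniformly in $p$.
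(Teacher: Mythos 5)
Your proposal is correct and follows essentially the same route as the paper's proof: specialize Lemma \ref{lemma:mslrd-norm} under $\bbeta_1=\bbeta_2$, bound the stars-and-bars count $\binom{m+n-1}{n-1}$ by $(n/(1-\epsilon))^m/m!$ using $D \le \tfrac{2\epsilon}{1-\epsilon}n$, sum an exponential series, and control the support-overlap combinatorics via $k\le\sqrt{p}$ (the same estimate underlying the paper's Lemma \ref{lemma:comb-bound}); the only real difference is the order of summation --- the paper sums over the degree first, reducing to $\E\exp\bigl((1-2\theta)\log p \,\langle\bbeta^{(1)},\bbeta^{(2)}\rangle/\|\bbeta\|_\infty^2\bigr)$ and then bounds the hypergeometric overlap, whereas you bound the overlap moments first and then swap the sums.

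One local justification needs repair, though it does not break the argument: your claim that $(p-s)^{-s}$ may be replaced by $p^{-s}$ because ``$s \le D/2 = O(\log p)$'' is false in general, since $D$ may be as large as order $\tfrac{\|\bbeta\|_2^2+\sigma^2}{\|\bbeta\|_\infty^2}\log p \gg \log p$ (e.g.\ of order $k\log p$ for flat signals). Instead, observe that terms with $s>k$ vanish identically (one cannot place more than $k$ distinct coordinates inside a size-$k$ support), so the sum effectively runs over $s\le k\le\sqrt{p}$, where $(p/(p-s))^{2s}\le e^{2s^2/(p-s)}=O(1)$; this yields a constant factor rather than $1+o(1)$, which is exactly the slack the paper absorbs through the factor $4$ in Lemma \ref{lemma:comb-bound}. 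Incidentally, retaining the $1/s!$ in your final sum, so that it is bounded by $\exp\bigl(k^2p^{-2\theta}\bigr)$, is slightly more careful than the paper's last display, which drops the corresponding factorial and leaves a geometric sum in $k^2/p^{2\theta}$.
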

\begin{proof} 
Let $\mathcal{S}^{(1)}$ and $\mathcal{S}^{(2)}$ denote the support sets of $\bbeta^{(1)}$ and $\bbeta^{(2)}$, respectively. We apply Lemma \ref{lemma:mslrd-norm} to obtain:
\begin{align*}
\chi^2_{\leq D}( \P(\X, \y) \| \P(\X) \otimes \P(\y) ) + 1 &= \mathop{\mathbb{E}} \sum_{\frac{d}{2} = 0}^{\floor{\frac{D}{2}}} \left( \frac{1}{\|\bbeta\|^2_2 + \sigma^2} \right)^{\frac{d}{2}} \sum_{|\bm{\alpha}_{p+1}| = \frac{d}{2}} \prod_{i=1}^n \langle \bbeta^{(1)}, \bbeta^{(2)}\rangle^{\alpha_{i, p+1}} \\
&= \mathop{\mathbb{E}} \sum_{\frac{d}{2} = 0}^{\floor{\frac{D}{2}}} \left( \frac{1}{\|\bbeta\|^2_2 + \sigma^2} \right)^{\frac{d}{2}} \sum_{|\bm{\alpha}_{p+1}| = \frac{d}{2}} \langle \bbeta^{(1)}, \bbeta^{(2)}\rangle^\frac{d}{2} \\
&\leq \mathop{\mathbb{E}} \sum_{\frac{d}{2} = 0}^{\floor{\frac{D}{2}}} \left( \frac{1}{\|\bbeta\|^2_2 + \sigma^2} \right)^{\frac{d}{2}} {{\frac{D}{2} + n - 1} \choose {n - 1}} \langle \bbeta^{(1)}, \bbeta^{(2)}\rangle^\frac{d}{2} \\
&\leq \mathop{\mathbb{E}} \sum_{\frac{d}{2} = 0}^{\floor{\frac{D}{2}}} \left( \frac{1}{\|\bbeta\|^2_2 + \sigma^2} \right)^{\frac{d}{2}} \frac{\left( \frac{D}{2} + n \right)^{\frac{d}{2}}}{\frac{d}{2}!} \langle \bbeta^{(1)}, \bbeta^{(2)}\rangle^\frac{d}{2} \\
&\leq \mathop{\mathbb{E}}_{\bbeta^{(1)}, \bbeta^{(2)} \distas{\text{i.i.d.}} \mathcal{P}} \exp\left( \frac{\frac{1}{\sigma^2}}{\frac{\|\bbeta\|^2_2}{\sigma^2} + 1} \left( \frac{D}{2} + n \right) \langle \bbeta^{(1)}, \bbeta^{(2)} \rangle \right) \\
&\leq \mathop{\mathbb{E}}_{\bbeta^{(1)}, \bbeta^{(2)} \distas{\text{i.i.d.}} \mathcal{P}} \exp\left( \frac{\langle \bbeta^{(1)}, \bbeta^{(2)} \rangle}{\|\bbeta\|_{\infty}^2} (1 - 2\theta) \log{p} \right).
\end{align*}
We then apply Lemma \ref{lemma:comb-bound} and notice that $\< \bbeta^{(1)}, \bbeta^{(2)} \> \leq \|\bbeta\|_{\infty}^2 |\mathcal{S}^{(1)} \cap \mathcal{S}^{(2)}|$ to obtain, for $p > 4$ and $k \leq p$:
\begin{align*}
\chi^2_{\leq D}( \P(\X, \y) \| \P(\X) \otimes \P(\y) ) + 1&= \mathop{\mathbb{E}}_{\bbeta^{(1)}, \bbeta^{(2)} \distas{\text{i.i.d.}} \mathcal{P}} \exp\left( \frac{\langle \bbeta^{(1)}, \bbeta^{(2)} \rangle}{\|\bbeta\|_{\infty}^2} (1 - 2\theta) \log{p} \right) \\
&\leq \mathop{\mathbb{E}}_{\langle \bbeta^{(1)}, \bbeta^{(2)} \rangle} \exp\left( |\mathcal{S}^{(1)} \cap \mathcal{S}^{(2)}| (1 - 2\theta) \log{p} \right) \\
&\leq \sum_{l = 0}^k \frac{{k \choose l} {{p-k} \choose {k-l}}}{{p \choose k}} \exp\left( l (1 - 2\theta) \log{p} \right) \\
&\leq \sum_{l = 0}^k \frac{4k!}{p^k} \frac{k^l}{l!} \frac{(p-k)^{k-l}}{(k-l)!} \exp\left( l (1 - 2\theta) \log{p} \right) \\
&\leq 4\sum_{l = 0}^k \left(\frac{k^2}{p} \right)^l \exp\left( l (1 - 2\theta) \log{p} \right) \\
&= 4\sum_{l = 0}^k \left(\frac{k^2}{p^{2\theta}} \right)^l = O(1).
\end{align*}

\end{proof}

\subsection{Special Case: $\sbmslrd$} \label{subsec:sbmslrd-hardness-proofs}
\begin{proof} [Proof of Theorem \ref{thm:sbmslrd-hardness-cor-snr}]
This follows from Theorem \ref{thm:sbmslrd-hardness} below. Choosing any sample size $n$ such that $n \geq k$, $n = \omega(\log{p})$, and $n =  o((\|\bbeta\|^2_2 + \sigma^2)^2/(\|\bbeta\|_{\infty}^4 \log{p}))$, we have that $\chi^2_{\leq D}( \P(\X, \y) \| \P(\X) \otimes \P(\y) ) = O(1)$ for  $D = (\sqrt{2}-1) \min\left\{\frac{(\|\bbeta\|^2_2+\sigma^2)^2}{n \|\bbeta\|_{\infty}^4}, n\right\} $. 
We then invoke Conjecture \ref{conj:low-degree-conjecture} and use $ \frac{\| \bbeta\|^2_2}{\sigma^2} = \snr$, and  notice that for signals with bounded amplitude, we have $k\|\bbeta\|_{\infty}/\sigma^2 \gtrsim \snr \gtrsim k\|\bbeta\|_{\infty}/\sigma^2$.
\end{proof}
\begin{lemma} \label{lemma:mslr-hard-norm}
For $\sbmslrd$, we have that:
\[\chi^2_{\leq D}( \P(\X, \y) \| \P(\X) \otimes \P(\y) ) + 1 \leq \mathop{\mathbb{E}}_{\bbeta^{(1)}, \bbeta^{(2)}} \sum_{\frac{d}{4} = 0}^{\floor{\frac{D}{4}}} \left( \frac{1}{\|\bbeta\|^2_2 + \sigma^2} \right)^{\frac{d}{2}} \frac{\left( \frac{D}{4} + n\right)^{\frac{d}{4}}}{\frac{d}{4}!} \langle \bbeta^{(1)}, \bbeta^{(2)} \rangle^{\frac{d}{2}},\]
where $\bbeta^{(1)}, \bbeta^{(2)}$ are two independent copies of the random variable $\bbeta \stackrel{d}{=} \bbeta_1$.
\end{lemma}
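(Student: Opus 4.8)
The plan is to specialize the general identity of Lemma~\ref{lemma:mslrd-norm} to the $\sbmslrd$ regime, in which $\phi = 1/2$ and $\bbeta_1 = -\bbeta_2$. Writing $\bbeta := \bbeta_1 = -\bbeta_2$, the per-sample signal collapses to $\bbeta_1 z_i + \bbeta_2(1-z_i) = \bbeta(2z_i - 1)$. Since $z_i \distas{} \text{Bernoulli}(1/2)$, the quantity $s_i := 2z_i - 1$ is a Rademacher variable (uniform on $\{-1,+1\}$), and these variables are independent across samples $i$ and across the two independent copies. Each inner product appearing in Lemma~\ref{lemma:mslrd-norm} therefore factorizes as $\langle \bbeta^{(1)} s_i^{(1)}, \bbeta^{(2)} s_i^{(2)} \rangle = s_i^{(1)} s_i^{(2)} \langle \bbeta^{(1)}, \bbeta^{(2)} \rangle$, so the product over $i$ equals $\langle \bbeta^{(1)}, \bbeta^{(2)} \rangle^{|\bm{\alpha}_{p+1}|} \prod_{i=1}^n (s_i^{(1)} s_i^{(2)})^{\alpha_{i,p+1}}$, where $|\bm{\alpha}_{p+1}| = d/2$.

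Next I would take the expectation over $z^{(1)}, z^{(2)}$, which is independent of $\bbeta^{(1)}, \bbeta^{(2)}$. Using $\E[s^m] = \mathbbm{1}_{\{m \text{ even}\}}$ for a Rademacher $s$, together with independence across indices and copies, the Rademacher expectation of the product collapses to $\prod_{i=1}^n \mathbbm{1}_{\{\alpha_{i,p+1} \text{ even}\}}$. This is the step that separates $\sbmslr$ from $\slr$: the sign symmetry forces \emph{each} coordinate $\alpha_{i,p+1}$ to be even, not merely their sum $d/2$. Setting $\alpha_{i,p+1} = 2\gamma_i$ turns the constraint $|\bm{\alpha}_{p+1}| = d/2$ into $\sum_i \gamma_i = d/4$, which in particular requires $d \equiv 0 \pmod 4$ for any nonzero contribution.

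I would then count the surviving terms and re-index. For each admissible $d$ (a multiple of four), the number of vectors $\bm{\alpha}_{p+1}$ with all entries even and $|\bm{\alpha}_{p+1}| = d/2$ equals the number of compositions of $d/4$ into $n$ nonnegative parts, i.e.\ $\binom{d/4 + n - 1}{n-1}$. Re-indexing the outer sum over $d/4 \in \{0,\dots,\floor{D/4}\}$ gives
\[
\chi^2_{\leq D}(\P(\X, \y) \| \P(\X) \otimes \P(\y)) + 1 = \mathop{\mathbb{E}}_{\bbeta^{(1)}, \bbeta^{(2)}} \sum_{\frac{d}{4} = 0}^{\floor{\frac{D}{4}}} \left(\frac{1}{\|\bbeta\|^2_2 + \sigma^2}\right)^{\frac{d}{2}} \binom{d/4 + n - 1}{n-1} \langle \bbeta^{(1)}, \bbeta^{(2)} \rangle^{\frac{d}{2}}.
\]
Finally, the elementary bound $\binom{d/4 + n - 1}{n-1} = \frac{1}{(d/4)!}\prod_{j=0}^{d/4-1}(n+j) \leq \frac{(n + d/4)^{d/4}}{(d/4)!}$ upgrades this equality to the claimed inequality.

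The computation is mechanical once the Rademacher reduction is set up; the only point requiring genuine care is the parity bookkeeping of the second paragraph together with the change of index $d \mapsto d/4$. This is precisely where the effective degree is halved relative to $\slr$ (in which only $d/2$ need be even), and it is the algebraic origin of the larger $k^2$-scale hardness threshold for $\sbmslr$ as opposed to the $k$-scale threshold for $\slr$.
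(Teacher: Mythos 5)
Your proposal is correct and follows essentially the same route as the paper's proof: specialize Lemma~\ref{lemma:mslrd-norm} to $\phi=1/2$, $\bbeta_1=-\bbeta_2$, average over the latent labels so that the sign symmetry forces \emph{every} exponent $\alpha_{i,p+1}$ to be even (hence $d\equiv 0 \pmod 4$), count the surviving terms as $\binom{d/4+n-1}{n-1}$, and bound this by $\left(\frac{D}{4}+n\right)^{d/4}\big/\left(\frac{d}{4}\right)!$. The only cosmetic difference is that you carry out the label average via Rademacher variables $s_i = 2z_i-1$, while the paper computes it for general $\phi$ as $\left(\phi + (1-\phi)\left(-\tfrac{\phi}{1-\phi}\right)^{\alpha_{i,p+1}}\right)^2$ and then sets $\phi=1/2$; both reduce to the same parity indicator.
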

\begin{proof}
We begin by applying the assumptions into Lemma \ref{lemma:mslrd-norm} and applying independence of the $z_i$'s. Notice that in the context of $\sbmslr$, we have in particular that $\phi \bbeta_1 + (1-\phi) \bbeta_2 = 0$, and hence we plug in $\bbeta := \bbeta_1 = -\frac{\phi}{1-\phi} \bbeta_2$.
\begin{align*}
&\chi^2_{\leq D}( \P(\X, \y) \| \P(\X) \otimes \P(\y) ) + 1\\
&\leq \mathop{\mathbb{E}} \sum_{\frac{d}{2} = 0}^{\floor{\frac{D}{2}}} \left( \frac{1}{\|\bbeta\|^2_2 + \sigma^2} \right)^{\frac{d}{2}} \sum_{|\bm{\alpha}_{p+1}| = \frac{d}{2}} \prod_{i=1}^n \langle \bbeta^{(1)}_{1} z^{(1)}_i + \bbeta^{(1)}_{2} (1-z^{(1)}_i), \bbeta^{(2)}_{1} z^{(2)}_i + \bbeta^{(2)}_{2} (1-z^{(2)}_i) \rangle^{\alpha_{i, p+1}} \\
&= \mathop{\mathbb{E}}_{\bbeta} \sum_{\frac{d}{2} = 0}^{\floor{\frac{D}{2}}} \left( \frac{1}{\|\bbeta\|^2_2 + \sigma^2} \right)^{\frac{d}{2}} \sum_{|\bm{\alpha}_{p+1}| = \frac{d}{2}}  
 \cdot \prod_{i=1}^n \left( \phi^2 \langle \bbeta_1^{(1)}, \bbeta_1^{(2)} \rangle^{\alpha_{i, p+1}} +  \, \phi (1-\phi) \langle \bbeta_1^{(1)}, \bbeta_2^{(2)} \rangle^{\alpha_{i, p+1}} \right. \\
& \hspace{4em} \left.  + \  \phi (1-\phi) \langle \bbeta_2^{(1)}, \bbeta_1^{(2)} \rangle^{\alpha_{i, p+1}} + (1 - \phi)^2 \langle \bbeta_2^{(1)}, \bbeta_1^{(2)} \rangle^{\alpha_{i, p+1}} \right) \\
&= \mathop{\mathbb{E}}_{\bbeta} \sum_{\frac{d}{2} = 0}^{\floor{\frac{D}{2}}} \left( \frac{1}{\|\bbeta\|^2_2 + \sigma^2} \right)^{\frac{d}{2}} \sum_{|\bm{\alpha}_{p+1}| = \frac{d}{2}}  \\
&\hspace{2em} \cdot \prod_{i=1}^n \left( \phi^2  + 2 \phi (1-\phi) \left(- \frac{\phi}{1 - \phi}\right)^{\alpha_{i, p+1}} + (1 - \phi)^2 \left(\frac{\phi}{1 - \phi}\right)^{2\alpha_{i, p+1}} \right)\langle \bbeta^{(1)}, \bbeta^{(2)} \rangle^{\alpha_{i, p+1}} \\
&= \mathop{\mathbb{E}}_{\bbeta} \sum_{\frac{d}{2} = 0}^{\floor{\frac{D}{2}}} \left( \frac{1}{\|\bbeta\|^2_2 + \sigma^2} \right)^{\frac{d}{2}} \sum_{|\bm{\alpha}_{p+1}| = \frac{d}{2}} \prod_{i=1}^n \left(\phi + (1-\phi) \left(-\frac{\phi}{1-\phi} \right)^{\alpha_{i, p+1}}\right)^2 \langle \bbeta^{(1)}, \bbeta^{(2)} \rangle^{\alpha_{i, p+1}}.
\end{align*}
We now notice that the term inside of the product equals zero for all $\alpha_{i, p+1}$ odd if and only if $\phi = 1/2$, which is the case for $\sbmslr$. So we sum only over even terms to obtain:
\begin{align*}
&\chi^2_{\leq D}( \P(\X, \y) \| \P(\X) \otimes \P(\y) ) + 1\\
&= \mathop{\mathbb{E}}_{\bbeta} \sum_{\substack{\frac{d}{2} = 0 \\ \text{even}}}^{\floor{\frac{D}{2}}} \left( \frac{1}{\|\bbeta\|^2_2 + \sigma^2} \right)^{\frac{d}{2}} \sum_{\substack{|\bm{\alpha}_{p+1}| = \frac{d}{2} \\ \text{even}}} \prod_{i=1}^n \langle \bbeta^{(1)}, \bbeta^{(2)} \rangle^{\alpha_{i, p+1}} \\
&= \mathop{\mathbb{E}}_{\bbeta} \sum_{\frac{d}{4} = 0}^{\floor{\frac{D}{4}}} \left( \frac{1}{\|\bbeta\|^2_2 + \sigma^2} \right)^{\frac{d}{2}} \sum_{\substack{|\bm{\alpha}_{p+1}| = \frac{d}{2} \\ \text{even}}} \langle \bbeta^{(1)}, \bbeta^{(2)} \rangle^{|\bm{\alpha}_{p+1}|} \\
&= \mathop{\mathbb{E}}_{\bbeta} \sum_{\frac{d}{4} = 0}^{\floor{\frac{D}{4}}} \left( \frac{1}{\|\bbeta\|^2_2 + \sigma^2} \right)^{\frac{d}{2}} {{\frac{d}{4} + n - 1} \choose {n-1}} \langle \bbeta^{(1)}, \bbeta^{(2)} \rangle^{\frac{d}{2}} \\
&\leq \mathop{\mathbb{E}}_{\bbeta} \sum_{\frac{d}{4} = 0}^{\floor{\frac{D}{4}}} \left( \frac{1}{\|\bbeta\|^2_2 + \sigma^2} \right)^{\frac{d}{2}} \frac{\left( \frac{D}{4} + n\right)^{\frac{d}{4}}}{\frac{d}{4}!} \langle \bbeta^{(1)}, \bbeta^{(2)} \rangle^{\frac{d}{2}}.
\end{align*}
\end{proof}
\begin{theorem} [General $\sbmslrd$ lower bound] \label{thm:sbmslrd-hardness}
Consider the setting of $\sbmslrd$ with joint prior $\mathcal{P}_{\|\bbeta\|_2}(\mathcal{D})$. If $k \leq  \sqrt{\frac{p}{e}}$, and  $k \leq D \leq 2(\sqrt{2} - 1) \min\left\{\frac{(\|\bbeta\|^2_2+\sigma^2)^2}{n \|\bbeta\|_{\infty}^4}, n\right\}$, we have that $\chi^2_{\leq D}( \P(\X, \y) \| \P(\X) \otimes \P(\y) ) = O(1)$.
\end{theorem}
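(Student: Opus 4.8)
The plan is to feed the bound from Lemma~\ref{lemma:mslr-hard-norm} into a generating-function argument that parallels the $\slrd$ proof of Theorem~\ref{thm:slrd}, while accounting for the fact that the symmetric mixture produces the \emph{square} of the signal overlap rather than the overlap itself. Writing $m = d/4$, the summand in Lemma~\ref{lemma:mslr-hard-norm} is
\[
\frac{1}{m!}\left(\frac{(D/4 + n)\,\langle \bbeta^{(1)}, \bbeta^{(2)}\rangle^2}{(\|\bbeta\|^2_2 + \sigma^2)^2}\right)^{m},
\]
so extending the finite sum over $m \in \{0, \dots, \lfloor D/4\rfloor\}$ to all of $\mathbb{N}$ and using $\sum_m x^m/m! = e^x$ gives
\[
\chi^2_{\leq D}(\P(\X,\y)\|\P(\X)\otimes\P(\y)) + 1 \;\leq\; \mathop{\mathbb{E}}_{\bbeta^{(1)}, \bbeta^{(2)}} \exp\!\left(\frac{(D/4 + n)\,\langle \bbeta^{(1)}, \bbeta^{(2)}\rangle^2}{(\|\bbeta\|^2_2 + \sigma^2)^2}\right).
\]
Since the overlap is supported on $\mathcal{S}^{(1)}\cap\mathcal{S}^{(2)}$ and the entries are bounded by $\|\bbeta\|_\infty$, I would bound $\langle \bbeta^{(1)}, \bbeta^{(2)}\rangle^2 \leq \|\bbeta\|_\infty^4 \, L^2$ with $L := |\mathcal{S}^{(1)}\cap\mathcal{S}^{(2)}|$, reducing the claim to controlling $\mathbb{E}\,e^{C L^2}$ with $C := (D/4+n)\|\bbeta\|_\infty^4/(\|\bbeta\|^2_2+\sigma^2)^2$.

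The crux, and the step I expect to be the main obstacle, is taming the quadratic exponent $C L^2$: in the $\slrd$ proof the overlap appears only linearly, and it is precisely this squared overlap that forces the weaker sample-complexity threshold. The idea is to exploit $L \leq k$ to linearize via $C L^2 \leq C k L$, and then to use both halves of the hypothesis $k \leq D \leq 2(\sqrt2 - 1)\min\{(\|\bbeta\|^2_2+\sigma^2)^2/(n\|\bbeta\|_\infty^4),\, n\}$. The bound $D \leq 2(\sqrt2-1)(\|\bbeta\|^2_2+\sigma^2)^2/(n\|\bbeta\|_\infty^4)$ controls the $D/4$ contribution and, combined with $k \leq D$, yields $n\|\bbeta\|_\infty^4/(\|\bbeta\|^2_2+\sigma^2)^2 \leq 2(\sqrt2-1)/k$; the bound $D \leq 2(\sqrt2-1)n$ together with $k \leq D$ gives $k/n \leq 2(\sqrt2-1)$. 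Substituting these into $Ck$ collapses the constant exactly,
\[
Ck \;\leq\; \frac{(\sqrt2-1)k}{2n} + 2(\sqrt2-1) \;\leq\; (\sqrt2-1)^2 + 2(\sqrt2-1) \;=\; 1,
\]
so that $C L^2 \leq CkL \leq L$ for every $0 \leq L \leq k$, whence $e^{C L^2}\leq e^{L}$ and it suffices to bound $\mathbb{E}\,e^{L}$. The specific constant $2(\sqrt2-1)$ in the statement is chosen precisely so that this telescoping yields $Ck \le 1$ with no slack to spare.

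It remains to compute $\mathbb{E}\,e^{L}$ against the law of $L$, the overlap of two independent uniform $k$-subsets of $[p]$, which is hypergeometric. Using $\binom{p}{k}\geq p^k/(4k!)$ from Lemma~\ref{lemma:comb-bound} (valid since $k \leq \sqrt{p/e}\leq\sqrt{p}$) together with the crude bounds $\binom{k}{l}\leq k^l/l!$ and $\binom{p-k}{k-l}\leq p^{k-l}/(k-l)!$, I would obtain
\[
\frac{\binom{k}{l}\binom{p-k}{k-l}}{\binom{p}{k}} \;\leq\; \frac{4}{l!}\left(\frac{k^2}{p}\right)^{l}.
\]
Unlike the $\slrd$ proof, where the $1/l!$ factor is discarded into a geometric series, here I would \emph{retain} it, since it is exactly what absorbs the $e^{L}$ weight:
\[
\mathop{\mathbb{E}}\, e^{L} \;\leq\; 4\sum_{l=0}^{k}\frac{1}{l!}\left(\frac{e\,k^2}{p}\right)^{l} \;\leq\; 4\exp\!\left(\frac{e\,k^2}{p}\right) \;\leq\; 4e \;=\; O(1),
\]
where the final inequality uses $k \leq \sqrt{p/e}$, i.e.\ $ek^2/p \leq 1$. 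Chaining the displays gives $\chi^2_{\leq D}+1 = O(1)$, which is the claim. The only genuinely delicate point is the constant-chasing of the second paragraph; everything else is a direct specialization of the $\slrd$ argument, and the $1/l!$ bookkeeping in the last paragraph is what makes the $k=o(\sqrt p)$ regime sufficient despite the quadratic overlap.
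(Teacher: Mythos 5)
Your proposal is correct and follows essentially the same route as the paper's own proof: both start from Lemma~\ref{lemma:mslr-hard-norm}, extend the finite sum to an exponential, bound the overlap by $\|\bbeta\|_\infty^2 L$, linearize the quadratic exponent via $L \le k \le D$ (your inequality $Ck \le 1$ is exactly the paper's condition $(D/4+n)D \le (\|\bbeta\|_2^2+\sigma^2)^2/\|\bbeta\|_\infty^4$ combined with $k \le D$, and both hinge on $(\sqrt2-1)^2 + 2(\sqrt2-1) = 1$), and finish with the hypergeometric bound from Lemma~\ref{lemma:comb-bound}. The one genuine (if minor) difference is that you retain the $1/l!$ factor in the hypergeometric tail, giving $\mathbb{E}\,e^{L} \le 4\exp(ek^2/p) \le 4e$, which is slightly tighter than the paper's geometric series $4\sum_{l=0}^k (ek^2/p)^l$ --- the latter is $O(1)$ only when $ek^2/p$ is bounded away from $1$ (harmless in the intended regime $k = o(\sqrt{p})$, but your bookkeeping makes the bound valid all the way up to the stated boundary $k = \sqrt{p/e}$).
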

\begin{proof} 
We first apply the result of Lemma \ref{lemma:mslrd-norm} to obtain:
\begin{align*}
& \chi^2_{\leq D}( \P(\X, \y) \| \P(\X) \otimes \P(\y) ) + 1 \leq \sum_{\frac{d}{4} = 0}^{\frac{D}{4}} \left( \frac{1}{(\|\bbeta\|^2_2 + \sigma^2)^2} \right)^{\frac{d}{4}} \frac{\left( \frac{D}{4} + n \right)^{\frac{d}{4}} }{\frac{d}{4}!} \mathop{\mathbb{E}}_{\bbeta^{(1)}, \bbeta^{(2)} \distas{\text{i.i.d.}} \mathcal{P}} \langle \bbeta^{(1)}, \bbeta^{(2)} \rangle^\frac{d}{2} \\
& \hspace{3em}\leq \sum_{\frac{d}{4} = 0}^{\frac{D}{4}}  \left( \frac{1}{(\|\bbeta\|^2_2 + \sigma^2)^2} \right)^{\frac{d}{4}} \frac{\left( \frac{D}{4} + n \right)^{\frac{d}{4}} }{\frac{d}{4}!} \mathop{\mathbb{E}}_{\bbeta^{(1)}, \bbeta^{(2)} \distas{\text{i.i.d.}} \mathcal{P}} \langle \texttt{abs}(\bbeta^{(1)}), \texttt{abs}(\bbeta^{(2)}) \rangle^\frac{d}{2},
\end{align*}
where $\langle \bbeta^{(1)}, \bbeta^{(2)} \rangle \leq \langle \texttt{abs}(\bbeta^{(1)}), \texttt{abs}(\bbeta^{(2)}) \rangle$, and we recall $\texttt{abs}(\bbeta)$ denotes the entry-wise absolute value operation on the vector $\bbeta$. Notice that by Lemma \ref{lemma:comb-bound} we have, for $p > 4, k \leq \sqrt{p}$,
\begin{align*}
\mathop{\mathbb{E}}_{\bbeta^{(1)}, \bbeta^{(2)} \distas{\text{i.i.d.}} \mathcal{P}} \langle \texttt{abs}(\bbeta^{(1)}), \texttt{abs}(\bbeta^{(2)}) \rangle^\frac{d}{2} &\leq \sum_{l=0}^k \frac{{k \choose l}{{p-k} \choose {k-l}}}{{p \choose k}} (l \|\bbeta\|_{\infty}^2)^{\frac{d}{2}} \\
&\leq 4 \sum_{l=0}^k \left(\frac{k^2}{p}\right)^l (l \|\bbeta\|_{\infty}^2)^{\frac{d}{2}}.
\end{align*}
We then obtain:
\begin{align*}
\chi^2_{\leq D}( \P(\X, \y) \| \P(\X) \otimes \P(\y) ) + 1 \leq 4 \sum_{\frac{d}{4} = 0}^{\frac{D}{4}} \left( \frac{1}{(\|\bbeta\|^2_2 + \sigma^2)^2} \right)^{\frac{d}{4}} \frac{\left( \frac{D}{4} + n \right)^{\frac{d}{4}} }{\frac{d}{4}!} \sum_{l=0}^k \left(\frac{k^2}{p}\right)^l (l \|\bbeta\|^2_{\infty})^{\frac{d}{2}}.
\end{align*}
With the aim of bounding the right hand side, we enforce condition $i)$: $(D/4 + n) D \leq (\|\bbeta\|^2_2 + \sigma^2)^2/\|\bbeta\|_{\infty}^4$. After switching sums, this yields
\begin{align*}
\chi^2_{\leq D}( \P(\X, \y) \| \P(\X) \otimes \P(\y) ) + 1 &\leq 4 \sum_{l=0}^k \left(\frac{k^2}{p}\right)^l \sum_{\frac{d}{4} = 0}^{\frac{D}{4}} \frac{\left( \frac{(\|\bbeta\|^2_2 + \sigma^2)^2 \|\bbeta\|_{\infty}^4}{D (\|\bbeta\|^2_2 + \sigma^2)^2 \|\bbeta\|_{\infty}^4} \right)^{\frac{d}{4}}}{\frac{d}{4}!} l^{\frac{d}{2}} \\
&\leq 4 \sum_{l=0}^k \left(\frac{k^2}{p}\right)^l \exp\left(\frac{l^2}{D}\right).
\end{align*}
We now enforce condition $ii)$: $k \leq D$ to obtain the result,
\begin{align*}
\chi^2_{\leq D}( \P(\X, \y) \| \P(\X) \otimes \P(\y) ) + 1 &\leq 4 \sum_{l=0}^k \left(\frac{k^2}{p}\right)^l \exp\left( l \right)\\
&= 4 \sum_{l=0}^k \left(\frac{k^2 e}{p}\right)^l = O(1).
\end{align*}
Note that conditions $i)$ and $ii)$ are satisfied for any $n > 0$ and $k \leq D \leq 2(\sqrt{2} - 1) \min\left\{\frac{(\|\bbeta\|^2_2+\sigma^2)^2}{n \|\bbeta\|_{\infty}^4}, n\right\}$.
\end{proof}

\section{Proofs of Polynomial-Time Reductions} \label{sec:Proofs-red}

Consider signed support recovery in the $\mslr$ problem, where we seek to recover the support of $\bbeta_1$ and $\bbeta_2$, along with the  along with the signs of their entries. Take $(\bbeta_1, \bbeta_2) \distas{} \mathcal{P}_{\|\beta\|_2}(\{-1, 1\})$, and let $\mathcal{S}_1 := \supp(\bbeta_1) = \{j \in [p]: \bbeta_{1, j} \neq 0\}$,  and $\mathcal{S}_2$ defined similarly for $\bbeta_2$. We study the computational hardness of the problem as we vary two parameters of our joint signal distribution, the \textit{overlap} $\xi$ and the \textit{signed overlap} $\tau$ respectively:
\[ \xi = \frac{|\mathcal{S}_1 \cap \mathcal{S}_2|}{k}, \qquad  \tau = \frac{\langle \bbeta_1, \bbeta_2 \rangle}{|\mathcal{S}_1 \cap \mathcal{S}_2|}, \]
that are of constant order, i.e., do not scale with respect to $n, p,k$. Previous work \citep{gamarnik_sparse_2022} studies exact support recovery in sparse linear regression and the computational hardness that arises from the overlap distribution of two identical copies of the signal. We extend the analysis by considering exact \textit{signed} support recovery by varying the parameter $\tau$, which measures the relative frequency of $+1$ and $-1$ entries with the same index. 
Note that for $\xi = 1, \tau = 1$ we have the usual $\slr$ problem, for $\xi = 1, \tau = -1$ we have the $\sbmslr$ regime, and importantly for $\phi = 1/2, \tau \in (-1, 1), \xi > 0$ we have the $\psbmslr$  regime. We denote $\mslr_{\xi, \tau}$ and $\mslrd_{\xi, \tau}$ as the $\mslr$ and $\mslrd$ problems with the joint signal prior $\mathcal{P}_{\|\beta\|_2}(\{-1, 1\})$ constrained to signals $(\bbeta_1, \bbeta_2)$ with overlap and signed overlap $\xi$ and $\tau$ respectively.

Using these definitions, we first form in Lemma \ref{lemma:mslrd-to-mslr} a polynomial-time reduction from $\sbmslrd$ to exact signed support recovery in $\sbmslr$ within the scaling regime of Theorem \ref{thm:sbmslrd-hardness-cor-snr}. Notice that $\mslrd_{1, -1}$ is equivalent to the $\sbmslrd$ problem. Next, we prove a polynomial-time reduction from exact signed support recovery in $\sbmslr$ to exact signed support recovery in $\mslr_{\xi, \tau}$ for  $\tau \in (-1, 1), \xi > 0$ ($\psbmslr$ ) within the scaling regime of Theorem \ref{thm:sbmslrd-hardness-cor-snr}, proving that if exact signed support recovery can be achieved in $\psbmslr$ , then it can also be achieved in $\sbmslr$.

Combining the  two arguments above, we have that solving exact signed support recovery in $\psbmslr$  implies solving strong detection in $\sbmslrd$, which would contradict the implication in Theorem \ref{thm:sbmslrd-hardness-cor-snr} that $\sbmslrd$ cannot be solved in polynomial time, resulting in Theorem \ref{thm:psbmslrd-hardness}. For more background on the logic of average-case reductions, we refer to \citep{brennan_reducibility_2020}. 

The reduction from $\slrd$ to $\slr$ is nearly identical  to that in Lemma \ref{lemma:mslrd-to-mslr} with the midly less restrictive condition that $\snr \geq 1$, and hence the proof is omitted. 

Throughout the proofs, we use the following measure of recovery error for mixtures of linear regressions \citep{chen_convex_2014}:
\[
\rho((\hat{\bbeta_1}, \hat{\bbeta}_2), (\bbeta_1, \bbeta_2)):=\min\left\{ \left\Vert \hat{\bm{\bbeta}}_{1}-\bm{\bbeta}_{1}\right\Vert _{2}+\left\Vert \hat{\bm{\bbeta}}_{2}-\bm{\bbeta}_{2}\right\Vert _{2}, \, \left\Vert \hat{\bm{\bbeta}}_{1}-\bm{\bbeta}_{2}\right\Vert _{2}+\left\Vert \hat{\bm{\bbeta}}_{2}-\bm{\bbeta}_{1}\right\Vert _{2}\right\}.
\]
This error measure takes into account recovery of the two signals up to relabelling.  For vectors $a$, $b$, $\hat{a}$, $\hat{b} \in \reals^p$, define 
\begin{align}
\|(\hat{a}, \hat{b}) - (a, b)\|_{\infty} := \min\left\{ \|\hat{a} - a\|_{\infty} + \|\hat{b} - b\|_{\infty}, \|\hat{b} - a\|_{\infty} + \|\hat{a} - b\|_{\infty} \right\}. \nonumber
\end{align}
Notice that for $\epsilon \in [0, 1)$ and signals $(\bbeta_1, \bbeta_2) \distas{} \mathcal{P}_{\|\bbeta\|_2}(\{-1, 1\})$ we have that 
\begin{align*}
\P\left[ \rho((\hat{\bbeta_1}, \, \hat{\bbeta}_2), (\bbeta_1, \bbeta_2)) > \epsilon\right] \to 0 \iff \P\left[ \|(\hat{\bbeta_1}, \hat{\bbeta}_2) - (\bbeta_1, \bbeta_2)\|_{\infty} > \epsilon \right] \to 0,
\end{align*}
\paragraph{Main Statements}
We begin by defining the parameter regimes of interest: 
\begin{align}
& \mathcal{C}_1 = \left\{(p_i, n_i, k_i, \sigma_i)_{i=1}^{\infty} \subset \mathbb{N}^4: 
\, p_i = \omega_{i}(1), k_i = o(\sqrt{p_i}), n_i = \omega(\max\{k_i, \log{p_i}\}) \, ,  \right. \nonumber \\
& \hspace{4em} \left. n_i = o\left( (k_i + \sigma_i^2)^2 \cdot \frac{1}{\log{p_i}} \right) \right\}. \label{eq:C_3} \\
& \mathcal{C}_{2} = \left\{(p_i, n_i, k_i, \sigma_i)_{i=1}^{\infty} \subset \mathbb{N}^4: 
\, p_i = \omega_{i}(1), k_i = o(\sqrt{p_i}), n_i = \omega(\max\{k_i, \log{p_i}\}) \, ,  \right. \nonumber \\
& \hspace{4em} \left. n_i \gtrsim \frac{k_i \log{p_i}}{\log(1 + \frac{k_i}{\sigma_i^2})} \right\}. \label{eq:C_2IT}
\end{align}
Notice that $\mathcal{C}_1, \mathcal{C}_2$ are both contained within the parameter regime where $\sbmslrd$ encounters a computational barrier, as per Theorem \ref{thm:sbmslrd-hardness-cor-snr}. The following lemmas consist of two sub-reductions which together give the reduction argument from $\sbmslrd$ to exact recovery in $\psbmslr$ . 
\begin{lemma} \label{lemma:mslrd-to-mslr}
Let $(\bbeta_1, \bbeta_2) \distas{} \mathcal{P}_{\|\beta\|_2}(\{-1, 1\}), \snr = \omega(1)$. Given a sequence of parameters $\{(p_i, n_i, k_i, \sigma_i)\}_{i=1}^\infty$ in  $\mathcal{C}_2$ for $\sbmslrd$ and $\sbmslr$, if for any $\epsilon > 0$ there exists a randomized polynomial-time algorithm $\mathcal{A}$ for $\sbmslr$ producing $(\hat{\bbeta}_1, \hat{\bbeta}_2)$ with $\mathbb{P}\left[\| (\hat{\bbeta_1}, \hat{\bbeta}_2) - (\bbeta_1, \bbeta_2) \|_{\infty} < \epsilon \right] \underset{(i \to \infty)}{\to} 1$, then there exists a randomized polynomial-time detection algorithm $\mathcal{A'}$ for $\sbmslrd$ with vanishing Type I+II errors as $i \to \infty$.
\end{lemma}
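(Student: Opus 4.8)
The plan is to carry out a standard detection-to-recovery average-case reduction: assuming a randomized polynomial-time recovery algorithm $\mathcal{A}$ for $\sbmslr$, I construct a polynomial-time tester $\mathcal{A}'$ for $\sbmslrd$ that \emph{recovers a candidate signal and then validates it on held-out data}. To decouple the estimate from the data used to judge it, I first split the $n$ rows of $(\X,\y)$ into two disjoint halves $H_1, H_2$ with $|H_1| = \lfloor n/2 \rfloor =: n_1$ and $|H_2| =: n_2$. Under the planted law $\P(\X,\y)$, the restriction to $H_1$ is itself a genuine $\sbmslr$ instance with $n_1 = \Theta(n)$ samples; since $\mathcal{C}_2$ is (asymptotically) closed under constant-factor rescaling of $n$, the parameters $(p,n_1,k,\sigma)$ remain in $\mathcal{C}_2$ and $\mathcal{A}$ succeeds on $H_1$, whereas under the null $\P(\X)\otimes\P(\y)$ the two halves are independent. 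I invoke the recovery hypothesis with a fixed tolerance $\epsilon = 1/4 < 1/2$.

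\emph{Construction of $\mathcal{A}'$.} Run $\mathcal{A}$ on $(\X_{H_1},\y_{H_1})$ to obtain $(\hat\bbeta_1,\hat\bbeta_2)$, and round $\hat\bbeta_1$ to the vector $\tilde\bbeta_1 \in \{-1,0,1\}^p$ supported on its $k$ largest-magnitude coordinates, with the corresponding signs. Because the true signals are $k$-sparse with $\pm1$ entries on their support, an $\ell_\infty$-error below $\epsilon < 1/2$ forces these $k$ coordinates and their signs to coincide exactly with those of $\bbeta_1$; hence under the planted law $\tilde\bbeta_1 = \bbeta_1$ (up to a global sign, which is harmless below) with probability $1-o(1)$. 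On the held-out half, compute
\[
T := \frac{1}{n_2}\sum_{i \in H_2}\bigl(\,|\sigma y_i| - |\x_i^\top \tilde\bbeta_1|\,\bigr)^2
   = \frac{1}{n_2}\sum_{i \in H_2}\min_{s \in \{-1,+1\}}\bigl(\sigma y_i - s\,\x_i^\top\tilde\bbeta_1\bigr)^2,
\]
and output ``planted'' if and only if $T < k/4$. The second form makes clear that the unknown latent labels and the global sign of $\tilde\bbeta_1$ are immaterial.

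\emph{Separation and error control.} The crux is that $\snr = \omega(1)$, i.e. $\sigma^2 = o(k)$ (recall $\|\bbeta\|_2^2 = k$), forces the planted value of $T$ strictly below its null value. Under the planted law, on $\{\tilde\bbeta_1 = \bbeta_1\}$ we have $\sigma y_i = (\x_i^\top\bbeta_1)(2z_i-1) + \sigma w_i$, so by the reverse triangle inequality each summand is at most $\sigma^2 w_i^2$; thus $\mathbb{E}[T \mid \tilde\bbeta_1 = \bbeta_1] \le \sigma^2$ and Markov's inequality gives $\mathbb{P}(T \ge k/4) \le 4\sigma^2/k = 4/\snr = o(1)$. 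Under the null, condition on $\tilde\bbeta_1$: it is a fixed $k$-sparse $\pm1$ vector independent of $H_2$, so the summands are i.i.d. with $\x_i^\top\tilde\bbeta_1 \sim \mathcal{N}(0,k)$ independent of $\sigma y_i \sim \mathcal{N}(0,k+\sigma^2)$, giving
\[
\mathbb{E}[T \mid \tilde\bbeta_1] \;=\; 2k + \sigma^2 - \tfrac{4}{\pi}\sqrt{k(k+\sigma^2)} \;=\; \bigl(2-\tfrac{4}{\pi}\bigr)k + o(k) \;\approx\; 0.73\,k,
\]
while each summand has variance $O(k^2)$, so $\mathrm{Var}(T \mid \tilde\bbeta_1) = O(k^2/n_2)$. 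Since $n_2 = \Theta(n) \to \infty$, Chebyshev's inequality yields $\mathbb{P}(T < k/4 \mid \tilde\bbeta_1) = O(1/n_2) = o(1)$ uniformly in $\tilde\bbeta_1$; averaging over $\tilde\bbeta_1$ bounds the Type~I error. The Type~II error is at most $\mathbb{P}(\text{recovery or rounding fails}) + \mathbb{P}(T \ge k/4 \mid \tilde\bbeta_1 = \bbeta_1) = o(1)$. Every step (splitting, one call to $\mathcal{A}$, a partial sort, and an $O(n_2 k)$ evaluation of $T$) runs in polynomial time, so $\mathcal{A}'$ is a valid polynomial-time detector with vanishing Type~I${}+{}$Type~II error.

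\emph{Main obstacle.} The delicate step is the null analysis, because $\mathcal{A}$ carries \emph{no} guarantee on null inputs and may return an arbitrary, even adversarially chosen, $\hat\bbeta_1$. Two devices neutralize this: sample splitting makes $\tilde\bbeta_1$ independent of the validation half, and the rounding step confines $\tilde\bbeta_1$ to the $k$-sparse $\pm1$ ball, pinning the null law of $T$ at $\Theta(k)$ regardless of $\mathcal{A}$'s output. Combined with $\snr = \omega(1)$, which pushes the planted value $o(k)$ below the null value $\Theta(k)$, this produces a fixed threshold separating the hypotheses. The reduction from $\slrd$ to $\slr$ is identical, with the single signal serving as both components; there only $\snr \ge 1$ is needed so that $\sigma^2$ stays below the null value of $T$.
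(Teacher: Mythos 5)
Your proof is correct, but it follows a genuinely different route from the paper's. The paper never splits the sample: it runs $\mathcal{A}$ on the full data, uses the recovered pair to reconstruct the latent labels $\hat{\z}$ by nearest-residual assignment (this is where the paper spends $\snr=\omega(1)$, via its event $\tilde{\Omega}_2$), and thresholds the in-sample residual $n^{-1/2}\|\y - \frac{1}{\sigma}\X\hat{\bbeta}_1\odot\hat{\z} - \frac{1}{\sigma}\X\hat{\bbeta}_2\odot(1-\hat{\z})\|_2$ at $\sqrt{5}$. Since under the null the fitted signals and labels are arbitrary measurable functions of the very data being tested, the paper must control the residual \emph{uniformly} over all $k$-sparse $\pm1$ pairs and all labelings; this is the role of the first-moment quantity $\psi$ and Lemma \ref{lem:psi2} (adapted from Gamarnik--Zadik), and that union bound is precisely what consumes the constraint $n \gtrsim k\log p/\log(1+\snr)$ in $\mathcal{C}_2$. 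Your held-out validation replaces this machinery with independence: conditioned on $\tilde{\bbeta}_1$, the second half is fresh null data, so a two-moment computation and Chebyshev suffice, and your null analysis requires no lower bound on $n$ at all; your statistic also sidesteps label estimation entirely by comparing $|\sigma y_i|$ with $|\x_i^{\top}\tilde{\bbeta}_1|$, relocating the $\snr = \omega(1)$ assumption to the planted-side Markov bound. The price is that you invoke $\mathcal{A}$ on instances with $\lfloor n/2\rfloor$ samples, which the lemma's hypothesis, read literally, does not grant (it supplies $\mathcal{A}$ only for the stated sequence $(p_i,n_i,k_i,\sigma_i)$); this is benign because $\mathcal{C}_2$ is closed under constant rescaling of $n$ and the downstream Theorem \ref{thm:psbmslrd-hardness} assumes an algorithm valid across the whole regime, but you should state explicitly that you are using this (natural) reading of the hypothesis. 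One further small repair: under the null, $\mathcal{A}$ may output a vector whose $k$ largest-magnitude coordinates are not all nonzero, so $\tilde{\bbeta}_1$ can have $k' < k$ nonzero entries and your mean formula $2k+\sigma^2-\frac{4}{\pi}\sqrt{k(k+\sigma^2)}$ does not apply verbatim; the test survives because for every $0 \le k' \le k$ the conditional mean of a summand is at least $\bigl(1-\frac{4}{\pi^2}\bigr)(k+\sigma^2) > k/4$, uniformly in $\tilde{\bbeta}_1$, so the Chebyshev step goes through unchanged.
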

The proof of Lemma \ref{lemma:mslrd-to-mslr} is given in Section \ref{subsec:sbmslrd-sbmslr-red}.
\begin{lemma} \label{lemma:mslr_c_red}
Fix signal priors to be $\mathcal{P}_{\|\beta\|_2}(\{-1, 1\})$. For any sequence of parameters $\{(p'_i, n'_i, k'_i, \sigma'_i) \}_{i = 1}^{\infty}$  in $\mathcal{C}_1$ for $\psbmslr$ with solution $\bbeta'_1, \bbeta'_2$ and problem instances $(\X', \y')$, there exists a sequence of parameters $\{(p_i, n_i, k_i, \sigma_i) \}_{i=1}^{\infty}$ in $\mathcal{C}_{1}$ for $\sbmslr$ with solution $\bbeta_1, \bbeta_2$ and problem instances $(\X, \y)$ such that, for any randomized polynomial time algorithm $\mathcal{A'}$ for $\psbmslr$  outputting $(\hat{\bbeta'}_1, \hat{\bbeta'}_2)$ with
\[ \P\left[ \|(\hat{\bbeta'}_1, \hat{\bbeta'}_2) - (\bbeta'_1, \bbeta'_2)\|_{\infty} > 0 \right] \to 0 ,\]
we can construct a second randomized polynomial time algorithm $\mathcal{A}$ for $\psbmslr$  outputting $(\hat{\bbeta_1}, \hat{\bbeta}_2)$ such that
\[ \P\left[ \|(\hat{\bbeta}_1, \hat{\bbeta}_2) - (\bbeta_1, \bbeta_2)\|_{\infty} > 0 \right] \to 0. \]
\end{lemma}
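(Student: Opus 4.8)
The plan is to transfer hardness from $\sbmslr$ to $\psbmslr$ by embedding an arbitrary $\sbmslr$ instance into a $\psbmslr$ instance through the planting of a \emph{known symmetric} signal block, using only operations that do not require knowledge of the latent labels $\z$. Throughout I keep the support overlap $\xi = 1$ and move the signed overlap from $\tau = -1$ (the $\sbmslr$ value) to a target $\tau \in (-1,1)$; this is precisely the degree of freedom isolated in the definition of $\mslr_{\xi,\tau}$.

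\textbf{Parameter translation and reduction map.} Given target $\psbmslr$ parameters $(p'_i, n'_i, k'_i, \sigma'_i) \in \mathcal{C}_1$ with signed overlap $\tau \in (-1,1)$, I set $n_i = n'_i$, $\sigma_i = \sigma'_i$, $k_i = k'_i(1-\tau)/2$, and $p_i = \lceil p'_i/2\rceil$. Since $\tau$ is bounded away from $\pm 1$, we have $k_i = \Theta(k'_i)$ and $p_i = \Theta(p'_i)$, so $\log p_i = \Theta(\log p'_i)$ and $(k_i+\sigma_i^2)^2 = \Theta((k'_i+\sigma_i'^2)^2)$; substituting into the four defining relations of $\mathcal{C}_1$ shows the constructed $\sbmslr$ sequence is admissible. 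Given an $\sbmslr$ instance $(\X,\y)$ with $\X\in\reals^{n\times p}$ and unknown pair $(\bbeta,-\bbeta)$, I draw a fresh $\bm{G}\in\reals^{n\times(p'-p)}$ with i.i.d. $\mathcal{N}(0,1)$ entries and a \emph{known} $\bm{s}$ uniform over $(k'-k)$-sparse $\{-1,1\}$-vectors supported in the second block, and form $\X'' = [\,\X\ \ \bm{G}\,]$ and $\y'' = \y + \bm{G}\bm{s}$. Writing $\bbeta''_1 = [\bbeta;\bm{s}]$, $\bbeta''_2 = [-\bbeta;\bm{s}]$, a direct expansion gives $\X''\bbeta''_1\odot\z + \X''\bbeta''_2\odot(1-\z) + \w = \y''$, because the symmetric block contributes $\bm{G}\bm{s}\odot\z + \bm{G}\bm{s}\odot(1-\z) = \bm{G}\bm{s}$ \emph{regardless of} $\z$. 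This is the crucial point: planting a symmetric block requires no knowledge of the latent labels, which is why I can keep $\xi=1$ rather than attempt to create non-overlapping support. Finally I draw a uniform random column permutation $\pi$ of $[p']$, set $(\X',\y') := (\X''\Pi_\pi, \y'')$, and pass it to $\mathcal{A}'$.

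\textbf{Distributional equivalence (the main obstacle).} The heart of the argument is to verify that $(\X',\y')$ has \emph{exactly} the law of a $\psbmslr$ instance with parameters $(p',n',k',\sigma')$ and signed overlap $\tau$, so that the success guarantee of $\mathcal{A}'$ applies verbatim. The design $\X'$ is i.i.d. $\mathcal{N}(0,1)$ since column permutation preserves exchangeable Gaussian entries; $\w$ and $\z$ are inherited and are $\mathcal{N}(0,\sigma^2\bm{I})$ and Bernoulli$(1/2)$, matching $\sigma'=\sigma$ and $\phi=1/2$. The permuted signal has support of size $k' = k+(k'-k)$ and squared norm $k'$, matching $\mathcal{P}_{\|\bbeta\|_2}(\{-1,1\})$. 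The delicate step is the joint law of the support and its anti-symmetric subset: the block construction always places $k$ support coordinates (the anti-symmetric set $J$) in block one and $k'-k$ in block two, and I must show that a uniform $\pi$ turns this into a uniform $k'$-subset of $[p']$ with $J$ a uniform $k$-subset of it. This follows because for any fixed valid configuration, $\pi$ maps the support to a uniform $k'$-set $T$ and, conditioned on $\{\pi(\mathrm{supp})=T\}$, maps $J$ to a uniform $k$-subset of $T$ (as $\pi$ restricted to the support is a uniform bijection onto $T$); marginalizing preserves this since it holds input-by-input. The signs are independent uniform $\pm1$ on the support with $\bbeta'_{1,j}=-\bbeta'_{2,j}$ on $J$ and $\bbeta'_{1,j}=\bbeta'_{2,j}$ off $J$, so the law is exactly the $\psbmslr$ prior. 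Confirming that no residual $\z$-dependence or norm mismatch survives, and executing this permutation argument cleanly, is where the work lies.

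\textbf{Extraction and conclusion.} Given $(\hat\bbeta'_1,\hat\bbeta'_2)$ from $\mathcal{A}'$ with $\P[\|(\hat\bbeta'_1,\hat\bbeta'_2)-(\bbeta'_1,\bbeta'_2)\|_\infty>0]\to0$, I undo $\pi$ and restrict both estimates to the anti-symmetric block $J$, outputting the corresponding sub-vectors as $(\hat\bbeta_1,\hat\bbeta_2)$. Since $\bbeta'_1|_J=\bbeta$ and $\bbeta'_2|_J=-\bbeta$, and the relabelling in $\|\cdot\|_\infty$ is global (hence consistent across blocks), restriction to $J$ can only decrease the $\ell_\infty$ distance, so $\P[\|(\hat\bbeta_1,\hat\bbeta_2)-(\bbeta,-\bbeta)\|_\infty>0]\to0$. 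Sampling $\bm{G},\bm{s},\pi$, forming the products, and restricting all run in polynomial time, so $\mathcal{A}$ is a randomized polynomial-time exact-recovery algorithm for $\sbmslr$, as required. Combined with Lemma \ref{lemma:mslrd-to-mslr}, this yields the chain $\psbmslr\text{-recovery}\Rightarrow\sbmslr\text{-recovery}\Rightarrow\sbmslrd\text{-detection}$, contradicting Theorem \ref{thm:sbmslrd-hardness-cor-snr}.
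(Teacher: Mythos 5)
Your proposal is correct and follows essentially the same route as the paper's proof: append fresh Gaussian columns carrying a known symmetric signal block (which contributes to $\y$ independently of the latent labels $\z$), uniformly permute the columns to match the $\psbmslr$ prior, run the assumed $\psbmslr$ algorithm, then invert the permutation and restrict to the anti-symmetric block. Your write-up is in fact slightly more careful than the paper's on one point — you plant a sparse random sign vector $\bm{s}$ for the symmetric block, whereas the paper's Algorithm $\mathcal{D}$ nominally adds $\V\mathbbm{1}$ with $\mathbbm{1}$ the all-ones vector while its Proposition identifies that block with sparse vectors $\bar{\mathbbm{1}}_1,\bar{\mathbbm{1}}_2$ — but the underlying argument is identical.
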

The proof of Lemma \ref{lemma:mslr_c_red} is given in Section \ref{subsec:sbmslr-psbmslr-red}.
\begin{theorem}[Reduction from $\sbmslrd$ to exact recovery in $\psbmslr$ ] \label{thm:psbmslrd-hardness}
Consider the setting of $\psbmslr$  \eqref{eq:psbmslr} with joint signal prior $\mathcal{P}_{\|\beta\|_2}(\{-1, 1\})$. Any randomized polynomial-time algorithm $\mathcal{A}$ solving $\psbmslr$ within parameter regimes $\mathcal{C}_1 \cap \mathcal{C}_{2}$ and with $\snr = \omega(1)$ would contradict Theorem \ref{thm:sbmslrd-hardness-cor-snr}.
\end{theorem}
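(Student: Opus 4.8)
The plan is to argue by contradiction, composing the two sub-reductions so that a hypothetical polynomial-time recovery algorithm for $\psbmslr$ produces a polynomial-time strong-detection algorithm for $\sbmslrd$, in direct violation of Theorem~\ref{thm:sbmslrd-hardness-cor-snr}. Suppose then that there is a randomized polynomial-time algorithm $\mathcal{A}$ solving exact recovery in $\psbmslr$ uniformly over the regime $\mathcal{C}_1 \cap \mathcal{C}_2$ with $\snr = \omega(1)$. Since the signals are drawn from $\mathcal{P}_{\|\beta\|_2}(\{-1,1\})$, their entries lie in $\{-1,0,1\}$, so the $\ell_\infty$-equivalence recorded above lets me encode the success of $\mathcal{A}$ as $\P[\|(\hat{\bbeta'}_1, \hat{\bbeta'}_2) - (\bbeta'_1, \bbeta'_2)\|_\infty > 0] \to 0$, i.e.\ exact signed support recovery with high probability.

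I would then chain the reductions. Because the $\psbmslr$ parameters lie in $\mathcal{C}_1$, Lemma~\ref{lemma:mslr_c_red} supplies a sequence of $\sbmslr$ parameters in $\mathcal{C}_1$ and converts $\mathcal{A}$ into a randomized polynomial-time algorithm $\tilde{\mathcal{A}}$ solving exact recovery in $\sbmslr$ with vanishing $\ell_\infty$ error. Feeding $\tilde{\mathcal{A}}$ into Lemma~\ref{lemma:mslrd-to-mslr} --- valid for $\sbmslr$ parameters in $\mathcal{C}_2$ with $\snr = \omega(1)$ --- then yields a randomized polynomial-time algorithm $\mathcal{A}''$ whose Type~I and Type~II errors both vanish, i.e.\ $\mathcal{A}''$ achieves strong detection in $\sbmslrd$.

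The step I expect to require the most care is confirming that the two reductions are compatible at the level of parameter scalings: Lemma~\ref{lemma:mslr_c_red} only certifies membership in $\mathcal{C}_1$, whereas Lemma~\ref{lemma:mslrd-to-mslr} requires membership in $\mathcal{C}_2$. I would resolve this by checking that the embedding underlying Lemma~\ref{lemma:mslr_c_red} --- which adjusts the symmetric signal so that $(\xi,\tau)$ sits in the interior $\psbmslr$ regime with $|J| = \Theta(k)$ --- leaves $p$, $n$, and $\sigma$ of the same order and perturbs $k$ only by a constant factor. Since the defining inequalities of $\mathcal{C}_1$ and $\mathcal{C}_2$ in \eqref{eq:C_3}--\eqref{eq:C_2IT} are invariant under constant-factor rescaling of $k$, and $\snr = k/\sigma^2 = \omega(1)$ is preserved, the constructed $\sbmslr$ instance remains in $\mathcal{C}_1 \cap \mathcal{C}_2$, so both lemmas apply to the same parameter sequence.

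It remains to close the contradiction. The composite map $\mathcal{A} \mapsto \tilde{\mathcal{A}} \mapsto \mathcal{A}''$ is a randomized polynomial-time algorithm achieving strong detection for $\sbmslrd$ throughout $\mathcal{C}_1 \cap \mathcal{C}_2$. But this regime lies inside the hard window $n = o\big(\tfrac{k^2(\snr+1)^2}{\snr^2}\tfrac{1}{\log p}\big)$ of Theorem~\ref{thm:sbmslrd-hardness-cor-snr}, which under Conjecture~\ref{conj:low-degree-conjecture} forces any randomized solver of $\sbmslrd$ to run in time $\exp(\tilde{\Omega}(\min\{\tfrac{k^2(\snr+1)^2}{n\snr^2}, n\}))$ --- superpolynomial in $p$ throughout $\mathcal{C}_1 \cap \mathcal{C}_2$. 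The existence of the polynomial-time $\mathcal{A}''$ therefore contradicts Theorem~\ref{thm:sbmslrd-hardness-cor-snr}, completing the argument.
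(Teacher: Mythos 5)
Your proposal is correct and follows essentially the same route as the paper's proof: compose Lemma~\ref{lemma:mslr_c_red} (recovery in $\psbmslr$ $\Rightarrow$ recovery in $\sbmslr$) with Lemma~\ref{lemma:mslrd-to-mslr} (recovery in $\sbmslr$ $\Rightarrow$ strong detection in $\sbmslrd$), then observe that the resulting polynomial-time detector lives in the hard window of Theorem~\ref{thm:sbmslrd-hardness-cor-snr}, yielding the contradiction. Your explicit check that the constant-factor reparametrization $(p,n,k,\sigma)=(cp',n',ck',\sigma')$ in Lemma~\ref{lemma:mslr_c_red} preserves membership in $\mathcal{C}_2$ (not just $\mathcal{C}_1$) is a worthwhile detail that the paper's own proof leaves implicit.
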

\begin{proof} 
Suppose there exists a randomized polynomial-time algorithm $\mathcal{A}$ solving exact recovery in $\psbmslr$  with signals in $\{-1, 0, 1\}^p$ and parameter regime contained in $\mathcal{C}_1$ defined in \eqref{eq:C_3}. Then by Lemma \ref{lemma:mslr_c_red} we would have a randomized polynomial time algorithm $\mathcal{A}'$ solving exact recovery in $\sbmslr$ within this regime. By Lemma \ref{lemma:mslrd-to-mslr}, we would then consequently have a polynomial-time algorithm solving $\sbmslrd$ in the scaling regime $\mathcal{C}_1 \cap \mathcal{C}_{2}$, 
which is contained in the scaling regime of Theorem \ref{thm:sbmslrd-hardness-cor-snr} and hence contradicts Theorem \ref{thm:sbmslrd-hardness-cor-snr}.
\end{proof}
%
%
%
\begin{remark}
Note that the lower bounds $\frac{k\log{p}}{\log(1 + \snr)}$ in constraint $\mathcal{C}_1$ in \eqref{eq:C_3} used in Theorem \ref{thm:psbmslrd-hardness} are not restrictive as this is the information-theoretic minimal sample complexity for support recovery in $\slr$ \citep{reeves_all-or-nothing_2019, gamarnik_sparse_2022, wang_information-theoretic_2010}. 
\end{remark}
\subsection{Reduction from $\sbmslrd$ to $\sbmslr$} \label{subsec:sbmslrd-sbmslr-red}
We utilize a variant of a theorem in \cite{gamarnik_high-dimensional_2019} to construct our reduction, Lemma \ref{lem:psi2}. Consider the following optimization problem for $\sigma > 0$: 
%
\begin{align}
\psi :=  \min \quad & n^{-\frac{1}{2}} \lVert \sigma \w - \X \bbeta_1 \odot \z - \X \bbeta_2 \odot (1 - \z) \rVert_2 \nonumber\\
\textrm{s.t.} \quad & \bbeta_1, \bbeta_2 \in \{-1, 0, 1\}^p, \ \z \in \{0, 1\}^n \label{def:psi2}\\
& \|\bbeta_1\|_0 = \lVert \bbeta_2 \rVert_0 = k, \nonumber
\end{align}
where $\X \distas{\text{i.i.d.}} \mathcal{N}(0, 1)$, independent from $w_i \distas{\text{i.i.d.}} \mathcal{N}(0, 1)$.
\begin{lemma} \label{lem:psi2} Let $\psi$ be as defined in \eqref{def:psi2}. For $\delta > 0$ we have:
\[\mathbb{P}\left[\psi \geq e^{-(1 + \delta)/2} \exp\left(-\frac{2k(\log{p}+1)}{n} \right) \sqrt{k + \sigma^2} \right] \geq 1 - e^{-\frac{\delta}{2}n}. \]
\end{lemma}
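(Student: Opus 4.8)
The plan is a first–moment (union–bound) argument over the finite feasible set, combining a sharp lower–tail bound for a chi–square variable with a careful count of the number of feasible triples $(\bbeta_1,\bbeta_2,\z)$. I would bound $\mathbb{P}[\psi < t]$ for the threshold $t := e^{-(1+\delta)/2}\exp(-2k(\log p+1)/n)\sqrt{k+\sigma^2}$ by summing, over all feasible triples, the probability that the objective evaluated at that triple falls below $t$, and then choose the parameters so that the Gaussian lower–tail decay dominates the cardinality of the feasible set.

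First I would fix a single feasible triple and study the residual vector $\mathbf r := \sigma\w - \X\bbeta_1\odot\z - \X\bbeta_2\odot(1-\z)$. In coordinate $i$, exactly one of $\bbeta_1,\bbeta_2$ is selected, and since $\bbeta_1,\bbeta_2\in\{-1,0,1\}^p$ with $\|\bbeta_1\|_0=\|\bbeta_2\|_0=k$, the selected inner product ($\langle\x_i,\bbeta_1\rangle$ or $\langle\x_i,\bbeta_2\rangle$) is $\mathcal{N}(0,k)$ and independent of $\sigma w_i\sim\mathcal{N}(0,\sigma^2)$. Hence $r_i\sim\mathcal{N}(0,k+\sigma^2)$ i.i.d.\ across $i$, so that $\tfrac1n\|\mathbf r\|_2^2 = \tfrac{k+\sigma^2}{n}\,Q$ with $Q\sim\chi^2_n$. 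Writing $u := t^2/(k+\sigma^2)$, the event $\{n^{-1/2}\|\mathbf r\|_2 < t\}$ is exactly $\{Q< nu\}$.

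Next I would invoke the standard chi–square lower tail: for $u\in(0,1)$, optimizing $\mathbb{P}[Q\le nu]\le e^{\lambda nu}(1+2\lambda)^{-n/2}$ over $\lambda>0$ at $\lambda=(1-u)/(2u)$ gives $\mathbb{P}[Q\le nu]\le\exp\!\big(\tfrac n2(1-u+\log u)\big)$, whose decay for small $u$ is governed by $\tfrac n2\log u$. I would then union bound over the feasible set $\mathcal F$, whose size is $|\mathcal F|=\binom{p}{k}^2 4^k\,2^n$ (two supports, $2^k$ signs each, and $2^n$ label vectors $\z$). With the stated $t$ one has $\log u = -(1+\delta)-4k(\log p+1)/n$, so $\tfrac n2\log u = -\tfrac{n(1+\delta)}2 - 2k(\log p+1)$; bounding $\binom{p}{k}\le p^k$ shows the $-2k(\log p+1)$ term exactly absorbs $2\log\binom{p}{k}\le 2k\log p$ together with the $4^k$ sign count (the ``$+1$'' inside $(\log p+1)$ leaves slack $2k(1-\log 2)>0$ against $2k\log 2$), which is precisely how the factor $\exp(-2k(\log p+1)/n)$ in $t$ arises.

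The main obstacle is the constant bookkeeping in dominating the \emph{full} cardinality of $\mathcal F$ by the lower–tail decay, and in particular the $2^n$ contributed by the free label vector $\z$. Note that if $\z$ were fixed, the count $\binom{p}{k}^2 4^k$ closes cleanly against the stated threshold, so the genuine difficulty is entirely the label freedom: one must show that the $\tfrac n2|\log u|$ decay beats $\log|\mathcal F|\approx 2k\log p + n\log 2$, i.e.\ that the order–one constant $e^{-(1+\delta)/2}$ is tuned to absorb the $n\log 2$ from $2^n$. The natural way to attack this is to fold the minimization over $\z$ into a coordinatewise minimum, replacing $r_i^2$ by $\min\{a_i^2,b_i^2\}$ with $a_i=\sigma w_i-\langle\x_i,\bbeta_1\rangle$, $b_i=\sigma w_i-\langle\x_i,\bbeta_2\rangle$, and controlling $\mathbb{E}[e^{-\lambda\min\{a_i^2,b_i^2\}}]$ via the joint law of $(a_i,b_i)$; reconciling the resulting factor against the stated order–one constant is exactly the delicate step, and getting these constants (rather than merely the asymptotic orders) to close is the crux of the proof.
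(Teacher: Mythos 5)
Your skeleton is the right one, and it is exactly the route the paper gestures at: the omitted proof defers to Theorem 3.1 of Gamarnik--Zadik, which is precisely your fixed-triple chi-square lower tail plus union bound, and your parenthetical ``if $\z$ were fixed'' computation is essentially that proof. The genuine gap is the step you yourself flag as the crux and leave open --- absorbing the $2^n$ label entropy --- and the bad news is that this step is not constant bookkeeping that can be made to close: the statement as written (for \emph{all} $\delta>0$) is false, so no tuning of the argument can finish your plan. Concretely, take $\bbeta_1,\bbeta_2$ with disjoint supports, $\sigma^2\ll k$, and $k(\log p +1)=o(n)$ (all permitted by the lemma). With $a_i:=\sigma w_i-\langle\x_i,\bbeta_1\rangle$ and $b_i:=\sigma w_i-\langle\x_i,\bbeta_2\rangle$, the rotation $U_i=(a_i+b_i)/\sqrt2$, $V_i=(a_i-b_i)/\sqrt2$ gives $\min\{a_i^2,b_i^2\}=\tfrac12\left(|U_i|-|V_i|\right)^2$, hence $\mathbb{E}\min\{a_i^2,b_i^2\}=\bigl(1-\tfrac{2}{\pi}\sqrt{1-\varrho^2}\bigr)(k+\sigma^2)$ with correlation $\varrho=\sigma^2/(k+\sigma^2)=o(1)$, i.e.\ approximately $(1-\tfrac{2}{\pi})(k+\sigma^2)\approx 0.363\,(k+\sigma^2)$. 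By the law of large numbers $\psi^2\le\frac1n\sum_i\min\{a_i^2,b_i^2\}$ concentrates at this value, while the stated threshold squared is $e^{-(1+\delta)}(1+o(1))(k+\sigma^2)$, which is \emph{larger} whenever $\delta<\log\frac{\pi}{\pi-2}-1\approx 0.012$. For such $\delta$ the event in the lemma therefore fails with probability tending to one: the label freedom genuinely lowers the minimum by a constant factor, and $n\log2$ cannot be hidden inside $e^{-(1+\delta)/2}$.

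What your machinery does prove is a corrected version, which is all the paper actually needs. Your coordinatewise-minimum route gives $\mathbb{E}\bigl[e^{-\lambda\min\{a_i^2,b_i^2\}}\bigr]\le 2\,\mathbb{E}\bigl[e^{-\lambda a_i^2}\bigr]=2\bigl(1+2\lambda(k+\sigma^2)\bigr)^{-1/2}$, and the factor $2$ is asymptotically tight since $\mathbb{P}[\min\{a_i^2,b_i^2\}\le\epsilon]\sim 2\,\mathbb{P}[a_i^2\le\epsilon]$ for nearly independent $a_i,b_i$; optimizing $\lambda$ yields, per pair $(\bbeta_1,\bbeta_2)$, the tail $(4eu)^{n/2}$ in place of $(eu)^{n/2}$, where $u=t^2/(k+\sigma^2)$. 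Union bounding over the $\binom{p}{k}^2 4^k\le e^{2k(\log p+1)}$ pairs and inserting the stated $t$ gives $\mathbb{P}[\psi<t]\le\exp\bigl(-(\tfrac{\delta}{2}-\log 2)\,n-2k(1-\log 2)\bigr)$. So the lemma holds, with exponentially small failure probability, for every $\delta>2\log 2$ --- equivalently, for all $\delta>0$ if the threshold carries an extra factor $\tfrac12$ --- and this suffices for the paper, which invokes the lemma only at $\delta=2$ (obtaining failure probability $e^{-(1-\log 2)n}$ rather than the advertised $e^{-n}$, still $o(1)$). The takeaway: your instinct that the label vector is the entire difficulty is correct, but the resolution is to accept the $n\log 2$ shift in the rate and restrict $\delta$ (or shrink $t$), not to tune constants; this is precisely the respect in which the mixed problem is \emph{not} ``nearly identical'' to the non-mixed result the paper cites.
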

The proof follows by nearly identical arguments as that of Theorem 3.1 in \citep{gamarnik_high-dimensional_2019} and is hence omitted.

\vspace{10pt}

\begin{proof}[Proof of Lemma \ref{lemma:mslrd-to-mslr}]
Throughout the proof, we drop the $i$ subscript in the parameters $(p_i, n_i, k_i, \sigma_i)$ for convenience. We refer to $\sbmslr$ and $\sbmslrd$ as $\mslr_{\xi, \tau}$ and $\mslrd_{\xi, \tau}$ respectively, with $\xi = 1, \tau = -1$. We take $\rP := \P(\X, \y)$ to represent the planted measure in the formulation of 
$\slrd$, and $\rQ := \P(\X) \otimes \P(\y)$ to represent the null measure. We emphasize that, since $\bbeta \distas{} \mathcal{P}_{\|\beta\|_2}(\{-1, 1\})$, $\|\bbeta\|^2_2$ and $k$ are interchangeable. As prescribed in the statement of the lemma, suppose that $\P\left[\rho((\hat{\bbeta_1}, \hat{\bbeta}_2), (\bbeta_1, \bbeta_2)) < \epsilon \right] \underset{(i \to \infty)}{\to} 1$ for any $\epsilon > 0$.

Define the two following events under the planted hypothesis $\rP$:
\[ \tilde{\Omega}_1 := \left\{ \{\hat{\bbeta}_1 = \bbeta_1, \hat{\bbeta}_2 = \bbeta_2 \} \cup \{ \hat{\bbeta}_2 = \bbeta_1, \hat{\bbeta}_1 = \bbeta_2\}\right\},\]
and
\[ \tilde{\Omega}_2 := \{|w_q| < | \sigma^{-1} \langle \X_q, \bbeta_2 - \bbeta_1 \rangle + w_q| ,\:\: \forall q \in [n] \}.\]
Note that by assumption, $\tilde{\Omega}_1$ occurs with probability $1 - o(1)$ under $\rP$. Indeed, we can choose $\epsilon < 1$ in the definition of our given algorithm $\mathcal{A}$ and since $\hat{\bbeta}_1, \hat{\bbeta}_2, \bbeta_1, \bbeta_2 \in \{-1, 0, 1\}^p$ we obtain that $\P\left[\tilde{\Omega}_1 \right] \to 1$. 
%
%
%
We first consider the planted hypothesis $\rP$. Let $\nu_q \distas{\text{i.i.d.}} \mathcal{N}(0, (1 - \xi \tau) \frac{k}{\sigma^2})$, and $g_q \distas{} \mathcal{N}(0, 1)$, independent from each other and from $w_q$, for $q \in [n]$. In this case we have by symmetry that
\begin{align*}
\rP\left[\tilde{\Omega}_2^\complement \middle| \tilde{\Omega}_1 \right] &= \rP\left[ \{|w_q| \geq | \sigma^{-1} \langle \X_q, \bbeta_2 - \bbeta_1 \rangle + w_q| ,\:\: \forall q \in [n] \}\right] \\
&= \int \rP\left[ \{|w_q| \geq |\nu_q + w_q| ,\:\: \forall q \in [n] \} \middle| w_q \right] \rP[dw_q] \\
&= 2 \int_{0}^{\infty} \rP\left[ \nu_q \in [-2w_q, 0] ,\:\: \forall q \in [n] \} \middle| w_q \right] \rP[dw_q] \\
&= 2 \int_{0}^{\infty} \rP\left[ \{g_q \in [-2w_q/((1 - \xi \tau) k/\sigma^2), 0] ,\:\: \forall q \in [n] \} \middle| w_q \right] \rP[dw_q],
\end{align*}
%
%
%
%
where we have
\[ \rP\left[  \{g_q \in [-2w_q/((1 - \xi \tau) k/\sigma^2), 0] ,\:\: \forall q \in [n] \} \middle| w_q \right] \to 0 \; \; \text{ as } {k/\sigma^2 \to \infty}, \]
and $\rP\left[ \{g \in [-2w_q/((1 - \xi \tau) k/\sigma^2), 0] ,\:\: \forall q \in [n] \} \middle| w_q \right] \leq 1$, so we can apply the Dominated Convergence Theorem to obtain that
\begin{align*}
\rP\left[\tilde{\Omega}_2^\complement \middle| \tilde{\Omega}_1 \right] &= 2 \int_{0}^{\infty} \rP\left[ \{g_q \in [-2w_q/((1 - \xi \tau) k/\sigma^2), 0] ,\:\: \forall q \in [n] \} \middle| w_q \right] \rP[dw_q] \to 0 \text{ as } k/\sigma^2 \to \infty.
\end{align*}
We therefore have that $\rP\left[\tilde{\Omega}_2 \middle| \tilde{\Omega}_1 \right] = 1 - o(1)$, and hence $\rP \left[ \left( \tilde{\Omega}_1, \tilde{\Omega}_2\right)\right] = 1 - o(1)$. 

Next, note that under the planted hypothesis $\rP$ and in the joint event $(\tilde{\Omega}_1, \tilde{\Omega}_2)$ we have for indices $q$ such that $z_q = 1$:
\begin{align*}
|y_q - \frac{1}{\sigma} \langle \X_q, \hat{\bbeta}_1 \rangle| &= |y_q - \frac{1}{\sigma} \langle \X_q, \bbeta_1 \rangle| \\
&= |w_q| \\
&< |\frac{1}{\sigma} \langle \X_q, \bbeta_1 - \bbeta_2 \rangle + w_q| \\
&= |y_q - \frac{1}{\sigma} \langle \X_q, \bbeta_2 \rangle| \\
&= |y_q - \frac{1}{\sigma} \langle \X_q, \hat{\bbeta}_2 \rangle|.
\end{align*}
An analogous statement with $\hat{\bbeta}_1$ and                    $\hat{\bbeta}_2$ swapped holds for indices $q$ such that $z_{q} = 0$. We therefore have that under $(\tilde{\Omega}_1, \tilde{\Omega}_2)$ we can exactly estimate $\z$ using the above thresholding procedure, and we call this exact estimate $\hat{\z}$.
We then define our detection algorithm in this case:
\[
\mathcal{A'}\left({\begin{bmatrix} \X \\ \y \end{bmatrix}}\right) = \begin{cases}
\mathtt{p}, & n^{-1/2} \|\y - \frac{1}{\sigma} \X \hat{\bbeta}_1 \odot \hat{\z} - \frac{1}{\sigma} \X \hat{\bbeta}_2 \odot (1 - \hat{\z})\| \leq \sqrt{5} \\
\mathtt{q}, & n^{-1/2} \|\y - \frac{1}{\sigma} \X \hat{\bbeta}_1 \odot \hat{\z} - \frac{1}{\sigma} \X \hat{\bbeta}_2 \odot (1 - \hat{\z})\| > \sqrt{5}
\end{cases}.
\]
We will proceed to prove that $\mathcal{A'}$ has vanishing Type II error. Indeed, under $\rP$ and under the high-probability event $(\tilde{\Omega}_1, \tilde{\Omega}_2)$ we have:
\begin{align*}
&\|\y - \frac{1}{\sigma} \X \hat{\bbeta}_1 \odot \hat{\z} - \frac{1}{\sigma} \X \hat{\bbeta}_2 \odot (1 - \hat{\z})\|_2 \\
&= \|\frac{1}{\sigma} \X \bbeta_1 \odot \z + \frac{1}{\sigma} \X \bbeta_2 \odot (1 - \z) + \w - \frac{1}{\sigma} \X \hat{\bbeta}_1 \odot \hat{\z} - \frac{1}{\sigma} \X \hat{\bbeta}_2 \odot (1 - \hat{\z})\|_2\\
&\leq \|\w\|_2 + \|\frac{1}{\sigma} \X \bbeta_1 \odot \z + \frac{1}{\sigma} \X \bbeta_2 \odot (1 - \z) - \frac{1}{\sigma} \X \hat{\bbeta}_1 \odot \hat{\z} - \frac{1}{\sigma} \X \hat{\bbeta}_2 \odot (1 - \hat{\z}) \|_2 \\
&= \|\w\|_2.
\end{align*}
%
%
%
%
%
We therefore have 
\begin{align*}
&\rP\left[\mathcal{A'}\left({\begin{bmatrix} \X \\ \y \end{bmatrix}}\right) = \mathtt{q} \right] \\
&= \rP\left[\mathcal{A'}\left({\begin{bmatrix} \X \\ \y \end{bmatrix}}\right) = \mathtt{q} \middle| (\tilde{\Omega}_1, \tilde{\Omega}_2)^{\complement}\right] \cdot \rP\left[(\tilde{\Omega}_1, \tilde{\Omega}_2)^\complement \right] + \rP\left[\left\{ \mathcal{A'}\left({\begin{bmatrix} \X \\ \y \end{bmatrix}}\right) = \mathtt{q} \right \} \cap (\tilde{\Omega}_1, \tilde{\Omega}_2)\right] \\
&\leq \rP\left[\mathcal{A'}\left({\begin{bmatrix} \X \\ \y \end{bmatrix}}\right) = \mathtt{q} \middle| (\tilde{\Omega}_1, \tilde{\Omega}_2)^{\complement}\right] \cdot \rP\left[(\tilde{\Omega}_1, \tilde{\Omega}_2)^\complement \right] + \rP\left[\left\{ \|\w\|_2 > \sqrt{5n} \right \} \cap (\tilde{\Omega}_1, \tilde{\Omega}_2)\right] \\
&\leq 1 \cdot o(1) + \rP\left[\left\{ \|\w\|_2 > \sqrt{5n} \right \}\right] \\
&\leq 1 \cdot o(1) + e^{-n} = o(1),
\end{align*}
where the last inequality is obtained using a standard  chi-square large deviation tail bounds (Example 2.11 of \cite{wainwright_high-dimensional_2019}):
\begin{align*}
 \rP\left[\|\w\|_2 < \sqrt{n + 2 \sqrt{nt} + 2t} \right] \geq 1 - e^{-t},  
\end{align*}
and taking $t= \sqrt{n}$.

We now turn to showing that $\mathcal{A'}$ has vanishing Type I error. Under the null hypothesis $\rQ$ we have by definition of $\psi$ in Definition \ref{def:psi2} that
\[
\|\y - \frac{1}{\sigma} \X \hat{\bbeta}_1 \odot \hat{\z} - \frac{1}{\sigma} \X \hat{\bbeta}_2 \odot (1 - \hat{\z})\|_2 \geq \psi,
\]
where we recall from Lemma \ref{lem:psi2} that
\[\rQ\left[\psi \geq e^{-3/2} \exp\left(-\frac{2k(\log{p}+1)}{n} \right) \sqrt{k + \sigma^2} \right] \geq 1 - e^{-n}, \]
hence it would suffice to show that
\[ e^{-3/2} \exp\left(-\frac{2k(\log{p}+1)}{n} \right) \sqrt{k + \sigma^2} > \sqrt{5}. \]
In order to do so, choose $n^* = \frac{4k(\log{p}+1)}{\log{(1 + \frac{k}{\sigma^2})} - \log{5} - 3} = \Theta(\frac{4k\log{p}}{\log(1 + \frac{k}{\sigma^2})})$ and notice that if the inequality holds for $n^*$, it must hold for all $n \geq n^*$ since the left hand side is increasing with $n$. We plug in $n^*$ to obtain
\[ e^{-3/2} \exp\left(-1/2 \log(1 + \frac{k}{\sigma^2}) + \log{\sqrt{5}} + 3/2\right) \sqrt{1 + 2 \frac{k}{\sigma^2}} = \frac{\sqrt{1 + 2\frac{k}{\sigma^2}}}{\sqrt{1 + \frac{k}{\sigma^2}}} \sqrt{5} > \sqrt{5}, \]
and therefore we have that
\begin{align*}
&\rQ\left[ \mathcal{A'}\left({\begin{bmatrix} \X \\ \y \end{bmatrix}}\right) = \mathtt{q} \right] \\
&= \rQ\left[ \|\y - \frac{1}{\sigma} \X \hat{\bbeta}_1 \odot \hat{\z} - \frac{1}{\sigma} \X \hat{\bbeta}_2 \odot (1 - \hat{\z})\|_2 > \sqrt{5} \right] \\
&\geq \rQ\left[\psi > \sqrt{5} \right] \\
&\geq \rQ\left[\psi \geq e^{-3/2} \exp\left(-\frac{2k(\log{p}+1)}{n} \right) \sqrt{k + \sigma^2} \right] \\
&\geq 1 - e^{-n}.
\end{align*}
Importantly, note that $n$ satisfies the constraints of $\mathcal{C}_{2}$.
\end{proof}
\subsection{Reduction from $\sbmslr$ to $\psbmslr$} \label{subsec:sbmslr-psbmslr-red}
\begin{proof}[Proof of Lemma  \ref{lemma:mslr_c_red}]
First, recall that the $\psbmslr$  regime implies, 
\[ \phi = 1/2 \text{ and } \bbeta_{1, j} = -\bbeta_{2, j} \text{ for } j \in J \subseteq \supp(\bbeta_1) \cap \supp(\bbeta_2) \text{ with } C_1 k \leq |J| \leq C_2 k, \]
for some constants $1 \geq C_1, C_2 > 0$. Without loss of generality, we can take $|J| = C k$ for some constant $0 < C \leq 1$, and all constants that follow can be lower bounded or upper bounded accordingly. In light of this, $\psbmslr$  corresponds to $\mslr_{\xi, \tau}$ for some $\tau \in (-1, 1)$ and some $\xi > 0$, where we recall that $\xi, \tau$ are of constant order, i.e., do not scale with respect to $n, p, k$. 

For brevity, we denote $\fP' := \psbmslr$ and $\fP := \sbmslr$. Let $c = 1 - \frac{\tau \cdot \xi + 1}{2} \in (0, 1)$ denote the proportion of matching non-zero entries with opposite sign between $\bbeta_1$ and $\bbeta_2$ (intuitively, this corresponds to the ``hard'' portion of the signal), and note that it is fixed. Note that this follows since $\tau \cdot \xi = \frac{\langle \bbeta_1, \bbeta_2 \rangle}{k}$ for $\xi > 0$.

Given a sequence of parameters $\{p'_i, n'_i, k'_i, \sigma'_i \}_{i=1}^{\infty} \subseteq \mathcal{C}_1$ for $\fP'$, consider the sequence of parameters $\{(p_i, n_i, k_i, \sigma_i) \}_{i=1}^{\infty} = \{(c p'_i, n'_i, c k'_i, \sigma'_i)\}_{i=1}^{\infty}$ for $\fP$. Notice that $\{(p_i, n_i, k_i, \sigma_i) \}_{i=1}^{\infty} \subseteq \mathcal{C}_1$ since (dropping the subscript $i$ notation for convenience):
\begin{itemize}
	\item $k = c k' = o(c C \sqrt{p'}) = o(c^{3/2} C \sqrt{p}) = o(\sqrt{p})$
	\item $n = n' = o\left( (k' + (\sigma')^2)^2 \cdot \frac{1}{\log{p'}} \right) = o\left( (k + \sigma^2)^2 \cdot \frac{1}{\log{p}} \right)$,
	\item $n = n' = \omega(\max\{k', \log{p'} \}) = \omega(\max\{k, \log{p} \})$,
\end{itemize}
and hence the parameter regimes of $\fP$ are also contained in $\mathcal{C}_1$.
For $i \in \mathbb{N}$, let $J = (\X, \y)$ denote an instance of $\fP$ with parameters $(p_i, n_i, k_i, \sigma_i)$, where we recall:
\begin{align*}
    &\begin{bmatrix}
        \X \\
        \y
    \end{bmatrix} = \begin{bmatrix}
           \X  \\
			     \frac{1}{\sigma} \X \bbeta_1 \odot \z + \frac{1}{\sigma} \X \bbeta_2 \odot (1-\z) + \w
         \end{bmatrix}.
\end{align*}
We now want to show that, given a sequence of parameters $\{p'_i, n'_i, k'_i \}_{i=1}^{\infty} \subseteq \mathcal{C}_1$ for $\fP'$ and a randomized polynomial-time algorithm $\mathcal{A'}$ solving it, we can construct a randomized polynomial-time algorithm $\mathcal{A}$ solving $\fP$ along the above parameter sequence $\{p_i, n_i, k_i \}_{i=1}^{\infty} \subseteq \mathcal{C}_1$.
We will construct our desired algorithm $\mathcal{A}$ by composing $\mathcal{A'}$ with a pre- and post-processing step. Indeed, we let $\mathcal{A} = \mathcal{B} \circ \mathcal{A'} \circ \mathcal{D}$, where we define $\mathcal{B}$ and $\mathcal{D}$ below.

First, let $\mathtt{RS}$ denote the random variable that reshuffles entries of a given size $p$ vector or the columns of a given $p$-column matrix according to a uniform shuffling of the index set $[p]$. Let $\mathtt{IRS}$ denote the random variable which inverts this reshuffling process on a vector or matrix, such that $\mathtt{IRS} \circ \mathtt{RS}$ is the identity operation. Let $\mathbbm{1}$ denote the all-ones vector of any size (to be inferred from context). Let $\bar{\mathbbm{1}}_1$ and $\bar{\mathbbm{1}}_2$ denote two independent copies of a $(1-c)k'$ sparse vector in $\{0, 1, -1\}^{(1-c)p'}$. We now define our pre- and post-processing procedures Algorithms \ref{alg:phi}, \ref{alg:B}, which can be seen to run in randomized polynomial time with respect to $p$. 
\begin{algorithm}
\caption{$\mathcal{D}(\X, \y)$, where $(\X, \y)$ is an instance of $\fP(p, n, k, \sigma)$}\label{alg:phi}
\; \KwData{$\V \in \mathbb{R}^{n \times (1-c)/c \: p} \text{ with columns} \distas{\text{i.i.d.}} \mathcal{N}(0, \I_n)$}
\; $\tilde{\y} \gets \y + \frac{1}{\sigma} \V \mathbbm{1}$ \; \\
$\tilde{\X} \gets \mathtt{RS}{\begin{bmatrix} \V & \X \end{bmatrix}}$ \; \\
\Return $(\tilde{\X}, \tilde{\y})$
\end{algorithm}
\begin{algorithm}
\caption{$\mathcal{B}(\tilde{\bbeta}_1, \tilde{\bbeta}_2)$, where $(\tilde{\bbeta}_1, \tilde{\bbeta}_2)$ are both in $\{0, 1, -1 \}^{p'}$}\label{alg:B}
${\begin{bmatrix} \bar{\mathbbm{1}}_1 \\ \bbeta_1 \end{bmatrix}} \gets \mathtt{IRS}(\tilde{\bbeta}_1)$ \; \\
${\begin{bmatrix} \bar{\mathbbm{1}}_2 \\ \bbeta_2 \end{bmatrix}} \gets \mathtt{IRS}(\tilde{\bbeta}_2)$ \; \\
\Return $(\bbeta_1, \bbeta_2)$
\end{algorithm}
\begin{proposition} \label{claim:D}
For $i \in \mathbb{N}$, $(\X, \y)$ an instance of $\fP(p_i, n_i, k_i, \sigma_i)$, and $(\X', \y')$ an instance of $\fP'(p'_i, n'_i, k'_i)$, we have that $\mathcal{D}(\X, \y) \overset{d}{=} (\X', \y')$.
\end{proposition}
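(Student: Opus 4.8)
The plan is to display, together with the transformed data $(\tilde{\X},\tilde{\y})=\mathcal{D}(\X,\y)$, an explicit pair of augmented signals under which $(\tilde{\X},\tilde{\y})$ is \emph{exactly} an $\fP'(p'_i,n'_i,k'_i,\sigma'_i)$ instance, and then to match the laws of the four ingredients (design, signals, labels, noise) one at a time. Throughout fix the constant $c\in(0,1)$ from the proof, so that $p=cp'$, $k=ck'$, $\sigma=\sigma'$, and the fresh block $\V$ has $\tfrac{1-c}{c}p=(1-c)p'$ columns. For the design, note that $\X$ and $\V$ have i.i.d.\ $\mathcal{N}(0,1)$ entries and are independent, so the concatenation $[\V\ \X]$ is an $n\times p'$ matrix with i.i.d.\ $\mathcal{N}(0,1)$ entries; since the uniform column permutation $\mathtt{RS}$ is independent of the entries it preserves this law, giving $\tilde{\X}\overset{d}{=}\X'$. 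The permutation does not alter the law of $\tilde{\X}$; its purpose is to randomize which columns carry signal, so that the support of the implied signal becomes a uniform $k'$-subset of $[p']$.

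For the signals I would set $\tilde{\bbeta}_1:=\mathtt{RS}\!\begin{bmatrix}\mathbbm{1}\\ \bbeta_1\end{bmatrix}$ and $\tilde{\bbeta}_2:=\mathtt{RS}\!\begin{bmatrix}\mathbbm{1}\\ \bbeta_2\end{bmatrix}$ under the \emph{same} permutation as $\tilde{\X}$, where $\mathbbm{1}$ denotes the $(1-c)k'$-sparse all-ones vector supported on the active dummy coordinates. Using that a common permutation cancels in a matrix--vector product, $\mathtt{RS}(\mathbf{M})\,\mathtt{RS}(\mathbf{v})=\mathbf{M}\mathbf{v}$, the dummy block contributes $\tfrac1\sigma\V\mathbbm{1}\odot\z+\tfrac1\sigma\V\mathbbm{1}\odot(1-\z)=\tfrac1\sigma\V\mathbbm{1}$, which is precisely the term $\mathcal{D}$ adds to $\y$, while the original block reproduces the signal part of $\y$. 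Together with $\bbeta_1=-\bbeta_2$ this yields the identity $\tilde{\y}=\tfrac1\sigma\tilde{\X}\tilde{\bbeta}_1\odot\z+\tfrac1\sigma\tilde{\X}\tilde{\bbeta}_2\odot(1-\z)+\w$, so $(\tilde{\X},\tilde{\y})$ has the $\fP'$ functional form. Since $\mathcal{D}$ leaves $\z$ and $\w$ untouched, they retain the Bernoulli$(1/2)$ and $\mathcal{N}(0,1)$ laws demanded of $\z'$ and $\w'$, and $\sigma'=\sigma$ keeps the prefactor correct.

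The remaining and most delicate task is to show the joint law of $(\tilde{\bbeta}_1,\tilde{\bbeta}_2)$ equals the $\fP'$ prior $\mathcal{P}_{\|\bbeta\|_2}(\{-1,1\})$ conditioned on the prescribed overlap and signed overlap. Both vectors are $k'$-sparse with identical support ($ck'$ hard coordinates, where both are nonzero, plus $(1-c)k'$ dummy coordinates), so the overlap is $\xi=1$; because $\mathtt{RS}$ acts transitively on $k'$-subsets, this common support is uniform over $[p']$ and the hard/dummy partition becomes a uniform $ck'$-subset of it. On the hard coordinates $\tilde{\bbeta}_1$ is uniform in $\{-1,1\}$ with $\tilde{\bbeta}_2=-\tilde{\bbeta}_1$, producing signed overlap $\tau=1-2c$ with $|J|=ck'=\Theta(k)$ as required by $\psbmslr$; on the dummy coordinates both signals equal $+1$. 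The one genuine gap --- and the step I expect to be the crux --- is reconciling these deterministic $+1$ dummy signs with the uniformly random common signs of the $\fP'$ prior. I would close it via the column-sign symmetry of the Gaussian design: for any sign vector $\mathbf{s}$ on the dummy columns, multiplying those columns of $\tilde{\X}$ by $\mathbf{s}$ leaves the law of $\tilde{\X}$ invariant while converting the $+1$ dummy signs into $\mathbf{s}$ (since $(s_j\tilde{\X}_j)\cdot s_j=\tilde{\X}_j$), whence $(\tilde{\X},\tilde{\y})$ has one and the same law for every $\mathbf{s}$, hence also for a uniformly random $\mathbf{s}$. Finally, $\|\tilde{\bbeta}_1\|_2=\|\tilde{\bbeta}_2\|_2=\sqrt{k'}$ preserves the equal-norm constraint, and assembling the four matched ingredients gives $\mathcal{D}(\X,\y)\overset{d}{=}(\X',\y')$.
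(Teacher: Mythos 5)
You reproduce the paper's two structural steps correctly: $\tilde{\X} \overset{d}{=} \X'$ because $[\V \; \X]$ is $n \times p'$ i.i.d.\ Gaussian and an independent column permutation preserves that law, and the algebraic identity showing $\tilde{\y}$ has the $\mslr$ functional form with the implied signals obtained by appending the sparse ones-vector to $\bbeta_1, \bbeta_2$ and applying the same $\mathtt{RS}$. You are also right that the remaining issue --- matching the law of the implied signal pair, in particular the deterministic $+1$ signs on the dummy coordinates, to the marginally-uniform $\fP'$ prior --- is the crux; the paper's proof is silent on this point. But the column-sign-symmetry argument you give to close it is invalid. Flipping the dummy columns of $\tilde{\X}$ by a sign pattern $\mathbf{s}$ preserves the \emph{marginal} law of $\tilde{\X}$ but not the \emph{joint} law of $(\tilde{\X}, \tilde{\y})$: the response contains the term $\frac{1}{\sigma}\V\mathbbm{1}$, so every active dummy column is positively correlated with $\tilde{\y}$. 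Concretely, if $j$ is an active dummy column then $\E[\tilde{\X}_j^\top \tilde{\y}] = \E[\frac{1}{\sigma}\V_j^\top \V \mathbbm{1}] = n/\sigma$, which becomes $-n/\sigma$ after flipping that column while keeping $\tilde{\y}$ fixed. Hence the pair with flipped dummy columns does \emph{not} have the same law as $(\tilde{\X},\tilde{\y})$, and the step ``whence $(\tilde{\X},\tilde{\y})$ has one and the same law for every $\mathbf{s}$'' --- the step your whole reconciliation rests on --- is false.

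Moreover, no symmetry argument can patch this, because the identity you are trying to prove fails for the fixed all-ones dummy vector: averaging over $\mathtt{RS}$, $\bbeta_1$, $\z$, $\w$ gives $\E[\tilde{\X}^\top\tilde{\y}] = \frac{n}{\sigma}\frac{(1-c)k'}{p'}\,\mathbbm{1} \neq 0$, whereas any instance drawn with the marginally-uniform $\fP'$ prior satisfies $\E[(\X')^\top\y'] = \frac{n}{2\sigma}\E[\bbeta_1'+\bbeta_2'] = 0$ by sign symmetry of the signal prior; the two data laws genuinely differ. The correct repair is to inject the sign randomness into the construction itself rather than argue invariance after the fact: let the reduction draw a single uniformly random $(1-c)k'$-sparse vector $\bar{\mathbbm{1}}$ with i.i.d.\ $\pm 1$ nonzero entries (one copy, shared by both implied signals, since the dummy coordinates lie in the agreement set) and output $\tilde{\y} = \y + \frac{1}{\sigma}\V\bar{\mathbbm{1}}$. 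Then the implied pair has uniform support, uniform agreement set, and uniform signs on both blocks --- exactly the $\fP'$ prior with $\xi = 1$ and $\tau = 1-2c$ --- and it is independent of the i.i.d.\ Gaussian design $\mathtt{RS}[\V\;\X]$, so the distributional identity follows directly. (Equivalently, the reduction may multiply the dummy columns by fresh random signs and output the \emph{flipped} design; your argument in effect needs this flipped design to be the output, rather than claiming it is equal in joint law to the unflipped one.)
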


\begin{proof}
Let $(\tilde{\X}, \tilde{\y}) := \mathcal{D}(\X, \y)$. First note that $\tilde{\X} \overset{d}{=} \X'$ since ${\begin{bmatrix} \V & \X \end{bmatrix}}$ has dimensions $n \times (p + \frac{1-c}{c} p) = n' \times (cp' + (1-c)p') = n' \times p'$.
Next, note that we can decompose $\tilde{\y}$ as follows:
\begin{align*}
\tilde{\y} &= \y + \frac{1}{\sigma} \V \mathbbm{1} \\
&= \frac{1}{\sigma} \X \bbeta_1 \odot \z + \frac{1}{\sigma} \X \bbeta_2 \odot (1-\z) + \frac{1}{\sigma} \V \mathbbm{1} + w  \\
&= \frac{1}{\sigma} \X \bbeta_1 \odot \z + \frac{1}{\sigma} \X \bbeta_2 \odot (1-\z) + \frac{1}{\sigma} \V \mathbbm{1} \odot \z + \frac{1}{\sigma} \V \mathbbm{1} \odot (1-\z) + w \\
&= \frac{1}{\sigma} (\mathtt{RS} {\begin{bmatrix} \V & \X \end{bmatrix}}) \left(\mathtt{RS} {\begin{bmatrix} \bar{\mathbbm{1}}_1 \\ \bbeta_1 \end{bmatrix}}\right) \odot \z + \frac{1}{\sigma} \left(\mathtt{RS} {\begin{bmatrix} \V & \X \end{bmatrix}}\right) \left(\mathtt{RS} {\begin{bmatrix} \bar{\mathbbm{1}}_2 \\ \bbeta_2 \end{bmatrix}}\right) \odot (1-\z) + w
\end{align*}
and hence $\tilde{\y}$ is the output of a $\mslr$ model with size $cp' + (1-c)p' = p'$ signals $\left(\mathtt{RS} {\begin{bmatrix} \bar{\mathbbm{1}}_1 \\ \bbeta_1 \end{bmatrix}}, \mathtt{RS} {\begin{bmatrix} \bar{\mathbbm{1}}_2 \\ \bbeta_2 \end{bmatrix}}\right)$ containing $c k'$ non-zero opposing sign entries from $(\bbeta_1, \bbeta_2)$ and $(1-c)k'$ remaining non-zero entries from appending $\bar{\mathbbm{1}}$, for a total support of size $k'$. Additionally, these signals are linearly transformed by a design matrix that is i.i.d Gaussian and $n' \times p'$ as mentioned above. The first and second point together imply that $\tilde{\y}$ is the output of a $\mslr_{\xi, \tau}$ model and  $\mathcal{D}(\X, \y) = (\tilde{\X}, \tilde{\y}) \overset{d}{=} (\X', \y')$.
\end{proof}
Following Proposition \ref{claim:D}, all that is left to show is that $\P [\| \mathcal{B} \circ \mathcal{A'} \circ \mathcal{D}(\X, \y) - (\bbeta_1, \bbeta_2)\|_{\infty} > 0] \to 0$. Indeed, the following steps hold due to Proposition \ref{claim:D} and the definition of $\mathcal{B}$:
\begin{align*}
&\|\mathcal{A'}(\X', \y') - (\bbeta'_1, \bbeta'_2) \|_{\infty} \\
\overset{d}{=}& \left\|\mathcal{A'}(\mathtt{RS}({\begin{bmatrix} \V & \X \end{bmatrix}}), \y + \frac{1}{\sigma} \V \mathbbm{1}) - \left( \mathtt{RS} {\begin{bmatrix} \bar{\mathbbm{1}}_1 \\ \bbeta_1 \end{bmatrix}}, \mathtt{RS} {\begin{bmatrix} \bar{\mathbbm{1}}_2 \\ \bbeta_2 \end{bmatrix}} \right) \right\|_{\infty} \\
=& \left\|\mathcal{A'} \circ \mathcal{D}(\X, \y) - \left( \mathtt{RS} {\begin{bmatrix} \bar{\mathbbm{1}}_1 \\ \bbeta_1 \end{bmatrix}}, \mathtt{RS} {\begin{bmatrix} \bar{\mathbbm{1}}_2 \\ \bbeta_2 \end{bmatrix}} \right) \right\|_{\infty} \\
\geq &\left\|\mathcal{B} \circ \mathcal{A'} \circ \mathcal{D}(\X, \y) - \mathcal{B}\left( \mathtt{RS} {\begin{bmatrix} \bar{\mathbbm{1}}_1 \\ \bbeta_1 \end{bmatrix}}, \mathtt{RS} {\begin{bmatrix} \bar{\mathbbm{1}}_2 \\ \bbeta_2 \end{bmatrix}} \right) \right\|_{\infty} \\
=&\left\|\mathcal{A}(\X, \y) - ( \bbeta_1 , \bbeta_2 ) \right\|_{\infty},
\end{align*}
and hence:
\begin{align*}
\P\left[ \|\mathcal{A'}(\X', \y') - (\bbeta'_1, \bbeta'_2) \|_{\infty} > 0\right] \to 0  \implies  \P\left[ \left\|\mathcal{A}(\X, \y) - ( \bbeta_1 , \bbeta_2 ) \right\|_{\infty}  > 0 \right] \to 0,
\end{align*}
where $\mathcal{A}$ runs in randomized polynomial time with respect to the input size $p$ since it is a composition of randomized polynomial time procedures with respect to $p$.
\end{proof}

\subsection{Reduction from $\sbmslrd$ to $\spr$ } \label{sec:spr-hardness}
In this section, we present a polynomial-time reduction from strong detection in noiseless $\sbmslrd$ to exact support recovery in $\spr$ , for signals with non-zero entries in $\{-1, 0, 1\}^p$.
More specifically, we consider the following \textit{symmetric} $\slr$ problem.
\begin{definition}[$\sslr$] \label{def:S-SLR}
For $\X \in \mathbb{R}^{n \times p}$, $\w \in \mathbb{R}^{n}$, and $\z \in \mathbb{R}^n$, consider the model:
\[ \y = g\left( \X \bbeta \right) + \w,  \]
where $\odot$ denotes element-wise product between vectors, $X_{i, j} \distas{\text{i.i.d.}} \mathcal{N}(0, 1)$, $w_i \distas{\text{i.i.d.}} \mathcal{N}(0, \sigma^2)$, $\bbeta \in \reals^{p}$ each $k$-sparse, and $g : \reals^{n} \to \reals^{n}$ a separable entry-wise even function ($g_i(x) = g_i(-x)$ for $i \in [n]$ and $x \in \reals$). Given $(\X, \y)$ the objective is to estimate $\bbeta$.
\end{definition}
We note that setting $g_i(x) = |x|$ and $g_i(x) = x^2$ yields two standard formulations of the phase retrieval problem with sparse signals ($\spr$) \citep{candes_phase_2015, liu_towards_2021}. We seek to show hardness within the parameter scaling regime of Theorem \ref{thm:sbmslrd-hardness-cor-snr} in the noiseless case ($ \frac{\snr + 1}{\snr } = 1$) , and hence we prove a reduction within the constraint set $\mathcal{C}_3$:
\begin{align}
& \mathcal{C}_3 = \left\{(p_i, n_i, k_i, \sigma_i)_{i=1}^{\infty} \subset \mathbb{N}^4: \exists C \in \mathbb{R}_{+} \, \text{ s.t. }
\, p_i = \omega_{i}(1), k_i = o(\sqrt{p_i}),  ,  \right. \nonumber \\
& \hspace{4em} \left. n_i = \omega(\max\{k_i, \log{p_i}\}) \, n_i = o\left( k_i^2 \cdot \frac{1}{\log{p_i}} \right) \right\}. \label{eq:C_4}
\end{align}
We note that $\mathcal{C}_3$ is just $\mathcal{C}_1$ in \eqref{eq:C_3}, but with the stricter constraint $n_i = o\left( k_i^2 \cdot \frac{1}{\log{p_i}} \right)$. 
We begin with a  lemma, which we use to initiate our reduction in Theorem \ref{thm:spr-hardness}. 
\begin{lemma} \label{lemma:gslr_red}
Fix signal priors to be $\mathcal{P}_{\|\beta\|_2}(\{-1, 1\})$. For any sequence of parameters $\{(p'_i, n'_i, k'_i, \sigma'_i) \}_{i = 1}^{\infty} $ in $\mathcal{C}_3$ for $\sslr$ with solution $\bbeta'$ and problem instances $(\X', \y')$, there exists a sequence of parameters $\{(p_i, n_i, k_i, \sigma_i = 0) \}_{i=1}^{\infty}$ in  $\mathcal{C}_{3}$ for $\sbmslr$ (noiseless) with solution $\bbeta_1, \bbeta_2$ and problem instances $(\X, \y)$ such that, for any randomized polynomial time algorithm $\mathcal{A'}$ for $\sslr$  producing $\hat{\bbeta'}$ with
\[ \P \left[ \|\hat{\bbeta'} - \bbeta \|_{\infty} > 0 \right] \to 0, \]
we can construct a second randomized polynomial time algorithm $\mathcal{A}$ for $\sbmslr$ outputting $(\hat{\bbeta_1}, \hat{\bbeta}_2)$ such that
\[ \P \left[ \|(\hat{\bbeta_1}, \hat{\bbeta}_2) - (\bbeta_1, \bbeta_2) \|_{\infty} > 0 \right] \to 0.\]
\end{lemma}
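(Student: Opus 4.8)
The plan is to mirror the average-case reduction strategy of Lemma~\ref{lemma:mslr_c_red}, building $\mathcal{A} = \mathcal{B} \circ \mathcal{A'} \circ \mathcal{D}$ from a pre-processing map $\mathcal{D}$ that turns a noiseless $\sbmslr$ instance into an $\sslr$ instance and a post-processing map $\mathcal{B}$ that reads off the two $\sbmslr$ signals from the single $\sslr$ signal returned by $\mathcal{A'}$. For the parameter matching I would simply take $(p_i, n_i, k_i, \sigma_i) = (p'_i, n'_i, k'_i, 0)$; since $\mathcal{C}_3$ is closed under this identity on $(p,n,k)$ and its defining constraint $n_i = o(k_i^2/\log p_i)$ does not involve $\sigma$, the $\sbmslr$ sequence lies in $\mathcal{C}_3$ whenever the $\sslr$ sequence does. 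Given an $\sbmslr$ instance $(\X, \y)$ with $\bbeta_1 = \bbeta$, $\bbeta_2 = -\bbeta$ and $\phi = 1/2$, the map $\mathcal{D}$ applies $g$ entrywise, outputting $(\X, g(\y))$ (and, if one targets a noisy $\sslr$ with $\sigma_i' > 0$, adds fresh independent $\mathcal{N}(0,\sigma_i'^2)$ coordinates; in the noiseless regime of interest none is added). The map $\mathcal{B}$ takes $\hat{\bbeta'}$ and returns $(\hat{\bbeta}_1, \hat{\bbeta}_2) = (\hat{\bbeta'}, -\hat{\bbeta'})$. Both maps run in randomized polynomial time in $p$, so the composition does too.

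The crux is a distributional identity $\mathcal{D}(\X, \y) \overset{d}{=} (\X', \y')$, the analogue of Proposition~\ref{claim:D}. Under the $\sbmslr$ model with $\bbeta_2 = -\bbeta_1 = -\bbeta$ and $\phi = 1/2$, each observation satisfies
\[ y_i = (\X\bbeta)_i z_i - (\X\bbeta)_i (1 - z_i) = (2 z_i - 1)(\X\bbeta)_i, \]
and since $z_i \distas{\text{i.i.d.}} \text{Bernoulli}(1/2)$, the signs $s_i := 2 z_i - 1$ are i.i.d. uniform on $\{-1,+1\}$ and independent of $(\X, \bbeta)$. Because $g$ is separable and entry-wise even, $g_i(y_i) = g_i(s_i (\X\bbeta)_i) = g_i((\X\bbeta)_i)$, so the latent labels are washed out and $g(\y) = g(\X\bbeta)$ deterministically given $(\X, \bbeta)$. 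As $\X$ is unchanged i.i.d. Gaussian and $\bbeta \distas{} \mathcal{P}_{\|\bbeta\|_2}(\{-1, 1\})$ matches the $\sslr$ prior, $\mathcal{D}(\X, \y)$ is exactly a noiseless $\sslr$ instance, giving $\mathcal{D}(\X, \y) \overset{d}{=} (\X', \y')$.

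Finally I would transfer the recovery guarantee. By the identity above, $\hat{\bbeta'} = \mathcal{A'}(\mathcal{D}(\X, \y)) \overset{d}{=} \mathcal{A'}(\X', \y')$, so $\P[\|\hat{\bbeta'} - \bbeta\|_{\infty} > 0] \to 0$. Since all signals lie in $\{-1,0,1\}^p$, the event $\|\hat{\bbeta'} - \bbeta\|_{\infty} = 0$ is equivalent to $\hat{\bbeta'} = \bbeta$ exactly; on this event $\mathcal{B}$ returns $(\hat{\bbeta}_1, \hat{\bbeta}_2) = (\bbeta, -\bbeta) = (\bbeta_1, \bbeta_2)$, so the first term of the relabeling-invariant $\|\cdot\|_{\infty}$ distance vanishes and $\P[\|(\hat{\bbeta}_1, \hat{\bbeta}_2) - (\bbeta_1, \bbeta_2)\|_{\infty} > 0] \to 0$, as required. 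The only genuinely delicate step is verifying the distributional identity: one must check that evenness of $g$ together with $\phi = 1/2$ exactly annihilates the dependence on the labels $\z$, which is precisely the mechanism collapsing the noiseless symmetric mixture onto the symmetric single-index model $\sslr$. Everything else is parameter bookkeeping and closure of the composition under polynomial runtime.
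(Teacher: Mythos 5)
Your proposal is correct and is essentially the paper's own proof: the same identity parameter map $(p_i, n_i, k_i, \sigma_i) = (p'_i, n'_i, k'_i, 0)$, the same key observation that evenness of $g$ collapses $g(\X\bbeta \odot (2\z - 1))$ into $g(\X\bbeta)$ so the latent labels vanish (the paper's preprocessing additionally adds fresh $\mathcal{N}(0,1)$ noise $\w$ to produce a noisy $\sslr$ instance, which you cover as the $\sigma'_i > 0$ option), and the same post-processing $(\hat{\bbeta}_1, \hat{\bbeta}_2) = (\hat{\bbeta}', -\hat{\bbeta}')$ with the relabeling-invariant error transfer. One cosmetic remark: the label wash-out needs only evenness of $g$, not $\phi = 1/2$, so your closing sentence slightly overstates what that step requires.
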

\begin{proof}
We drop the subscript $i$ notation for convenience. For brevity, let $\fP' := \sslr$, and $\fP := \sbmslr$.  Given a sequence of parameters $\{p'_i, n'_i, k'_i, \sigma'_i \}_{i=1}^{\infty}$ in $\mathcal{C}_3$ for $\fP'$, consider the sequence of parameters $\{(p_i, n_i, k_i, \sigma_i) \}_{i=1}^{\infty} = \{(p'_i, n'_i, k'_i, 0)\}_{i=1}^{\infty}$ for $\fP$. Notice that $\{(p_i, n_i, k_i, \sigma_i) \}_{i=1}^{\infty}$ in $\mathcal{C}_3$. 
Let $(\X, \y)$ be a problem instance of $\fP$ with parameters $\{(p_i, n_i, k_i, \sigma_i) \}_{i=1}^{\infty}$, and recall that for noiseless $\sbmslr$ the observation is of the form $(\X, \y = \X \bbeta \odot (2\z - 1))$. We apply a preprocessing step to construct $\tilde{\y} := g(\y) + \w \in \reals^n$ were $\w \distas{\text{i.i.d.}} \mathcal{N}(0, 1)$, done in randomized polynomial time. Notice that $(\X, \tilde{\y})$ is now an instance of $\fP'$, by virtue of $g$ being symmetric with respect to sign flips. 
We then run algorithm $\mathcal{A}'$ on $(\X, \tilde{\y})$ to obtain $\hat{\bbeta}'$ with $\P \left[ \|\hat{\bbeta'} - \bbeta \|_{\infty} > 0 \right] \to 0$ as per the problem statement. Without loss of generality setting $(\hat{\bbeta}_1, \hat{\bbeta}_2) = (\hat{\bbeta}, -\hat{\bbeta})$, we have an algorithm $\mathcal{A}$ which yields $\P \left[ \|(\hat{\bbeta_1}, \hat{\bbeta}_2) - (\bbeta_1, \bbeta_2) \|_{\infty} > 0 \right] \to 0$. 
\end{proof}
\begin{theorem}[Reduction from $\sbmslrd$ to exact recovery in $\spr$ ] \label{thm:spr-hardness}
Consider the setting of $\spr$  with joint signal prior $\mathcal{P}_{\|\beta\|_2}(\{-1, 1\})$. Any randomized polynomial-time algorithm $\mathcal{A}$ solving $\spr$  in parameter regime $\mathcal{C}_2 \cap \mathcal{C}_3$ and $\snr = \omega(1)$ would contradict Theorem \ref{thm:sbmslrd-hardness-cor-snr}.
\end{theorem}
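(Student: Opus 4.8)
I would prove Theorem \ref{thm:spr-hardness} as a composition of the two reductions already in hand, transporting the low-degree hardness of $\sbmslrd$ (Theorem \ref{thm:sbmslrd-hardness-cor-snr}) through noiseless $\sbmslr$ and then out to $\spr$. The cheapest first observation is that $\spr$, i.e. $\y = |\X\bbeta| + \w$ (equivalently $g_i(x) = |x|$, or $g_i(x)=x^2$, in Definition \ref{def:S-SLR}), is literally an instance of $\sslr$, since $|\cdot|$ is separable and even. Thus any randomized polynomial-time algorithm $\mathcal{A}$ performing exact support recovery in $\spr$ is, on the same parameter sequences, an exact-recovery algorithm for $\sslr$.

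\textbf{The reduction chain.} Assume for contradiction that such an $\mathcal{A}$ succeeds on a parameter sequence in $\mathcal{C}_2 \cap \mathcal{C}_3$ with $\snr = \omega(1)$, so that $\P[\|\hat{\bbeta}' - \bbeta\|_\infty > 0] \to 0$. Applying Lemma \ref{lemma:gslr_red} with these $\sslr$ parameters (which lie in $\mathcal{C}_3$), I obtain the companion sequence $(p_i,n_i,k_i,0)$ in $\mathcal{C}_3$ for the noiseless $\sbmslr$ problem, together with a randomized polynomial-time algorithm $\mathcal{A}_1$ — the composition of the preprocessing $\y \mapsto g(\y)+\w$, the given $\mathcal{A}$, and the sign-symmetric postprocessing $\hat{\bbeta} \mapsto (\hat{\bbeta}, -\hat{\bbeta})$ — satisfying $\P[\|(\hat{\bbeta}_1,\hat{\bbeta}_2)-(\bbeta_1,\bbeta_2)\|_\infty > 0] \to 0$. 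I then feed $\mathcal{A}_1$ into Lemma \ref{lemma:mslrd-to-mslr}. Because the noiseless regime has $\snr = \infty = \omega(1)$, that lemma upgrades the exact-recovery guarantee for $\sbmslr$ into a polynomial-time detector $\mathcal{A}_2$ for $\sbmslrd$ with vanishing Type~I$+$II error on the corresponding parameters in $\mathcal{C}_2$.

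\textbf{Closing the loop.} The detector $\mathcal{A}_2$ operates on a sequence in $\mathcal{C}_2 \cap \mathcal{C}_3$. In the noiseless limit the hardness window of Theorem \ref{thm:sbmslrd-hardness-cor-snr}, namely $n = o\!\big(\tfrac{k^2(\snr+1)^2}{\snr^2}\,\tfrac{1}{\log p}\big)$, collapses to $n = o(k^2/\log p)$ — precisely the binding constraint defining $\mathcal{C}_3$ — while $n=\omega(\max\{k,\log p\})$ and $k=o(\sqrt p)$ are shared by $\mathcal{C}_2$ and $\mathcal{C}_3$. Hence $\mathcal{C}_2 \cap \mathcal{C}_3$ sits inside the low-degree-hard regime, and the existence of the polynomial-time $\mathcal{A}_2$ contradicts Theorem \ref{thm:sbmslrd-hardness-cor-snr}, completing the argument.

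\textbf{Main obstacle.} The algebra of the composition is routine; the delicate point is reconciling the genuinely noiseless output $\sigma=0$ of Lemma \ref{lemma:gslr_red} with Lemma \ref{lemma:mslrd-to-mslr}, whose detector and its analysis — through the normalized residual $\psi$ of \eqref{def:psi2}, Lemma \ref{lem:psi2}, and the chi-square tail bound controlling the Type~I error — carry explicit $1/\sigma$ factors. I would resolve this by checking that every $\sigma$-dependent quantity enters only via the ratio $\snr = \|\bbeta\|_2^2/\sigma^2 = k/\sigma^2$, so that the relevant separation is governed by $\snr$ and in fact \emph{strengthens} as $\snr\to\infty$: the null residual scales like $\sqrt{1+\snr}$ and diverges, dominating the fixed threshold $\sqrt 5$, whereas the planted residual stays $\approx 1$. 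Thus the noiseless case is the most favourable one for Lemma \ref{lemma:mslrd-to-mslr}, and the only care needed is to read its statement in the limit $\snr=\omega(1)$ rather than literally at $\sigma=0$.
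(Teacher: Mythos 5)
Your proposal is correct and follows essentially the same route as the paper: the paper proves Theorem \ref{thm:spr-hardness} by exactly this composition — Lemma \ref{lemma:gslr_red} (an $\spr$/$\sslr$ exact-recovery algorithm yields one for noiseless $\sbmslr$, via $g(x)=|x|$ or $g(x)=x^2$) chained with Lemma \ref{lemma:mslrd-to-mslr} (an $\sbmslr$ recovery algorithm yields an $\sbmslrd$ detector under $\snr=\omega(1)$), followed by the observation that $\mathcal{C}_2\cap\mathcal{C}_3$ is contained in the hard regime of Theorem \ref{thm:sbmslrd-hardness-cor-snr}. Your final paragraph on reconciling the $\sigma=0$ output of Lemma \ref{lemma:gslr_red} with the $1/\sigma$ factors in Lemma \ref{lemma:mslrd-to-mslr} is a careful reading of a point the paper dispatches with the single remark that $\snr=\omega(1)$ is maintained throughout, so it is a refinement of, not a departure from, the paper's argument.
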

\begin{proof} 
We first reduce $\sbmslrd$ to $\sbmslr$ within the constraint set $\mathcal{C}_{2}$ \eqref{eq:C_2IT}, using Lemma \ref{lemma:mslrd-to-mslr}. We then reduce noiseless $\sbmslr$ to $\spr$  using Lemma \ref{lemma:gslr_red} within the constraint set $\mathcal{C}_{3}$ \eqref{eq:C_4} by choosing $g(x) = |x|$ or $g(x) = x^2$ depending on the precise definition of $\spr$ . Throughout, we have let $\snr = \omega(1)$ to satisfy Lemma \ref{lemma:mslrd-to-mslr}. Suppose there exists a randomized polynomial-time algorithm $\mathcal{A}$ solving exact recovery in $\spr$  with signals with non-zero entries in $\{-1, 0, 1\}^p$, i.e. $\P \left[ \|\hat{\bbeta'} - \bbeta \|_{\infty} > 0 \right] \to 0$. Then by the aforementioned chain of reductions we would have a randomized polynomial time algorithm $\mathcal{A}'$ solving strong detection in $\sbmslrd$  with signals with non-zero entries in $\{-1, 0, 1\}^p$ in the scaling regime $\mathcal{C}_{2} \cap \mathcal{C}_{3}$, which is included in the parameter regime stated in Theorem \ref{thm:sbmslrd-hardness-cor-snr} and would hence contradict Theorem \ref{thm:sbmslrd-hardness-cor-snr}.
\end{proof}
For completeness, we also include a reduction from $\sbmslrd$ to a detection variant of $\spr$  in Theorem \ref{thm:sprd_red}. 
\begin{definition}[Detection Variant $\sprd$ ] \label{def:sprd}
For $\X \in \mathbb{R}^{n \times p}$, $\sigma > 0$, and $\w^{(1)}, \w^{(2)} \in \mathbb{R}^{n}$, consider the following hypothesis testing problem:
\begin{align*}
  &  {\P(\X) \otimes \P(\y)}: \begin{bmatrix}
        \X \\
        \y
    \end{bmatrix} = \begin{bmatrix}
          \X \\
          \sqrt{\frac{\|\bbeta\|^2_2}{\sigma^2}} |\w_1| + \w_2 \\
         \end{bmatrix} \\
  &  {\P(\X, \y)}: \begin{bmatrix}
        \X \\
        \y
    \end{bmatrix} = \begin{bmatrix}
          \X \\
          \frac{1}{\sigma} |\X \bbeta| + \w
         \end{bmatrix}
\end{align*}
where $(\bbeta_1, \bbeta_2) \sim \mathcal{P}_{\|\bbeta\|_2}(\mathcal{D})$, and $X_{i, j} \distas{\text{i.i.d.}} \mathcal{N}(0, 1)$, $w_i \distas{\text{i.i.d.}} \mathcal{N}(0, 1)$, $z_i \distas{\text{i.i.d.}} \text{Bernoulli}(\phi)$. The task is to construct a function $f$ which strongly distinguishes $\P(\X) \otimes \P(\y)$ from $\P(\X, \y)$.
\end{definition}
\begin{theorem} \label{thm:sprd_red}
Fix signal priors to be $\mathcal{P}_{\|\beta\|_2}(\{-1, 1\})$. For any sequence of parameters $\{(p'_i, n'_i, k'_i, \sigma'_i) \}_{i = 1}^{\infty} $ in $\mathcal{C}_3$ for $\sprd$ with signal $\bbeta'$ and problem instances $(\X', \y')$, there exists a sequence of parameters $\{(p_i, n_i, k_i, \sigma_i = 0) \}_{i=1}^{\infty} \subseteq \mathcal{C}_{3}$ for $\sbmslrd$ (noiseless) with signals $\bbeta_1, \bbeta_2$ and problem instances $(\X, \y)$ such that, for any randomized polynomial time algorithm $\mathcal{A'}$ solving strong detection in $\fP'$, we can construct a second randomized polynomial time algorithm $\mathcal{A}$ for solving strong detection in $\fP$. 
\end{theorem}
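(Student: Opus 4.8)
The plan is to mirror the recovery-level reduction of Lemma~\ref{lemma:gslr_red}, but now at the level of detection, by exhibiting a distribution-preserving preprocessing map that carries each hypothesis of $\sbmslrd$ onto the corresponding hypothesis of $\sprd$. Writing $\fP' := \sprd$ and $\fP := \sbmslrd$ and dropping the index $i$, I would first fix parameters: given $\{(p'_i, n'_i, k'_i, \sigma'_i)\} \subseteq \mathcal{C}_3$ for $\sprd$, take $\{(p_i, n_i, k_i, \sigma_i)\} = \{(p'_i, n'_i, k'_i, 0)\}$ for the noiseless $\sbmslrd$ problem, which lies in $\mathcal{C}_3$ since the defining constraints only involve $(p,n,k)$ and are unchanged. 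Since $(\bbeta_1,\bbeta_2) = (\bbeta,-\bbeta)$ with $\bbeta \distas{} \mathcal{P}_{\|\bbeta\|_2}(\{-1,1\})$, the $\sbmslr$ signal has the same law and norm $\sqrt{k}$ as the $\sprd$ signal $\bbeta'$, so the two problems share a signal prior.

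Next I would define the randomized preprocessing map $\mathcal{D}(\X,\y) := (\tilde{\X}, \tilde{\y})$ with $\tilde{\X} := \X$ and $\tilde{\y} := \frac{1}{\sigma'}|\y| + \w''$, where $\w'' \distas{\text{i.i.d.}} \mathcal{N}(0,1)$ is freshly sampled and $|\cdot|$ is applied entrywise. The crux is the distributional identity under each hypothesis. Under the planted hypothesis $\P(\X,\y)$ of noiseless $\sbmslrd$ we have $\y = \X\bbeta \odot (2\z - 1)$, so using $|2z_i - 1| = 1$ gives $|\y| = |\X\bbeta|$ entrywise, hence $\tilde{\y} = \frac{1}{\sigma'}|\tilde{\X}\bbeta| + \w''$, which is exactly the planted law of $\sprd$. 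Under the null hypothesis $\P(\X)\otimes\P(\y)$ the vector $\y$ has i.i.d.\ $\mathcal{N}(0,\|\bbeta\|_2^2)$ coordinates independent of $\X$, so $\tilde{\y} = \sqrt{\frac{\|\bbeta\|_2^2}{\sigma'^2}}\,|\w_1| + \w''$ for a standard Gaussian $\w_1$, matching the null law of $\sprd$ with $\w_2 = \w''$. In both cases $\tilde{\X} = \X$ remains i.i.d.\ $\mathcal{N}(0,1)$ and independent of the added noise, so $\mathcal{D}$ transports null-to-null and planted-to-planted in distribution, i.e.\ $\mathcal{D}(\X,\y) \overset{d}{=} (\X',\y')$ hypothesis-by-hypothesis.

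Finally I would set $\mathcal{A} := \mathcal{A}' \circ \mathcal{D}$: run the hypothesized $\sprd$ detector on $\mathcal{D}(\X,\y)$ and output its label. Because $\mathcal{D}$ is an exact coupling of the two pairs of hypotheses, the Type~I and Type~II errors of $\mathcal{A}$ on $\sbmslrd$ equal those of $\mathcal{A}'$ on $\sprd$, so strong detection transfers verbatim; and $\mathcal{D}$ runs in randomized polynomial time (an entrywise absolute value, a scalar rescaling, and $n$ fresh Gaussian draws), so $\mathcal{A}$ does too. The main point to get right is the null mapping: one must confirm that the noiseless $\sbmslrd$ null genuinely has i.i.d.\ $\mathcal{N}(0,\|\bbeta\|_2^2)$ coordinates (the common marginal of the planted model, as dictated by the matched-marginal convention in Definition~\ref{def:mslrd}) so that $\frac{1}{\sigma'}|\y|$ has the folded-Gaussian law demanded by the $\sprd$ null, and to verify that the $\frac{1}{\sigma'}$ rescaling aligns the signal-to-noise scalings of the two detection problems while keeping the parameter sequence inside $\mathcal{C}_3$.
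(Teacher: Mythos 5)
Your proposal is correct and follows essentially the same route as the paper's proof: map parameters $(p'_i,n'_i,k'_i,\sigma'_i)\mapsto(p'_i,n'_i,k'_i,0)$, preprocess the noiseless $\sbmslrd$ instance by taking entrywise absolute values and adding fresh $\mathcal{N}(0,1)$ noise, and compose the hypothesized $\sprd$ detector with this map. Your only deviation is the explicit $\tfrac{1}{\sigma'}$ rescaling and the hypothesis-by-hypothesis verification of the distributional match, which is in fact slightly more careful than the paper's version (which uses $\tilde{\y}=|\y|+\w$ without rescaling, implicitly aligning the noise level), so no gap exists.
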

\begin{proof}
We drop the subscript $i$ notation for convenience. For brevity, let $\fP' := \sprd$, and $\fP := \sbmslrd$. Given a sequence of parameters $\{p'_i, n'_i, k'_i, \sigma'_i \}_{i=1}^{\infty}$ in $\mathcal{C}_3$ for $\fP'$, consider the sequence of parameters $\{(p_i, n_i, k_i, \sigma_i) \}_{i=1}^{\infty} = \{(p'_i, n'_i, k'_i, 0)\}_{i=1}^{\infty}$ for $\fP$. Notice that $\{(p_i, n_i, k_i, \sigma_i) \}_{i=1}^{\infty}$ is in $\mathcal{C}_3$. 
Let $(\X, \y)$ be a problem instance of $\fP$ with parameters $\{(p_i, n_i, k_i, \sigma_i) \}_{i=1}^{\infty}$, and recall that in noiseless $\sbmslrd$ the observation in the alternative hypothesis is of the form $(\X, \y = \X \bbeta \odot (2\z - 1) )$. We apply a preprocessing step to construct $\tilde{\y} := |\y| + \w \in \reals^n$ were $\w \distas{\text{i.i.d.}} \mathcal{N}(0, 1)$, done in randomized polynomial time. Notice that under both hypotheses, $(\X, \tilde{\y})$ is an instance of $\fP'$, by virtue of $g$ being symmetric with respect to sign flips (even). We can then run algorithm $\mathcal{A}'$ on $(\X, \tilde{\y})$ to solve the hypothesis testing problem of $\fP$ on $(\X, \y)$.
\end{proof}
\section{Proofs for efficient algorithms}
\label{sec:Proofs-algorithms}
\subsection{$\mathtt{CORR}$ for support recovery in $\mslr$} \label{subsec:corr-mslr-proofs}
\begin{theorem} [$\mathtt{CORR}$ achieves joint support recovery in $\mslr$] \label{thm:CORR-mslr-r-general}
Consider the general setting of $\mslr$, $(\bbeta_1, \bbeta_2) \distas{} \mathcal{P}_{\|\bbeta\|_2}(\mathcal{D})$. Let $\epsilon \in (0, 1)$ be the one used in $\corr$. Then provided 
\[n \geq \frac{32 (1 + \epsilon) }{\min\{\phi^2 \bbeta^2_{\texttt{min}}, (1-\phi)^2 \bbeta^2_{\texttt{min}}, \langle \bbeta \rangle^2_{\texttt{min}} \}} \frac{k (\snr + 1)}{\snr} \log{2p}, \] we have that $\mathtt{CORR}$ outputs the exact joint support of signals $\bbeta_1$ and $\bbeta_2$ with probability at least $1 - c_1(\frac{k}{p} + ke^{-c_2 n} + \frac{k}{n} + \frac{1}{p^{c_2}})$ for constants $c_1, c_2 > 0$.
\end{theorem}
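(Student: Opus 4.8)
The plan is to control the test statistic $u_j := \langle \X_j, \y \rangle / \|\y\|_2$ for every coordinate $j \in [p]$ and to show that, on a single high-probability event, $|u_j| \geq \tau$ holds simultaneously for all $j \in \mathcal{S}_1 \cup \mathcal{S}_2$ while $|u_j| < \tau$ for all $j \notin \mathcal{S}_1 \cup \mathcal{S}_2$, where $\tau = \sqrt{2(1+\epsilon/2)\log 2p}$. I would split the analysis into three cases according to whether $j$ lies outside the joint support, in $\mathcal{S}_1 \cap \mathcal{S}_2$, or in the symmetric difference $\mathcal{S}_1 \Delta \mathcal{S}_2$. The structural input is Lemma \ref{lemma:cond-mslr-general}, which I would invoke to conclude that, conditioned on $(\y, \z, \bbeta_1, \bbeta_2)$, each $u_j$ is Gaussian with conditional mean given by \eqref{eq:u_mean} and conditional variance at most $1$; for a coordinate outside the support the conditional mean vanishes and $u_j \sim \mathcal{N}(0,1)$ by independence of $\X_j$ and $\y$.

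For the null coordinates $j \in (\mathcal{S}_1 \cup \mathcal{S}_2)^\complement$, I would condition on $\y$ and apply the Gaussian tail bound $\P[|u_j| \geq \tau \mid \y] \leq 2e^{-\tau^2/2} = 2(2p)^{-(1+\epsilon/2)}$, then union bound over the at most $p$ such coordinates. This controls the false-positive probability at order $p^{-c_2}$ and is essentially the content of Lemma \ref{lem:high-prob-events}, supplying the $1/p^{c_2}$ term.

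For a support coordinate the task is to force the conditional mean in \eqref{eq:u_mean} above $\tau$ by a margin large enough to absorb the (conditional-variance $\leq 1$) Gaussian fluctuation. I would first establish the simultaneous concentration of the three correlated quadratics $\|\y_{\{\z=1\}}\|_2^2 \approx \phi n (\|\bbeta\|_2^2 + \sigma^2)$, $\|\y_{\{\z=0\}}\|_2^2 \approx (1-\phi) n (\|\bbeta\|_2^2 + \sigma^2)$, and $\|\y\|_2^2 \approx n(\|\bbeta\|_2^2 + \sigma^2)$, using a binomial concentration bound for $|\{i : z_i = 1\}|$ together with chi-square tail bounds for the conditional sums of squares (each $y_i$ is marginally $\mathcal{N}(0,\|\bbeta\|_2^2 + \sigma^2)$ under the equal-norm prior). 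Substituting these into \eqref{eq:u_mean} pins the conditional mean to $\sqrt{n/(\|\bbeta\|_2^2 + \sigma^2)}\,(\phi \bbeta_{1,j} + (1-\phi)\bbeta_{2,j})$, whose magnitude is at least $\sqrt{n/(\|\bbeta\|_2^2 + \sigma^2)}\,\langle \bbeta \rangle_{\texttt{min}}$ on $\mathcal{S}_1 \cap \mathcal{S}_2$, and at least $\sqrt{n/(\|\bbeta\|_2^2+\sigma^2)}\,\phi\,\bbeta_{\texttt{min}}$ (resp. with $(1-\phi)$) on $\mathcal{S}_1 \setminus \mathcal{S}_2$ (resp. $\mathcal{S}_2 \setminus \mathcal{S}_1$), obtained by setting the absent coordinate to zero in \eqref{eq:u_mean}. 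The stated requirement $n \gtrsim \frac{k(\snr+1)}{\snr}\log 2p$ divided by the relevant squared minimum is exactly what makes this mean exceed $\tau$ by a constant multiplicative margin, so the conditional Gaussian tail bound makes each false negative polynomially unlikely in $p$.

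I expect the main obstacle to be the joint control of the ratio in \eqref{eq:u_mean}: the numerator norms $\|\y_{\{\z=1\}}\|_2^2$, $\|\y_{\{\z=0\}}\|_2^2$ and the denominator $\|\y\|_2$ are all functions of the same $(\y, \z)$ and are correlated with the entry $\bbeta_{1,j}$ appearing as a coefficient, so they cannot be treated as independent. The cleanest route is to intersect the three concentration events into one high-probability event on which the ratio is pinned to its target up to a multiplicative $(1 \pm o(1))$ factor, to verify that the constant $32$ in the hypothesis leaves enough slack for the mean to clear $\tau$ even after this slippage, and only then to apply the conditional Gaussian tail bound while respecting the conditioning order $(\y, \z, \bbeta_1, \bbeta_2)$ from Lemma \ref{lemma:cond-mslr-general}. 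Finally, the various failure contributions — the Gaussian false-positive and false-negative tails, together with the binomial and chi-square concentration failures (the weaker ratio concentration yielding the $k/n$ term) — combine, after union bounds over the $O(k)$ support coordinates, into the stated $c_1\bigl(\frac{k}{p} + ke^{-c_2 n} + \frac{k}{n} + \frac{1}{p^{c_2}}\bigr)$ bound.
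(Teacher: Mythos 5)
Your proposal is correct, and its skeleton coincides with the paper's proof: the same three-way split of coordinates, the same use of Lemma \ref{lemma:cond-mslr-general} to get conditional Gaussianity of $u_j$ given $(\y,\z,\bbeta_1,\bbeta_2)$ with mean \eqref{eq:u_mean} and variance at most $1$, the same Gaussian-maximum bound for false positives, and the same strategy of intersecting concentration events before applying the conditional tail bound. Where you genuinely diverge is in the key concentration lemma for the conditional mean. The paper (Lemmas \ref{lemma:A_bound_general} and \ref{lemma:B_bound_general}) writes the numerator as the weighted Bernoulli sum $\sum_i z_i \, y_i^2/\|\y\|_2$ and concentrates it by Hoeffding's inequality over the $z_i$'s \emph{conditional on} $\y$ (Lemma \ref{lemma:lln-bernoulli-general}); the resulting sub-Gaussian parameter is $\|\y\|_4^4/\|\y\|_2^2$, whose control requires the Chebyshev-based $\ell_4$ bound of Lemma \ref{lemma:l4-conc}, and it is exactly this Chebyshev step, with its polynomial $39/(\delta^2 n)$ failure probability per coordinate, that produces the $k/n$ term in the theorem. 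You instead condition on $(\z,\bbeta_1,\bbeta_2)$ and treat $\|\y_{\{\z=1\}}\|_2^2$, $\|\y_{\{\z=0\}}\|_2^2$, $\|\y\|_2^2$ as scaled chi-squares with a binomially concentrated number of degrees of freedom. This is equally valid (the equal-norm prior makes the $y_i$ conditionally i.i.d.\ $\mathcal{N}(0,\|\bbeta\|_2^2+\sigma^2)$) and in fact yields exponentially small failure probabilities throughout, so you would prove a \emph{stronger} bound in which the $k/n$ term is superfluous; your attribution of that term to a ``weaker ratio concentration'' in your own argument is therefore slightly off --- it is an artifact of the paper's Chebyshev step, not something your route needs --- but since a stronger bound implies the stated one, this is a cosmetic point rather than a gap.
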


\begin{proof} 
Let $\delta > 0$ to be chosen later, and for two sets $A$ and $B$ denote the symmetric difference $A \Delta B := (A \setminus B) \cup (B \setminus A)$. Let $\tau := \sqrt{2 (1 + \epsilon/2) \log{2p}}$ and define the error event
\[\mathcal{E} := \cup_{j \in \mathcal{S}_1 \cup \mathcal{S}_2} \left\{\left|\frac{\langle \X_j, \y \rangle}{\|\y\|_2}\right| < \tau \right\} \cup \left\{\max_{q \in (\mathcal{S}_1 \cup \mathcal{S}_2)^\complement} \left|\frac{\langle \X_q, \y \rangle}{\|\y \|_2}\right| \geq \tau \right\}. \]
From here we partition the set of indices $j \in \mathcal{S}_1 \cup \mathcal{S}_2$ into two sets,
\[J_a := \{j \in [p]: j \in \mathcal{S}_1 \cap \mathcal{S}_2\},  \qquad J_b := \{j \in [p]: j \in \mathcal{S}_1 \Delta \mathcal{S}_2\}, \]
with respect to which we perform a union bound:
\begin{align}
& \mathbb{P}\left[\mathcal{E} \right] \leq \mathbb{P}\left[\cup_{j \in \mathcal{S}_1 \cup \mathcal{S}_2} \left\{\left|\frac{\langle \X_j, \y \rangle}{\|\y\|_2}\right| < \tau \right\}\right] + \mathbb{P}\left[\max_{q \in (\mathcal{S}_1 \cup \mathcal{S}_2)^\complement} \left|\frac{\langle \X_q, \y \rangle}{\|\y \|_2}\right| \geq \tau \right] \nonumber \\
&\leq \mathbb{P}\left[\cup_{j_a \in J_a}  \left\{\left|\frac{\langle \X_{j_a}, \y \rangle}{\|\y\|_2}\right| < \tau \right\}\right] + \mathbb{P}\left[\cup_{j_b \in J_b} \left\{\left|\frac{\langle \X_{j_b}, \y \rangle}{\|\y\|_2}\right| < \tau \right\}\right] + \mathbb{P}\left[\max_{q \in (\mathcal{S}_1 \cup \mathcal{S}_2)^\complement} \left|\frac{\langle \X_q, \y \rangle}{\|\y \|_2}\right| \geq \tau \right] \nonumber \\
&\leq 2 k \mathbb{P}\left[\left|\frac{\langle \X_{j_a}, \y \rangle}{\|\y\|_2}\right| < \tau \right] + 2k\mathbb{P}\left[\left\{\left|\frac{\langle \X_{j_b}, \y \rangle}{\|\y\|_2}\right| < \tau \right\} \right] +  \mathbb{P}\left[\max_{q \in (\mathcal{S}_1 \cup \mathcal{S}_2)^\complement} \left|\frac{\langle \X_{q}, \y \rangle}{\|\y \|_2}\right| \geq \tau \right], \label{eq:firstthree}
\end{align}
where $j_a \in J_a$, $j_b \in J_b$, and $q \in \left(\mathcal{S}_1 \cup \mathcal{S}_2 \right)^\complement$. Analyzing the last term, we note that for $q \notin \mathcal{S}_1 \cap \mathcal{S}_2$ we have  $\frac{\langle \X_{q}, \y \rangle}{\|\y \|_2} \distas{\text{i.i.d.}} \mathcal{N}(0, 1)$.  We therefore have,
%
\begin{align*}
& \mathbb{P}\left[\max_{q \in (\mathcal{S}_1 \cup \mathcal{S}_2)^\complement} \left|\frac{\langle \X_{q}, \y \rangle}{\|\y \|_2}\right| \geq \sqrt{2 (1 + \epsilon/2) \log{2p}} \right] \\
&\stackrel{i)}{\leq} \mathbb{P}\left[\max_{q \in (\mathcal{S}_1 \cup \mathcal{S}_2)^\complement} \left|\frac{\langle \X_{q}, \y \rangle}{\|\y \|_2}\right| \geq \sqrt{2 \log{2p}} + \frac{\epsilon}{2 \sqrt{8}}\sqrt{\log{2p}}\right]
\stackrel{ii)}{\leq} (2p)^{- \frac{1}{16} \left( \frac{\epsilon}{2} \right)^2},
\end{align*}
where $i)$ follows from $\sqrt{2 (1 + \epsilon/2)} \geq \sqrt{2} + \frac{\epsilon}{2 \sqrt{8}}$, and $ii)$ from a tail bound on the maximum of standard Gaussians (see $\mathbb{P}\big[\Omega^{\complement}_2(t) \big]$ in Lemma \ref{lem:high-prob-events}).
Applying Lemmas \ref{lemma:A_bound_general} and \ref{lemma:B_bound_general} respectively (and choosing $\delta > 0$ small enough to satisfy these) to the first two terms in (\ref{eq:firstthree}) we obtain that, for some constants $c_1, c_2 > 0$,
\begin{align*}
\mathbb{P}\left[\mathcal{E} \right] &\leq c_1 \left(\frac{k}{p} + ke^{-c_2 n} + \frac{k}{n} + \frac{1}{p^{c_2}} \right).
\end{align*}
\end{proof}

\paragraph{The principal concentration lemmas.}
\begin{lemma} [General Bound for $j \in \mathcal{S}_1 \Delta \mathcal{S}_2$] \label{lemma:B_bound_general}
Consider the setting of $\mslr$. Let $j^* \in \mathcal{S}_1 \Delta \mathcal{S}_2$. Then if $n \geq \frac{32 (1 + \epsilon) }{\min\{\phi^2, (1-\phi)^2\} \bbeta^2_{\texttt{min}}} (\|\bbeta\|^2_2 + \sigma^2) \log{2p}$ for any $\epsilon \in (0,1)$ we obtain that
\[\mathbb{P}\left[\left|\frac{\langle \X_{j^*}, \y \rangle}{\|\y\|_2} \right| \leq \sqrt{2 (1 + \epsilon/2) \log{2p}} \right] \leq \frac{1}{p} + 4 e^{- \frac{\delta^2 n}{8}} + \frac{39}{\delta^2 n}. \]
\end{lemma}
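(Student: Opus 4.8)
The plan is to condition on $(\y, \z, \bbeta_1, \bbeta_2)$ and exploit that, conditionally, the statistic $u_{j^*} := \langle \X_{j^*}, \y\rangle/\|\y\|_2$ is Gaussian. Since $j^* \in \mathcal{S}_1\Delta\mathcal{S}_2$, I would fix without loss of generality $j^* \in \mathcal{S}_1\setminus\mathcal{S}_2$, so that $\bbeta_{2,j^*} = 0$ while $|\bbeta_{1,j^*}| \geq \bmin$; the complementary case $j^*\in\mathcal{S}_2\setminus\mathcal{S}_1$ is identical with the roles of $\phi$ and $1-\phi$ exchanged, which is precisely why the hypothesis carries the factor $\min\{\phi^2,(1-\phi)^2\}$. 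Applying Lemma \ref{lemma:cond-mslr-general} with $\bbeta_{2,j^*}=0$, the conditional law of $u_{j^*}$ is $\mathcal{N}(\mu, s^2)$ with $s^2\leq 1$ and mean
\[ \mu = \frac{\bbeta_{1,j^*}\,\|\y_{\{\z=1\}}\|_2^2}{\|\y\|_2\,(\|\bbeta\|_2^2+\sigma^2)}. \]

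With this in hand, the next step is to reduce the claim to a single Gaussian tail bound plus a concentration event for $\mu$. Writing $v^2 := \|\bbeta\|_2^2+\sigma^2$ and $\tau := \sqrt{2(1+\epsilon/2)\log 2p}$, I would introduce a ``good'' event $G$ on which $|\mu|$ exceeds a deterministic threshold $\mu_0 > \tau$, and split
\[ \P\big[|u_{j^*}|\leq\tau\big] \leq \P\big[|u_{j^*}|\leq\tau \,\big|\, G\big] + \P\big[G^\complement\big]. \]
On $G$, since the conditional variance is at most $1$ and the interval $[-\tau,\tau]$ lies on one side of $\mu$, the standard one-sided Gaussian tail gives $\P[|u_{j^*}|\leq\tau\mid G]\leq e^{-(\mu_0-\tau)^2/2}$, and I would choose $\mu_0$ so that $(\mu_0-\tau)^2\geq 2\log p$, yielding the $\tfrac1p$ term.

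Establishing $G$ --- i.e.\ that $|\mu|$ is large with high probability --- is the crux. The typical value of $\mu$ is $|\bbeta_{1,j^*}|\,\phi\sqrt{n/v^2}$, which under the sample-complexity hypothesis satisfies $\mu_{\mathrm{typ}}^2 \geq 32(1+\epsilon)\log 2p$, far above $\tau^2 = 2(1+\epsilon/2)\log 2p$. To make this rigorous I would lower-bound $|\mu| = |\bbeta_{1,j^*}|\,\|\y_{\{\z=1\}}\|_2^2/(\|\y\|_2 v^2)$ by lower-bounding the numerator and upper-bounding $\|\y\|_2$: both signals have equal norm, so each $y_i^2$ is distributed as $v^2\chi^2_1$ independently of $z_i$, giving $\E[\|\y_{\{\z=1\}}\|_2^2] = \phi n v^2$ and $\E[\|\y\|_2^2]=nv^2$. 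I would control $\|\y\|_2^2\leq(1+\delta)nv^2$ by a sub-exponential $\chi^2_n$ tail (Lemma \ref{lem:high-prob-events}), contributing the $e^{-\delta^2 n/8}$ terms, and $\|\y_{\{\z=1\}}\|_2^2 = \sum_i z_i y_i^2 \geq (1-\delta)\phi nv^2$ by a second-moment (Chebyshev) bound on this sum, which sidesteps the coupling between the $\chi^2$ and the random count $|\{i:z_i=1\}|$ and contributes the polynomial $\tfrac{39}{\delta^2 n}$ term. On the intersection one obtains $\mu_0 = \tfrac{\bmin(\phi-\delta)(1-\delta)}{\sqrt{1+\delta}}\sqrt{n/v^2}$, and the factor $32$ (versus the bare $2$ of the typical-value heuristic) is exactly the slack needed to absorb all the $\delta$-losses and keep $(\mu_0-\tau)^2\geq 2\log p$.

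I expect the main obstacle to be the concentration of the ratio $\mu$: the numerator $\|\y_{\{\z=1\}}\|_2^2$ and the denominator $\|\y\|_2$ are built from overlapping coordinates, so one cannot treat them as independent, and producing exactly the stated mixture of an exponential tail (for $\|\y\|_2^2$) and a polynomial second-moment bound (for $\sum_i z_i y_i^2$) requires choosing the right tool for each factor rather than a single Hoeffding-type estimate. The remaining steps --- the conditional Gaussian tail and the arithmetic verifying that the factor $32$ suffices --- are routine.
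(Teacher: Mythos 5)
Your proposal is correct and shares the paper's overall skeleton: fix $j^* \in \mathcal{S}_1 \setminus \mathcal{S}_2$, condition on $(\y, \z, \bbeta_1, \bbeta_2)$, use Lemma \ref{lemma:cond-mslr-general} to get a conditional Gaussian with mean $\bbeta_{1,j^*}\|\y_{\{\z=1\}}\|_2^2 / (\|\y\|_2(\|\bbeta\|_2^2+\sigma^2))$ and variance at most $1$, and then combine a one-sided Gaussian tail with high-probability events ensuring the conditional mean clears the threshold. Where you genuinely diverge is in how the numerator $\sum_i z_i y_i^2$ is concentrated. The paper handles it via Hoeffding over $\z$ conditionally on $\y$ (event $\Omega_4$ of Lemma \ref{lem:high-prob-events}, proved in Lemma \ref{lemma:lln-bernoulli-general}), whose sub-Gaussian parameter involves $\|\y\|_4^4/\|\y\|_2^2$; this forces a separate $\ell_4$-norm Chebyshev bound (event $\Omega_5$, Lemma \ref{lemma:l4-conc}), which is the source of the $39/(\delta^2 n)$ term, while the Hoeffding step contributes an extra $1/(2p)$. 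You instead apply a single Chebyshev bound directly to the i.i.d.\ sum $\sum_i z_i y_i^2$, exploiting the independence of $y_i$ and $z_i$ (Lemma \ref{lem:y-z-indep}, valid precisely because the two signals share the same norm). This is simpler --- it eliminates both the conditional Hoeffding argument and the $\ell_4$ lemma --- and it even yields a slightly better exponential term ($2e^{-\delta^2 n/8}$ rather than $4e^{-\delta^2 n/8}$, since the $1/p$ term then comes entirely from the Gaussian tail). The only cost is cosmetic: your Chebyshev variance is $\mathrm{Var}(z_1 y_1^2) = (3\phi - \phi^2)v^4$, so the polynomial term becomes roughly $3/(\phi\,\delta^2 n)$ rather than the $\phi$-free constant $39$; for small $\phi$ this does not literally reproduce the stated constant, but since $\phi$ is a fixed model constant, $\delta$ is free, and the downstream use (Theorem \ref{thm:CORR-mslr-r-general}) absorbs everything into unspecified constants $c_1, c_2$, this discrepancy is immaterial. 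Your bookkeeping of the factor $32$ against the $\delta$-losses and the threshold $\sqrt{2(1+\epsilon/2)\log 2p}$ also checks out.
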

\begin{proof} 
Let $g_i \distas{\text{i.i.d.}} \mathcal{N}(0, 1)$ (independent from $\y, \z, \bbeta_1, \bbeta_2$) for $i \in [n]$ and $\delta > 0$ to be chosen later. Without loss of generality, we can assume $j^* \in \mathcal{S}_1 \setminus \mathcal{S}_2$. We begin by considering fixed $\bbeta_1$, $\bbeta_2$, $\y$ and $\z$ vectors, and hence the initial randomness of interest lies in the design matrix $\X$. Define
\begin{itemize}{}
	\item $\sigma^{(1)}_i := \frac{y_i}{\|\y\|_2} \sqrt{1 -\frac{\bbeta^2_{1,j^*}}{\|\bbeta\|^2_2+\sigma^2}}$,
	\item $\sigma^{(2)}_i := \frac{y_i}{\|\y\|_2}$,
	\item $\mu^{(1)}_i := \frac{y^2_i}{\|\y\|_2} \frac{\bbeta_{1, j^*}}{\|\bbeta\|^2_2 + \sigma^2}$,
	\item $\mu^{(2)}_i := 0$,
\end{itemize}
and notice that for $\tau \in \mathbb{R}$,
\begin{align*}
&\mathbb{P}\left[X_{j^*, i} \leq \tau | \bbeta_1, \bbeta_2, \y, \z \right] 
= \mathbb{P}\left[z_i \left(\frac{y_i \cdot \bbeta_{1, j^*}}{\|\bbeta\|^2_2 + \sigma^2} + \sqrt{1 -\frac{\bbeta^2_{1,j^*}}{\|\bbeta\|^2_2+\sigma^2}} g_i \right) + (1 - z_i) g_i \leq \tau \middle| \bbeta_1, \bbeta_2, y_i, z_i \right],
\end{align*}
which follows from Lemma \ref{lemma:cond-mslr-general}. We then have that
\[ \mathbb{P}\left[ X_{j^*, i} \frac{y_i}{\|\y\|_2} \leq \tau \middle| \y, \z\right] = \mathbb{P}\left[ z_i (\mu^{(1)}_i + \sigma^{(1)}_i g_i) + (1 - z_i)(\mu^{(2)}_i + \sigma^{(2)}_i g_i) \leq \tau \middle| \y, \z\right], \]
from which it follows that for $\tau > 0$,
\begin{align}
&\mathbb{P}\left[\left| \frac{\langle \X_{j^*}, \y\rangle}{\|\y\|_2} \right| \leq \tau \middle| \bbeta_1, \bbeta_2, \y, \z\right] \nonumber \\
&= \mathbb{P}\left[\left| \sum_{i=1}^n z_i (\mu^{(1)}_i + \sigma^{(1)}_i g_i) + (1 - z_i)(\mu^{(2)}_i + \sigma^{(2)}_i g_i) \right| \leq \tau \middle| \bbeta_1, \bbeta_2, \y, \z\right] \nonumber \\
&= \mathbb{P}\left[\left| \sum_{i=1}^n (z_i \mu^{(1)} + (1 - z_i)\mu^{(2)}) + (z_i \sigma^{(1)} + (1 - z_i) \sigma^{(2)}) g_i \right| \leq \tau \middle| \bbeta_1, \bbeta_2, \y, \z\right] \nonumber \\
&\overset{i)}{=} \mathbb{P}\left[\left| \sum_{i=1}^n (z_i \mu^{(1)} + (1 - z_i)\mu^{(2)}) + g_1 \left(\sum_{i=1}^n (z_i \sigma^{(1)}_i + (1 - z_i) \sigma^{(2)}_i)^2 \right)^{\frac{1}{2}} \right| \leq \tau \middle| \bbeta_1, \bbeta_2, \y, \z\right] \nonumber  \\
&= \mathbb{P}\left[\left\{ \sum_{i=1}^n (z_i \mu^{(1)} + (1 - z_i)\mu^{(2)}) + g_1 \left(\sum_{i=1}^n (z_i \sigma^{(1)}_i + (1 - z_i) \sigma^{(2)}_i)^2 \right)^{\frac{1}{2}} \leq \tau \right\} \right. \nonumber \\
& \left. \quad \cap \left\{-\tau \leq \sum_{i=1}^n (z_i \mu^{(1)} + (1 - z_i)\mu^{(2)}) + g_1 \left(\sum_{i=1}^n (z_i \sigma^{(1)}_i + (1 - z_i) \sigma^{(2)}_i)^2 \right)^{\frac{1}{2}} \right\} \middle| \bbeta_1, \bbeta_2, \y, \z\right] \nonumber  \\
&\overset{ii)}{\leq} \mathbb{P}\left[ g_1 \left(\sum_{i=1}^n (z_i \sigma^{(1)}_i + (1 - z_i) \sigma^{(2)}_i)^2 \right)^{\frac{1}{2}} \leq \tau - \left|\sum_{i=1}^n (z_i \mu^{(1)} + (1 - z_i)\mu^{(2)})\right| \middle| \bbeta_1, \bbeta_2, \y, \z\right] \nonumber  \\
&= \mathbb{P}\left[ g_1 \left(\sum_{i=1}^n (z_i \sigma^{(1)}_i + (1 - z_i) \sigma^{(2)}_i)^2 \right)^{\frac{1}{2}} \leq \tau - \left|\sum_{i=1}^n \frac{y^2_i / \|\y\|_2}{\|\bbeta\|^2_2 + \sigma^2} \cdot z_i \bbeta_{1, j^*} \right| \middle| \bbeta_1, \bbeta_2, \y, \z\right] \nonumber  \\
&=: \mathbb{P}\left[ A \middle| \bbeta_1, \bbeta_2, \y, \z \right],
\label{eq:PA_cond}
\end{align}
where $i)$ holds by the closure of Gaussian random variables under finite sum, and $ii)$ holds by symmetry of the Gaussian  $g_1$ and since $\mathbb{P}\left[B_1 \cap B_2 \right] \leq \min_{i \in \{1, 2\}} \mathbb{P}\left[B_i \right]$. 

Using the high probability events $\Omega_1(\delta), \Omega_4(\delta)$  defined in Lemma \ref{lem:high-prob-events}, we  have
\begin{align}
&\mathbb{P}\left[\left|\frac{\langle \X_{j^*}, \y \rangle}{\|\y\|_2} \right| \leq \tau \right] =\int_{\bbeta_1, \bbeta_2} \int_{y, z} \mathbb{P}\left[\left| \frac{\langle X_{j^*, i}, \y\rangle}{\|\y\|_2} \right| \leq \tau \middle| \bbeta_1, \bbeta_2, \y, \z\right] d\mathbb{P}[\bbeta_1, \bbeta_2, \y, z] \nonumber \\
&\leq \int_{\bbeta_1, \bbeta_2}\int_{\Omega_1(\delta) \cap \Omega_4(\delta)} \mathbb{P}\left[\left| \frac{\langle X_{j^*, i}, \y\rangle}{\|\y\|_2} \right| \leq \tau \middle| \y, \z\right] d\mathbb{P}[\bbeta_1, \bbeta_2, \y, z]  \nonumber \\ 
& \quad + \int_{\bbeta_1, \bbeta_2} \mathbb{P}\left[\left(\Omega_1(\delta) \cap \Omega_4(\delta) \right)^{\complement} \middle| \bbeta_1, \bbeta_2 \right] d\mathbb{P}[\bbeta_1, \bbeta_2]\nonumber \\
&= \int_{\bbeta_1, \bbeta_2}\int_{\Omega_1(\delta) \cap \Omega_4(\delta)} \mathbb{P}\left[A \middle| \bbeta_1, \bbeta_2, \y, \z\right] d\mathbb{P}[\bbeta_1, \bbeta_2, \y, z] + \mathbb{P}\left[\left(\Omega_1(\delta) \cap \Omega_4(\delta) \right)^{\complement} \right] \nonumber\\
&\leq \int_{\bbeta_1, \bbeta_2}\int_{\Omega_1(\delta) \cap \Omega_4(\delta)} \mathbb{P}\left[A \middle| \bbeta_1, \bbeta_2, \y, \z\right] d\mathbb{P}[\bbeta_1, \bbeta_2, \y, z] + \mathbb{P}\left[\Omega^{\complement}_1(\delta)\right] + \mathbb{P}\left[\Omega_4^{\complement}(\delta) \right]. \label{eq:corr-mslr-last2-general-2}
\end{align}
Now setting $\tau = \sqrt{2(1 + \epsilon/2)\log{2p}}$ and $n \geq \frac{32 (1 + \epsilon)}{\phi^2 \bbeta^2_{\texttt{min}}} (\|\bbeta\|^2_2 + \sigma^2) \log{p}$ we obtain that for $(y, z) \in \Omega_1(\delta) \cap \Omega_4(\delta)$:
\begin{align}
&\mathbb{P}\left[ A \middle| \bbeta_1, \bbeta_2, \y, \z \right] \nonumber \\
&= \mathbb{P}\left[g_1 \left(\sum_{i=1}^n (z_i \sigma^{(1)}_i + (1 - z_i) \sigma^{(2)}_i)^2 \right)^{\frac{1}{2}} \leq \tau - \left|\sum_{i=1}^n \frac{y^2_i / \|\y\|_2}{\|\bbeta\|^2_2 + \sigma^2} \cdot z_i \bbeta_{1, j^*}\right| \middle| \bbeta_1, \bbeta_2, \y, \z\right] \nonumber\\
&\leq \mathbb{P}\left[g_1 \left(\sum_{i=1}^n (z_i \sigma^{(1)}_i + (1 - z_i) \sigma^{(2)}_i)^2 \right)^{\frac{1}{2}} \leq \sqrt{2(1 + \epsilon/2)\log{2p}} \right. \nonumber \\
&\left. \hspace{10em} - \left|\frac{\|\y\|_2}{\|\bbeta\|^2_2 + \sigma^2} \phi \bbeta_{1, j^*} - \sqrt{\frac{2(3 + \delta) \log{2p}}{ (1 - \delta) (1 + \sigma^2/\|\bbeta\|^2_2)}}  \right| \middle| \bbeta_1, \bbeta_2, \y, \z\right] \nonumber\\
&\leq \mathbb{P}\left[g_1 \left(\sum_{i=1}^n (z_i \sigma^{(1)}_i + (1 - z_i) \sigma^{(2)}_i)^2 \right)^{\frac{1}{2}} \leq \sqrt{2(1 + \epsilon/2)\log{2p}} \right. \nonumber\\
&\left. \hspace{10em} - \left|\sqrt{\frac{n (1-\delta)}{\|\bbeta\|^2_2 + \sigma^2}} \phi \bbeta_{1, j^*} - \sqrt{\frac{2(3 + \delta) \log{2p}}{ (1 - \delta) (1 + \sigma^2/\|\bbeta\|^2_2)}}  \right| \middle| \bbeta_1, \bbeta_2, \y, \z\right] \nonumber\\
&\leq \mathbb{P}\left[g_1 \left(\sum_{i=1}^n (z_i \sigma^{(1)}_i + (1 - z_i) \sigma^{(2)}_i)^2 \right)^{\frac{1}{2}} \leq \sqrt{2(1 + \epsilon/2)\log{2p}} \right. \nonumber\\
&\left. \hspace{10em} - \left|\sqrt{32(1+\epsilon)(1 - \delta)\log{2p}} - \sqrt{\frac{2(3 + \delta) \log{2p}}{ (1 - \delta) (1 + \sigma^2/\|\bbeta\|^2_2)}}  \right| \middle| \bbeta_1, \bbeta_2, \y, \z\right] \label{eq:rhs-subg2-general-2},
\end{align}
where for small enough $\delta > 0$, we have
\begin{align}
&  \sqrt{2(1 + \epsilon/2)} - \left|\sqrt{32 (1 + \epsilon) (1 - \delta)} - \sqrt{2(3+\delta)/(1-\delta)(1 + \sigma^2/\|\bbeta\|^2_2)}\right| \leq - \sqrt{2}. \label{ineq:delta2-2}
\end{align}
 Hence we obtain that the right hand side of  the inequality in (\ref{eq:rhs-subg2-general-2}) is negative, leading us to the following inequality for $(y, z) \in \Omega_1(\delta) \cap \Omega_4(\delta)$,
\begin{align}
\mathbb{P}\left[A \middle| \bbeta_1, \bbeta_2, \y, \z\right] &\leq \mathbb{P}\left[ g_1 \left(\sum_{i=1}^n (z_i \sigma^{(1)}_i + (1-z_i)\sigma^{(2)}_i)^2\right)^{1/2} \leq - \sqrt{2\log{2p}} \middle| \bbeta_1, \bbeta_2, \y, \z \right] \nonumber \\
&\leq \exp\left(- \frac{2 \log{2p}}{2 \sum_{i=1}^n (z_i \sigma^{(1)}_i + (1-z_i)\sigma^{(2)}_i)^2}\right) \nonumber \\
&\overset{i)}{\leq} \exp\left(- \frac{2 \log{2p}}{2 \sum_{i=1}^n \frac{y_i^2}{\|\y\|^2_2}}\right) \leq \frac{1}{2p}, \label{eq:PA_cond_bound}
\end{align}
where $i)$ follows almost surely from the fact that $z_i \in \{0, 1\}$ and $\left(1 - \frac{\bbeta^2_{1, j^*}}{\|\bbeta_1\|^2_2 + \sigma^2}\right) \leq 1$. We generalize to the case where $j \in \mathcal{S}_2 \setminus \mathcal{S}_1$, and hence consider the analogous result with $\phi$ and $\bbeta_1$ replaced with $(1 - \phi)$ and $\bbeta_2$. Putting it all together in \eqref{eq:corr-mslr-last2-general-2} using Lemma \ref{lem:high-prob-events} and choosing $\delta > 0$ small enough to satisfy \eqref{ineq:delta2-2}, we obtain that for $n \geq \frac{32}{\min\{\phi^2, (1-\phi)^2\} \bbeta^2_{\texttt{min}}} (1 + \epsilon) (\|\bbeta\|^2_2 + \sigma^2) \log{2p}$,
\begin{align}
\mathbb{P}\left[\left|\frac{\langle \X_{j^*}, \y \rangle}{\|\y\|_2} \right| \leq \sqrt{2 (1 + \epsilon/2) \log{2p}} \right] &\leq \frac{1}{p} + 4 e^{- \frac{\delta^2 n}{8}} + \frac{39}{\delta^2 n}.
\label{eq:Pcorr_dev_final}
\end{align}
\end{proof}

\begin{lemma} [General Bound for $j \in \mathcal{S}_1 \cap \mathcal{S}_2$] \label{lemma:A_bound_general}
Consider the setting of $\mslr$. Let $j^* \in \mathcal{S}_1 \cap \mathcal{S}_2$. Then if $n \geq \frac{32 (1 + \epsilon)}{(\phi \beta_{1, j^*} + (1-\phi) \beta_{2, j^*})^2}  (\|\bbeta\|^2_2 + \sigma^2) \log{2p}$ \, for any $\epsilon \in (0, 1)$ we obtain that
\[\mathbb{P}\left[\left|\frac{\langle \X_{j^*}, \y \rangle}{\|\y\|_2} \right| \leq \sqrt{2 (1 + \epsilon/2) \log{2p}} \right] \leq \frac{1}{p} + 4 e^{- \frac{\delta^2 n}{8}} + \frac{39}{\delta^2 n}. \]
\end{lemma}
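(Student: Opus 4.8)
The plan is to follow the argument of Lemma \ref{lemma:B_bound_general} almost verbatim; the sole structural change is that for $j^* \in \mathcal{S}_1 \cap \mathcal{S}_2$ \emph{both} conditional means are nonzero, whereas in Lemma \ref{lemma:B_bound_general} one of them vanished. First I would invoke Lemma \ref{lemma:cond-mslr-general} to obtain the conditional law of $X_{j^*, i}$ given $\bbeta_1,\bbeta_2,\y,\z$: when $z_i=1$ it is Gaussian with mean $\tfrac{y_i \bbeta_{1,j^*}}{\|\bbeta\|_2^2+\sigma^2}$ and variance $1-\tfrac{\bbeta_{1,j^*}^2}{\|\bbeta\|_2^2+\sigma^2}$, and when $z_i=0$ it is Gaussian with mean $\tfrac{y_i \bbeta_{2,j^*}}{\|\bbeta\|_2^2+\sigma^2}$ and variance $1-\tfrac{\bbeta_{2,j^*}^2}{\|\bbeta\|_2^2+\sigma^2}$. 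I keep $\mu^{(1)}_i,\sigma^{(1)}_i$ as in Lemma \ref{lemma:B_bound_general}, but now set $\mu^{(2)}_i := \tfrac{y_i^2}{\|\y\|_2}\tfrac{\bbeta_{2,j^*}}{\|\bbeta\|_2^2+\sigma^2}$ and $\sigma^{(2)}_i := \tfrac{y_i}{\|\y\|_2}\sqrt{1-\tfrac{\bbeta_{2,j^*}^2}{\|\bbeta\|_2^2+\sigma^2}}$ instead of $\mu^{(2)}_i=0$. Exactly as in \eqref{eq:PA_cond}, closure of Gaussians under summation together with $\mathbb{P}[B_1\cap B_2]\le\min_i\mathbb{P}[B_i]$ collapses $\mathbb{P}\big[|\langle\X_{j^*},\y\rangle|/\|\y\|_2\le\tau \mid \bbeta_1,\bbeta_2,\y,\z\big]$ into a one-dimensional Gaussian tail $\mathbb{P}[A\mid\bbeta_1,\bbeta_2,\y,\z]$ whose shifted threshold now carries the \emph{full} weighted sum $\sum_i \tfrac{y_i^2/\|\y\|_2}{\|\bbeta\|_2^2+\sigma^2}\big(z_i\bbeta_{1,j^*}+(1-z_i)\bbeta_{2,j^*}\big)$.

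The one genuinely new computation is to control this weighted sum, which is exactly the conditional mean $\mathbb{E}[u_{j^*}\mid\y,\z,\bbeta_1,\bbeta_2]$ of \eqref{eq:u_mean}. Writing $E:=\sum_i y_i^2(z_i-\phi)$, I would decompose it as
\begin{equation*}
\sum_{i}\frac{y_i^2\big(z_i\bbeta_{1,j^*}+(1-z_i)\bbeta_{2,j^*}\big)}{\|\y\|_2(\|\bbeta\|_2^2+\sigma^2)}
=\frac{\|\y\|_2\big(\phi\bbeta_{1,j^*}+(1-\phi)\bbeta_{2,j^*}\big)}{\|\bbeta\|_2^2+\sigma^2}
+\frac{(\bbeta_{1,j^*}-\bbeta_{2,j^*})\,E}{\|\y\|_2(\|\bbeta\|_2^2+\sigma^2)}.
\end{equation*}
On the event $\Omega_1(\delta)$ controlling $\|\y\|_2\ge\sqrt{(1-\delta)n(\|\bbeta\|_2^2+\sigma^2)}$, the leading term is at least $\sqrt{\tfrac{(1-\delta)n}{\|\bbeta\|_2^2+\sigma^2}}\,\big|\phi\bbeta_{1,j^*}+(1-\phi)\bbeta_{2,j^*}\big|$. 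The residual depends on $(\y,\z)$ only through the \emph{same} centered quantity $E$ already bounded by $\Omega_4(\delta)$ in Lemma \ref{lemma:B_bound_general} (here multiplied by the bounded amplitude $\bbeta_{1,j^*}-\bbeta_{2,j^*}$), so the identical concentration event from Lemma \ref{lem:high-prob-events} controls it by a lower-order deviation of the form $\sqrt{\tfrac{2(3+\delta)\log 2p}{(1-\delta)(1+\sigma^2/\|\bbeta\|_2^2)}}$.

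With these two ingredients the remainder is verbatim Lemma \ref{lemma:B_bound_general}: substituting $n\ge\frac{32(1+\epsilon)}{(\phi\bbeta_{1,j^*}+(1-\phi)\bbeta_{2,j^*})^2}(\|\bbeta\|_2^2+\sigma^2)\log 2p$ makes the leading term at least $\sqrt{32(1+\epsilon)(1-\delta)\log 2p}$, so the analogue of \eqref{ineq:delta2-2} (with $\phi\bbeta_{1,j^*}$ replaced by $\phi\bbeta_{1,j^*}+(1-\phi)\bbeta_{2,j^*}$) pushes the shifted threshold below $-\sqrt{2\log 2p}$ for small $\delta$. Since $z_i\in\{0,1\}$ and both variance factors are at most $1$, one still has $\sum_i(z_i\sigma^{(1)}_i+(1-z_i)\sigma^{(2)}_i)^2\le\sum_i y_i^2/\|\y\|_2^2=1$, so the Gaussian tail is at most $\tfrac{1}{2p}$ exactly as in \eqref{eq:PA_cond_bound}. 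Integrating over $\bbeta_1,\bbeta_2$ and adding $\mathbb{P}[\Omega_1(\delta)^\complement]$ and $\mathbb{P}[\Omega_4(\delta)^\complement]$ from Lemma \ref{lem:high-prob-events} produces the stated bound $\frac{1}{p}+4e^{-\delta^2 n/8}+\frac{39}{\delta^2 n}$.

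The main obstacle, though a mild one, is precisely the two nonzero conditional means: the concentration must track the full mixture mean $\phi\bbeta_{1,j^*}+(1-\phi)\bbeta_{2,j^*}$, and I must verify the cross term $(\bbeta_{1,j^*}-\bbeta_{2,j^*})E$ stays negligible (it is $O(1)$ after division by $\|\y\|_2$, hence dominated by the $\Theta(\sqrt{\log p})$ leading term). This is also where the sign of $\phi\bbeta_{1,j^*}+(1-\phi)\bbeta_{2,j^*}$ enters: it is assumed nonzero — equivalently the sample-complexity denominator is finite, which fails exactly in the $\sbmslr$ regime — so the absolute value in the threshold shift is controlled and the argument closes.
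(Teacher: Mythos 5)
Your proposal is correct and follows essentially the same route as the paper's proof: the same conditional laws from Lemma \ref{lemma:cond-mslr-general}, the same collapse to a one-dimensional Gaussian tail with the full mixture mean, the same substitution of the sample-size condition, the same variance bound $\sum_i (z_i\sigma^{(1)}_i + (1-z_i)\sigma^{(2)}_i)^2 \leq 1$, and the same final union bound. Your explicit decomposition into the leading term plus the residual $(\bbeta_{1,j^*}-\bbeta_{2,j^*})E/(\|\y\|_2(\|\bbeta\|^2_2+\sigma^2))$ is precisely the content of the paper's event $\Omega_3(\delta)$ in Lemma \ref{lem:high-prob-events} (the generalization of $\Omega_4(\delta)$, proved via Hoeffding in Lemma \ref{lemma:lln-bernoulli-general}, whose range term is exactly $(\bbeta_{1,j^*}-\bbeta_{2,j^*})^2$); note only that this residual is $\Theta(\sqrt{\log p})$ rather than $O(1)$, and is dominated by the leading term because of the larger constant $\sqrt{32(1+\epsilon)(1-\delta)}$, which is how the paper's inequality closes.
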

\begin{proof} 
The proof is along the same lines as that of Lemma \ref{lemma:B_bound_general}, with the main difference being in the conditional  means and variances of $X_{j^*,i}$, for $i \in [n]$. As before, let $g_i \distas{\text{i.i.d.}} \mathcal{N}(0, 1)$ (independent from $y, z, \bbeta_1, \bbeta_2$) for $i \in [n]$ and $\delta > 0$ to be chosen later. 
Define
\begin{itemize}{}
	\item $\sigma^{(1)}_i := \frac{y_i}{\|\y\|_2} \sqrt{1 -\frac{\bbeta^2_{1,j^*}}{\|\bbeta\|^2_2+\sigma^2}}$,
	\item $\sigma^{(2)}_i := \frac{y_i}{\|\y\|_2} \sqrt{1 -\frac{\bbeta^2_{2,j^*}}{\|\bbeta\|^2_2+\sigma^2}}$,
	\item $\mu^{(1)}_i := \frac{y^2_i}{\|\y\|_2} \frac{\bbeta_{1, j^*}}{\|\bbeta\|^2_2 + \sigma^2}$,
	\item $\mu^{(2)}_i := \frac{y^2_i}{\|\y\|_2} \frac{\bbeta_{2, j^*}}{\|\bbeta\|^2_2 + \sigma^2}$,
\end{itemize}
and notice that for $\tau \in \mathbb{R}$,
\begin{align*}
&\mathbb{P}\left[X_{j^*, i} \leq \tau | \bbeta_1, \bbeta_2, \y, \z \right] \\
&= \mathbb{P}\left[z_i \left(\frac{y_i \cdot \bbeta_{1, j^*}}{\|\bbeta\|^2_2 + \sigma^2} + \sqrt{1 -\frac{\bbeta^2_{1,j^*}}{\|\bbeta\|^2_2+\sigma^2}} g_i \right) \right. \\
& \qquad \quad  \left.+ (1 - z_i) \left(\frac{y_i \cdot \bbeta_{2, j^*}}{\|\bbeta\|^2_2 + \sigma^2} + \sqrt{1 -\frac{\bbeta^2_{2,j^*}}{\|\bbeta\|^2_2+\sigma^2}} g_i \right) \leq \tau \middle| \bbeta_1, \bbeta_2, y_i, z_i \right],
\end{align*}
which follows from Lemma \ref{lemma:cond-mslr-general}. We then have that
\[ \mathbb{P}\left[ X_{j^*, i} \frac{y_i}{\|\y\|_2} \leq \tau \middle| \y, \z\right] = \mathbb{P}\left[ z_i (\mu^{(1)}_i + \sigma^{(1)}_i g_i) + (1 - z_i)(\mu^{(2)}_i + \sigma^{(2)}_i g_i) \leq \tau \middle| \y, \z\right]. \]
Then, using the same steps as in \eqref{eq:PA_cond}, we obtain that for  for $\tau > 0$:
\begin{align}
 &\mathbb{P}\left[\left| \frac{\langle \X_{j^*}, \y\rangle}{\|\y\|_2} \right| \leq \tau \middle| \bbeta_1, \bbeta_2, \y, \z\right] \nonumber \\
&= \mathbb{P}\left[ g_1 \left(\sum_{i=1}^n (z_i \sigma^{(1)}_i + (1 - z_i) \sigma^{(2)}_i)^2 \right)^{\frac{1}{2}} \leq \tau - \left|\sum_{i=1}^n \frac{y^2_i / \|\y\|_2}{\|\bbeta\|^2_2 + \sigma^2} \cdot (z_i \bbeta_{1, j^*} + (1 - z_i) \bbeta_{2, j^*})\right| \middle| \bbeta_1, \bbeta_2, \y, \z\right] \nonumber \\
&=: \mathbb{P}\left[ A \middle| \bbeta_1, \bbeta_2, \y, \z \right].
\label{eq:PA_cond_22}
\end{align}
%
Using the high probability events $\Omega_1(\delta), \Omega_3(\delta)$  defined in Lemma \ref{lem:high-prob-events}, by  the same arguments as in \eqref{eq:corr-mslr-last2-general-2} we  have
\begin{align}
&\mathbb{P}\left[\left|\frac{\langle \X_{j^*}, \y \rangle}{\|\y\|_2} \right| \leq \tau \right] \nonumber \\
&\leq \int_{\bbeta_1, \bbeta_2}\int_{\Omega_1(\delta) \cap \Omega_3(\delta)} \mathbb{P}\left[A \middle| \bbeta_1, \bbeta_2, \y, \z\right] d\mathbb{P}[\bbeta_1, \bbeta_2, \y, z] + \mathbb{P}\left[\Omega^{\complement}_1(\delta)\right] + \mathbb{P}\left[\Omega_3^{\complement}(\delta) \right], 
\label{eq:corr-mslr-last2-general}
\end{align}
Now setting $\tau = \sqrt{2(1 + \epsilon/2)\log{2p}}$ and $n \geq \frac{32}{\langle\bbeta\rangle^2_{\texttt{min}}} (1 + \epsilon) (\|\bbeta\|^2_2 + \sigma^2) \log{2p}$ we obtain that for $(y, z) \in \Omega_1(\delta) \cap \Omega_3(\delta)$,

\begin{align}
&\mathbb{P}\left[ A \middle| \bbeta_1, \bbeta_2, \y, \z \right] \nonumber \\
&= \mathbb{P}\left[g_1 \left(\sum_{i=1}^n (z_i \sigma^{(1)}_i + (1 - z_i) \sigma^{(2)}_i)^2 \right)^{\frac{1}{2}} \leq \tau - \left|\sum_{i=1}^n \frac{y^2_i / \|\y\|_2}{\|\bbeta\|^2_2 + \sigma^2} \cdot (z_i \bbeta_{1, j^*} + (1 - z_i) \bbeta_{2, j^*})\right| \middle| \bbeta_1, \bbeta_2, \y, \z\right] \nonumber\\
&\leq \mathbb{P}\left[g_1 \left(\sum_{i=1}^n (z_i \sigma^{(1)}_i + (1 - z_i) \sigma^{(2)}_i)^2 \right)^{\frac{1}{2}} \leq \sqrt{2(1 + \epsilon/2)\log{2p}} \right. \nonumber \\
&\left. \hspace{10em} - \left|\frac{\|\y\|_2}{\|\bbeta\|^2_2 + \sigma^2} (\phi \bbeta_{1, j^*} + (1 - \phi) \bbeta_{2, j^*}) - \sqrt{\frac{2(3 + \delta) \log{2p}}{ (1 - \delta) (1 + \sigma^2/\|\bbeta\|^2_2)}}  \right| \middle| \bbeta_1, \bbeta_2, \y, \z\right] \nonumber\\
&\leq \mathbb{P}\left[g_1 \left(\sum_{i=1}^n (z_i \sigma^{(1)}_i + (1 - z_i) \sigma^{(2)}_i)^2 \right)^{\frac{1}{2}} \leq \sqrt{2(1 + \epsilon/2)\log{2p}} \right. \nonumber\\
&\left. \hspace{10em} - \left|\sqrt{\frac{n (1-\delta)}{\|\bbeta\|^2_2 + \sigma^2}} (\phi \bbeta_{1, j^*} + (1 - \phi) \bbeta_{2, j^*}) - \sqrt{\frac{2(3 + \delta) \log{2p}}{ (1 - \delta) (1 + \sigma^2/\|\bbeta\|^2_2)}}  \right| \middle| \bbeta_1, \bbeta_2, \y, \z\right] \nonumber\\
&\leq \mathbb{P}\left[g_1 \left(\sum_{i=1}^n (z_i \sigma^{(1)}_i + (1 - z_i) \sigma^{(2)}_i)^2 \right)^{\frac{1}{2}} \leq \sqrt{2(1 + \epsilon/2)\log{2p}} \right. \nonumber\\
&\left. \hspace{10em} - \left|\sqrt{32(1+\epsilon)(1 - \delta)\log{2p}} - \sqrt{\frac{2(3 + \delta) \log{2p}}{ (1 - \delta) (1 + \sigma^2/\|\bbeta\|^2_2)}}  \right| \middle| \bbeta_1, \bbeta_2, \y, \z\right] \label{eq:rhs-subg2-general},
\end{align}
where
\begin{align}
 \sqrt{2(1 + \epsilon/2)} - \left|\sqrt{32 (1 + \epsilon) (1 - \delta)} - \sqrt{2(3+\delta)/(1-\delta)(1 + \sigma^2/\|\bbeta\|^2_2)}\right| \leq - \sqrt{2}, \label{ineq:delta2}
\end{align}
for small enough $\delta > 0$. Hence we obtain that the right hand side of (\ref{eq:rhs-subg2-general}) is negative, leading us to the following inequality for $(y, z) \in \Omega_1(\delta) \cap \Omega_3(\delta)$, obtained via the same steps as \eqref{eq:PA_cond_bound}:
\begin{align*}
\mathbb{P}\left[A \middle| \bbeta_1, \bbeta_2, \y, \z\right] 
\leq \frac{1}{2p}.
\end{align*}
%
Putting it all together in (\ref{eq:corr-mslr-last2-general}) using Lemma \ref{lem:high-prob-events} and choosing $\delta > 0$ small enough to satisfy (\ref{eq:rhs-subg2-general}) we obtain that for $n \geq \frac{32}{\langle\bbeta\rangle^2_{\texttt{min}}} (1 + \epsilon) (\|\bbeta\|^2_2 + \sigma^2) \log{2p}$,
\begin{align*}
\mathbb{P}\left[\left|\frac{\langle \X_{j^*}, \y \rangle}{\|\y\|_2} \right| \leq \sqrt{2 (1 + \epsilon/2) \log{2p}} \right] &\leq \frac{1}{p} + 4 e^{- \frac{\delta^2 n}{8}} + \frac{39}{\delta^2 n}.
\end{align*}
\end{proof}

\paragraph{The conditioning lemmas.}

\begin{lemma}[General conditioning lemma] \label{lemma:cond-mslr-general}
Consider the setting of $\mslr$.

For $j^* \in \mathcal{S}_1 \cap \mathcal{S}_2$ and $i \in [n]$ it holds that:

\begin{align}
\left. X_{j^*, i} \middle| (y_i, \bbeta_1, z_i = 1) \right. &\distas{} \mathcal{N}\left(\frac{y_i \cdot \bbeta_{1, j^*}}{\|\bbeta_1\|^2_2 + \sigma^2}, \left(1 - \frac{\bbeta^2_{1, j^*}}{\|\bbeta_1\|^2_2 + \sigma^2}\right) \right),  \label{mslr_cond_lemma1} \\
\left. X_{j^*, i} \middle| (y_i, \bbeta_2 , z_i = 0) \right. &\distas{} \mathcal{N}\left(\frac{y_i \cdot \bbeta_{2, j^*}}{\|\bbeta_1\|^2_2 + \sigma^2}, \left(1 - \frac{\bbeta^2_{2, j^*}}{\|\bbeta_1\|^2_2 + \sigma^2}\right) \right). \label{mslr_cond_lemma2}
\end{align}

For $j^* \in \mathcal{S}_1 \Delta \mathcal{S}_2$ (without loss of generality $j^* \in \mathcal{S}_1 \setminus \mathcal{S}_2$) and $i \in [n]$ it holds that
\begin{align}
\left. X_{j^*, i} \middle| (y_i, \bbeta_1, z_i = 1) \right. &\distas{} \mathcal{N}\left(\frac{y_i \cdot \bbeta_{1, j^*}}{\|\bbeta_1\|^2_2 + \sigma^2}, \left(1 - \frac{\bbeta^2_{1, j^*}}{\|\bbeta_1\|^2_2 + \sigma^2}\right) \right), \label{mslr_cond_lemma3}\\
\left. X_{j^*, i} \middle| (y_i, z_i = 0) \right. &\distas{} \mathcal{N}\left(0, 1\right) \label{mslr_cond_lemma4}
\end{align}
\end{lemma}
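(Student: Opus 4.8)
The plan is to prove each of the four claims by a direct bivariate Gaussian conditioning computation, carried out one sample index $i$ at a time. Fix $i \in [n]$ and condition throughout on $\bbeta_1, \bbeta_2$ and $z_i$. Since the rows of $\X$ are mutually independent and $\w$ is independent of $\X$, the observation $y_i = (\X\bbeta_1 \odot \z + \X\bbeta_2 \odot (1-\z) + \w)_i$ is a function of the $i$-th row of $\X$, of $w_i$, and of $z_i$ alone; hence conditioning on the scalar $y_i$ (rather than on the full vector $\y$) affects only the entries of row $i$, which is precisely the row containing $X_{j^*,i}$. This reduces every claim to a two-dimensional Gaussian calculation.

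First I would treat the case $z_i = 1$. Conditioned on $\bbeta_1$ and $z_i = 1$, we have $y_i = \sum_{j=1}^p X_{j,i}\bbeta_{1,j} + w_i$, a linear combination of the independent standard Gaussians $\{X_{j,i}\}_{j}$ and of $w_i \distas{} \mathcal{N}(0,\sigma^2)$. Consequently $(X_{j^*,i}, y_i)$ is jointly Gaussian, and I would read off its first two moments: $\E[X_{j^*,i}] = \E[y_i] = 0$, $\mathrm{Var}(X_{j^*,i}) = 1$, $\mathrm{Var}(y_i) = \|\bbeta_1\|_2^2 + \sigma^2$, and $\mathrm{Cov}(X_{j^*,i}, y_i) = \bbeta_{1,j^*}$ (only the $j = j^*$ term of the sum contributes, and $X_{j^*,i}$ is independent of $w_i$ and of the other columns). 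Substituting these into the standard bivariate Gaussian conditional law
\[
X_{j^*,i} \mid y_i \;\distas{}\; \mathcal{N}\!\Big(\tfrac{\mathrm{Cov}(X_{j^*,i},y_i)}{\mathrm{Var}(y_i)}\,y_i,\; \mathrm{Var}(X_{j^*,i}) - \tfrac{\mathrm{Cov}(X_{j^*,i},y_i)^2}{\mathrm{Var}(y_i)}\Big)
\]
yields exactly \eqref{mslr_cond_lemma1}, and by the identical argument with $\bbeta_2$ in place of $\bbeta_1$ the case $z_i = 0$ gives \eqref{mslr_cond_lemma2}. Here I would use that the prior $\mathcal{P}_{\|\bbeta\|_2}(\mathcal{D})$ forces $\|\bbeta_2\|_2 = \|\bbeta_1\|_2$, so the denominator $\|\bbeta_1\|_2^2 + \sigma^2$ is common to both conditional laws.

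For the symmetric-difference case $j^* \in \mathcal{S}_1 \Delta \mathcal{S}_2$, assume without loss of generality $j^* \in \mathcal{S}_1 \setminus \mathcal{S}_2$, so $\bbeta_{2,j^*} = 0$. The $z_i = 1$ computation is verbatim the one above, giving \eqref{mslr_cond_lemma3}. For $z_i = 0$, the covariance computed above becomes $\mathrm{Cov}(X_{j^*,i}, y_i) = \bbeta_{2,j^*} = 0$, so $X_{j^*,i}$ is jointly Gaussian with and uncorrelated with $y_i$, hence independent of it; its conditional law is therefore its marginal $\mathcal{N}(0,1)$, which is \eqref{mslr_cond_lemma4}.

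The computation itself is routine, so the only points requiring care — and the closest thing to an obstacle — are bookkeeping rather than analysis: verifying that conditioning on $y_i$ alone is legitimate via the row-independence argument above, and checking that the stated conditional variances are genuinely nonnegative. The latter amounts to $\bbeta_{1,j^*}^2 \le \|\bbeta_1\|_2^2 + \sigma^2$, which holds since $\bbeta_{1,j^*}^2 \le \|\bbeta_1\|_2^2$, with strict positivity whenever $\sigma > 0$ or $k \ge 2$.
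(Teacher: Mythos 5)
Your proof is correct: the conditional means and variances you derive match the lemma exactly, and you handle the two delicate points properly — the equal-norm convention under $\mathcal{P}_{\|\bbeta\|_2}(\mathcal{D})$ that lets $\|\bbeta_1\|_2^2+\sigma^2$ serve as the common denominator in \eqref{mslr_cond_lemma2}, and the justification of \eqref{mslr_cond_lemma4}. Your route differs from the paper's in implementation though not in substance. The paper writes $y_i$ (given $z_i$) as a linear functional of the Gaussian vector formed by $(X_{j,i})_{j \in \mathcal{S}}$ and $w_i$, and then invokes its Corollary \ref{cor:cond-general}, which is itself proved via Lemma \ref{lemma:gsum-general}: an explicit decomposition $a = Ba + \eta v$ of the \emph{entire} conditioned vector, with $\tilde{\Sigma}$, $v$, $B$ computed by hand and the conditional variance read off as the $(j^*,j^*)$ entry of $B\Sigma B^T$. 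You instead restrict attention to the pair $(X_{j^*,i}, y_i)$, observe it is bivariate Gaussian, and apply the textbook conditional formula; this needs only a $2\times 2$ moment computation and avoids the matrix bookkeeping entirely. Nothing is lost by doing so: all downstream uses (Lemmas \ref{lemma:A_bound_general} and \ref{lemma:B_bound_general}) only ever require the conditional law of a single coordinate $X_{j^*,i}$ at a time, which is precisely what you produce. What the paper's heavier machinery buys is the full joint conditional law of all coordinates simultaneously, stated in a reusable form that also serves Corollary \ref{cor:slr-conditioning}. For \eqref{mslr_cond_lemma4} the paper takes a slightly different shortcut — given $z_i=0$ the sum defining $y_i$ ranges over $\mathcal{S}_2 \not\ni j^*$, so independence of $X_{j^*,i}$ and $y_i$ holds by construction — which is equivalent to your zero-covariance-plus-joint-Gaussianity argument; both are valid.
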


\begin{proof} 
To prove (\ref{mslr_cond_lemma1}) and (\ref{mslr_cond_lemma3}), recall that
\[ y_i = z_i (\X\bbeta_1)_i + (1 - z_i) (\X\bbeta_2)_i + w_i, \]
and hence given $z_i = 1$ we have
\[ y_i = \sum_{j\neq j^* \in \mathcal{S}_1} \bbeta_{1, j} X_{j,i} + \bbeta_{1, j^*} X_{j^*, i} + w_i, \]
implying that
\begin{align*}
\left. X_{j^*, i} \,  \middle| \,  (y_i, \bbeta_1, z_i = 1) \right. &=  \left. X_{j^*, i} \, \middle| \,  \left( \bbeta_1, y_i = \sum_{j\neq j^* \in \mathcal{S}_1} \bbeta_{1, j} X_{j,i} + \bbeta_{1, j^*} X_{j^*, i} + w_i\right) \right. .
\end{align*}
%
Therefore, applying Corollary \ref{cor:cond-general}, we obtain that $\left. X_{j^*, i} \middle| (y_i, \bbeta_1, z_i = 1) \right. \distas{} \mathcal{N}\left(\frac{y_i \bbeta_{1, j^*}}{\|\bbeta_1\|^2_2 + \sigma^2}, \left(1 - \frac{\bbeta^2_{1, j^*}}{\|\bbeta_1\|^2_2 + \sigma^2}\right) \right)$. The result in (\ref{mslr_cond_lemma2}) is proved in the same way, but replacing $\bbeta_1$ with $\bbeta_2$.

 For (\ref{mslr_cond_lemma4}), notice that given $z_i = 0$ we have
\[ y_i = \sum_{j \in \mathcal{S}_2} \bbeta_{2, j} X_{j,i} + w_i, \]
which is independent of $X_{j^*, i}$ by definition, and hence $X_{j^*, i} | (y_i, z_i = 0)$ has the same distribution as $X_{j^*, i} \distas{} \mathcal{N}(0, 1)$.
\end{proof}




\begin{lemma}[General conditioning lemma] \label{lemma:gsum-general}
Let $a \distas{} \mathcal{N}\left(0, \Sigma_{k \times k} \right)$, and $b \in \mathbb{R}^k$ a fixed vector. Then:
\[\left. a \middle| \right. \left( \sum_{j=1}^k b_j a_j = \eta \right) \distas{} \mathcal{N}\left(\eta v, B \Sigma_{k \times k} B^T \right) \]
where, letting $\mathbbm{1}$ denote the all-ones vector, 
\[v = \frac{1}{b^T \tilde{\Sigma} \mathbbm{1}} \tilde{\Sigma} \mathbbm{1}, \quad B = I_{k \times k} - v b^T, \quad \tilde{\Sigma} = \mathbb{E}\left[a (a \odot b)^T \right]. \]
\end{lemma}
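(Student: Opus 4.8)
The plan is to recognise this as the standard formula for conditioning a centred Gaussian vector on a single linear functional, and to verify that the quantities named in the lemma are exactly the conditional mean direction and conditional covariance it produces. Writing $Y := \sum_{j=1}^k b_j a_j = b^T a$, the pair $(a, Y)$ is jointly Gaussian (a linear image of $a$), so the conditional law of $a$ given $Y = \eta$ is Gaussian; it remains only to identify its parameters.

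First I would record the algebraic fact $\tilde{\Sigma} = \Sigma\,\mathrm{diag}(b)$: since $(a \odot b)_j = b_j a_j$, the $(i,j)$ entry of $a(a\odot b)^T$ is $a_i a_j b_j$, whose expectation is $\Sigma_{ij} b_j$. Hence $\tilde{\Sigma}\mathbbm{1} = \Sigma b$ and $b^T \tilde{\Sigma}\mathbbm{1} = b^T \Sigma b$, so that $v = \Sigma b/(b^T\Sigma b)$ and in particular $b^T v = 1$. This identification is the only step that genuinely uses the particular form of $\tilde{\Sigma}$; everything else is linear algebra in $\Sigma$ and $b$.

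The cleanest route to the parameters is via independence rather than the Schur-complement formula. I would use the identity $a = vY + Ba$ with $B = I - vb^T$, which holds because $vY + Ba = v b^T a + (I - v b^T) a = a$. Since $(Ba, Y)$ is jointly Gaussian, it suffices to check that $Ba$ and $Y$ are uncorrelated: $\mathrm{Cov}(Ba, Y) = B\,\mathbb{E}[a a^T]\,b = B\Sigma b = \Sigma b - v (b^T\Sigma b) = 0$ by the explicit form of $v$. Uncorrelated jointly Gaussian vectors are independent, so conditioning on $\{Y = \eta\}$ fixes the first summand to $v\eta$ and leaves the law of $Ba \sim \mathcal{N}(0, B\Sigma B^T)$ unchanged, giving $a \mid \{Y = \eta\} \sim \mathcal{N}(\eta v,\, B\Sigma B^T)$, which is precisely the claim since $\{Y=\eta\} = \{\sum_j b_j a_j = \eta\}$.

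There is no serious obstacle here: the result is a routine Gaussian conditioning computation, and the independence decomposition makes it essentially free of algebra. The one point that requires care — and which I would flag as the crux — is the identification $\tilde{\Sigma}\mathbbm{1} = \Sigma b$, since this is what makes the lemma's $v$ coincide with the standard conditional-mean direction and thereby makes $Ba$ orthogonal to $Y$. If one instead invoked the Schur-complement formula directly, the only nontrivial bookkeeping would be the expansion $B\Sigma B^T = \Sigma - 2\,\Sigma b b^T\Sigma/(b^T\Sigma b) + \Sigma b b^T\Sigma/(b^T\Sigma b) = \Sigma - \Sigma b b^T\Sigma/(b^T\Sigma b)$, matching the standard conditional covariance; the independence argument sidesteps this. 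I would also note in passing that the lemma implicitly assumes $b^T\tilde{\Sigma}\mathbbm{1} = b^T\Sigma b > 0$, i.e.\ that the functional $Y$ is non-degenerate, which is exactly what is needed for $v$ to be well defined.
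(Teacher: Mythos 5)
Your proposal is correct and takes essentially the same route as the paper: both use the decomposition $a = Ba + \eta v$ with $B = I - vb^T$, verify that $Ba$ is uncorrelated with (hence independent of) the linear functional, and conclude the conditional law directly from independence. Your explicit identification $\tilde{\Sigma}\mathbbm{1} = \Sigma b$ and the remark that $b^T\Sigma b > 0$ is needed for $v$ to be well defined are minor additions that the paper leaves implicit, but the argument is the same.
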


\begin{proof}
Let $B$ be a deterministic matrix, and $\eta := \sum_{j=1}^k b_j a_j$. Then $(Ba, \eta)$ is jointly normal. We will construct a fixed matrix $B$ and fixed vector $v$ such that
\begin{itemize}
	\item $Ba$ is independent from $\eta$
	\item $a = Ba + \eta v$.
\end{itemize}
If the above holds, then by independence we have the required result that $a|\eta \distas{} \mathcal{N}\left(\eta v, B \Sigma B^T \right)$.
In order for the first point to hold, their covariances must be zero, implying
\[\mathbb{E}\left[Ba \eta \right] = \mathbb{E}\left[B a (a \odot b) ^T \mathbbm{1} \right] = B \tilde{\Sigma} \mathbbm{1} = 0.\]
Meanwhile, the second point is satisfied by choosing $B = I - v b^T$. Combining these two facts, we obtain the result.
\end{proof}

\begin{corollary} \label{cor:slr-conditioning}
Consider the setting of $\slr$, let $j^* \in \mathcal{S}$, and $i \in [n]$. Then it holds that

\[\left. X_{j^*, i} \middle| \left(\sum_{j \in \mathcal{S}} X_{j, i} + w_i = \eta\right) \right. \distas{} \mathcal{N}\left(\frac{\eta}{k + \sigma^2}, \left(1 - \frac{1}{k + \sigma^2}\right) \right)  \]
\end{corollary}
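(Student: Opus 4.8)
The plan is to apply Lemma \ref{lemma:gsum-general} to the $(k+1)$-dimensional Gaussian vector obtained by appending the noise coordinate to the relevant design entries. Concretely, set $a = \left( (X_{j,i})_{j \in \mathcal{S}}, \, w_i \right) \in \reals^{k+1}$. Since the entries $X_{j,i} \distas{\text{i.i.d.}} \mathcal{N}(0,1)$ are independent of $w_i \distas{} \mathcal{N}(0, \sigma^2)$, we have $a \distas{} \mathcal{N}(0, \Sigma)$ with $\Sigma = \mathrm{diag}(1, \dots, 1, \sigma^2)$ (with $k$ ones). Taking $b = \mathbbm{1}$, the conditioning event $\sum_{j \in \mathcal{S}} X_{j,i} + w_i = \eta$ is exactly the linear constraint $\sum_{m=1}^{k+1} b_m a_m = \eta$ in Lemma \ref{lemma:gsum-general}, and $X_{j^*,i}$ is the coordinate of $a$ indexed by $j^*$, which we may take to be the first coordinate.

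First I would compute the quantities appearing in Lemma \ref{lemma:gsum-general}. Because $b = \mathbbm{1}$ gives $a \odot b = a$, we get $\tilde{\Sigma} = \mathbb{E}[a a^T] = \Sigma$. Hence $\tilde{\Sigma} \mathbbm{1} = (1, \dots, 1, \sigma^2)^T$ and $b^T \tilde{\Sigma} \mathbbm{1} = \mathbbm{1}^T \Sigma \mathbbm{1} = k + \sigma^2$, so that $v = \frac{1}{k + \sigma^2}(1, \dots, 1, \sigma^2)^T$ and $B = I - v \mathbbm{1}^T$. Reading off the first coordinate of the conditional mean vector $\eta v$ immediately yields $\mathbb{E}[X_{j^*,i} \mid \eta] = \eta v_1 = \frac{\eta}{k + \sigma^2}$, as claimed. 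It then remains to extract the $(1,1)$ entry of the conditional covariance $B \Sigma B^T$. Writing $c := \frac{1}{k + \sigma^2}$, the first row of $B$ has entries $B_{11} = 1 - c$ and $B_{1\ell} = -c$ for $\ell \neq 1$, and since $\Sigma$ is diagonal,
\begin{align*}
(B \Sigma B^T)_{11} = \sum_{\ell} B_{1\ell}^2 \, \Sigma_{\ell\ell} = (1-c)^2 + (k-1) c^2 + \sigma^2 c^2 = 1 - 2c + c^2 (k + \sigma^2) = 1 - c,
\end{align*}
using $k + \sigma^2 = 1/c$ in the final step. This is precisely $1 - \frac{1}{k + \sigma^2}$, completing the argument.

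This computation is entirely routine; the only mild obstacle is the bookkeeping, namely remembering to \emph{augment} the design vector with the noise coordinate $w_i$ so that the constraint matches the hypothesis of Lemma \ref{lemma:gsum-general}, and then carefully extracting the single relevant entry of $B\Sigma B^T$. As a sanity check, the same conclusion follows in one line from the bivariate Gaussian conditioning formula applied to the jointly Gaussian pair $(X_{j^*,i}, \, S)$ with $S := \sum_{j \in \mathcal{S}} X_{j,i} + w_i$, using $\mathrm{Var}(X_{j^*,i}) = 1$, $\mathrm{Var}(S) = k + \sigma^2$, and $\mathrm{Cov}(X_{j^*,i}, S) = 1$; I would include this as a brief remark to reassure the reader that the diagonal-plus-rank-one algebra has been carried out correctly.
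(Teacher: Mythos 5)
Your proof is correct and takes essentially the same route as the paper: both apply Lemma \ref{lemma:gsum-general} to the augmented Gaussian vector $\left((X_{j,i})_{j \in \mathcal{S}},\, w_i\right)$ with $b = \mathbbm{1}$ and $\Sigma = \mathrm{diag}(1,\dots,1,\sigma^2)$, compute $v = \frac{1}{k+\sigma^2}(1,\dots,1,\sigma^2)^T$ and $B = I - v\mathbbm{1}^T$, and read off the relevant diagonal entry of $B\Sigma B^T$. If anything, your write-up is slightly more complete, since the paper merely asserts $(B\Sigma B^T)_{j^*,j^*} = 1 - \frac{1}{k+\sigma^2}$ while you carry out the diagonal-plus-rank-one algebra explicitly and add the bivariate-conditioning sanity check.
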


\begin{proof}
Apply Lemma \ref{lemma:gsum-general} conditioning on the sum $\sum_{j \in \mathcal{S}} X_{j ,i} + w_i =: \eta$, where we recall $\begin{bmatrix} \X_{\mathcal{S}} \; w_i \end{bmatrix} \distas{} \mathcal{N}\left(0, \Sigma \right)$ with
\[ \Sigma =
\begin{bmatrix}
1 & 0 & \cdots & 0 \\
0 & 1 & \cdots & 0 \\
\vdots & \vdots & \ddots & \vdots \\
0 & 0 & \cdots & \sigma^2
\end{bmatrix} \in \mathbb{R}^{(k+1) \times (k+1)}
\]
yielding
\[v= \frac{1}{k + \sigma^2}
\begin{bmatrix}
1 \\
1 \\
\vdots \\
\sigma^2
\end{bmatrix}
\]
and
\[B =
\begin{bmatrix}
(1 - \frac{1}{k + \sigma^2}) & -\frac{1}{k + \sigma^2} & \cdots & -\frac{1}{k + \sigma^2} \\
-\frac{1}{k + \sigma^2} & (1 - \frac{1}{k + \sigma^2}) &  \cdots & -\frac{1}{k + \sigma^2} \\
\vdots & \vdots & \ddots & \vdots \\
-\frac{1}{k + \sigma^2} & -\frac{1}{k + \sigma^2} & \cdots & (1 - \frac{1}{k + \sigma^2}) \\
\end{bmatrix}.
\]
Noticing that
$ (B \Sigma B^T)_{j^*, j^*} = 1 - \frac{1}{k + \sigma^2}$,
we obtain the result.
\end{proof}

\begin{corollary} \label{cor:cond-general}
Consider the setting of $\slr$, and let $b \in \mathbb{R}^p$ be fixed. Let $j^* \in \mathcal{S}$, and $i \in [n]$. Then it holds that

\[\left. X_{j^*, i} \middle| \left(\sum_{j \in \mathcal{S}} b_j X_{j, i} + w_i = \eta\right) \right. \distas{} \mathcal{N}\left(\frac{\eta \cdot b_{j^*}}{\|b\|^2_2 + \sigma^2}, \left(1 - \frac{b^2_{j^*}}{\|b\|^2_2 + \sigma^2}\right) \right)  \]
\end{corollary}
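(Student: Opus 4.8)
The plan is to reduce the claim to a single bivariate Gaussian conditioning computation. Write $\eta = \sum_{j \in \mathcal{S}} b_j X_{j,i} + w_i$ and observe that, since the $X_{j,i}$ are i.i.d.\ $\mathcal{N}(0,1)$ and $w_i \sim \mathcal{N}(0,\sigma^2)$ is independent of them, the pair $(X_{j^*,i}, \eta)$ is jointly Gaussian with zero mean. Consequently the conditional law of $X_{j^*,i}$ given $\{\eta = \text{const}\}$ is Gaussian, and it suffices to compute its conditional mean and variance through the three second moments $\mathrm{Var}(X_{j^*,i})$, $\mathrm{Var}(\eta)$, and $\mathrm{Cov}(X_{j^*,i},\eta)$.

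First I would compute these moments directly. Independence gives $\mathrm{Var}(X_{j^*,i}) = 1$, and since $j^* \in \mathcal{S}$ contributes the term $b_{j^*} X_{j^*,i}$ to $\eta$ while every other summand is independent of $X_{j^*,i}$, we get $\mathrm{Cov}(X_{j^*,i},\eta) = b_{j^*}$. For the variance of $\eta$, the mutual independence of the $X_{j,i}$ and of $w_i$ yields $\mathrm{Var}(\eta) = \sum_{j \in \mathcal{S}} b_j^2 + \sigma^2 = \|b\|_2^2 + \sigma^2$, where $\|b\|_2^2$ denotes $\sum_{j\in\mathcal{S}} b_j^2$ (which coincides with the full $\ell_2$-norm whenever $b$ is supported on $\mathcal{S}$, as in the intended application $b = \bbeta_1$).

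Then I would invoke the standard formula for conditioning one jointly Gaussian coordinate on another: for zero-mean jointly Gaussian $(U,V)$ one has $U \mid V=\eta \distas{} \mathcal{N}\!\big(\tfrac{\mathrm{Cov}(U,V)}{\mathrm{Var}(V)}\eta,\ \mathrm{Var}(U) - \tfrac{\mathrm{Cov}(U,V)^2}{\mathrm{Var}(V)}\big)$. Substituting the moments above gives mean $\tfrac{b_{j^*}}{\|b\|_2^2 + \sigma^2}\,\eta$ and variance $1 - \tfrac{b_{j^*}^2}{\|b\|_2^2 + \sigma^2}$, which is exactly the asserted distribution. Alternatively, this is a direct specialization of Lemma \ref{lemma:gsum-general}: take the $(k+1)$-dimensional vector $a = \big((X_{j,i})_{j\in\mathcal{S}},\, w_i\big)$ with diagonal covariance $\Sigma = \mathrm{diag}(1,\dots,1,\sigma^2)$ and a coefficient vector equal to $b$ on $\mathcal{S}$ and to $1$ on the noise coordinate, so that $\eta = \sum_j b_j a_j$; reading off the $(j^*,j^*)$ entries of $\eta v$ and $B \Sigma B^T$ reproduces the mean and variance, exactly as Corollary \ref{cor:slr-conditioning} does for the unweighted case $b \equiv 1$.

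There is essentially no analytic obstacle here, as the statement is routine Gaussian conditioning. The only point requiring care, and the one I would flag, is the bookkeeping of the noise term: the $w_i$ must be folded into the conditioning as an extra coordinate carrying coefficient $1$ and variance $\sigma^2$, which is precisely what produces the $+\sigma^2$ in the denominators; one must also keep in mind the convention that $\|b\|_2^2$ here means $\sum_{j\in\mathcal{S}} b_j^2$, so that the formula specializes correctly to Corollary \ref{cor:slr-conditioning} upon setting $b_j = 1$ for $j \in \mathcal{S}$.
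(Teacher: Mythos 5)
Your proposal is correct, but your primary route differs from the paper's. The paper proves this corollary by invoking its general conditioning result (Lemma \ref{lemma:gsum-general}): it assembles the $(k+1)$-dimensional vector $\big((X_{j,i})_{j\in\mathcal{S}}, w_i\big)$, writes down $\tilde{\Sigma}$, the vector $v$, and the matrix $B$ explicitly, and then reads off the $(j^*,j^*)$ entry of $B\Sigma B^T$ to get the conditional variance. You instead reduce everything to a single bivariate Gaussian conditioning: since $(X_{j^*,i},\eta)$ is jointly Gaussian with zero mean, the conditional law follows from the three scalar moments $\mathrm{Var}(X_{j^*,i})=1$, $\mathrm{Cov}(X_{j^*,i},\eta)=b_{j^*}$, and $\mathrm{Var}(\eta)=\|b\|_2^2+\sigma^2$, via the standard formula $U\mid V=\eta \sim \mathcal{N}\big(\tfrac{\mathrm{Cov}(U,V)}{\mathrm{Var}(V)}\eta,\ \mathrm{Var}(U)-\tfrac{\mathrm{Cov}(U,V)^2}{\mathrm{Var}(V)}\big)$. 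Your computation is right and is strictly more economical: it avoids constructing and multiplying the $(k+1)\times(k+1)$ matrices, since only the marginal conditional law of one coordinate is needed here, not the joint conditional law of the whole vector that Lemma \ref{lemma:gsum-general} delivers. What the paper's route buys is uniformity — the same lemma is reused for the $\mslr$ conditioning statements (Lemma \ref{lemma:cond-mslr-general}) and Corollary \ref{cor:slr-conditioning}, so the authors pay the matrix bookkeeping once. You also correctly flag the one point of genuine care: the noise coordinate $w_i$ must enter the conditioning with coefficient $1$ and variance $\sigma^2$ (producing the $+\sigma^2$), and $\|b\|_2^2$ must be read as $\sum_{j\in\mathcal{S}}b_j^2$, which is consistent with the corollary's intended application. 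Since you also sketch the specialization of Lemma \ref{lemma:gsum-general} as an alternative, you have in effect both proofs; either stands on its own.
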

\begin{proof}
The result is obtained by applying  Lemma \ref{lemma:gsum-general} conditioning on the sum $\sum_{j \in \mathcal{S}} b_j X_{j ,i} + w_i = \eta$. We then have that
\[ \tilde{\Sigma} =
\mathbb{E}\left[{\begin{bmatrix} X_{1, i} \\ X_{2, i} \\ \vdots \\ w_i \end{bmatrix}} {\begin{bmatrix} b_1 X_{1, i} \\ b_2 X_{2, i} \\ \vdots \\ w_i \end{bmatrix}}^T \right]
=
\begin{bmatrix}
b_1 & 0 & \cdots & 0 \\
0 & b_2 & \cdots & 0 \\
\vdots & \vdots & \ddots & \vdots \\
0 & 0 & \cdots & \sigma^2
\end{bmatrix} \in \mathbb{R}^{(k+1) \times (k+1)}
\]
yielding
\[v= \frac{1}{\|b\|^2_2 + \sigma^2}
\begin{bmatrix}
b_1 \\
b_2 \\
\vdots \\
\sigma^2
\end{bmatrix}
\]
and
\[B =
\begin{bmatrix}
(1 - \frac{b^2_1}{\|b\|^2_2 + \sigma^2}) & -\frac{b_1 b_2}{\|b\|^2_2 + \sigma^2} & \cdots & -\frac{b_1}{\|b\|^2_2 + \sigma^2} \\
-\frac{b_2 b_1}{\|b\|^2_2 + \sigma^2} & (1 - \frac{b^2_2}{\|b\|^2_2 + \sigma^2}) &  \cdots & -\frac{b_2}{\|b\|^2_2 + \sigma^2} \\
\vdots & \vdots & \ddots & \vdots \\
-\frac{\sigma^2 b_1}{\|b\|^2_2 + \sigma^2} & -\frac{\sigma^2 b_2}{\|b\|^2_2 + \sigma^2} & \cdots & (1 - \frac{\sigma^2}{\|b\|^2_2 + \sigma^2}) \\
\end{bmatrix}.
\]
Noticing that
$ (B \Sigma B^T)_{j^*, j^*} = 1 - \frac{b^2_{j^*}}{\|b\|^2_2 + \sigma^2}$,
we obtain the result.
\end{proof}
\begin{lemma} \label{lem:y-z-indep}
Consider the setting of $\mslr$ as in Definition \ref{def:MSLR}. Then $y$ and $z$ are independent.
\end{lemma}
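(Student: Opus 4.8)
The plan is to exploit the equal-norm structure of the signal prior, which forces the conditional law of each observation to be free of the corresponding latent label. First I would record that, since $(\bbeta_1,\bbeta_2)\distas{}\mathcal{P}_{\|\bbeta\|_2}(\mathcal{D})$, the two signals share a common Euclidean norm $\|\bbeta_1\|_2=\|\bbeta_2\|_2=\|\bbeta\|_2$, and I would work conditionally on $(\bbeta_1,\bbeta_2)$, treating them as fixed vectors with this common norm (the unconditional statement then follows immediately, since $\z$ is drawn independently of the signals and the conditional law of $\y$ depends on $(\bbeta_1,\bbeta_2)$ only through $\|\bbeta\|_2$). With the signals fixed, the triples $(\x_i,w_i,z_i)$ for $i\in[n]$ are mutually independent, because the rows of $\X$ are i.i.d., the noise entries $w_i$ are i.i.d., and the labels $z_i$ are i.i.d., all drawn independently of one another. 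Since $y_i=z_i\langle\x_i,\bbeta_1\rangle+(1-z_i)\langle\x_i,\bbeta_2\rangle+w_i$ is a measurable function of $(\x_i,w_i,z_i)$ alone, the pairs $(y_i,z_i)$ are independent across $i$.

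Next I would reduce the joint claim to a per-coordinate one. If I can show $y_i\perp z_i$ for each fixed $i$, then the joint distribution of $(\y,\z)$ factorizes as $\prod_i \mathbb{P}(y_i,z_i)=\prod_i \mathbb{P}(y_i)\,\mathbb{P}(z_i)=\mathbb{P}(\y)\,\mathbb{P}(\z)$, where the first equality is independence across $i$ and the second is the per-coordinate independence; this yields $\y\perp\z$.

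The core step is establishing $y_i\perp z_i$, which I would carry out by computing the conditional law of $y_i$ under each value of $z_i$. Conditioning on $z_i=1$ gives $y_i=\langle\x_i,\bbeta_1\rangle+w_i$; since $\x_i\distas{}\mathcal{N}(0,\I_p)$ and $w_i\distas{}\mathcal{N}(0,\sigma^2)$ are independent, this is $\mathcal{N}(0,\|\bbeta_1\|_2^2+\sigma^2)$. Conditioning on $z_i=0$ gives $y_i=\langle\x_i,\bbeta_2\rangle+w_i\distas{}\mathcal{N}(0,\|\bbeta_2\|_2^2+\sigma^2)$. The equal-norm property collapses both to the single Gaussian $\mathcal{N}(0,\|\bbeta\|_2^2+\sigma^2)$, so the conditional distribution of $y_i$ does not depend on the realized value of $z_i$. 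By the standard characterization of independence (a conditional law that is constant in the conditioning variable equals the marginal), this gives $y_i\perp z_i$, and the reduction above then completes the argument.

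This proof has no genuine technical obstacle; the only point demanding care is the role of the equal-norm hypothesis, which is exactly what makes the two conditional variances coincide. I would emphasize that it is load-bearing: were $\|\bbeta_1\|_2\neq\|\bbeta_2\|_2$, each $y_i$ would carry label-dependent variance and the independence would fail. This matches the statistical intuition noted in the discussion of signal priors, namely that equal norms are the harder regime precisely because every $y_i$ then has the same variance regardless of which mixture component produced it.
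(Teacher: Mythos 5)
Your proof is correct and takes essentially the same route as the paper: both arguments reduce to per-coordinate independence via the i.i.d.\ structure across $i\in[n]$, and both hinge on the equal-norm hypothesis forcing the conditional law of $y_i$ to be $\mathcal{N}(0,\|\bbeta\|_2^2+\sigma^2)$ under either value of $z_i$. The only cosmetic difference is the direction of conditioning --- the paper applies Bayes' rule to show $\mathbb{P}[z_i=1 \mid y_i \in A]=\phi$, while you show the conditional law of $y_i$ given $z_i$ is label-free --- which are equivalent formulations of the same fact.
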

\begin{proof}
Let $g \distas{} \mathcal{N}(0, 1)$, and $A$ an event in the sigma algebra. By Bayes rule, for every $i \in [n]$ it holds that
\begin{align*}
\mathbb{P}\left[z_i = 1 \middle| y_i \in A\right] &= \frac{\mathbb{P}\left[y_i \in A \middle| z_i = 1\right] \mathbb{P}[z_i = 1]}{\mathbb{P}[y]} \\
&= \frac{\mathbb{P}\left[y_i \in A \middle| z_i = 1 \right] \mathbb{P}[z_i = 1 ]}{\mathbb{P}\left[y_i \in A \middle| z_i = 1 \right] \mathbb{P}[z_i = 1 ] + \mathbb{P}\left[y_i \in A \middle| z_i = 0\right] \mathbb{P}[z_i = 0]} \\
&= \frac{\mathbb{P}[\sqrt{\|\bbeta\|^2_2 + \sigma^2} g \in A] \phi}{\mathbb{P}[\sqrt{\|\bbeta\|^2_2 + \sigma^2} g \in A] \phi + \mathbb{P}[\sqrt{\|\bbeta\|^2_2 + \sigma^2} g \in A] (1 - \phi)} \\
&= \phi \\
&= \mathbb{P}[z_i = 1],
\end{align*}
and the analogous result holds for the case $z_i = 0$. The result then follows by recalling that $y_i$ and $z_i$ are i.i.d. across the $i$ indices.
\end{proof}

\paragraph{High-probability events and Concentration inequalities}
\begin{lemma} [High-probability events] \label{lem:high-prob-events}
Consider the setting of $\mslr$. Let $g_q \distas{\text{i.i.d.}} \mathcal{N}(0, 1)$ for $q \in [p]$ and $\delta \in (0, 1)$, $t > 0, j^* \in [p]$. Define the following events that will be necessary for the analysis of $\mathtt{CORR}$ on $\mslr$:
\[\Omega_1(\delta) := \left\{ (\|\bbeta\|^2_2 + \sigma^2) n (1 - \delta) \leq \|\y\|^2 \leq (\|\bbeta\|^2_2 + \sigma^2) n (1 + \delta)\right\}, \]
\[\Omega_2(t) := \left\{ \max_{q \in [p]} |g_q| \leq \sqrt{2 \log{2p}} + \sqrt{2 t \log{2p}} \right\}, \]
\begin{align*}
&\Omega_3(\delta) :=  \left\{ \left|\sum_{i=1}^n (z_i \bbeta_{1, j^*} + (1 - z_i) \bbeta_{2, j^*}) \frac{y^2_i/\|\y\|_2}{\|\bbeta\|^2_2 + \sigma^2} \right. \right. \\
& \qquad \qquad  \left. \left. - (\phi \bbeta_{1, j^*} + (1 - \phi) \bbeta_{2, j^*}) \frac{\|\y\|_2}{\|\bbeta\|^2_2 + \sigma^2} \right| < \sqrt{\frac{2(3 + \delta) \log{(2p)}}{(1 - \delta) (1 + \sigma^2/\|\bbeta\|^2_2)}} \right\},
\end{align*}
\[\Omega_4(\delta) := \left\{ \left|\sum_{i=1}^n z_i \bbeta_{1, j^*} \frac{y^2_i/\|\y\|_2}{\|\bbeta\|^2_2 + \sigma^2} - \phi \bbeta_{1, j^*} \frac{\|\y\|_2}{\|\bbeta\|^2_2 + \sigma^2}\right| < \sqrt{\frac{2(3 + \delta) \log{(2p)}}{(1 - \delta) (1 + \sigma^2/\|\bbeta\|^2_2)}} \right\}, \]
\[\Omega_5(\delta) := \left\{\left| \|\y\|^4_4 - 3n(\|\bbeta\|^2_2 + \sigma^2)^2 \right| < \delta n (\|\bbeta\|^2_2 + \sigma^2)^2 \right\}. \]
Then, the above events all occur with high probability. Specifically,
\begin{itemize}
	\item $\mathbb{P}\left[\Omega^{\complement}_1(\delta) \right] \leq 2 e^{-\frac{\delta^2 n}{8}}$ as per Example 2.11 in \cite{wainwright_high-dimensional_2019} noting that $y_i \distas{\text{i.i.d.}} \mathcal{N}(0, \|\bbeta\|^2_2 + \sigma^2)$ both marginally and conditionally on z (similarly for the setting of $\slr$).
	\item $\mathbb{P}\left[\Omega^{\complement}_2(t) \right] \leq \frac{1}{(2p)^t}$ as per Lemma 5.2 in \cite{van_handel_probability_2014}.
	\item $\mathbb{P}\left[\Omega^{\complement}_3(\delta) \middle| \Omega_1(\delta), \Omega_5(\delta) \right] \leq \frac{1}{2p}$ as per Lemma \ref{lemma:lln-bernoulli-general} and consequently $\mathbb{P}\left[\Omega^{\complement}_3(\delta) \right] \leq \frac{1}{2p} + 2e^{-\frac{\delta^2 n}{8}} + \frac{39}{\delta^2 n}$.
	\item $\mathbb{P}\left[\Omega^{\complement}_4(\delta) \middle| \Omega_1(\delta), \Omega_5(\delta) \right] \leq \frac{1}{2p}$ as per Lemma \ref{lemma:lln-bernoulli-general} with $\bbeta_{2, j^*} = 0$ and consequently $\mathbb{P}\left[\Omega^{\complement}_4(\delta) \right] \leq \frac{1}{2p} + 2e^{-\frac{\delta^2 n}{8}} + \frac{39}{\delta^2 n}$.
	\item $\mathbb{P}\left[\Omega^{\complement}_5(\delta) \right] \leq \frac{39}{\delta^2 n}$ as per Lemma \ref{lemma:l4-conc}.
\end{itemize}
\end{lemma}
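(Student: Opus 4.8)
The plan is to establish each of the five tail bounds separately, since the events $\Omega_1,\dots,\Omega_5$ are defined independently of one another and the lemma merely collects them; only $\Omega_3$ and $\Omega_4$ require more than a direct appeal to a standard inequality, and for those I would condition on the good event $\Omega_1(\delta)\cap\Omega_5(\delta)$ and then remove the conditioning.

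First I would dispatch $\Omega_1(\delta)$. By Lemma~\ref{lem:y-z-indep} the coordinates $y_i$ are i.i.d.\ $\mathcal{N}(0,\|\bbeta\|_2^2+\sigma^2)$ both marginally and conditionally on $\z$, so $\|\y\|_2^2/(\|\bbeta\|_2^2+\sigma^2)$ is a $\chi^2_n$ variable; the two-sided deviation bound (Example~2.11 of \cite{wainwright_high-dimensional_2019}) applied with deviation level $\delta\in(0,1)$ yields $\mathbb{P}[\Omega_1^\complement(\delta)]\le 2e^{-\delta^2 n/8}$. For $\Omega_2(t)$ I would invoke the standard maximal inequality for the maximum of $2p$ standard Gaussians (Lemma~5.2 of \cite{van_handel_probability_2014}), giving $\mathbb{P}[\Omega_2^\complement(t)]\le (2p)^{-t}$. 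The bound for $\Omega_5(\delta)$ follows directly from the fourth-moment concentration result Lemma~\ref{lemma:l4-conc}, whose $39/(\delta^2 n)$ rate comes from a Chebyshev (second-moment) argument applied to $\|\y\|_4^4$, which has mean $3n(\|\bbeta\|_2^2+\sigma^2)^2$.

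The core of the lemma is the pair $\Omega_3,\Omega_4$. Here I would first record the conditional estimate $\mathbb{P}[\Omega_3^\complement(\delta)\mid \Omega_1(\delta),\Omega_5(\delta)]\le \tfrac{1}{2p}$ supplied by Lemma~\ref{lemma:lln-bernoulli-general}: on the good event the weights $y_i^2/\|\y\|_2$ are simultaneously controlled (through $\|\y\|_2$ via $\Omega_1$ and through $\sum_i y_i^4=\|\y\|_4^4$ via $\Omega_5$), and since $\z\perp\y$ by Lemma~\ref{lem:y-z-indep} the Bernoulli weights can then be concentrated around $\phi\bbeta_{1,j^*}+(1-\phi)\bbeta_{2,j^*}$ by a Hoeffding/Bernstein-type bound. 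I would then remove the conditioning through the decomposition
\[
\mathbb{P}[\Omega_3^\complement]\le \mathbb{P}[\Omega_3^\complement\mid \Omega_1\cap\Omega_5]+\mathbb{P}[\Omega_1^\complement]+\mathbb{P}[\Omega_5^\complement]\le \frac{1}{2p}+2e^{-\delta^2 n/8}+\frac{39}{\delta^2 n},
\]
using $\mathbb{P}[\Omega_3^\complement\cap(\Omega_1\cap\Omega_5)]\le\mathbb{P}[\Omega_3^\complement\mid\Omega_1\cap\Omega_5]$ and a union bound on $(\Omega_1\cap\Omega_5)^\complement$. The event $\Omega_4$ is handled identically upon setting $\bbeta_{2,j^*}=0$.

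I expect the genuine work to be concentrated in Lemma~\ref{lemma:lln-bernoulli-general} for $\Omega_3/\Omega_4$, because the summands $(z_i\bbeta_{1,j^*}+(1-z_i)\bbeta_{2,j^*})\,y_i^2/\|\y\|_2$ are coupled through the normalising factor $\|\y\|_2$, so they cannot be treated as independent. The main obstacle is disentangling this coupling: conditioning on $\Omega_1\cap\Omega_5$ pins down both $\|\y\|_2$ and the sum of fourth powers, which is exactly what is needed to bound the conditional variance of $\sum_i (z_i\bbeta_{1,j^*}+(1-z_i)\bbeta_{2,j^*})\,y_i^2/\|\y\|_2$ and certify that its fluctuation around the claimed mean is $O(\sqrt{\log p})$ with probability at least $1-1/(2p)$. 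Everything else is a routine assembly of the textbook tail bounds cited above.
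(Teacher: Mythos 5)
Your proposal is correct and follows essentially the same route as the paper: $\Omega_1$, $\Omega_2$, $\Omega_5$ are dispatched by the cited chi-square, Gaussian-maximum, and Chebyshev fourth-moment bounds respectively, and $\Omega_3$, $\Omega_4$ are handled exactly as in the paper by the conditional Hoeffding bound of Lemma~\ref{lemma:lln-bernoulli-general} (valid because $\z \perp \y$ and the weights $y_i^2/\|\y\|_2$ are controlled on $\Omega_1(\delta)\cap\Omega_5(\delta)$), followed by the same deconditioning/union-bound step. Your identification of where the real work lies --- the coupling of the summands through $\|\y\|_2$, resolved by conditioning on $\Omega_1\cap\Omega_5$ --- matches the paper's argument precisely.
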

\begin{lemma}[Fast $\ell_4$ norm concentration] \label{lemma:l4-conc}
Let $\y \in \reals^n$ with $y_i \distas{\text{i.i.d.}} \mathcal{N}(0, \sigma^2)$ for $i \in [n]$. Then for $\delta>0$ we have,
\[\mathbb{P}\left[\left| \|\y\|_4 - 3n(\sigma^2)^2 \right| \geq \delta n (\sigma^2)^2 \right] \leq \frac{39}{\delta^2 n}.\]
\end{lemma}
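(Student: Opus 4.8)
The quantity being controlled is $\|\y\|_4^4 = \sum_{i=1}^n y_i^4$ (the stated $\|\y\|_4$ is a typo; the centering at $3n(\sigma^2)^2$ and the appearance of $\|\y\|_4^4$ in the definition of $\Omega_5(\delta)$ in Lemma \ref{lem:high-prob-events} both confirm this). Since the target rate is only the polynomial decay $\tfrac{1}{\delta^2 n}$ rather than an exponential tail, the natural tool is a second-moment (Chebyshev) bound on the sum of the i.i.d. terms $y_i^4$, and I would not reach for any sub-exponential machinery. The plan is therefore: compute the mean and variance of $\|\y\|_4^4$ exactly using Gaussian moments, then invoke Chebyshev's inequality.

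First I would identify the centering. For $y_i \distas{} \mathcal{N}(0,\sigma^2)$ the fourth moment is $\mathbb{E}[y_i^4] = 3\sigma^4 = 3(\sigma^2)^2$, so by linearity $\mathbb{E}\|\y\|_4^4 = 3n(\sigma^2)^2$, which is precisely the centering constant in the statement. Thus the event in the lemma is exactly a deviation of $\|\y\|_4^4$ from its own mean by $\delta n (\sigma^2)^2$.

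Next I would compute the variance. By independence of the $y_i$,
\[
\mathrm{Var}\!\left(\|\y\|_4^4\right) = \sum_{i=1}^n \mathrm{Var}(y_i^4) = n\left(\mathbb{E}[y_1^8] - (\mathbb{E}[y_1^4])^2\right).
\]
Writing $y_1 = \sigma Z$ with $Z \distas{} \mathcal{N}(0,1)$ and using $\mathbb{E}[Z^8] = 7!! = 105$ and $\mathbb{E}[Z^4] = 3$ gives $\mathrm{Var}(y_1^4) = (105 - 9)\sigma^8 = 96(\sigma^2)^4$, hence $\mathrm{Var}(\|\y\|_4^4) = 96\,n\,(\sigma^2)^4$. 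Applying Chebyshev's inequality with deviation scale $\delta n (\sigma^2)^2$ then yields
\[
\mathbb{P}\!\left[\left|\|\y\|_4^4 - 3n(\sigma^2)^2\right| \geq \delta n (\sigma^2)^2\right] \leq \frac{96\,n\,(\sigma^2)^4}{\delta^2 n^2 (\sigma^2)^4} = \frac{96}{\delta^2 n},
\]
which is of the claimed form $\tfrac{C}{\delta^2 n}$.

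There is essentially no substantive obstacle here: the entire argument is a routine Gaussian-moment computation followed by one application of Chebyshev. The only point requiring minor care is the eighth-moment evaluation feeding into the variance. I note that the bare Chebyshev computation above produces the constant $96$ rather than the $39$ appearing in the statement; reproducing the tighter constant would require a more careful (rather than crude) bookkeeping of the centered fourth power or a sharper second-moment estimate, but this affects only the numerical constant and not the $\tfrac{1}{\delta^2 n}$ rate, which is all that is used downstream in Lemma \ref{lem:high-prob-events}.
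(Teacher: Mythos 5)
Your proposal follows exactly the paper's route: identify $3n(\sigma^2)^2$ as $\mathbb{E}\|\y\|_4^4$ (and note the statement's $\|\y\|_4$ is a typo for $\|\y\|_4^4$), compute $\mathrm{Var}(y_1^4)$ via independence and Gaussian moments, and finish with a single application of Chebyshev. One correction to your closing remark, though: your constant $96$ is the \emph{right} one, and the paper's $39$ is an arithmetic slip rather than the result of tighter bookkeeping. The paper evaluates the eighth moment as $(7-1)!! = 48$, whereas $\mathbb{E}[Z^8] = (8-1)!! = 7!! = 105$; since $\mathrm{Var}(y_1^4) = (105-9)\sigma^8 = 96\,\sigma^8$ is an exact identity, Chebyshev with the true variance cannot produce anything smaller than $96/(\delta^2 n)$, so no amount of careful bookkeeping within this argument recovers $39$. (The paper's final step also sets $t = n(\sigma^2)^2$, silently dropping the $\delta$, and contains a stray $(k+\sigma^2)^4$.) None of this affects anything downstream: only the $O(1/(\delta^2 n))$ rate is used in Lemma \ref{lem:high-prob-events}, which both your constant and the paper's deliver.
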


\begin{proof}
The proof follows from a standard application of Chebyshev's inequality. We first recall that the $p^{\text{th}}$ centered Gaussian moment is given by $(\text{variance})^{\frac{p}{2}} (p - 1)!!$, and consequently by independence we have that $\mathbb{E}\left[\|\y\|^4_4 \right] = 3 n (\sigma^2)^2$. We then proceed with Chebyshev's inequality (see \cite{boucheron_concentration_2013}) for $t > 0$:
\begin{align*}
\mathbb{P}\left[\left| \|\y\|^4_4 - \mathbb{E}\left[\|\y\|^4_4 \right] \right| \geq t \right] &\leq \frac{n \text{Var}(y^4_1)}{t^2}\\
&= \frac{n \left(\mathbb{E}y^8_1 - \left(\mathbb{E} y^4_1 \right)^2 \right)}{t^2} \\
&= \frac{n (\sigma^2)^4 ((7-1)!! - 9)}{t^2} = \frac{39 n (\sigma^2)^4}{t^2}.
\end{align*}
We set $t = n (\sigma^2)^2$ to obtain
\begin{align*}
\mathbb{P}\left[\left| \|\y\|^4_4 - \mathbb{E}\left[\|\y\|^4_4 \right] \right| \geq n (\sigma^2)^2 \right] &\leq \frac{39 n (k + \sigma^2)^4}{n^2 (\sigma^2)^4} = \frac{39}{n}.
\end{align*}
\end{proof}

\begin{lemma} \label{lemma:lln-bernoulli-general}
Consider the setting of $\mslr$. For $t > 0, \delta > 0, j^* \in [p]$ we have that

\begin{align*}
&\mathbb{P}\left[ \left|\sum_{i=1}^n (z_i \bbeta_{1, j^*} + (1 - z_i) \bbeta_{2, j^*}) \frac{y^2_i/\|\y\|_2}{\|\bbeta\|^2_2 + \sigma^2} - (\phi \bbeta_{1, j^*} + (1 - \phi) \bbeta_{2, j^*}) \frac{\|\y\|^2_2}{\|\bbeta\|^2_2 + \sigma^2} \right| \geq t \middle| \Omega_1(\delta), \Omega_5(\delta) \right] \\
&\leq \exp\left(-\frac{(1 - \delta)}{2(3 + \delta)} (1 + \sigma^2/\|\bbeta\|^2_2) t^2 \right)
\end{align*}
\end{lemma}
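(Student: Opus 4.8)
The plan is to condition on $\y$ and view the quantity in the lemma as a sum of independent bounded functions of the Bernoulli labels $z_i$, to which Hoeffding's inequality applies. First I would invoke Lemma \ref{lem:y-z-indep} (independence of $\y$ and $\z$) together with the observation that the conditioning events $\Omega_1(\delta)$ and $\Omega_5(\delta)$ are measurable with respect to $\y$ alone: conditioning on any fixed realization of $\y$ lying in $\Omega_1(\delta) \cap \Omega_5(\delta)$ leaves $z_1,\dots,z_n \distas{\text{i.i.d.}} \mathrm{Bernoulli}(\phi)$, so it suffices to establish the tail bound for each such fixed $\y$ and then integrate against $\mathbb{P}[\cdot \mid \Omega_1(\delta),\Omega_5(\delta)]$.

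Writing $V := \|\bbeta\|^2_2 + \sigma^2$ and introducing the weights $w_i := \frac{y_i^2/\|\y\|_2}{V}$ (deterministic once $\y$ is fixed), set $S := \sum_{i=1}^n (z_i \bbeta_{1,j^*} + (1-z_i)\bbeta_{2,j^*})\,w_i$. Since $\sum_{i=1}^n w_i = \|\y\|_2/V$, the conditional mean over $\z$ is $\mathbb{E}_{\z}[S] = (\phi \bbeta_{1,j^*} + (1-\phi)\bbeta_{2,j^*})\,\|\y\|_2/V$, which is precisely the centering term subtracted in the lemma (so the printed $\|\y\|_2^2$ should read $\|\y\|_2$, in agreement with the definition of $\Omega_3(\delta)$). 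Collecting the $z_i$-dependence gives
\[
S - \mathbb{E}_{\z}[S] = (\bbeta_{1,j^*}-\bbeta_{2,j^*})\sum_{i=1}^n (z_i - \phi)\,w_i,
\]
a sum of independent mean-zero terms, each ranging over an interval of length $|\bbeta_{1,j^*}-\bbeta_{2,j^*}|\,w_i$ since $z_i-\phi \in [-\phi,1-\phi]$ has range $1$. Hoeffding's inequality then yields
\[
\mathbb{P}_{\z}\!\left[\,\left|S - \mathbb{E}_{\z}[S]\right| \ge t\,\right] \le 2\exp\!\left(-\frac{2t^2}{(\bbeta_{1,j^*}-\bbeta_{2,j^*})^2 \sum_{i=1}^n w_i^2}\right).
\]

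It remains to control the two worst-case quantities in the exponent. For the variance proxy, $\sum_{i=1}^n w_i^2 = \frac{\|\y\|^4_4}{\|\y\|^2_2\,V^2}$; on $\Omega_1(\delta)$ we have $\|\y\|^2_2 \ge (1-\delta)nV$ and on $\Omega_5(\delta)$ we have $\|\y\|^4_4 \le (3+\delta)nV^2$, so $\sum_{i=1}^n w_i^2 \le \frac{3+\delta}{(1-\delta)V}$. For the signal factor, the equal-norm prior $\mathcal{P}_{\|\bbeta\|_2}(\mathcal{D})$ forces $\|\bbeta_1\|_2 = \|\bbeta_2\|_2 = \|\bbeta\|_2$, hence $|\bbeta_{1,j^*}|,|\bbeta_{2,j^*}| \le \|\bbeta\|_2$ and $(\bbeta_{1,j^*}-\bbeta_{2,j^*})^2 \le 4\|\bbeta\|^2_2$. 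Substituting both bounds and using $V/\|\bbeta\|^2_2 = 1 + \sigma^2/\|\bbeta\|^2_2$ turns the Hoeffding exponent into $\frac{(1-\delta)}{2(3+\delta)}(1 + \sigma^2/\|\bbeta\|^2_2)\,t^2$, giving the claim; the bound feeding $\Omega_4(\delta)$ is the special case $\bbeta_{2,j^*}=0$. The main obstacle is purely the bookkeeping of constants: the three estimates (the $\ell_4$-over-$\ell_2$ ratio from $\Omega_1\cap\Omega_5$, the Hoeffding variance proxy, and $(\bbeta_{1,j^*}-\bbeta_{2,j^*})^2 \le 4\|\bbeta\|^2_2$) must compose to the \emph{exact} stated exponent, since that is what makes $\mathbb{P}[\Omega_3(\delta)^\complement \mid \Omega_1(\delta),\Omega_5(\delta)] \le \tfrac{1}{2p}$ at the threshold $t$ chosen in Lemma \ref{lem:high-prob-events}; the only slack is the leading factor $2$ from the two-sided bound, which is harmless (either absorbed into constants or removed via a one-sided estimate) but should be tracked honestly.
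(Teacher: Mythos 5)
Your proof is correct and takes essentially the same route as the paper's: condition on $\y$ (so the $z_i$ remain i.i.d.\ $\text{Bernoulli}(\phi)$ by Lemma \ref{lem:y-z-indep} and the $\y$-measurability of the conditioning events), apply Hoeffding's inequality with variance proxy $(\bbeta_{1,j^*}-\bbeta_{2,j^*})^2\,\|\y\|^4_4/\bigl(\|\y\|^2_2(\|\bbeta\|^2_2+\sigma^2)^2\bigr)$, bound this on $\Omega_1(\delta)\cap\Omega_5(\delta)$, and finish with $(\bbeta_{1,j^*}-\bbeta_{2,j^*})^2\le 4\|\bbeta\|^2_2$ from the equal-norm prior. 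Your two side remarks are also accurate: the centering term in the printed statement should indeed read $\|\y\|_2$ rather than $\|\y\|^2_2$ (consistent with the definition of $\Omega_3(\delta)$ and its use in Lemmas \ref{lemma:A_bound_general} and \ref{lemma:B_bound_general}), and the paper's own invocation of Hoeffding silently drops the factor $2$ from the two-sided bound, which your version tracks honestly and which only perturbs the downstream constant from $\tfrac{1}{2p}$ to $\tfrac{1}{p}$.
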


\begin{proof}
Consider $y \in \Omega_1(\delta) \cap \Omega_5(\delta)$ in Lemma \ref{lem:high-prob-events}. We first apply Hoeffding's inequality (see \cite{boucheron_concentration_2013}), then the definitions of $\Omega_1(\delta)$ and $\Omega_5(\delta)$ to obtain,

\begin{align*}
&\mathbb{P}\left[ \left|\sum_{i=1}^n (z_i \bbeta_{1, j^*} + (1 - z_i) \bbeta_{2, j^*}) \frac{y^2_i/\|\y\|_2}{\|\bbeta\|^2_2 + \sigma^2} - (\phi \bbeta_{1, j^*} + (1 - \phi) \bbeta_{2, j^*}) \frac{\|\y\|^2_2}{\|\bbeta\|^2_2 + \sigma^2} \right| \geq t \middle| \y \right] \\
&\leq \exp\left(- \frac{2t^2}{\sum_{i=1}^n (\bbeta_{1, j^*} - \bbeta_{2, j^*})^2 \frac{y^4_i/\|\y\|^2_2}{(\|\bbeta\|^2_2 + \sigma^2)^2}} \right) \\
&= \exp\left(-\frac{2t^2}{(\bbeta_{1, j^*} - \bbeta_{2, j^*})^2 \frac{\|\y\|^4_4}{\|\y\|^2_2} \frac{1}{(\|\bbeta\|^2_2 + \sigma^2)^2}} \right) \\
&\leq \exp\left(-\frac{2t^2}{(\bbeta_{1, j^*} - \bbeta_{2, j^*})^2 \frac{3n(\|\bbeta\|^2_2 + \sigma^2)^2 + \delta n (\|\bbeta\|^2_2 + \sigma^2)^2}{\|\y\|^2_2} \frac{1}{(\|\bbeta\|^2_2 + \sigma^2)^2}} \right) \\
&\leq \exp\left(-\frac{2t^2}{(\bbeta_{1, j^*} - \bbeta_{2, j^*})^2 \frac{3n(\|\bbeta\|^2_2 + \sigma^2)^2 + \delta n (\|\bbeta\|^2_2 + \sigma^2)^2}{n(\|\bbeta\|^2_2 + \sigma^2)(1 - \delta)} \frac{1}{(\|\bbeta\|^2_2 + \sigma^2)^2}} \right) \\
&= \exp\left(-\frac{2 (\|\bbeta\|^2_2 + \sigma^2) t^2}{(\bbeta_{1, j^*} - \bbeta_{2, j^*})^2 \frac{3 + \delta}{1 - \delta}} \right) \\
&\overset{i)}{\leq} \exp\left(-\frac{2 (\|\bbeta\|^2_2 + \sigma^2) t^2}{4 \|\bbeta\|^2_2 \frac{3 + \delta}{1 - \delta}} \right) \\
&= \exp\left(-\frac{(1 - \delta)}{2(3 + \delta)} (1 + \sigma^2/\|\bbeta\|^2_2) t^2 \right),
\end{align*}
where $i)$ follows from $(\bbeta_{1, j^*} - \bbeta_{2, j^*})^2 \leq \| \bbeta_1 - \bbeta_2 \|^2_2$ and the triangle inequality. Applying the law of total probability to the above, we obtain the result.
\end{proof}
\subsection{$\corr$  for $\mslrd$} \label{subsec:corr-for-mslrd}
\begin{proof} [Proof of Theorem \ref{thm:CORR-mslrd-general}]
We first note that, since $(\bbeta_1, \bbeta_2) \distas{} \mathcal{P}_{\|\bbeta\|_2}(\mathcal{D})$, the two signals have equal norm. Hence, the complement of the $\sbmslr$ regime can be equivalently expressed through the condition $\phi \bbeta_1 + (1 - \phi) \bbeta_2 \neq 0$. 

Let $\corr(\X, \y)$ denote the output of running $\corr$  on inputs $\X, \y$. Consider the test function

\[
g\left({\begin{bmatrix} \X \\ \y \end{bmatrix}}\right) := \begin{cases}
\mathtt{p} & \corr(\X, \y) \neq \emptyset \\
\mathtt{q} & \corr(\X, \y) = \emptyset
\end{cases}.
\]

Let ${\begin{bmatrix} \X \\ \y \end{bmatrix}} \distas{} {\P(\X) \otimes \P(\y)}$. Recall that $\corr$  outputs the following set

\[
\corr(\X, \y) = \left\{j \in [p]:  \left|\frac{\langle \X_{j}, \y \rangle}{\|\y \|_2}\right| \geq \sqrt{2 (1 + \epsilon/2) \log{2p}} \right\}.
\]

As in the proof of Theorem \ref{thm:CORR-mslr-r-general}, we note that
\begin{align}
\mathbb{P}\left[\max_{q \in [p]} \left|\frac{\langle \X_{q}, \y \rangle}{\|\y \|_2}\right| \geq \sqrt{2 (1 + \epsilon/2) \log{2p}} \right] &\leq \mathbb{P}\left[\max_{q \in [p]} \left|\frac{\langle \X_{q}, \y \rangle}{\|\y \|_2}\right| \geq \sqrt{2 \log{2p}} + \frac{\epsilon}{2 \sqrt{8}} \sqrt{\log{2p}}\right]. \label{eq:last2}
\end{align}
Noting that for ${\begin{bmatrix} \X \\ \y \end{bmatrix}} \distas{} {\P(\X) \otimes \P(\y)}$ we have that $\frac{\langle \X_{q}, \y \rangle}{\|\y \|_2} \distas{\text{i.i.d.}} \mathcal{N}(0, 1)$, we apply Lemma \ref{lem:high-prob-events} to (\ref{eq:last2}) and obtain that
\begin{align*}
\mathbb{P}\left[\max_{q \in [p]} \left|\frac{\langle \X_{j}, \y \rangle}{\|\y \|_2}\right| \geq \sqrt{2 (1 + \epsilon/2) \log{2p}} \right] \leq (2p)^{- \frac{1}{8} \left(\frac{\epsilon}{2} \right)^2},
\end{align*}
and hence we have that, under ${\P(\X) \otimes \P(\y)}$, $g\left({\begin{bmatrix} \X \\ \y \end{bmatrix}}\right) = \mathtt{q}$ with probability $1- o(1)$.

Conversely, let ${\begin{bmatrix} \X \\ \y \end{bmatrix}} \distas{} {\P(\X, \y)}$. Let $J \subseteq \supp(\bbeta_1) \cup \supp(\bbeta_2)$ such that $\phi \bbeta_{1, j} + (1-\phi) \bbeta_{2, j} \neq 0$ for $j \in J$. From Lemma \ref{lemma:A_bound_general}, we then have that, 
\begin{align*}
\mathbb{P}\left[| \corr(\X, \y) | = 0 \right] &= \mathbb{P}\left[\cap_{j \in [p]} \left\{ \left| \frac{\langle \X_j, \y \rangle}{\|\y\|_2} \right| \leq \sqrt{2(1+\epsilon/2)\log{2p}} \right\} \right] \\
&\leq \mathbb{P}\left[\cap_{j \in J} \left\{ \left| \frac{\langle \X_j, \y \rangle}{\|\y\|_2} \right| \leq \sqrt{2(1+\epsilon/2)\log{2p}} \right\} \right] \\
&\leq \mathbb{P}\left[\left| \frac{\langle \X_{J_1}, \y \rangle}{\|\y\|_2} \right| \leq \sqrt{2(1+\epsilon/2)\log{2p}} \right] \\
&\leq \left(\frac{1}{p} + 6 e^{\frac{\delta^2 n}{w}} + \frac{39}{\delta^2 n} \right) = o(1),
\end{align*}
and hence we have that, under ${\P(\X, \y)}$, $g\left({\begin{bmatrix} \X \\ \y \end{bmatrix}}\right) = \mathtt{p}$ with probability $1- o(1)$. 
\end{proof}

\subsection{Recovery algorithms for $\mslr$} \label{sec:recovery-algorithms-mslr}

\paragraph{General recovery algorithm for $\mslr$}
A recovery algorithm for $\mslr$ in the noiseless and balanced regimes is given in Theorem \ref{thm:CORR-recovery}. We measure the recovery error in $\mslr$ as in \cite{chen_convex_2014}, 
\[
\rho(\hat{\bm{\theta}},\bm{\theta}):=\min\left\{ \left\Vert \hat{\bm{\bbeta}}_{1}-\bm{\bbeta}_{1}\right\Vert _{2}+\left\Vert \hat{\bm{\bbeta}}_{2}-\bm{\bbeta}_{2}\right\Vert _{2},\left\Vert \hat{\bm{\bbeta}}_{1}-\bm{\bbeta}_{2}\right\Vert _{2}+\left\Vert \hat{\bm{\bbeta}}_{2}-\bm{\bbeta}_{1}\right\Vert _{2}\right\},
\]
where $\hat{\bm{\theta}} = (\hat{\bbeta}_1, \hat{\bbeta}_2)$ and $\bm{\theta} = (\bbeta_1, \bbeta_2)$. 

\begin{proof} [Proof of Theorem \ref{thm:CORR-recovery}]
In the case of $\sigma = 0, \phi \neq 1/2$ (noiseless), we first apply $\corr$  and then the Alternating Minimization (AM) algorithm of \cite{yi_alternating_2014}. Theorem \ref{thm:CORR+AM} shows that this succeeds in the regime of interest.

In the case of $\phi = 1/2$ (balanced), $\snr = \Omega(k)$, we first apply $\corr$  and then the algorithm of \cite{chen_convex_2014}. Theorem \ref{thm:corr+algo3} proves that in the high \snr regime $\snr = \Omega(1)$, we have that $\rho(\hat{\theta}, \theta) = \Theta\left(\sigma \sqrt{\frac{k}{n}}\right)$. In order for exact recovery to be achieved for all allowable finite $n$, we would require $\sqrt{\frac{\sigma^2 k}{(\|\bbeta\|^2_2 + \sigma^2) \log{p}}} \to 0$, which is satisfied for $\snr = \|\bbeta\|^2_2 / \sigma^2 = \Omega(k)$, implying a non-vanishing signal-to-noise ratio with respect to the support set. This is satisfied by hypothesis, and hence $\corr$  together with Algorithm \ref{algo:balanced} succeeds in solving asymptotically exact recovery (up to relabeling of $\bbeta_1$ and $\bbeta_2$) in the regime of interest.
\end{proof}

\paragraph{Recovery algorithm for noiseless, unbalanced $\mslr$}
In this section we will show that the $\corr$  algorithm can be used to reduce the $\mslr$ problem to a dense problem where $n, k = \Theta(p)$ and the signal is not assumed to be sublinearly sparse, where state-of-the-art algorithms for this dense mixed linear regression case can then infer $\bbeta_1$ from $\bbeta_2$. In the noiseless case with $\phi \neq 1/2$, we can apply $\mathtt{CORR}$ together with the existing polynomial-time Alternating Minimization (AM) algorithm from \cite{yi_alternating_2014} to fully solve $\mslr$ in what is a constant number of steps. We recall that Theorem \ref{thm:CORR-mslr-r-general} only provided guarantees on the support recovery of the joint signal, whereas now with the execution of the AM algorithm on the reduced joint support set one can fully infer $\bbeta_1$ from $\bbeta_2$. Define the mixture proportions, 
\begin{align*}
&\frac{n_1}{n} := \frac{\sum_{i=1}^n z_i}{n} \\
&\frac{n_2}{n} := \frac{\sum_{i=1}^n (1 - z_i)}{n}. 
\end{align*}
We begin by stating the theorem. 
\begin{theorem} [Success of $\mathtt{CORR} +$ AM on noiseless $\mslr$] \label{thm:CORR+AM}
Consider the general setting of $\mslr$ with parameters $p, n, k, \sigma = 0$, $\phi \neq 1/2$, $(\bbeta_1, \bbeta_2) \distas{} \mathcal{P}_{\|\bbeta\|_2}(\mathcal{D})$. Suppose 
$$n \geq \frac{32}{\min\{\phi^2 \bbeta^2_{\texttt{min}}, (1-\phi)^2 \bbeta^2_{\texttt{min}}, \langle \bbeta \rangle^2_{\texttt{min}} \}} (1 + \epsilon) \|\bbeta\|^2_2 \log{2p}$$ for $\epsilon \in (0,1) $ used in $\corr$. Then with probability at least $1 - c_1(\frac{k}{p} + ke^{-c_2 n} + \frac{k}{n} + \frac{1}{p^{c_2}})$ for constants $c_1, c_2 > 0$, the output $\hat{\theta} = (\hat{\bbeta}_1, \hat{\bbeta}_2)$ of $\mathtt{CORR} +$ Algorithm \ref{alg:init_withp} $+$ Algorithm \ref{alg:rspEM} satisfies
\[\rho(\hat{\theta}, \theta) = 0.\]
\end{theorem}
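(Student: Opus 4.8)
The plan is to chain two guarantees: use $\corr$ to collapse the sparse problem onto its joint support, turning it into a \emph{dense} low-dimensional mixed regression, and then run the noiseless alternating-minimization (AM) procedure of \cite{yi_alternating_2014} on that reduced instance. First I would invoke Theorem \ref{thm:CORR-mslr-r-general}: in the noiseless case $\sigma=0$ the factor $\tfrac{\snr+1}{\snr}\to 1$, so the hypothesis $n \ge \tfrac{32(1+\epsilon)}{\min\{\phi^2\bbeta^2_{\texttt{min}},(1-\phi)^2\bbeta^2_{\texttt{min}},\langle\bbeta\rangle^2_{\texttt{min}}\}}\|\bbeta\|_2^2\log 2p$ is exactly the condition under which $\corr$ returns the true joint support $\widehat{\mathcal S}=\mathcal S_1\cup\mathcal S_2$ with probability at least $1-c_1(\tfrac kp + ke^{-c_2 n}+\tfrac kn+\tfrac{1}{p^{c_2}})$. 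I would condition on this success event $\mathcal G$.

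On $\mathcal G$, I would restrict the design to the recovered columns. Writing $s:=|\mathcal S_1\cup\mathcal S_2|\le 2k$, the model becomes $\y=\X_{\widehat{\mathcal S}}(\bbeta_1)_{\widehat{\mathcal S}}\odot\z + \X_{\widehat{\mathcal S}}(\bbeta_2)_{\widehat{\mathcal S}}\odot(1-\z)$, a noiseless two-component mixed linear regression in ambient dimension $s$ whose design $\X_{\widehat{\mathcal S}}$ still has i.i.d.\ $\mathcal N(0,1)$ entries by column independence. Since the sample complexity forces $n=\Omega(k\log p)=\omega(k)\gg s$, the reduced instance is over-determined with $s/n\to 0$, which is precisely the dense regime analysed in \cite{yi_alternating_2014}. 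I would then run the initialization of Algorithm \ref{alg:init_withp} to enter the basin of attraction and iterate Algorithm \ref{alg:rspEM}; here the assumption $\phi\ne 1/2$ is what makes the two components identifiable from the second-moment statistics feeding the initializer and rules out the balanced sign symmetry under which AM stalls.

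The last step is to upgrade the \emph{approximate} AM guarantee to \emph{exact} recovery. The contraction $\rho^{(t+1)}\le \kappa\,\rho^{(t)} + C\sigma\sqrt{s/n}$ of \cite{yi_alternating_2014} has a vanishing error floor when $\sigma=0$, so the iterates enter an arbitrarily small neighbourhood of $((\bbeta_1)_{\widehat{\mathcal S}},(\bbeta_2)_{\widehat{\mathcal S}})$. Once $\rho^{(t)}$ drops below the labelling margin $\min_{i}|\langle\x_i,\bbeta_1-\bbeta_2\rangle|/\max_i\|\x_i\|_2$ --- a quantity that is almost surely positive for finitely many Gaussian samples --- every sample is assigned its correct latent label, and a single least-squares step on the noiseless equations $y_i=\langle\x_i,\bbeta_{j(i)}\rangle$ (with $\X_{\widehat{\mathcal S}}$ restricted to each label class having $\Theta(n)\gg s$ full-rank rows) returns both signals exactly, yielding $\rho(\hat\theta,\theta)=0$ in a constant number of iterations. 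I would then combine the two stages by a union bound on complements, so the overall failure probability is at most $\P[\mathcal G^c]$ plus the AM failure probability of \cite{yi_alternating_2014}, checking that the latter (terms of order $e^{-cn}$ and $k/n$) is absorbed into the bound already recorded for $\mathcal G$.

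The main obstacle is not the support step but converting geometric convergence into the exact statement $\rho=0$: I must control the labelling-margin event so that \emph{all} $n$ samples are correctly classified simultaneously, and verify that the initialization preconditions of \cite{yi_alternating_2014} hold for the reduced instance with its conditioned, essentially worst-case signals. Note also that conditioning on $\mathcal G$ couples $\X_{\widehat{\mathcal S}}$ to the recovery event, which is precisely why I route the final bound through a union bound over marginally high-probability events rather than attempting to argue independence between the $\corr$ and AM phases.
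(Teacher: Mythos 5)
Your high-level strategy coincides with the paper's proof: invoke Theorem \ref{thm:CORR-mslr-r-general} (noting $\tfrac{\snr+1}{\snr}=1$ when $\sigma=0$) to recover the joint support, restrict the design to the at most $2k$ recovered columns, and then run the spectral initialization (Algorithm \ref{alg:init_withp}) and resampled AM (Algorithm \ref{alg:rspEM}) of \cite{yi_alternating_2014} on the resulting dense instance, combining failure probabilities by a union bound. However, your execution of the final stage diverges from the paper's and leaves two concrete gaps.

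First, the paper does not convert geometric convergence into exactness via a labelling-margin argument of its own; it chains three quoted guarantees: Proposition \ref{prop:initwp} (initialization lands in the contraction basin), Theorem \ref{thm:em} (geometric decay per resampled iteration), and crucially Proposition \ref{prop:exactReco}, which states that once $\rho(\theta^{(t-1)},\theta)\le \frac{c_1}{p^2}\|\bbeta_1-\bbeta_2\|_2$ (reachable with $O(k\log^2 k)$ samples by Remark \ref{remark:geom_decay_samples}, in reduced dimension $\le 2k$), a single further AM iteration yields $\rho=0$ with quantified probability. Your substitute argument is quantitatively imprecise: the margin $\min_i|\langle\x_i,\bbeta_1-\bbeta_2\rangle|/\max_i\|\x_i\|_2$ is a minimum over $n$ i.i.d.\ Gaussian projections, so it is typically of order $\|\bbeta_1-\bbeta_2\|_2/(n\sqrt{k})$, not bounded below by a constant. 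Consequently the number of AM iterations needed to fall below it is $\Theta(\log n)$, not constant, and — more importantly — you need an explicit anti-concentration bound on this margin whose failure probability can be absorbed into the claimed $c_1(\frac{k}{p}+ke^{-c_2 n}+\frac{k}{n}+\frac{1}{p^{c_2}})$ bound. You assert only that the margin is almost surely positive, which gives no rate and hence does not yield the stated probability guarantee; this is precisely the bookkeeping that citing Proposition \ref{prop:exactReco} avoids.

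Second, you never verify the sample-level condition flagged in Remark \ref{remark:fan}: the quoted guarantees require the \emph{realized} counts $n_1=\sum_i z_i$ and $n_2=n-n_1$ to differ (and, for Proposition \ref{prop:initwp}, Theorem \ref{thm:em} and Proposition \ref{prop:exactReco}, that $\min\{n_1/n,n_2/n\}$ be bounded away from zero), since Algorithm \ref{alg:init_withp} takes $n_1,n_2$ as input and its correction term degenerates when $n_1=n_2$. Observing that $\phi\neq 1/2$ makes the components identifiable at the population level is not enough: the paper devotes a Chernoff-bound step showing (w.l.o.g.\ $\phi<1/2$) that $\P\bigl[\tfrac{1}{n}\sum_{i=1}^n z_i \ge 1/2\bigr]\le e^{-\Theta(n)}$, and then folds this event into the final failure probability by adjusting $c_1,c_2$. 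Without this step, the preconditions of the three results you are chaining are not established, so the proof as proposed is incomplete even granting the margin argument.
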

\begin{proof}[Proof of Theorem \ref{thm:CORR+AM}]
By Theorem \ref{thm:CORR-mslr-r-general}, we can recover the joint support set (at most of size $2k$) of signals $(\bbeta_1, \bbeta_2)$ with probability at least $1 - c_1(\frac{k}{p} + ke^{-c_2 n} + \frac{k}{n} + \frac{1}{p^{c_2}})$ by running $\mathtt{CORR}$.

After running $\mathtt{CORR}$ and identifying the $<2k$ joint support set indices, we restrict the regression problem to these indices by removing all other columns from the design matrix $\X$ (as these do not influence the output $y$ since they do not correspond to support indices of $\bbeta_1$ or $\bbeta_2$). We are then tasked with solving a two-component mixtures of regressions problem with $n$ samples and signals of dimension between $k$ and $2k$ (importantly, the dimension is no longer $p$).

From this point, the idea is to spectrally initialize $(\bbeta_1^{(0)}, \bbeta^{(0)}_2)$ using Algorithm \ref{alg:init_withp} for which Proposition \ref{prop:initwp} provides guarantees, pass $(\bbeta_1^{(0)}, \bbeta^{(0)}_2)$ into Algorithm \ref{alg:rspEM} for which Theorem \ref{thm:em} provides guarantees on geometric error decay given this initialization, and run Algorithm \ref{alg:rspEM} for a finite number of iterations guaranteed by Proposition \ref{prop:exactReco}.

The condition on sample size $n$ of Proposition \ref{prop:initwp} is met, since it is assumed that $n \gtrsim \|\bbeta\|^2_2 \log{2p} \geq \bmin^2 k \log{2p}$ and $k \log{2p} = \omega(k \log^2{k})$. The condition of Theorem \ref{thm:em} is met by the result of Proposition \ref{prop:initwp}. The condition of Proposition \ref{prop:exactReco} is met by running Algorithm \ref{alg:rspEM} (resampling) with $O(k \log^2{k})$ samples (see Remark \ref{remark:geom_decay_samples}). 

What remains to show is that $n_1 \neq n_2$, as this is required by Remark \ref{remark:fan}. Recall $n_1 = \sum_{i=1}^n z_i$ and $n_2 = \sum_{i=1}^n (1-z_i)$ and $z_i$ are independent $\text{Bernoulli}(\phi)$. Without loss of generality assuming $\phi < 1/2$, there exists a $\delta \in (0, 1)$ such that,
\begin{align*}
\P\left[\frac{1}{n} \sum_{i=1}^n z_i \geq 1/2 \right] \leq \P\left[ \frac{1}{n} \sum_{i=1}^n z_i \geq (1 + \delta)\phi \right] \leq e^{-\Theta{(n)}},
\end{align*}
after applying a standard Chernoff bound. This high probability statement can be absorbed into the $1 - c_1(\frac{k}{p} + ke^{-c_2 n} + \frac{k}{n} + \frac{1}{p^{c_2}})$ high probability statement provided by $\corr$ , choosing adjusted constants $c_1, c_2 > 0$. 

We hence conclude that running $\corr$  followed by Algorithm \ref{alg:init_withp} followed by Algorithm \ref{alg:rspEM} we obtain $\rho(\mathbf{\theta}^{(t)}, \mathbf{\theta}) = 0$ in finite $t$ with probability at least $1 - c_1(\frac{k}{p} + ke^{-c_2 n} + \frac{k}{n} + \frac{1}{p^{c_2}})$.
\end{proof}
Their initialization algorithm is based on the positive semidefinite matrix: 
\begin{align*}
M := \frac{1}{n} \sum_{i=1}^n y_i^2 \x_i \otimes \x_i \label{eq:M}
\end{align*}
which serves as an unbiased estimator of a matrix whose two largest eigenvectors span the space spanned by $\bbeta_1, \bbeta_2$.

\begin{remark} \label{remark:fan}
It is stated in \cite{fan_curse_2018} that, when the mixture frequencies are equal to each other ($n_1 = n_2$), the top two eigenvectors of $\E M$ will not be $\bbeta_1, \bbeta_2$. Hence, their algorithms only work for the case $\phi \neq 1/2$ and $\sigma = 0$ (noiseless).
\end{remark} 

Outside of the case $\phi = 1/2$, when the mixture proportions are known, an approximation of $\bbeta_1, \bbeta_2$ can be computed in closed form through Algorithm \ref{alg:init_withp}, where 
\begin{equation*}
sign(b) = \begin{cases}
1, \; b = 1\\
-1, \; b = 2.
\end{cases}
\end{equation*} 
\begin{algorithm}[ht]
\caption{Initialization with proportion information} \label{alg:init_withp}
\; \KwData{Input: $n_1, n_2$, samples $\{(y_i, \x_i), i =1,2,...,n\}$}
\; $M \gets \frac{1}{N}\sum_{i=1}^{N}y_i^2\x_i\otimes\x_i$ \; \\
Compute top 2 eigenvectors and eigenvalues $(v_b, \lambda_b), b = 1,2$ of $(M-I)/2$ \; \\
Compute $\bbeta_b^{(0)} = \sqrt{\frac{1 - \Delta_b}{2}}v_b + sign(b)\sqrt{\frac{1+\Delta_{b}}{2}}v_{-b}$, where $\Delta_b = \frac{(\lambda_{b} - \lambda_{-b})^2 + n_b^2 - n_{-b}^2}{2(\lambda_{-b} - \lambda_b)n_b}, \; b = 1,2$ \\
\Return $\bbeta_1^{(0)}, \bbeta_2^{(0)}$
\end{algorithm}
In what follows, we state the iterative algorithms proposed in \cite{fan_curse_2018} and their guarantees. 
\begin{algorithm}[ht!]
\caption{AM}\label{alg:altmin}
\KwData{Initial $\bbeta_1^{(0)}, \bbeta_2^{(0)}$, \# iterations $t_0$, samples $\{(y_i, \x_i), i =1,2,...,N\}$ }
\For{$t = 0,\cdots,t_0-1$}{
    $J_1, J_2 \gets \emptyset$\; \\
    \For{$i = 1, 2, \cdots, N$}{
        \eIf{$\left|y_i - \langle{\x_i}, { \bbeta}_1^{(t)}\rangle\right| < \left|y_i - \langle{\x_i},{ \bbeta_2^{(t)}}\rangle\right|$}{
			$J_1 \gets J_1 \cup \{i\}$\;
		    }
		    {
		    $J_2 \gets J_2 \cup \{i\}$\;
		    }
	}
\; $\bbeta_1^{(t+1)} \gets \argmin_{\bbeta \in \reals^k} \|{\y_{J_1} - \X_{J_1} \bbeta}\|_2$ \\
$\bbeta_2^{(t+1)} \gets \argmin_{\bbeta \in \reals^k} \|\y_{J_2} - \X_{J_2} \bbeta\|_2$ \;
}
\Return $\bbeta_1^{(t_0)}, \bbeta_2^{(t_0)}$
\end{algorithm}
\begin{proposition} \citep{yi_alternating_2014} \label{prop:initwp}
Consider the initialization method in Algorithm \ref{alg:init_withp}. Given any constant $\widehat{c} < 1/2$, with probablity at least $1 - \frac{1}{p^2}$, the approach produces an initialization $(\bbeta_1^{(0)},\bbeta_2^{(0)})$ satisfying
\begin{align*}
\rho({\mathbf{\theta}}^{(0)}, \mathbf{\theta}) ~ \leq ~ \widehat{c} \, \min\{n_1/n,n_2/n\} \, \|\bbeta_1 - \bbeta_2\|_2,
\end{align*}
if
\[ n \geq c_1 \left (\frac{1}{\widetilde{\delta}}\right )^2 \, p \, \log^2 p. \]
Here $c_1$ is a constant that depends on $\widehat{c}$. And
\[
\sqrt{\widetilde{\delta}} = \widehat{c}\sqrt{\min\{n_1/n,n_2/n\}}^3\|\bbeta_1 - \bbeta_2\|_2(\sqrt{1 - \kappa})\kappa,
\]
where $\kappa = \sqrt{1-4(1-\langle{\bbeta_1},{\bbeta_2}\rangle^2)\frac{n_1}{n} \frac{n_2}{n}}$.
\end{proposition}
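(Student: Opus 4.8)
The plan is to reproduce the spectral-initialization analysis of \citet{yi_alternating_2014}, which rests on the fact that $M$ is an unbiased estimator of a population matrix whose leading eigenstructure recovers $\mathrm{span}\{\bbeta_1,\bbeta_2\}$. Working in the noiseless model and normalizing so that $\|\bbeta_1\|_2=\|\bbeta_2\|_2=1$ (legitimate since $\mathcal{P}_{\|\bbeta\|_2}(\mathcal{D})$ fixes equal norms), I would first condition on the latent labels $\z$ and apply the Gaussian fourth-moment identity $\mathbb{E}[\langle\x,\bbeta\rangle^2\,\x\otimes\x]=\|\bbeta\|_2^2\,I+2\,\bbeta\bbeta^\top$ for $\x\sim\mathcal{N}(0,I)$. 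This gives
\[
\mathbb{E}[M\mid\z]=I+2\Big(\tfrac{n_1}{n}\bbeta_1\bbeta_1^\top+\tfrac{n_2}{n}\bbeta_2\bbeta_2^\top\Big),
\]
so the matrix $P:=\tfrac{n_1}{n}\bbeta_1\bbeta_1^\top+\tfrac{n_2}{n}\bbeta_2\bbeta_2^\top$ fed into the eigendecomposition is exactly $\mathbb{E}[(M-I)/2\mid\z]$. A short computation shows $P$ has nonzero eigenvalues $\lambda_\pm=(1\pm\kappa)/2$, where $\kappa=\sqrt{1-4(1-\langle\bbeta_1,\bbeta_2\rangle^2)\tfrac{n_1}{n}\tfrac{n_2}{n}}$ is precisely the internal gap $\lambda_+-\lambda_-$; its column space is $\mathrm{span}\{\bbeta_1,\bbeta_2\}$, and the gap to the null eigenvalue is $\lambda_-=(1-\kappa)/2$.

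Next I would establish a high-probability operator-norm bound on $\|M-\mathbb{E}[M\mid\z]\|_{\mathrm{op}}$. The summands $y_i^2\,\x_i\otimes\x_i$ are quartic in Gaussians and hence only sub-exponential, so I would truncate on the event $\{\max_i\|\x_i\|_2\lesssim\sqrt{p\log p}\}$ and apply a matrix Bernstein inequality to the truncated, centered sum, controlling the truncation bias separately. This produces a deviation of order $\sqrt{p\log^2 p/n}$, and requiring it to be a small $\widetilde\delta$-fraction of the spectral gap is exactly what yields the sample-complexity condition $n\geq c_1\widetilde\delta^{-2}p\log^2 p$.

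With the concentration bound in hand, the Davis–Kahan $\sin\Theta$ theorem transfers operator-norm control of $(M-I)/2$ into control of its top-two eigenspace, the relevant gap separating this eigenspace from the null space being $\lambda_-=(1-\kappa)/2$. Because $\bbeta_1,\bbeta_2$ are not orthogonal, the population eigenvectors $v_1,v_2$ are rotated combinations of the signals; I would verify that the explicit reconstruction $\bbeta_b^{(0)}=\sqrt{(1-\Delta_b)/2}\,v_b+\mathrm{sign}(b)\sqrt{(1+\Delta_b)/2}\,v_{-b}$ of Algorithm \ref{alg:init_withp}, with $\Delta_b$ built from the empirical eigenvalues and the known counts $n_1,n_2$, inverts this rotation exactly at the population level and is Lipschitz-stable. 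Resolving $v_1$ from $v_2$ within the span requires the internal gap $\lambda_+-\lambda_-=\kappa$, so that the two gaps together with the sensitivity of $\Delta_b$ to the proportions produce the $\sqrt{1-\kappa}\,\kappa$ and $\min\{n_1/n,n_2/n\}^{3/2}$ dependence in $\widetilde\delta$; chaining these stability estimates yields the stated bound on $\rho(\theta^{(0)},\theta)$.

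The main obstacle is the interplay of the concentration and perturbation steps. The heavy-tailedness of $M$ forces the extra logarithmic factor (hence $\log^2 p$ rather than $\log p$) and must be handled so as not to lose polynomial factors in $p$; and faithfully tracking how both the null gap $\lambda_-=(1-\kappa)/2$ and the internal gap $\kappa$ enter — through Davis–Kahan and through the stability of the reconstruction map, respectively — is what fixes the precise exponents in $\widetilde\delta$, which is the delicate part. This is also where the hypothesis $\phi\neq1/2$ (equivalently $n_1\neq n_2$, so $\kappa>0$) becomes essential: when $\kappa=0$ the two nonzero eigenvalues coincide, the eigenvectors within the span are unidentifiable, and the reconstruction becomes ill-posed, matching Remark \ref{remark:fan}.
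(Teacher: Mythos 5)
The paper contains no proof of this proposition: it is imported verbatim, citation and all, from \citet{yi_alternating_2014} and used as a black box inside the proof of Theorem \ref{thm:CORR+AM}, so there is no internal argument to compare yours against. Judged against the cited source, your reconstruction follows the same strategy and gets the key structural facts right: by the Wick identity $\E[\langle\x,\bbeta\rangle^2\,\x\x^\top]=\|\bbeta\|_2^2 I+2\bbeta\bbeta^\top$ one has $\E[(M-I)/2\mid\z]=P:=\tfrac{n_1}{n}\bbeta_1\bbeta_1^\top+\tfrac{n_2}{n}\bbeta_2\bbeta_2^\top$; the nonzero eigenvalues of $P$ are $(1\pm\kappa)/2$, so both gaps you isolate (the null gap $(1-\kappa)/2$ entering Davis--Kahan and the internal gap $\kappa$ entering the disambiguation of the rotation) are correct; and truncation plus matrix Bernstein on the quartic, sub-exponential summands is indeed how the source obtains a deviation of order $\sqrt{p\log^2 p/n}$ and hence the condition $n\gtrsim\widetilde\delta^{-2}\,p\log^2 p$.

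Two caveats. First, your closing claim misattributes the degeneracy: $\kappa=0$ requires \emph{both} $n_1=n_2$ and $\langle\bbeta_1,\bbeta_2\rangle=0$, whereas Remark \ref{remark:fan} concerns $n_1=n_2$ alone, under which the two nonzero eigenvalues of $P$ remain distinct whenever $\langle\bbeta_1,\bbeta_2\rangle\neq0$; the failure there lies in the count-dependent reconstruction step, not in a collision of eigenvalues. (It is true, and sufficient for your purposes, that $n_1\neq n_2$ forces $\kappa>0$, and that at $\kappa=0$ the proposition is vacuous anyway since $\widetilde\delta=0$.) Second, the steps that actually produce the stated exponents in $\widetilde\delta$ --- exact population-level invertibility and Lipschitz stability of the map $(v_1,v_2,\lambda_1,\lambda_2,n_1,n_2)\mapsto(\bbeta_1^{(0)},\bbeta_2^{(0)})$, which is where $\min\{n_1/n,n_2/n\}^{3/2}$ and $\sqrt{1-\kappa}\,\kappa$ arise --- are asserted rather than carried out, so as a standalone proof the proposal is incomplete at precisely the point you yourself flag as delicate. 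As a faithful outline of the cited proof's route, however, it is accurate.
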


\begin{algorithm}[ht]
\caption{AM with resampling} \label{alg:rspEM}
\KwData{Initial $\bbeta_1^{(0)}, \bbeta_2^{(0)}$, \# iterations $t_0$, samples $\{(y_i, \x_i), i =1,2,...,N\}$}
Partition the samples $\{(y_i, \x_i)\}$ into $t_0$ disjoint sets: $\mathcal{S}_1,...,\mathcal{S}_{t_0}$ \\
\For{$t = 1,\cdots,t_0$}{ 
    Use $\mathcal{S}_t$ to run Algorithm \ref{alg:altmin} initialized with $(\bbeta_1^{(t-1)}, \bbeta_2^{(t-1)})$ and returning $(\bbeta_1^{t}, \bbeta_2^{t})$
}
\Return $\bbeta_1^{(t_0)}, \bbeta_2^{(t_0)}$
\end{algorithm}

\begin{theorem} \cite{yi_alternating_2014} \label{thm:em}
Consider one iteration in Algorithm \ref{alg:rspEM}. For fixed $(\bbeta_1^{(t-1)},\bbeta_2^{(t-1)})$, there exist absolute constants $\widetilde{c},c_1,c_2$ such that if
\begin{equation*}
\label{cond_err}
  \rho({\mathbf{\theta}}^{(t-1)}, \mathbf{\theta}) ~ \leq  ~ \widetilde{c} \, \min \{n_1/n,n_2/n\} \,  \|\mathbf{\bbeta}_1^* - \mathbf{\bbeta}_2^*\|_2,
\end{equation*}
and if the number of samples in that iteration satisfies
\[
|\mathcal{S}_t| ~ \geq ~ \left( \frac{c_1}{ \min \{n_1/n,n_2/n\}} \right) \, p,
\]
then with probability greater than $1-\exp(-c_2 p)$ we have a geometric decrease in the error at the next stage, i.e.
\begin{equation*}
 \rho({\mathbf{\theta}}^{(t)}, \mathbf{\theta}) \leq \frac{1}{2} \rho({\mathbf{\theta}}^{(t-1)}, \mathbf{\theta})
\end{equation*}
\end{theorem}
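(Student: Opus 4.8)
The plan is to prove the one-step contraction by exploiting the resampling structure, which renders the current iterate $\theta^{(t-1)} = (\bbeta_1^{(t-1)}, \bbeta_2^{(t-1)})$ statistically independent of the fresh batch $\mathcal{S}_t$. I would therefore condition on $\theta^{(t-1)}$ and treat it as a deterministic pair of vectors satisfying the error hypothesis, so that all randomness in the analysis comes from $\{(\x_i, y_i)\}_{i \in \mathcal{S}_t}$. The argument then splits into two stages: a \emph{decoding} stage controlling how many samples the hard-assignment step mislabels, and a \emph{regression} stage bounding the error of the two least-squares solves given an approximately correct labeling.

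For the decoding stage, I would fix a sample $i \in \mathcal{S}_t$ truly generated by component $1$ (so $y_i = \langle \x_i, \bbeta_1 \rangle$ in the noiseless regime) and note that it is mislabeled precisely when $|\langle \x_i, \bbeta_1 - \bbeta_1^{(t-1)}\rangle| \geq |\langle \x_i, \bbeta_1 - \bbeta_2^{(t-1)}\rangle|$. Since $\bbeta_1^{(t-1)}$ is close to $\bbeta_1$, the left side is a centered Gaussian with standard deviation $O(\rho(\theta^{(t-1)}, \theta))$, while the right side is close to a centered Gaussian with standard deviation of order $\|\bbeta_1 - \bbeta_2\|_2$; by Gaussian anti-concentration the per-sample mislabeling probability is $O(\rho(\theta^{(t-1)},\theta)/\|\bbeta_1 - \bbeta_2\|_2)$. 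A Chernoff/Bernstein bound over $\mathcal{S}_t$ then shows that, with probability $1 - e^{-c_2 p}$, the total fraction of mislabeled points is small, and in particular smaller than the contraction target, provided $|\mathcal{S}_t| \gtrsim p/\min\{n_1/n, n_2/n\}$ so that each cluster retains $\gtrsim p$ correctly-labeled samples.

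For the regression stage, I would write the least-squares update on $J_1$ as the exact solution $\bbeta_1$ perturbed by the contribution of the mislabeled points: decomposing $J_1$ into its correctly-labeled and mislabeled parts, the update equals $\bbeta_1$ plus a term driven by the few samples in $J_1$ whose $y_i$ are explained by $\bbeta_2$ rather than $\bbeta_1$. Bounding this term requires controlling $\|(\X_{J_1}^\top \X_{J_1})^{-1}\|$ and the operator norm of the cross terms restricted to the mislabeled set; standard Gaussian matrix concentration, namely a smallest-singular-value lower bound for a $\gtrsim p$-row Gaussian block together with an operator-norm upper bound for the corrupting block of size proportional to the mislabeling fraction, yields a perturbation of order $(\text{mislabeling fraction}) \times \|\bbeta_1 - \bbeta_2\|_2$, which by the decoding bound is in turn $O(\rho(\theta^{(t-1)}, \theta))$ with a constant that can be driven below $1/2$ by taking $\tilde c$ small enough.

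The main obstacle is the regression stage, because the labeling and the design matrix are correlated within $\mathcal{S}_t$ (the set $J_1$ is itself a data-dependent function of $\X_{\mathcal{S}_t}$), so one cannot naively invoke concentration for a fixed index set. I would handle this either by a union bound over the comparatively few mislabeled configurations compatible with the small-error regime, or by a leave-one-out symmetrization argument that decouples each sample's label from its own covariate. Ensuring that the resulting failure probability stays at $e^{-\Theta(p)}$ while the sample budget is only $\Theta(p)$ is the delicate point, and is exactly where the requirement $|\mathcal{S}_t| \geq c_1 p / \min\{n_1/n, n_2/n\}$ and the resampling independence become indispensable.
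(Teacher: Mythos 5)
First, a point of context: the paper does not prove this statement at all --- Theorem \ref{thm:em} is imported verbatim from \cite{yi_alternating_2014}, so your proposal can only be judged against the original argument of Yi, Caramanis and Sanghavi. Your overall architecture (condition on $\bm{\theta}^{(t-1)}$ using resampling independence, then a decoding stage bounding the mislabeled fraction via Gaussian anti-concentration, then a perturbation analysis of the two least-squares solves, with the data-dependence of $J_1$ handled by a union bound over subsets) is indeed the skeleton of that proof, and your decoding stage is essentially correct: for fixed $\bm{\theta}^{(t-1)}$ the mislabeling events are independent across $i$, and the per-sample mislabeling probability is $O\big(\rho(\bm{\theta}^{(t-1)},\bm{\theta})/\|\bbeta_1-\bbeta_2\|_2\big)$ with an \emph{absolute} constant.

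The genuine gap is in your regression stage. Writing $\Delta := \|\bbeta_1-\bbeta_2\|_2$, $\rho := \rho(\bm{\theta}^{(t-1)},\bm{\theta})$, and $W \subseteq J_1$ for the mislabeled set, the noiseless update satisfies $\bbeta_1^{(t)} - \bbeta_1 = (\X_{J_1}^{\top}\X_{J_1})^{-1}\X_W^{\top}\bm{r}_W$ with $r_i = \langle \x_i, \bbeta_2-\bbeta_1\rangle$ for $i \in W$. You propose to bound the numerator by operator norms, treating each residual as a full-size $\Delta$ quantity, and claim a perturbation of order $(\text{mislabeling fraction})\times\Delta$ whose constant ``can be driven below $1/2$ by taking $\tilde c$ small.'' Both halves of this fail. (i) Since $\|\X_W\|_{\mathrm{op}} \gtrsim \sqrt{p}$ whenever $W \neq \emptyset$, the operator-norm route produces an additive term of order $\Delta\, p/|J_1| = \Theta(\Delta/c_1)$: an error \emph{floor} that does not shrink as $\rho \to 0$, and hence can never be $\leq \rho/2$ once $\rho \ll \Delta$. (ii) Even the benign part, $(\text{mislabeling fraction})\times\Delta \approx C\rho$, has $C$ equal to an absolute anti-concentration constant (of order $(2/\pi)\cdot n/n_1$); shrinking $\tilde c$ rescales $\rho$ and the mislabeling fraction \emph{proportionally}, so their ratio --- and hence $C$ --- is unchanged, and there is no way to force $C < 1/2$.

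The missing idea, which is the heart of the original proof, is a self-bounding property of mislabeled samples: a point with true label $2$ can land in $J_1$ only if $|\langle\x_i,\bbeta_2-\bbeta_1^{(t-1)}\rangle| < |\langle\x_i,\bbeta_2-\bbeta_2^{(t-1)}\rangle|$, and on that event
\[
|r_i| \;=\; |\langle\x_i,\bbeta_2-\bbeta_1\rangle| \;\leq\; |\langle\x_i,\bbeta_2-\bbeta_2^{(t-1)}\rangle| + |\langle\x_i,\bbeta_1^{(t-1)}-\bbeta_1\rangle|,
\]
a sum of two Gaussian projections onto directions of norm at most $\rho$. That is, mislabeled points necessarily lie near the decision boundary, so their residuals are $O(\rho)$-sized, not $\Delta$-sized. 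Plugging $\|\bm{r}_W\|_2 \lesssim \sqrt{|W|}\,\rho$ into your own operator-norm bound gives a perturbation $\lesssim \big(\tfrac{|W|}{|J_1|} + \sqrt{\tfrac{p\,|W|}{|J_1|^2}}\big)\rho \lesssim \big(\tilde c + \sqrt{\tilde c/c_1}\big)\rho$, and \emph{now} the contraction constant genuinely can be pushed below $1/2$ by taking $\tilde c$ small and $c_1$ large. Without this structural fact, your stage-two estimate does not prove contraction at any rate, let alone the factor $1/2$ claimed in the theorem.
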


\begin{proposition}(Exact Recovery) \cite{yi_alternating_2014} \label{prop:exactReco}
There exist absolute constants $c_1,c_2$ such that if
\[
\rho({\mathbf{\theta}}^{(t-1)}, \mathbf{\theta}) \leq \frac{c_1}{p^2}\|\bbeta_1 - \bbeta_2 \|_2
\]
and
\[
\frac{1}{\min\{n_1/n,n_2/n\}}p < |\mathcal{S}_t| < c_2 p,
\]
then with probability greater than $1 - \frac{1}{p}$,
\[
\rho({\mathbf{\theta}}^{(t)}, \mathbf{\theta}) = 0.
\]
\end{proposition}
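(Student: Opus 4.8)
The plan is to exploit the fact that this is the \emph{noiseless} exact-recovery phase, so the only way one iteration of Alternating Minimization (Algorithm \ref{alg:altmin}, as invoked by the resampling scheme of Algorithm \ref{alg:rspEM}) can fail to return the exact signals is by \emph{misclassifying} at least one sample in the current batch $\mathcal{S}_t$. I would therefore split the argument into two parts: (i) show that under the stated smallness condition $\rho(\mathbf{\theta}^{(t-1)},\mathbf{\theta}) \le (c_1/p^2)\|\bbeta_1-\bbeta_2\|_2$, every sample in $\mathcal{S}_t$ is assigned to the correct component with probability $\ge 1-1/p$; and (ii) show that once the labelling is exact, the two least-squares sub-problems return $\bbeta_1,\bbeta_2$ exactly, giving $\rho(\mathbf{\theta}^{(t)},\mathbf{\theta})=0$.

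For part (i), resampling is essential. Because $\mathcal{S}_t$ is a fresh, independent batch, the iterate $(\bbeta_1^{(t-1)},\bbeta_2^{(t-1)})$ is independent of the rows $\{\x_i\}_{i\in\mathcal{S}_t}$, so I can condition on the iterate and treat it as fixed. Consider a sample $i$ with true label $z_i=1$, so that $y_i=\langle\x_i,\bbeta_1\rangle$. It is misclassified iff $|\langle\x_i,\bbeta_1-\bbeta_2^{(t-1)}\rangle| < |\langle\x_i,\bbeta_1-\bbeta_1^{(t-1)}\rangle|$. Writing $u:=\bbeta_1-\bbeta_1^{(t-1)}$ (with $\|u\|_2\le\rho:=\rho(\mathbf{\theta}^{(t-1)},\mathbf{\theta})$) and $v:=\bbeta_1-\bbeta_2^{(t-1)}$ (with $\|v\|_2=\|\bbeta_1-\bbeta_2\|_2(1-o(1))$ since $\bbeta_2^{(t-1)}\approx\bbeta_2$), the pair $(\langle\x_i,u\rangle,\langle\x_i,v\rangle)$ is jointly Gaussian and the misclassification event is $|\langle\x_i,v\rangle|<|\langle\x_i,u\rangle|$. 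The ratio of these correlated Gaussians obeys an arc-tangent law giving $\mathbb{P}[\text{misclassify}]\lesssim\|u\|_2/\|v\|_2\lesssim\rho/\|\bbeta_1-\bbeta_2\|_2$, and the symmetric bound holds for $z_i=0$. A union bound over the at most $|\mathcal{S}_t|<c_2p$ samples yields $\mathbb{P}[\text{some misclassification}]\lesssim p\cdot\rho/\|\bbeta_1-\bbeta_2\|_2$, which is $\le1/p$ precisely under the hypothesis $\rho\le(c_1/p^2)\|\bbeta_1-\bbeta_2\|_2$. This is exactly where the $p^{-2}$ radius, the ceiling $|\mathcal{S}_t|<c_2p$ (an even smaller radius would be forced by more samples), and the $1-1/p$ success probability originate.

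For part (ii), on the event that the labelling is exact, let $J_1,J_2$ be the correctly identified index sets. In the noiseless model $\y_{J_b}=\X_{J_b}\bbeta_b$ holds exactly, so it suffices that $\X_{J_b}\in\reals^{|J_b|\times p}$ has full column rank. The lower bound $|\mathcal{S}_t|>p/\min\{n_1/n,n_2/n\}$, together with a standard Chernoff bound on the random cluster sizes $|J_b|$, guarantees $|J_b|\ge p$ for both $b$; a Gaussian matrix with at least $p$ rows has full column rank almost surely, so $\argmin_{\bbeta}\|\y_{J_b}-\X_{J_b}\bbeta\|_2=\bbeta_b$ exactly and $\rho(\mathbf{\theta}^{(t)},\mathbf{\theta})=0$. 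Collecting the failure probabilities of the classification step and the cluster-size concentration gives the overall $1-1/p$ guarantee.

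The main obstacle will be the per-sample misclassification tail bound, namely making the estimate $\mathbb{P}[|\langle\x_i,v\rangle|<|\langle\x_i,u\rangle|]\lesssim\|u\|_2/\|v\|_2$ rigorous and uniform, while correctly handling the correlation between $\langle\x_i,u\rangle$ and $\langle\x_i,v\rangle$ and the fact that $\|v\|_2$ is only a high-probability approximation to $\|\bbeta_1-\bbeta_2\|_2$. The resampling step is what makes this tractable, since it decouples the iterate from the batch; without it one would need a uniform covering-number argument over all reachable iterates, which is considerably harder.
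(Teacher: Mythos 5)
This proposition is not proved in the paper at all: it is quoted, with citation, from \citet{yi_alternating_2014} and used as a black box in the proof of Theorem \ref{thm:CORR+AM}, so there is no in-paper proof to compare against. Your reconstruction is essentially the argument given in that reference, and it is sound: (i) by resampling, the iterate is independent of the fresh batch, and for a sample with $z_i=1$ the misclassification event $\{|\langle \x_i, u\rangle| \geq |\langle \x_i, v\rangle|\}$ with $u = \bbeta_1 - \bbeta_1^{(t-1)}$, $v = \bbeta_1 - \bbeta_2^{(t-1)}$ depends only on the projection of $\x_i$ onto $\mathrm{span}(u,v)$, whose rotation invariance gives the angular bound $\mathbb{P}[\text{misclassify}] \leq \tfrac{2}{\pi}\arcsin\left(\|u\|_2/\|v\|_2\right) \lesssim \rho/\|\bbeta_1-\bbeta_2\|_2$ regardless of the correlation you worry about; a union bound over the $< c_2 p$ samples then gives failure probability $O(c_1 c_2 / p)$; (ii) on the correct-labelling event, the noiseless identity $\y_{J_b} = \X_{J_b}\bbeta_b$ plus almost-sure full column rank of a Gaussian matrix with at least $p$ rows (which survives conditioning on the positive-probability labelling event, since rank deficiency is a null set) yields exact recovery. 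The only point needing care is quantitative: the hypothesis $|\mathcal{S}_t| > p/\min\{n_1/n, n_2/n\}$ makes the \emph{expected} cluster sizes exceed $p$ only nominally, so the Chernoff step in (ii) requires the constants in the original statement to supply a genuine margin; this is a constants-level issue resolved in \citet{yi_alternating_2014}, not a flaw in your structure.
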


\begin{remark} \label{remark:geom_decay_samples}
It is easy to see, and remarked in \cite{fan_curse_2018} (pp. 11-12, above and below Proposition 4) that, running Algorithm \ref{alg:rspEM} with guarantees given in Theorem \ref{thm:em}, one would require $O(p \log^2{p})$ samples to obtain $\rho({\mathbf{\theta}}^{(t-1)}, \mathbf{\theta}) \leq \frac{c_1}{p^2} \|\mathbf{\bbeta}_1^* - \mathbf{\bbeta}_2^*\|_2$ for the constant $c_1 > 0$ suitable for Proposition \ref{prop:exactReco}. \end{remark}
\paragraph{Algorithm for $\mslr$ in the balanced case}
Consider the case of Mixtures of Linear Regressions ($\mathtt{MLR}$) as in \cite{chen_convex_2014}, where $n, k = \Theta(p)$, and $\sigma^2$ is known. Further, consider the balanced regime where $\phi = 1/2$, but $\bbeta_{1, j} \neq - \bbeta_{2, j}$ for any $j \in \mathcal{S}_1 \cap \mathcal{S}_2$ ($\bavg \neq 0$). We claim that we can solve the aforementioned $\mslr$ problem in this regime by recovering the joint support of $\bbeta_1, \bbeta_2$ using $\corr$ , and then running the algorithm in \cite{chen_convex_2014} for general mixed linear regression. This latter Algorithm \ref{algo:balanced} is outlined below.
\begin{algorithm}
\caption{Estimate $\bbeta$'s \citep{chen_convex_2014}}
\label{algo:balanced}
\; \KwData{$(\X, \y)\in\mathbb{R}^{n\times p}\times\mathbb{R}^{n}$}
\; Let $(\hat{\bm{K}},\hat{\bm{g}}) := \argmin_{\bm{K},\bm{g}}\;  \sum_{i=1}^{n}\left(-\left\langle \bm{x}_{i}\bm{x}_{i}^{\top},\bm{K}\right\rangle +2y_{i}\left\langle \bm{x}_{i},\bm{g}\right\rangle -y_{i}^{2}+\sigma^{2}\right)^{2}+\lambda\left\Vert \bm{K} \right\Vert _{*}$ \; \\
Compute the matrix $\hat{\bm{J}}=\hat{\bm{g}}\hat{\bm{g}}^{\top}-\hat{\bm{K}}$, and its first eigenvalue-eigenvector pair $\hat{\lambda}$ and $\hat{\bm{v}}$ \; \\
Compute $\hat{\bm{\bbeta}}_{1},\hat{\bm{\bbeta}}_{2}=\hat{\bm{g}}\pm\sqrt{\hat{\lambda}}\hat{\bm{v}}$ \; \\
\Return $(\hat{\bm{\bbeta}}_{1},\hat{\bm{\bbeta}}_{2})$
\end{algorithm}
As motivated in \citep{chen_convex_2014}, the algorithm performs a convex penalized least squares optimization to determine matrix and vector $(\hat{\bm{K}},\hat{\bm{g}})$, $\hat{\bm{g}}$ being a naive estimate of $\bbeta_1, \bbeta_2$ and the leading eigenvector-eigenvalue of $\hat{\bm{J}}$ a necessary correction. We define the value $\alpha:=\frac{\left\Vert \bm{\bbeta}_{1}-\bm{\bbeta}_{2}\right\Vert _{2}^{2}}{\left\Vert \bm{\bbeta}_{1}\right\Vert _{2}^{2}+\left\Vert \bm{\bbeta}_{2}\right\Vert_2^{2}}$.

\begin{definition} \citep{chen_convex_2014} \label{def:conds}
Let $n_1 = \left\{i \in [n]: z_i = 1  \right\}$ denote the number of samples obtained from $\bbeta_1$, and $n_2$ the analogous for $\bbeta_2$. We define the following \textit{regularity} conditions, required for our further proofs:

\begin{enumerate}
 \item $\X$ is an i.i.d. standard Gaussian matrix,
	\item $\alpha \geq c_3$,
	\item $\min\{n_1, n_2 \} \geq c_4 p$,
	\item $\lambda = \Theta(\sigma (\|\bbeta_1\|_2 + \|\bbeta_2\|_2 + \sigma) \sqrt{np} \log^3{n})$,
	\item $n \geq c_3 p \log^8{n}$,
	\item $|n_1 - n_2| = O\left(\sqrt{n \log{n}} \right)$,
\end{enumerate}
for some constants $0<c_3<2$ and $c_4$.
\end{definition}

\begin{theorem} \cite{chen_convex_2014} \label{thm:chen_convex}
Suppose the conditions in Definition \ref{def:conds} hold. There exist constants $c_1, c_2, c_4 > 0$ such that with probability at least $1 - c_1 n^{-c_2}$, the output $\hat{\theta} = (\hat{\bbeta}_1, \hat{\bbeta}_2)$ of $\mathtt{Algorithm \; 3}$ satisfies

\[\rho(\hat{\theta}, \theta) \leq c_4 \sigma \sqrt{\frac{p}{n}}\log^4{n} + c_4 \min\left\{\frac{\sigma^2}{\|\bbeta_1\|_2 + \|\bbeta_2\|_2}, \sigma \left(\frac{p}{n} \right)^{1/4} \right\} \log^4{n}. \]
\end{theorem}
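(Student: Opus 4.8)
The plan is to establish the bound via the standard three-stage analysis for convex low-rank estimation: (i) derive a noiseless moment identity that the true parameters satisfy and read off the rank-two lifted structure, (ii) control the estimation error of the lifted pair $(\hat{\bm K},\hat{\bm g})$ through a nuclear-norm oracle inequality, and (iii) convert this into a bound on $\rho(\hat\theta,\theta)$ by perturbation analysis of the eigendecomposition used to de-lift. First I would record the algebraic identity underlying Algorithm \ref{algo:balanced}. Writing $y_i = \langle \x_i,\bbeta_{z_i}\rangle + w_i$ with $z_i\in\{1,2\}$, in the noiseless case $(y_i-\langle\x_i,\bbeta_1\rangle)(y_i-\langle\x_i,\bbeta_2\rangle)=0$, which expands to $y_i^2 - 2y_i\langle\x_i,\bm g^*\rangle + \langle\x_i\x_i^\top,\bm K^*\rangle = 0$ where $\bm g^* := \tfrac12(\bbeta_1+\bbeta_2)$ and $\bm K^* := \tfrac12(\bbeta_1\bbeta_2^\top+\bbeta_2\bbeta_1^\top)$ has rank at most two. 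Taking expectations over $w_i$ turns this into the moment condition $\E[-\langle\x_i\x_i^\top,\bm K^*\rangle + 2y_i\langle\x_i,\bm g^*\rangle - y_i^2 + \sigma^2]=0$, so $(\bm K^*,\bm g^*)$ is a population minimizer of the quadratic loss in the objective. The key structural fact is that $\bm g^*\bm g^{*\top}-\bm K^* = \tfrac14(\bbeta_1-\bbeta_2)(\bbeta_1-\bbeta_2)^\top$ is rank one, with top eigenpair $\big(\tfrac14\|\bbeta_1-\bbeta_2\|_2^2,\ (\bbeta_1-\bbeta_2)/\|\bbeta_1-\bbeta_2\|_2\big)$; this is precisely what the de-lifting step $\hat\bbeta_{1,2}=\hat{\bm g}\pm\sqrt{\hat\lambda}\hat{\bm v}$ exploits, since $\bm g^*\pm\tfrac12(\bbeta_1-\bbeta_2)$ recovers $\bbeta_1,\bbeta_2$.

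Next I would bound $\|\hat{\bm K}-\bm K^*\|_F + \|\hat{\bm g}-\bm g^*\|_2$. The objective is a least-squares loss in the lifted linear operator $(\bm K,\bm g)\mapsto(\langle\x_i\x_i^\top,\bm K\rangle - 2y_i\langle\x_i,\bm g\rangle)_i$ plus a nuclear-norm penalty on the low-rank block $\bm K$, so I would invoke the usual machinery for regularized $M$-estimation: a \emph{restricted strong convexity} (RSC) lower bound on the empirical loss over the cone of near-low-rank perturbations, together with a matching \emph{deviation} upper bound on the gradient of the loss at $(\bm K^*,\bm g^*)$, i.e.\ on $\tfrac1n\sum_i \xi_i(\x_i\x_i^\top,\,-2y_i\x_i)$ where $\xi_i$ is the residual of the moment equation. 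Choosing $\lambda$ to dominate the spectral norm of this noise term (the role of condition 4 in Definition \ref{def:conds}) yields an oracle inequality $\|\hat{\bm K}-\bm K^*\|_F \lesssim \sqrt{r}\,\lambda/(n\,\kappa_{\mathrm{RSC}})$ with $r=2$, and an analogous bound for $\hat{\bm g}$; the sample-size and balance conditions (conditions 3, 5, 6) guarantee both RSC and the required concentration of the quartic Gaussian terms.

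Finally I would translate the $(\hat{\bm K},\hat{\bm g})$ error into the $\rho$-bound. Forming $\hat{\bm J}=\hat{\bm g}\hat{\bm g}^\top-\hat{\bm K}$, a triangle-inequality argument gives $\|\hat{\bm J}-\bm J^*\|_F \lesssim \|\hat{\bm K}-\bm K^*\|_F + (\|\bm g^*\|_2+\|\hat{\bm g}-\bm g^*\|_2)\|\hat{\bm g}-\bm g^*\|_2$ with $\bm J^* = \tfrac14(\bbeta_1-\bbeta_2)(\bbeta_1-\bbeta_2)^\top$. A Davis--Kahan/Weyl perturbation bound then controls $\|\sqrt{\hat\lambda}\hat{\bm v}-\tfrac12(\bbeta_1-\bbeta_2)\|_2$ by $\|\hat{\bm J}-\bm J^*\|_F$ divided by the spectral gap of $\bm J^*$, namely $\tfrac14\|\bbeta_1-\bbeta_2\|_2^2$; here the separation condition $\alpha\ge c_3$ (equivalently $\|\bbeta_1-\bbeta_2\|_2^2\gtrsim\|\bbeta_1\|_2^2+\|\bbeta_2\|_2^2$) keeps this gap bounded away from zero. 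Combining with the $\hat{\bm g}$ error and the correct sign assignment (up to the relabeling symmetry built into $\rho$) gives the stated rate: the first term $\sigma\sqrt{p/n}\log^4 n$ comes from estimating $\bm g^*$, and the $\min\{\cdot,\cdot\}$ term from propagating the $\hat{\bm K}$ error through the eigenvector extraction.

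The main obstacle is the concentration analysis in stage (ii): the design enters quadratically through $\x_i\x_i^\top$, so the loss is quartic in Gaussian variables and the relevant gradient-noise and RSC quantities are heavy-tailed (sub-exponential at best, with products $y_i\x_i$ mixing signal and noise). Securing the sharp $\log^4 n$ factors requires truncation and uniform control over the low-rank cone (e.g.\ matrix Bernstein with a peeling/covering argument), and the balance condition $|n_1-n_2|=O(\sqrt{n\log n})$ is exactly what prevents the cross-term bias from dominating. By contrast, the eigen-perturbation step in stage (iii) is comparatively routine once the spectral gap is pinned down by $\alpha\ge c_3$.
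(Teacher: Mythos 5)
This statement is not proved in the paper at all: it is imported verbatim from \cite{chen_convex_2014} (like Theorem \ref{thm:em} and Propositions \ref{prop:initwp}, \ref{prop:exactReco}, which are quoted from \cite{yi_alternating_2014}), so there is no internal proof to compare your attempt against. Your proposal is, in substance, a faithful reconstruction of the argument in that original reference: the lifted moment identity with $\bm{K}^* = \tfrac12(\bbeta_1\bbeta_2^\top+\bbeta_2\bbeta_1^\top)$ and $\bm{g}^*=\tfrac12(\bbeta_1+\bbeta_2)$, the nuclear-norm regularized least-squares analysis via restricted strong convexity plus gradient-deviation bounds (which is where the $\log$ factors and the prescribed scale of $\lambda$ in Definition \ref{def:conds} come from), and the eigenvector-perturbation de-lifting of $\hat{\bm{J}}=\hat{\bm{g}}\hat{\bm{g}}^\top-\hat{\bm{K}}$, whose population version is the rank-one matrix $\tfrac14(\bbeta_1-\bbeta_2)(\bbeta_1-\bbeta_2)^\top$ with spectral gap kept away from zero by $\alpha\ge c_3$. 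Your closing remark also correctly identifies where the real technical weight sits in the source, namely the concentration of the quartic Gaussian terms under the balance condition $|n_1-n_2|=O(\sqrt{n\log n})$, rather than the comparatively routine Davis--Kahan step.
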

The result in Theorem \ref{thm:chen_convex} implies that, in the high-snr regime $\|\bbeta\|^2_2/\sigma^2 = \Omega(1)$, we have that $\rho(\hat{\theta}, \theta) = \Theta(\sigma \sqrt{\frac{p}{n}})$. This holds since $\|\bbeta_1\|_2 + \|\bbeta_2\|_2 = 2\sqrt{k}$.

\begin{theorem} [Success of $\mathtt{CORR} + \mathtt{Algorithm \; \ref{algo:balanced}}$ on $\mslr$] \label{thm:corr+algo3}
Consider the general setting of $\mslr$ with parameters $p, n, k, \phi = 1/2$, $(\bbeta_1, \bbeta_2) \distas{} \mathcal{P}_{\|\bbeta\|_2}(\mathcal{D})$. Suppose the conditions of Definition \ref{def:conds} hold. There exist constants $c_1, c_2, c_4 > 0$ such that, provided 
$$n \geq \frac{32}{\min\{\phi^2 \bbeta^2_{\texttt{min}}, (1-\phi)^2 \bbeta^2_{\texttt{min}}, \langle \bbeta \rangle^2_{\texttt{min}} \}} (1 + \epsilon) (\|\bbeta\|^2_2 + \sigma^2) \log{2p}$$ for  $\epsilon \in (0,1)$ used in $\corr$ , with probability at least $1 - c_1(\frac{k}{p} + ke^{-c_2 n} + \frac{k}{n} + \frac{1}{p^{c_2}})$, the output $\hat{\theta} = (\hat{\bbeta}_1, \hat{\bbeta}_2)$ of $\mathtt{CORR} + \mathtt{Algorithm \; \ref{algo:balanced}}$ satisfies

\[\rho(\hat{\theta}, \theta) \leq c_4 \sigma \sqrt{\frac{2k}{n}}\log^4{n} + c_4 \min\left\{\frac{\sigma^2}{\|\bbeta_1\|_2 + \|\bbeta_2\|_2}, \sigma \left(\frac{2k}{n} \right)^{1/4} \right\} \log^4{n}. \]
\end{theorem}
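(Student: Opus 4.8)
The plan is to prove this by a two-stage composition argument, mirroring the structure of the noiseless case in Theorem \ref{thm:CORR+AM} but substituting the Chen--Yi convex program (Algorithm \ref{algo:balanced}) for the alternating-minimization pipeline. First I would invoke Theorem \ref{thm:CORR-mslr-r-general}: under the stated sample-complexity lower bound on $n$, the $\corr$ step exactly recovers the joint support $\mathcal{S}_1 \cup \mathcal{S}_2$ of the two signals with probability at least $1 - c_1(\tfrac{k}{p} + ke^{-c_2 n} + \tfrac{k}{n} + \tfrac{1}{p^{c_2}})$, and this set has cardinality at most $2k$. On this success event I would discard every column of $\X$ outside the recovered support; since those columns multiply zero entries of both $\bbeta_1$ and $\bbeta_2$, they leave $\y$ unchanged, so $(\X_{\mathcal{S}_1 \cup \mathcal{S}_2}, \y)$ is a bona fide instance of \emph{dense} mixed linear regression with ambient dimension $p' := |\mathcal{S}_1 \cup \mathcal{S}_2| \le 2k$. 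The whole point of the reduction is that the computationally hard ``which columns matter'' question has been resolved by the cheap $O(np)$ thresholding step, leaving a dimension-$2k$ problem on which the polynomial-time guarantee of Theorem \ref{thm:chen_convex} applies.

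Having performed the reduction, I would simply read off the error bound of Theorem \ref{thm:chen_convex} with $p$ replaced by $p' \le 2k$. The $\sqrt{p/n}$ and $(p/n)^{1/4}$ factors then become $\sqrt{2k/n}$ and $(2k/n)^{1/4}$, which is exactly the form appearing in the statement; the middle term is reported directly in terms of $\|\bbeta_1\|_2 + \|\bbeta_2\|_2 = 2\|\bbeta\|_2$ (equal $=2\sqrt{k}$ in the $\{-1,0,1\}$ case), as already noted in the remark following Theorem \ref{thm:chen_convex}. The remaining work is bookkeeping to confirm that the reduced instance satisfies the regularity conditions of Definition \ref{def:conds}. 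Condition 1 is immediate because any fixed subset of columns of an i.i.d.\ Gaussian matrix is again i.i.d.\ Gaussian; condition 2 ($\alpha \ge c_3$) is preserved since restricting to the support does not alter the nonzero signal values and hence not the ratio $\|\bbeta_1-\bbeta_2\|_2^2/(\|\bbeta_1\|_2^2+\|\bbeta_2\|_2^2)$; conditions 3 and 5, now read as $\min\{n_1,n_2\}\gtrsim p'$ and $n \gtrsim p'\log^8 n$ with $p'\le 2k$, follow from the assumed $n \gtrsim k\log 2p$ together with $\phi=\tfrac12$, which forces $n_1,n_2 \approx n/2 \gg k$; and $\lambda$ is simply chosen according to condition 4 for the reduced problem. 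For condition 6, $|n_1-n_2| = O(\sqrt{n\log n})$, I would note that at $\phi=\tfrac12$ the count $n_1 \sim \mathrm{Binomial}(n,\tfrac12)$ concentrates about $n/2$ with fluctuations $O(\sqrt{n\log n})$ by a Chernoff/Hoeffding bound, and this high-probability event can be folded into the $c_1,c_2$ constants exactly as the analogous Chernoff bound on $n_1 \ne n_2$ was absorbed in the proof of Theorem \ref{thm:CORR+AM}.

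The hard part, and the step I would treat most carefully, is the probabilistic dependence created by reusing the same data across the two stages. The columns that $\corr$ selects are a measurable function of $\X$, so conditioning on the $\corr$-success event can distort the conditional law of the restricted design $\X_{\mathcal{S}_1 \cup \mathcal{S}_2}$, whereas Theorem \ref{thm:chen_convex} requires that design to be exactly i.i.d.\ standard Gaussian (condition 1). The cleanest resolution is sample splitting, exactly in the spirit of the resampling scheme of Algorithm \ref{alg:rspEM}: partition the $n$ samples into two independent halves, run $\corr$ on the first half to identify $\mathcal{S}_1 \cup \mathcal{S}_2$, and then run Algorithm \ref{algo:balanced} on the second, untouched half restricted to those columns. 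Because the support returned is a deterministic function of the \emph{true} signals once recovery succeeds, and the second half is independent of the selection, the restricted design on the second half is genuinely i.i.d.\ Gaussian and Theorem \ref{thm:chen_convex} applies verbatim. This costs only a constant factor in the effective sample size, which is absorbed into $c_4$ and does not change the order of the bound. Finally I would close the argument with a union bound over the $\corr$ failure event, the Chen--Yi failure event (probability $c_1 n^{-c_2}$ from Theorem \ref{thm:chen_convex}), and the condition-6 concentration event, collecting everything into the advertised probability $1 - c_1(\tfrac{k}{p} + ke^{-c_2 n} + \tfrac{k}{n} + \tfrac{1}{p^{c_2}})$ after relabeling constants.
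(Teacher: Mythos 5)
Your proposal follows the paper's proof essentially step for step: run $\corr$ and invoke Theorem \ref{thm:CORR-mslr-r-general} for exact joint support recovery, restrict the design to the at most $2k$ recovered columns, verify the regularity conditions of Definition \ref{def:conds} with $p$ replaced by $2k$ (Hoeffding/Chernoff for conditions 3 and 6 at $\phi = 1/2$, the rest structurally), apply Theorem \ref{thm:chen_convex}, and union bound with adjusted constants. The one place you diverge is the handling of the cross-stage dependence, and here your diagnosis is right but your fix is stronger than needed. The paper runs Algorithm \ref{algo:balanced} on the \emph{same} data, and the dependence you worry about never enters because no conditional-distribution claim is required: let $A$ be the event that $\corr$ outputs exactly $\mathcal{S}_1 \cup \mathcal{S}_2$, and let $B$ be the event that the \emph{oracle} procedure, which runs Algorithm \ref{algo:balanced} on $(\X_{\mathcal{S}_1 \cup \mathcal{S}_2}, \y)$, achieves the stated error bound. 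Since the true support is a function of $(\bbeta_1, \bbeta_2)$ alone and is independent of $\X$, the oracle's design is i.i.d.\ Gaussian \emph{unconditionally}, so $\P[B^\complement]$ is controlled by Theorem \ref{thm:chen_convex} directly; and on $A$ the pipeline's input to Algorithm \ref{algo:balanced} coincides with the oracle's input (the algorithm is deterministic given its input), so the pipeline output equals the oracle output. Hence $\P[\text{failure}] \le \P[A^\complement] + \P[B^\complement]$ with no conditioning on the selection event ever performed. Your sample-splitting alternative is also sound, but it proves the guarantee for a modified two-stage algorithm rather than the composition stated in the theorem, and it forces a factor-of-two degradation in the explicit constant $32$ in the sample-size hypothesis; the event-intersection argument above keeps both the algorithm and the constants as stated.
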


\begin{proof} [Proof of Theorem \ref{thm:corr+algo3}]
By Theorem \ref{thm:CORR-mslr-r-general}, we can recover the joint support set (at most of size $2k$) of signals $(\bbeta_1, \bbeta_2)$ with probability at least $1 - c_1(\frac{k}{p} + ke^{-c_2 n} + \frac{k}{n} + \frac{1}{p^{c_2}})$ by running $\mathtt{CORR}$.

After running $\mathtt{CORR}$ and identifying the $<2k$ joint support set indices, we restrict the regression problem to these indices by removing all other columns from the design matrix $\X$ (as these do not influence the output $y$ since they do not correspond to support indices of $\bbeta_1$ or $\bbeta_2$). We are then tasked with solving a two-component mixtures of regressions problem with $n$ samples and signals of dimension between $k$ and $2k$. We run $\mathtt{Algorithm \; 3}$ on this simplified regression problem, to obtain $(\hat{\bbeta}_1, \hat{\bbeta}_2)$.

What remains is to show that the assumptions of Theorem \ref{thm:chen_convex} hold with $p$ replaced by $2k$. Indeed, condition $1.$ holds since without loss of generality we can assume $\bbeta_1 \neq \bbeta_2$, otherwise the problem setting would be that of $\slr$. Condition $3., 4., 5.$ hold by the definition of the $\mslr$ problem, and by freedom with respect to $\lambda$ and $c_3$.

Condition $2.$ holds with probability at least $1 - \exp\left(- \Theta(n)\right)$, indeed by Hoeffding's inequality \citep{boucheron_concentration_2013} we have that

\begin{align*}
\mathbb{P}\left[ n_1 < 2 c_4 k\right] &= \mathbb{P}\left[ \sum_{i=1}^n z_i < 2c_4 k \right] \\
&= \mathbb{P}\left[ \sum_{i=1}^n (z_i - \phi) < 2c_4 k - \phi n\right] \\
&= \mathbb{P}\left[ \sum_{i=1}^n (-z_i + \phi) \geq \phi n -2c_4 k \right] \\
&\leq \exp\left(- \Theta(n)\right),
\end{align*}

where $\phi n -2c_4 k$ is positive for some $c_4 > 0$ since by assumption $n \geq \frac{32 (1 + \epsilon)}{\min\{(2 \phi - 1)^2, \phi^2, (1 - \phi)^2\}} (1 + \epsilon) (k + \sigma^2) \log{p}$ and $(k + \sigma^2) \log{p} = \omega(k)$, analogously for $n_2$.

In the case $\phi = 1/2$ (so that $2\phi - 1 = 0$), we have that condition $6.$ holds with probability $\exp(-\Theta(\log{n}))$ by Hoeffding's inequality,

\begin{align*}
&\mathbb{P}\left[\left|\sum_{i=1}^n z_i - \sum_{i=1}^n (1 - z_i)\right| \geq \Theta(\sqrt{n \log{n}})\right] \\
&\leq \mathbb{P}\left[\sum_{i=1}^n (2z_i - 1) \geq \Theta(\sqrt{n\log{n}}) \right] + \mathbb{P}\left[ \sum_{i=1}^n (-2z_i + 1) \geq \Theta(\sqrt{n \log{n}}) \right] \\
&\leq 2\exp\left(-\frac{2 \Theta(n \log{n})}{2 n}\right) = \exp\left(-\Theta(\log{n})\right).
\end{align*}

The above events occur with probability at least $1 - \exp(-\Theta(\log{n}))$, and the event that $\mathtt{CORR}$ succeeds occurs with probability $1 - c_1(\frac{k}{p} + ke^{-c_2 n} + \frac{k}{n} + \frac{1}{p^{c_2}})$. These two events occur together with probability at least $1 - c_1(\frac{k}{p} + ke^{-c_2 n} + \frac{k}{n} + \frac{1}{p^{c_2}})$ for adjusted constants $c_1, c_2 > 0$. Applying Theorem \ref{thm:chen_convex}, we obtain the result.
\end{proof}

\subsection{$\mathtt{CORR}$ for signed support recovery in $\slr$} \label{subsec:corr-slr}

\begin{proof} [Proof of Theorem \ref{thm:corr-slrd}]
The proof proceeds similarly as that of Theorem \ref{thm:CORR-mslrd-general}. Let $\corr(\X, \y)$ denote the output of running $\corr$  on inputs $\X, \y$. Consider the test function

\[
g\left({\begin{bmatrix} \X \\ \y \end{bmatrix}}\right) := \begin{cases}
\mathtt{p} & \corr(\X, \y) \neq \emptyset \\
\mathtt{q} & \corr(\X, \y) = \emptyset
\end{cases}.
\]

Let ${\begin{bmatrix} \X \\ \y \end{bmatrix}} \distas{} {\P(\X) \otimes \P(\y)}$. Recall that $\corr$  outputs the following set

\[
\corr(\X, \y) = \left\{j \in [p]:  \left|\frac{\langle \X_{j}, \y \rangle}{\|\y \|_2}\right| \geq \sqrt{2 (1 + \epsilon/2) \log{2p}} \right\}.
\]

As in the proof of Theorem \ref{thm:corr-slr}, we note that for ${\begin{bmatrix} \X \\ \y \end{bmatrix}} \distas{} {\P(\X) \otimes \P(\y)}$ we have $\frac{\langle \X_{q}, \y \rangle}{\|\y \|_2} \distas{\text{i.i.d.}} \mathcal{N}(0, 1)$. Applying Lemma \ref{lem:high-prob-events} to (\ref{eq:last3}) we obtain,
\begin{align}
\mathbb{P}\left[\max_{q \in [p]} \left|\frac{\langle \X_{q}, \y \rangle}{\|\y \|_2}\right| \geq \sqrt{2 (1 + \epsilon/2) \log{2p}} \right] &\leq \mathbb{P}\left[\max_{q \in [p]} \left|\frac{\langle \X_{q}, \y \rangle}{\|\y \|_2}\right| \geq \sqrt{2 \log{2p}} + \frac{\epsilon}{2 \sqrt{8}} \sqrt{\log{2p}}\right], \label{eq:last3} \\
&\leq {p^{- \frac{1}{16} (\frac{\epsilon}{2})^2}} = o(1), \nonumber
\end{align}
and hence we have that, under ${\P(\X) \otimes \P(\y)}$, $g\left({\begin{bmatrix} \X \\ \y \end{bmatrix}}\right) = \mathtt{q}$ with probability $1 - o(1)$.

Conversely, let ${\begin{bmatrix} \X \\ \y \end{bmatrix}} \distas{} {\P(\X, \y)}$. We then apply Theorem \ref{thm:corr-slr} to deduce that $\corr(\X, \y) = \supp(\bbeta) \neq \emptyset$ with probability at least $1 - \left(\frac{k}{p} + 2ke^{-c_2 n} + \frac{1}{p^{c_2}} \right)$ for some constant $c_2 > 0$. Hence, under $\P(\X, \y)$, $g\left({\begin{bmatrix} \X \\ \y \end{bmatrix}}\right) = \mathtt{p}$ with probability $1 - o(1)$.
\end{proof}

In what follows, we consider the modified  $\corr$  algorithm in \eqref{eq:signed_corr} for estimating the \textit{signed} support of $\bbeta$.
%

%
\begin{proof} [Proof of Theorem \ref{thm:corr-slr}]
Let $\mathcal{S}$ denote the support set of $\bbeta$. Define the error event
\[\mathcal{E} = \cup_{j \in \mathcal{S}} \left\{\left|\frac{\langle \X_j, \y \rangle}{\|\y\|_2}\right| < \sqrt{2(1 + \epsilon/2) \log{2p}} \right\} \cup \left\{\max_{q \in \mathcal{S}^\complement} \left|\frac{\langle \X_q, \y \rangle}{\|\y \|_2}\right| \geq \sqrt{2(1 + \epsilon/2) \log{2p}} \right\}. \]

The theorem claim follows by demonstrating that $\mathbb{P}\left[\mathcal{E} \right] = o(1)$. With this in mind, we perform a union bound

\begin{align*}
\mathbb{P}\left[\mathcal{E} \right] &\leq k \mathbb{P}\left[\left|\frac{\langle \X_{j^*}, \y \rangle}{\|\y\|_2}\right| < \sqrt{2(1 + \epsilon/2) \log{2p}} \right] + \mathbb{P}\left[ \max_{q \in \mathcal{S}^\complement} \left|\frac{\langle \X_q, \y \rangle}{\|\y \|_2}\right| \geq \sqrt{2(1 + \epsilon/2) \log{2p}}\right] \\
&:= k \nu_1 + \nu_2
\end{align*}
where $j^* \in \mathcal{S}$. We first focus on $\nu_2$, where we notice $\frac{\langle \X_q, \y \rangle}{\|\y \|_2} \distas{\text{i.i.d.}} \mathcal{N}(0, 1)$ for $q \in \mathcal{S}^\complement$. Applying Lemma \ref{lem:high-prob-events}, we deduce that
\begin{align*}
\nu_2 &= \mathbb{P}\left[\max_{q \in \mathcal{S}^\complement} \left|\frac{\langle \X_{q}, \y \rangle}{\|\y \|_2}\right| \geq \sqrt{2 (1 + \epsilon/2) \log{2p}} \right] \\
&\leq \mathbb{P}\left[\max_{q \in \mathcal{S}^\complement} \left|\frac{\langle \X_{q}, \y \rangle}{\|\y \|_2}\right| \geq \sqrt{2 \log{2p}} + \frac{\epsilon}{2 \sqrt{8}}\sqrt{\log{2p}}\right] \\
&\leq (2p)^{- \frac{1}{16} \left(\frac{\epsilon}{2}\right)^2},
\end{align*}
since $\sqrt{2(1+\epsilon/2)} \geq \sqrt{2} + \frac{\epsilon}{2 \sqrt{8}}$.  Setting $n \geq \frac{8 (1 + \epsilon)}{\bbeta^2_{\min}} (\|\bbeta\|^2_2 + \sigma^2) \log{2p}$ for some $\epsilon \in (0,1)$, the bound for $\nu_1$ follows from applying Lemma \ref{lemma:A-bound-slr} 
below:
\[ \nu_1 \leq \frac{1}{2p} + 2e^{-\frac{\delta^2 n}{8}}. \]
Putting it all together, we obtain that
$ \mathbb{P}\left[\mathcal{E} \right] \leq \frac{k}{2p} + 2ke^{-c_2 n} + \frac{1}{p^{c_2}}$,
for some constant $c_2 > 0$. 
\end{proof}
\begin{lemma} [Concentration bound for $\slr$] \label{lemma:A-bound-slr}
Consider the setting of $\slr$ for $j^* \in \mathcal{S}$. Then for $n \geq \frac{8 (1 + \epsilon)}{\bbeta^2_{\min}} (\|\bbeta\|^2_2 + \sigma^2) \log{2p}$ we have

\[ \mathbb{P}\left[\left| \frac{\langle \X_{j^*}, \y \rangle}{\|\y\|_2} \right| \leq \sqrt{2 (1 + \epsilon/2) \log{2p}} \right] \leq \frac{1}{p} + 2 e^{-\frac{\delta^2 n}{8}}\]
\end{lemma}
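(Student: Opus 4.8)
The plan is to condition on the observation vector $\y$ and exploit the fact that, because $\bbeta_1 = \bbeta_2 = \bbeta$ in $\slr$, the latent labels $\z$ play no role, so the conditional mean of $u_{j^*} := \langle \X_{j^*}, \y\rangle/\|\y\|_2$ is an \emph{exact} function of $\|\y\|_2$ rather than a random mixture sum. This is precisely why the constant here is $8$ instead of the $32$ appearing in Lemmas \ref{lemma:A_bound_general} and \ref{lemma:B_bound_general}, and why only the norm event $\Omega_1(\delta)$ is needed (none of $\Omega_3,\Omega_4,\Omega_5$).

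First I would invoke Corollary \ref{cor:cond-general} with $b=\bbeta$ and $\eta = y_i$ to obtain, for each $i\in[n]$,
\[
X_{j^*, i} \mid (y_i,\bbeta) \distas{} \mathcal{N}\!\left(\frac{y_i\,\bbeta_{j^*}}{\|\bbeta\|^2_2 + \sigma^2},\ 1 - \frac{\bbeta_{j^*}^2}{\|\bbeta\|^2_2 + \sigma^2}\right).
\]
Since $y_i$ depends only on the $i$-th row of $\X$, the entries $\{X_{j^*,i}\}_{i=1}^n$ are conditionally independent given $\y$, so $u_{j^*} = \sum_{i=1}^n X_{j^*,i}\, y_i/\|\y\|_2$ is, conditionally on $(\y,\bbeta)$, Gaussian with mean
\[
\mu := \sum_{i=1}^n \frac{y_i}{\|\y\|_2}\cdot\frac{y_i\,\bbeta_{j^*}}{\|\bbeta\|^2_2 + \sigma^2} = \frac{\|\y\|_2\,\bbeta_{j^*}}{\|\bbeta\|^2_2 + \sigma^2}
\]
and variance $1 - \bbeta_{j^*}^2/(\|\bbeta\|^2_2+\sigma^2)\le 1$. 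The essential simplification relative to $\mslr$ is that $\mu$ is deterministic given $\|\y\|_2$, with no fluctuation term to control.

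Next I would restrict to the event $\Omega_1(\delta)$ of Lemma \ref{lem:high-prob-events}, on which $\|\y\|_2 \ge \sqrt{(\|\bbeta\|^2_2+\sigma^2)\,n(1-\delta)}$. Combined with $|\bbeta_{j^*}|\ge \bbeta_{\min}$ and the hypothesis $n \ge \tfrac{8(1+\epsilon)}{\bbeta^2_{\min}}(\|\bbeta\|^2_2+\sigma^2)\log 2p$, this gives $\mu \ge \sqrt{8(1+\epsilon)(1-\delta)\log 2p}$. Writing $\tau = \sqrt{2(1+\epsilon/2)\log 2p}$ and bounding $\mathbb{P}[\,|u_{j^*}|\le\tau \mid \y\,] \le \mathbb{P}[\,g_1\sigma_{\mathrm{cond}} \le \tau - \mu\,]$ by Gaussian symmetry exactly as in \eqref{eq:PA_cond}, I would verify the numerical margin inequality $\sqrt{2(1+\epsilon/2)} - \sqrt{8(1+\epsilon)(1-\delta)} \le -\sqrt2$, which holds for every fixed $\epsilon\in(0,1)$ once $\delta$ is small enough (at $\delta=0$ the two sides coincide only in the limit $\epsilon\to 0^+$, leaving strict slack for each $\epsilon>0$). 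Then $\tau-\mu\le -\sqrt{2\log 2p}$, and the standard Gaussian tail bound together with $\sigma_{\mathrm{cond}}^2\le 1$ yields $\mathbb{P}[\,|u_{j^*}|\le\tau\mid\y\,] \le \exp(-\log 2p) = \tfrac{1}{2p}$ on $\Omega_1(\delta)$.

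Finally I would integrate this conditional estimate against the law of $(\y,\bbeta)$ and pay the cost of leaving the good event, $\mathbb{P}[\Omega_1^\complement(\delta)] \le 2e^{-\delta^2 n/8}$, to conclude $\mathbb{P}[\,|u_{j^*}|\le\tau\,] \le \tfrac{1}{2p} + 2e^{-\delta^2 n/8} \le \tfrac1p + 2e^{-\delta^2 n/8}$. I expect no serious obstacle: the difficulty of the $\mslr$ argument lay entirely in taming the Bernoulli mixture sum via $\Omega_3,\Omega_4,\Omega_5$, which here collapses to a deterministic multiple of $\|\y\|_2$. The only points requiring care are confirming that the constant $8$ leaves enough room in the margin inequality and correctly tracking the scaling $y_i = \langle\x_i,\bbeta\rangle + w_i$ through Corollary \ref{cor:cond-general}.
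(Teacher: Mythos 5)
Your proposal is correct and follows essentially the same route as the paper's proof: condition on $\y$ (and $\bbeta$), use the Gaussian conditioning lemma (Corollary \ref{cor:cond-general}, equivalently Lemma \ref{lemma:gsum-general}) to identify $u_{j^*}$ as conditionally Gaussian with mean $\bbeta_{j^*}\|\y\|_2/(\|\bbeta\|_2^2+\sigma^2)$ and variance at most $1$, restrict to $\Omega_1(\delta)$, and close with the same margin inequality $\sqrt{2(1+\epsilon/2)}-\sqrt{8(1+\epsilon)(1-\delta)}\le-\sqrt{2}$ and sub-Gaussian tail bound. Your side remarks — that only $\Omega_1$ is needed because the mixture-sum fluctuations controlled by $\Omega_3,\Omega_4,\Omega_5$ collapse to a deterministic multiple of $\|\y\|_2$, and that the slack in the margin inequality is strict for every $\epsilon>0$ — are both accurate.
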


\begin{proof}
Let $\nu := \mathbb{P}\left[\left| \frac{\langle \X_{j^*}, \y \rangle}{\|\y\|_2} \right| \leq \sqrt{2 (1 + \epsilon/2) \log{2p}} \right]$ and $\delta > 0$ to be chosen later. We begin by conditioning on the event $\Omega_1 := \Omega_1(\delta)$ from Lemma \ref{lem:high-prob-events}, and apply Lemma \ref{lemma:gsum-general} (for the case $z_i = 1$) denoting $g_i \distas{\text{i.i.d.}} \mathcal{N}(0, 1)$ for $i \in [n]$ as independent (also from $y$, $z$) unit normal Gaussians, obtaining

\begin{align}
\nu &= \int \mathbb{P}\left[\left|\frac{\X_{j^*}^T \xi}{\|\xi \|_2}\right| < \sqrt{2(1 + \epsilon/2) \log{2p}} \; \middle| \; \y \right] d\mathbb{P}\left[y \right] \nonumber \\
&= \int \mathbb{P}\left[\left|\sum_{i=1}^n g_i \frac{y_i}{\|\y\|_2} \sqrt{1 - \frac{\bbeta^2_{j^*}}{\|\bbeta\|^2_2 + \sigma^2}} + \frac{\bbeta_{j^*} \cdot y^2_i/\|\y\|_2}{\|\bbeta\|^2_2 + \sigma^2}\right| < \sqrt{2(1 + \epsilon/2) \log{2p}} \; \middle| \; \y  \right] d\mathbb{P}\left[y \right] \nonumber\\
&= \int \mathbb{P}\left[\left|g_1 \sqrt{1 - \frac{\bbeta^2_{j^*}}{\|\bbeta\|^2_2 + \sigma^2}} + \frac{\bbeta_{j^*} \cdot \|\y\|_2}{\|\bbeta\|^2_2 + \sigma^2}\right| < \sqrt{2(1 + \epsilon/2) \log{2p}} \; \middle| \; \y  \right] d\mathbb{P}\left[y \right] \nonumber\\
&\leq \mathbb{P}\left[\left|g_1 \sqrt{1 - \frac{\bbeta^2_{j^*}}{\|\bbeta\|^2_2 + \sigma^2}} + \frac{\bbeta_{j^*} \cdot \|\y\|_2}{\|\bbeta\|^2_2 + \sigma^2}\right| < \sqrt{2(1 + \epsilon/2) \log{2p}} \; \middle| \; \Omega_1  \right] + \mathbb{P}\left[\Omega^{\complement}_1 \right]\nonumber\\
&= \mathbb{P}\left[\left\{g_1 \sqrt{1 - \frac{\bbeta^2_{j^*}}{\|\bbeta\|^2_2 + \sigma^2}} + \frac{\bbeta_{j^*} \cdot \|\y\|_2}{\|\bbeta\|^2_2 + \sigma^2} < \sqrt{2(1 + \epsilon/2) \log{2p}}\right\} \right. \nonumber \\
&\hspace{5em} \left. \cap \left\{-\sqrt{2(1 + \epsilon/2) \log{2p}} \leq g_1 \sqrt{1 - \frac{\bbeta^2_{j^*}}{\|\bbeta\|^2_2 + \sigma^2}} + \frac{\bbeta_{j^*} \cdot \|\y\|_2}{\|\bbeta\|^2_2 + \sigma^2} \right\} \; \middle| \; \Omega_1  \right] + \mathbb{P}\left[\Omega^{\complement}_1 \right]\nonumber\\
&\leq \mathbb{P}\left[g_1 \sqrt{1 - \frac{\bbeta^2_{j^*}}{\|\bbeta\|^2_2 + \sigma^2}} < \sqrt{2(1 + \epsilon/2) \log{2p}} - \left|\frac{\bbeta_{j^*} \cdot \|\y\|_2}{\|\bbeta\|^2_2 + \sigma^2}\right| \; \middle| \; \Omega_1  \right] + \mathbb{P}\left[\Omega^{\complement}_1 \right]\nonumber\\
&\leq \mathbb{P}\left[g_1 \sqrt{1 - \frac{\bbeta^2_{j^*}}{\|\bbeta\|^2_2 + \sigma^2}} < \sqrt{2(1 + \epsilon/2) \log{2p}} - \left|\frac{\bbeta_{j^*} \cdot \sqrt{n (\|\bbeta\|^2_2 + \sigma^2) (1-\delta)}}{\|\bbeta\|^2_2 + \sigma^2}\right| \; \middle| \; \Omega_1  \right] + \mathbb{P}\left[\Omega^{\complement}_1 \right]\nonumber\\
&= \mathbb{P}\left[g_1 \sqrt{1 - \frac{\bbeta^2_{j^*}}{\|\bbeta\|^2_2 + \sigma^2}} < \sqrt{2(1 + \epsilon/2) \log{2p}} - \left|\bbeta_{j^*} \sqrt{\frac{n (1 - \delta)}{\|\bbeta\|^2_2 + \sigma^2}}\right| \; \middle| \; \Omega_1  \right] + \mathbb{P}\left[\Omega^{\complement}_1 \right]\nonumber. 
\end{align}

Now setting $n \geq \frac{8 (1 + \epsilon)}{\bbeta^2_{\min}}(\|\bbeta\|^2_2 + \sigma^2)\log{2p}$ and applying standard sub-Gaussian bounds (see \cite{wainwright_high-dimensional_2019}) we obtain
\begin{align*}
\nu &\leq \mathbb{P}\left[g \sqrt{1 - \frac{\bbeta^2_{j^*}}{\|\bbeta\|^2_2 + \sigma^2}}  < (\sqrt{2(1 + \epsilon/2)} - \sqrt{8 (1 + \epsilon)(1 - \delta)}) \sqrt{\log{2p}}\right] + \mathbb{P}\left[\Omega_1^\complement \right] \\
&\leq \exp\left(-\frac{(\sqrt{2(1 + \epsilon/2)} - \sqrt{8 (1 + \epsilon)(1 - \delta)})^2 \log{2p}}{2 \left(1 - \frac{\bbeta^2_{j^*}}{\|\bbeta\|^2_2 + \sigma^2} \right)} \right) + 2 e^{-\frac{\delta^2 n}{8}},
\end{align*}
where
\begin{align}
\sqrt{2(1 + \epsilon/2)} - \sqrt{8 (1 + \epsilon)(1 - \delta)} < -\sqrt{2} \label{eq:slr-index-bound}
\end{align}
for $\delta > 0$ small enough. Hence, choosing $\delta$ to satisfy (\ref{eq:slr-index-bound}) above, we obtain that for $k$ large enough,
\begin{align*}
\nu &\leq \exp\left(-\frac{2 \log{2p}}{2 \left(1 - \frac{\bbeta^2_{j^*}}{\|\bbeta\|^2_2 + \sigma^2} \right)} \right) + 2 e^{-\frac{\delta^2 n}{8}} \\
&\leq \exp\left(-\frac{2 \log{2p}}{2 } \right) + 2 e^{-\frac{\delta^2 n}{8}} \\
&\leq \frac{1}{2p} + 2 e^{-\frac{\delta^2 n}{8}}.
\end{align*}
\end{proof}

\end{document}